\newcommand{\Gauss}{\mathrm{Gauss}}
\newcommand{\Zbf}{\mathbf{Z}}
\newcommand{\zbf}{\mathbf{z}}
\newcommand{\Cbb}{\mathbb{C}}
\newcommand{\Nbb}{\mathbb{N}}
\newcommand{\PC}{\mathscr{PC}}
\newcommand{\proE}{\mathrm{proE}}
\newcommand{\proHS}{\mathrm{proHS}}
\newcommand{\BW}{\mathrm{BW}}
\newcommand{\Xbf}{\mathbf{X}}
\newcommand{\cov}{\mathrm{cov}}     
\newcommand{\myspan}{\mathrm{span}}
\newcommand{\gbf}{\mathbf{g}}
\newcommand{\fbf}{\mathbf{f}}
\newcommand{\Abf}{\mathbf{A}}
\newcommand{\Bbf}{\mathbf{B}}
\newcommand{\Wbf}{\mathbf{W}}
\newcommand{\GP}{\mathrm{GP}}
\newcommand{\Srm}{\mathrm{S}}
\newcommand{\ep}{\epsilon}
\newcommand{\mapto}{\ensuremath{\rightarrow}}
\newcommand{\approach}{\ensuremath{\rightarrow}}
\newcommand{\imply}{\ensuremath{\Rightarrow}}
\newcommand{\equivalent}{\ensuremath{\Longleftrightarrow}}
\newcommand{\inclusion}{\ensuremath{\hookrightarrow}}
\newcommand{\Xcal}{\mathcal{X}}
\newcommand{\N}{\mathbb{N}}
\newcommand{\R}{\mathbb{R}}
\newcommand{\bP}{\mathbb{P}}
\newcommand{\bE}{\mathbb{E}}
\newcommand{\Ecal}{\mathcal{E}}
\newcommand{\la}{\langle}
\newcommand{\ra}{\rangle}
\newcommand{\Fcal}{\mathcal{F}}
\renewcommand{\H}{\mathcal{H}}
\newcommand{\Lcal}{\mathcal{L}}
\newcommand{\diag}{\mathrm{diag}}
\newcommand{\range}{\mathrm{range}}
\newcommand{\trace}{\mathrm{tr}}
\newcommand\HS{{\rm HS}}
\newcommand\eHS{{\rm HS_X}}
\newcommand{\Sym}{\mathrm{Sym}}
\newcommand{\logHS}{\mathrm{logHS}}
\newcommand{\logE}{\mathrm{logE}}
\newcommand{\tr}{\mathrm{tr}}
\newcommand{\Tr}{\mathrm{Tr}}
\newcommand{\x}{\mathbf{x}}
\newcommand{\z}{\mathbf{z}}
\renewcommand{\b}{\mathbf{b}}
\newcommand{\f}{\mathbf{f}}
\newcommand{\1}{\mathbf{1}}
\newcommand{\aiHS}{{\rm aiHS}}
\newcommand{\aiE}{{\rm aiE}}
\newcommand{\detX}{{\rm det_X}}
\newcommand{\Ncal}{\mathcal{N}}
\newcommand{\Bsc}{\mathscr{B}}
\newcommand{\Csc}{\mathscr{C}}
\title{Estimation of Riemannian distances between covariance operators and Gaussian processes}
\author{H\`a Quang Minh}
\affil{RIKEN Center for Advanced Intelligence Project, Tokyo, JAPAN}
\runningauthor{H\`a Quang Minh}
\begin{document}
	
\maketitle

\begin{abstract}
	In this work we study two Riemannian distances between infinite-dimensional positive definite Hilbert-Schmidt operators, namely  affine-invariant Riemannian and Log-Hilbert-Schmidt distances, in the context of covariance operators associated with functional stochastic processes, in particular Gaussian processes. Our first main results show that both distances converge in 
	the Hilbert-Schmidt norm. Using concentration results for Hilbert space-valued random variables,
	we then  show that both distances can be consistently and efficiently estimated from (i) sample covariance operators, (ii) finite, normalized covariance matrices, and (iii) finite samples
	generated by the given processes, all with dimension-independent convergence. Our theoretical analysis exploits extensively the methodology of reproducing kernel Hilbert space (RKHS) covariance and cross-covariance operators.
	The theoretical formulation is illustrated with numerical experiments on covariance operators of Gaussian processes.	
	\keywords{Riemannian distance, Gaussian process, Gaussian measure, covariance operator, reproducing kernel Hilbert space}
\end{abstract}


\section{Introduction}
\label{section:introduction}

This work studies two Riemannian distances, namely the affine-invariant Riemannian distance
\cite{Larotonda:2007} and Log-Hilbert-Schmidt distance \cite{MinhSB:NIPS2014} between centered Gaussian processes, and more generally, between covariance operators 
associated with functional stochastic processes.
Our main focus is on the estimation of these distances from finite samples generated by the given stochastic processes.
In both cases, we show that the distances can be consistently and efficiently estimated from finite samples,
with {\it dimension-independent} convergence rates.


The study of functional data has received increasing interests recently in statistics and machine learning, see e.g. \cite{Ramsay:2005functional,Ferraty:2006nonparametric,Horvath:2012functional}.
One particular approach for analyzing functional data has been via the analysis of covariance operators and the distance/divergence functions between them. 
Recent work along this direction includes \cite{Panaretos:jasa2010,Fremdt:2013testing}, which utilize the Hilbert-Schmidt distance between covariance operators and \cite{Pigoli:2014, Masarotto:2018Procrustes}, which utilize non-Euclidean distances, in particular the Procrustes distance, also known as Bures-Wasserstein distance.
The latter distance corresponds to
the $\Lcal^2$-Wasserstein distance between two centered Gaussian measures on Hilbert space in the context of optimal transport
\cite{Villani:2008Optimal}
and can better capture the intrinsic geometry of the set of covariance operators. 
In the context of covariance operators and Gaussian processes, the $\Lcal^2$-Wasserstein distance and its entropic regularization, the Sinkhorn divergence, have been analyzed in \cite{Minh:2021EntropicConvergenceGaussianMeasures,Minh2021:FiniteEntropicGaussian}. 
In \cite{Matthews2016sparseKL,Sun2019functionalKL}, the Kullback-Leibler divergence between stochastic processes was studied, the latter in the context of functional Bayesian neural networks.
In this work, we study the non-Euclidean distances between covariance operators
that arise from the Riemannian geometric viewpoint of positive definite Hilbert-Schmidt operators, including in particular the affine-invariant Riemannian and Log-Hilbert-Schmidt distances.


{\bf Contributions of this work}\footnote{An extended abstract summarizing several preliminary results of the current work, without proofs, in particular Theorems \ref{theorem:logHS-approx-finite-dim} and \ref{theorem:affineHS-approx-finite-dim}, was presented in the {\it Proceedings of the International Workshop on Functional and Operatorial Statistics (IWFOS 2020)}
	\cite{Minh:2020Functional}.} The following are the main novel contributions of the current work 
\begin{enumerate}
	\item We show that both the affine-invariant Riemannian distance \cite{Larotonda:2007} and
	Log-Hilbert-Schmidt distance \cite{MinhSB:NIPS2014} between positive definite Hilbert-Schmidt operators converge in the Hilbert-Schmidt norm.
	\item From the Hilbert-Schmidt norm convergence, we show that both the affine-invariant/Log-Hilbert-Schmidt distances between
	centered Gaussian processes/covariance operators can be consistently estimated using sample covariance operators.
	The convergence rate is {\it dimension-independent}.
	\item By representing the affine-invariant/Log-Hilbert-Schmidt distances between
	centered Gaussian processes/covariance operators via reproducing kernel Hilbert space (RKHS) covariance and cross-covariance operators, we show that they can be consistently estimated using (i) finite, normalized covariance matrices and (ii) finite samples generated by the corresponding random processes.
	The convergence rates in all cases are {\it dimension-independent}.
	\item We show the theoretical consistency of the empirical distances between Gaussian measures defined on an RKHS, 
	which are induced by a positive definite kernel \cite{MinhSB:NIPS2014}, as employed in computer vision applications.
\end{enumerate}

\section{Distances between Gaussian processes}
\label{section:distance-gaussian-processes}

We first review the correspondence between Gaussian processes and Gaussian measures/covariance operators
on Hilbert spaces, followed by the formal distance formulation. Throughout the paper, we assume the following 
\begin{enumerate}
	\item {\bf A1} $T$ is a $\sigma$-compact metric space, that is
	$T = \cup_{i=1}^{\infty}T_i$, where $T_1 \subset T_2 \subset \cdots$, with each $T_i$ being compact.
	
	\item {\bf A2} $\nu$ is a non-degenerate Borel probability measure on $T$, that is $\nu(B) > 0$
	for each open set $B \subset T$.
	\item {\bf A3} $K, K^1, K^2: T \times T \mapto \R$ are continuous, symmetric, positive definite kernels
	and $\exists \kappa > 0, \kappa_1 > 0, \kappa_2 > 0$ with
	\begin{align}
		\label{equation:Assumption-3}
		\int_{T}K(x,x)d\nu(x) \leq \kappa^2, \;\; \int_{T}K^i(x,x)d\nu(x) \leq \kappa_i^2.
	\end{align}
	\item {\bf A4} $\xi\sim \GP(0, K)$, $\xi^i \sim \GP(0, K^i)$, $i=1,2$, are {\it centered} Gaussian processes
	with covariance functions $K, K^i$, respectively, satisfying assumptions A1-A3.
\end{enumerate}
For $K$ satisfying assumption A3, 
let $\H_K$ denote the corresponding reproducing kernel Hilbert space (RKHS).
Let $K_x:T \mapto \R$ be defined by $K_x(t) = K(x,t)$. Assumption A3 implies in particular that
%
\begin{align}
	\int_{T}K(x,t)^2d\nu(t) < \infty \; \forall x \in T, \;\;\; \int_{T \times T}K(x,t)^2d\nu(x)d\nu(t) < \infty.
\end{align}
It follows that $K_x \in \Lcal^2(T,\nu)$ $\forall x \in T$, 
hence $\H_K \subset \Lcal^2(T,\nu)$ \cite{
	Sun2005MercerNoncompact}. Define the following linear operator
\begin{align}
	&R_K = R_{K,\nu}: \Lcal^2(T, \nu) \mapto \H_K,\;\;
	R_Kf = \int_{T}K_tf(t)d\nu(t), \;\;\; (R_Kf)(x) = \int_{T}K(x,t)f(t)d\nu(t).
	\label{equation:RK}
\end{align}
The operator $R_K$ is bounded, with
$||R_K:\Lcal^2(T,\nu) \mapto \H_K|| \leq \sqrt{\int_{T}K(t,t)d\nu(t)} \leq \kappa$.
Its adjoint is 
$R_K^{*}: \H_K \mapto \Lcal^2(T, \nu) = J:\H_K \inclusion \Lcal^2(T, \nu)$, the inclusion operator
from $\H_K$ into $\Lcal^2(T,\nu)$ \cite{Rosasco:IntegralOperatorsJMLR2010}. 
$R_K$ and $R_{K}^{*}$ together
induce 
the following self-adjoint, positive, trace class operator (e.g. \cite{CuckerSmale,Sun2005MercerNoncompact,Rosasco:IntegralOperatorsJMLR2010})
%
%
\begin{align}
	\label{equation:CK}
	&C_K = C_{K,\nu} = R_K^{*}R_K: \Lcal^2(T,\nu) \mapto \Lcal^2(T, \nu),
	\;\;
	(C_Kf)(x)  = \int_{T}K(x,t)f(t)d\nu(t), \;\; \forall f \in \Lcal^2(T,\nu),
	\\
	&\trace(C_K) = \int_{T}K(x,x)d\nu(x) \leq \kappa^2, \;\;||C_K||_{\HS(\Lcal^2(T,\nu))}^2  = \int_{T \times T}K(x,t)^2d\nu(x)d\nu(t) \leq \kappa^4.
\end{align}
Let $\{\lambda_k\}_{k \in \Nbb}$ be the eigenvalues
of $C_K$, 
with normalized eigenfunctions $\{\phi_k\}_{k\in \Nbb}$
forming an orthonormal basis in $\Lcal^2(T,\nu)$.
Mercer's Theorem (see version in \cite{Sun2005MercerNoncompact}) states that 
\begin{align}
	K(x,y) = \sum_{k=1}^{\infty}\lambda_k \phi_k(x)\phi_k(y)\;\;\;\forall (x,y) \in T \times T,
\end{align}
where the series converges absolutely for each pair $(x,y) \in T \times T$ and uniformly on any compact subset of $T$.
By Mercer's Theorem, $K$ is completely determined by $C_K$ and vice versa.

Consider now the correspondence between the trace class operator $C_K$ as defined in 
Eq.\eqref{equation:CK}
and Gaussian processes with paths in $\Lcal^2(T,\nu)$,
as
established 
in 
\cite{Rajput1972gaussianprocesses}.
Let $(\Omega, \Fcal, P)$ be a probability space, 
$\xi = (\xi(t))_{t \in T}
= (\xi(\omega,t))_{t \in T}$ be 
a real
Gaussian process on $(\Omega, \Fcal,P)$, with mean $m$ and covariance function $K$, denoted by $\xi\sim \GP(m,K)$, where 
\begin{align}
	m(t) = \bE{\xi(t)},\;\;K(s,t) = \bE[(\xi(s) - m(s))(\xi(t) - m(t))], \;\;s,t \in T.
\end{align}
The sample paths
$\xi(\omega,\cdot) \in \H = \Lcal^2(T, \nu)$ almost $P$-surely, i.e.
$\int_{T}\xi^2(\omega,t)d\nu(t) < \infty$ almost $P$-surely,
if and only if (\cite{Rajput1972gaussianprocesses}, Theorem 2 and Corollary 1)
\begin{align}
	\label{equation:condition-Gaussian-process-paths}
	\int_{T}m^2(t)d\nu(t) < \infty, \;\;\; \int_{T}K(t,t)d\nu(t) < \infty.
\end{align}
The condition for $K$ in Eq.\eqref{equation:condition-Gaussian-process-paths} is precisely assumption A3.
In this case, $\xi$ induces the following {\it Gaussian measure} $P_{\xi}$ on $(\H, \Bsc(\H))$:
$	P_{\xi}(B) = P\{\omega \in \Omega: \xi(\omega, \cdot) \in B\}, \; B \in \Bsc(\H)$,
with mean $m \in \H$ and covariance operator
$C_K: \H \mapto \H$, defined by Eq.\eqref{equation:CK}.
Conversely, let $\mu$ be a Gaussian measure on $(\H
, \Bsc(\H))$, then
there is a
Gaussian process $\xi = (\xi(t))_{t \in T}$
with sample paths in $\H$,
with induced probability measure $P_{\xi} = \mu$.

	{\bf Divergence between Gaussian processes}.
	Since Gaussian processes are fully determined by their means and covariance functions,
	the latter
	being fully determined by their covariance operators, we can define distance/divergence functions between two Gaussian processes as follows, see also e.g. \cite{Panaretos:jasa2010,Fremdt:2013testing,Pigoli:2014,
		masarotto2019procrustes}.
	Assume Assumptions A1-A4.
	Let $\H = \Lcal^2(T,\nu)$. Let $\Gauss(\H)$ denote the set of Gaussian measures on $\H$.
	Let $\xi^i \sim \GP(m_i,K_i)$, $i=1,2$,  be
	two Gaussian processes with mean $m_i \in \H$ and covariance function $K^i$.
	Let $D$ be a divergence function on $\Gauss(\H)\times \Gauss(\H)$. The corresponding divergence
	$D_{\GP}$ 
	between $\xi^1$ and $\xi^2$ is defined to be
	\begin{align}
		D_{\GP}(\xi^1|| \xi^2) = D(\Ncal(m_1,C_{K^1}) ||\Ncal(m_2, C_{K^2})).
	\end{align}
It is clear then that $D_{\GP}(\xi^1 ||\xi^2) \geq 0$ and by Mercer's Theorem
\begin{align}
D_{\GP}(\xi^1 ||\xi^2) = 0 \equivalent m_1 = m_2, C_{K^1} = C_{K^2} \equivalent m_1 = m_2, K^1 = K^2.
\end{align}
Subsequently, we assume $m_1  = m_2 = 0$ and focus on $D(\Ncal(0,C_{K^1}) ||\Ncal(0, C_{K^2}))$ with $D$ being a Riemannian distance.

\subsection{Background: Finite-dimensional distances}
\label{section:finite-distances}

In the finite-dimensional setting, 
many different distance and distance-like functions
between covariance matrices and Gaussian measures have been studied.
Specifically, let $A,B$ be two covariance matrices corresponding to two Borel probability measures in $\R^n$, then
$A,B \in \Sym^{+}(n)$, the set of $n \times n$ real, symmetric, positive semi-definite matrices.
Examples of distance functions that have been studied on $\Sym^{+}(n)$ include
\begin{enumerate}
	\item Euclidean (Frobenius) distance $d_E(A,B) = ||A-B||_F$, where $||\;||_F$ denotes the Frobenius norm.
	\item Square root distance \cite{Dryden:2009} $d_{1/2}(A,B) = ||A^{1/2} - B^{1/2}||_F = (\trace[A+B - 2(A^{1/2}B^{1/2})])^{1/2}$.
	\item Bures-Wasserstein distance (see e.g. \cite{Downson:1982, Givens:1984, Gelbrich:1990Wasserstein,Bhatia:2018Bures,Malago:WassersteinGaussian2018} $d_{\BW}(A,B) = (\trace[A+B - 2(B^{1/2}AB^{1/2})^{1/2}])^{1/2} = W_2(\Ncal(0,A), \Ncal(0,B))$,
	the $\Lcal^2$-Wasserstein distance between two zero-mean Gaussian probability measures in $\R^n$ with covariance matrices $A,B$.
	It coincides with the square root distance if and only if $A$ and $B$ commute.
\end{enumerate}

Consider now the set $\Sym^{++}(n)$ of $n \times n$ real, symmetric, positive definite (SPD) matrices.
Elements of this set include, for example, covariance matrices corresponding to Gaussian probability densities on $\R^n$.
The set $\Sym^{++}(n)$ is rich in intrinsic geometrical structures
and one common approach is to view it as a Riemannian manifold.
Examples of Riemannian metrics that have been studied on $\Sym^{++}(n)$ include

\begin{enumerate}
\item Affine-invariant Riemannian metric (see e.g. \cite{Pennec:IJCV2006,Bhatia:2007}), with the corresponding Riemannian distance
$d_{\aiE}(A,B) = ||\log(B^{-1/2}AB^{-1/2})||_F$, where $\log$ denotes the principal logarithm of $A$.
The
distance $d_{\aiE}(A,B)$ corresponds to 
the Fisher-Rao distance between two zero-mean Gaussian densities with covariance matrices $A,B$ in $\R^n$.
\item Log-Euclidean metric \cite{LogEuclidean:SIAM2007},
with the corresponding Riemannian distance
given by $d_{\logE}(A,B) = ||\log(A) - \log(B)||_F$.
\item When restricted on $\Sym^{++}(n)$, the Bures-Wasserstein distance is also the Riemannian distance
corresponding to a Riemannian metric \cite{Takatsu2011wasserstein}.
\end{enumerate}
{\bf Related generalizations}.
On $\Sym^{++}(n)$, the Frobenius, square root, and Log-Euclidean distances are all special cases of the power-Euclidean distances
\cite{Dryden:2009},
$d_{E,\alpha}(A,B) = \left\|\frac{A^{\alpha}- B^{\alpha}}{\alpha}\right\|, \alpha \in \R, \alpha \neq 0$,
with  
$\lim_{\alpha \approach 0}d_{E,\alpha}(A,B) = ||\log(A) - \log(B)||_F$.
Similarly, the Bures-Wasserstein and Log-Euclidean distances are special cases of the $\alpha$-Procrustes distances, which are Riemannian distances
corresponding to a family of Riemannian metrics on $\Sym^{++}(n)$ \cite{Minh:GSI2019}\cite{Minh:Alpha2019},
\begin{align}
d_{\proE}^{\alpha}(A,B) &= \frac{(\trace[A^{2\alpha} + B^{2\alpha} - 2(B^{\alpha}A^{2\alpha}B^{\alpha})^{1/2}]^{1/2}}{|\alpha|}, \alpha \in \R, \alpha \neq 0,
\lim_{\alpha \approach 0}d_{\proE}^{\alpha}(A,B) =||\log(A) - \log(B)||_F.
\nonumber
\end{align}
For a fixed
$\alpha \neq 0$, the $\alpha$-Procrustes and power-Euclidean distances coincides 
if and only if $A$ and $B$ commute.

{\bf Infinite-dimensional generalizations}. Consider now the setting of infinite-dimensional covariance operators.
The finite-dimensional Frobenius distance generalizes readily to the infinite-dimensional Hilbert-Schmidt distance
$||A-B||_{\HS}$, where $A,B$ are Hilbert-Schmidt operators. Similarly, 
the formulas for the square root and Bures-Wasserstein distances remain valid in the infinite-dimensional setting, 
where $A,B$ are positive trace class operators on a Hilbert space.
The situation is substantially different with the Log-Euclidean and affine-invariant Riemannian distances
(see also the discussion in \cite{Pigoli:2014}). This is due to the fact that a positive compact operator $A$ on a Hilbert space, 
such as a covariance operator, possesses a sequence of eigenvalues approaching zero, and hence both 
$A^{-1}$ and $\log(A)$ are unbounded. Thus the formulas
for the Log-Euclidean and affine-invariant Riemannian distances cannot be carried over directly  
to the covariance operator setting. Instead,
a proper infinite-dimensional generalization of the affine-invariant Riemannian and Log-Euclidean metrics on the set of SPD matrices 
have been proposed by using the concepts of {\it extended (unitized) Hilbert-Schmidt operators}, {\it positive definite (unitized) Hilbert-Schmidt operators}, and {\it extended Hilbert-Schmidt inner product and norm}
\cite{Larotonda:2007}.
We next discuss these concepts and show how they can be applied in the setting of covariance operators associated with random processes.

\section{Riemannian distances between positive definite Hilbert-Schmidt operators}
\label{section:infinite-distances}

We first discuss the concept of {\it positive definite (unitized)  Hilbert-Schmidt operators} on a Hilbert space \cite{Larotonda:2007}.
Specifically, let $\H,\H_1,\H_2$ be infinite-dimensional separable real Hilbert spaces.
Let $\Lcal(\H_1,\H_2)$ denote the set of bounded linear operators between $\H_1$ and $\H_2$.
For $\H_1 = \H_2 = \H$, we write $\Lcal(\H)$. 
The set of trace class operators on $\H$ is defined to be
$\Tr(\H) = \{A \in \Lcal(\H): ||A||_{\tr} = \sum_{k=1}^{\infty}\la e_k, (A^{*}A)^{1/2}e_k\ra < \infty\}$,
where $\{e_k\}_{k=1}^{\infty}$ is any orthonormal basis in $\H$ and
the trace norm
$||A||_{\tr}$ is independent of the choice of such basis. For $A \in \Tr(\H)$, the trace of $A$ is 
$\trace(A) = \sum_{k=1}^{\infty}\la e_k, A e_k\ra = \sum_{k=1}^{\infty}\lambda_k$, where $\{\lambda_k\}_{k=1}^{\infty}$ denote
the eigenvalues of $A$.
For two separable Hilbert spaces $\H_1,\H_2$,
the set of Hilbert-Schmidt operators between $\H_1$ and $\H_2$
is defined to be, see e.g. \cite{Kadison:1983},
$\HS(\H_1,\H_2) = \{A \in \Lcal(\H_1,\H_2): ||A||_{\HS}^2 = \trace(A^{*}A) = \sum_{k=1}^{\infty}||Ae_k||^2_{\H_2} < \infty\}$,
the Hilbert-Schmidt norm $||A||_{\HS}$ being independent of the choice of orthonormal basis $\{e_k\}_{k=1}^{\infty}$
in $\H_1$. For $\H_1 = \H_2 = \H$, we write $\HS(\H)$.
The set $\HS(\H_1,\H_2)$ is itself a Hilbert space with 
the Hilbert-Schmidt inner product $\la A,B\ra_{\HS} = \trace(A^{*}B) = \sum_{k=1}^{\infty}\la Ae_k, Be_k\ra_{\H_2}$.

The set of {\it extended (or unitized) Hilbert-Schmidt operators} on $\H$ is defined in \cite{Larotonda:2007} to be
\begin{align}
\HS_X(\H) = \{ A + \gamma I: A \in \HS(\H), \gamma \in \R\}.
\end{align}
This is a Hilbert space under the {\it extended Hilbert-Schmidt inner product} and {\it extended Hilbert-Schmidt norm}
\begin{align}
\la A+\gamma I, B + \nu I\ra_{\eHS} = \la A,B\ra_{\HS} + \gamma\nu,\;\;\;||A + \gamma I||^2_{\eHS} = ||A||^2_{\HS} + \gamma^2.
\end{align}
Under $\la ,\ra_{\eHS}$, the scalar operators $\gamma I$, $\gamma \in \R$, are orthogonal to the Hilbert-Schmidt operators.
With the norm $||\;||_{\eHS}$,
$||I||_{\eHS} = 1$, in contrast to the Hilbert-Schmidt norm, where $||I||_{\HS} = \infty$.

We recall that an operator $A \in \Lcal(\H)$ is said to be {\it positive definite} \cite{Petryshyn:1962} if there exists
a constant $M_A > 0$ such that $\la x, Ax\ra \geq M_A||x||^2$ $\forall x \in \H$.
This condition is equivalent to requiring that $A$ be both {\it strictly positive}, 
that is $\la x, Ax\ra > 0$ $\forall x \neq 0$, and {\it invertible}, with $A^{-1}\in \Lcal(\H)$.
Let $\bP(\H)$ be the set of {\it self-adjoint positive definite} bounded operators on $\H$.

{\bf Positive definite (unitized) Hilbert-Schmidt operators}. With the extended Hilbert-Schmidt operators, we define 
the set of {\it positive definite (unitized) Hilbert-Schmidt} operators on $\H$ to be
\begin{align}
\PC_2(\H) = \bP(\H) \cap \HS_X(\H) = \{A+ \gamma I > 0: A^{*} = A, \gamma \in \R\}.
\end{align}
This is a Hilbert manifold, being an open subset of the Hilbert space $\HS_X(\H)$.
On $\PC_2(\H)$, both $\log(A+\gamma I)$ and $(A+\gamma I)^{\alpha}$, $\alpha \in \R$, are well-defined and bounded. 

{\bf Affine-invariant Riemannian distance}.
The generalization of the {\it affine-invariant metric} on $\Sym^{++}(n)$ to the Hilbert manifold
$\PC_2(\H)$ was defined in
\cite{Larotonda:2007}, with the corresponding Riemannian distance given by
\begin{align}
d_{\aiHS}[(A+\gamma I), (B+ \nu I)] = ||\log[(B+\nu I)^{-1/2}(A+\gamma I)(B+\nu I)^{-1/2}]||_{\eHS}.
\end{align}

{\bf Log-Hilbert-Schmidt distance}.
Similarly, the generalization of the Log-Euclidean metric on $\Sym^{++}(n)$ to $\PC_2(\H)$ was defined in 
\cite{MinhSB:NIPS2014}, with the corresponding {\it Log-Hilbert-Schmidt distance} given by
\begin{align}
d_{\logHS}[(A+\gamma I), (B+ \nu I)] = ||\log(A+\gamma I) - \log(B+ \nu I)||_{\eHS}.
\end{align}
The definition of the extended Hilbert-Schmidt norm guarantees that
both $d_{\aiHS}[(A+\gamma I), (B+ \nu I)]$ and $d_{\logHS}[(A+\gamma I), (B+ \nu I)]$ are always well-defined and finite for any pair $(A+\gamma I), (B+\nu I)
\in \PC_2(\H)$.
In the setting of reproducing kernel Hilbert space (RKHS) covariance operators,
both the affine-invariant Riemannian and Log-Hilbert-Schmidt distances admit closed form formulas in terms
of the corresponding kernel Gram matrices \cite{MinhSB:NIPS2014}, \cite{Minh:GSI2015}.

{\bf Distances between positive Hilbert-Schmidt operators}.
In the case $\gamma = \nu > 0$ is fixed, both $d_{\aiHS}[(A+\gamma I), (B+\gamma I)]$ and 
$d_{\logHS}[(A+\gamma I), (B+\gamma I)]$ become distances on the set of self-adjoint, positive Hilbert-Schmidt operators on $\H$.
In the following, let $\Sym(\H) \subset \Lcal(\H)$ denote the set of self-adjoint, bounded operators 
and $\Sym^{+}(n) \subset \Sym(\H)$ the set of self-adjoint, positive, bounded operators on $\H$. We immediately have the following result.
\begin{theorem}
	\label{theorem:metric-positive-Hilbert-Schmidt}
Let $\gamma \in \R, \gamma > 0$ be fixed.
The distances $d_{\aiHS}[(A+\gamma I), (B+\gamma I)]$, $d_{\logHS}[(A+\gamma I), (B+ \gamma I)]$ are metrics on the set $\Sym^{+}(\H) \cap \HS(\H)$ of positive Hilbert-Schmidt operators on $\H$.
\end{theorem}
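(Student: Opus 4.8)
The plan is to realize $\Sym^{+}(\H) \cap \HS(\H)$ as a subset of the Hilbert manifold $\PC_2(\H)$, on which $d_{\aiHS}$ and $d_{\logHS}$ are already known to be metrics, and then invoke the elementary fact that the restriction of a metric to a subset — equivalently, its pullback along an injective map — is again a metric. Concretely, for the fixed $\gamma > 0$ I would introduce the shift map $\iota_{\gamma}: \Sym^{+}(\H) \cap \HS(\H) \to \HS_X(\H)$, $A \mapsto A + \gamma I$, and define $d(A,B) := d_{\aiHS}[(A+\gamma I),(B+\gamma I)]$ and likewise for $d_{\logHS}$. It then suffices to establish two things: that $\iota_{\gamma}$ actually takes values in $\PC_2(\H)$, and that $\iota_{\gamma}$ is injective.

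First I would check that $\iota_{\gamma}$ lands in $\PC_2(\H) = \bP(\H) \cap \HS_X(\H)$. Membership in $\HS_X(\H)$ is immediate, since $A \in \HS(\H)$ and $\gamma I$ is a scalar operator; self-adjointness of $A + \gamma I$ is clear. The one place where the hypotheses $A \geq 0$ and $\gamma > 0$ are genuinely used is positive definiteness: for every $x \in \H$ we have $\la x, (A+\gamma I)x\ra = \la x, Ax\ra + \gamma\|x\|^2 \geq \gamma\|x\|^2$, so $A+\gamma I$ is positive definite in the sense of \cite{Petryshyn:1962}, with constant $M_{A+\gamma I} = \gamma$, hence strictly positive, bounded below by $\gamma$, and invertible. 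Thus $A + \gamma I \in \PC_2(\H)$, and on $\PC_2(\H)$ both $\log(A+\gamma I)$ and $(A+\gamma I)^{-1/2}$ are well-defined and bounded, so both distances are finite. Injectivity of $\iota_{\gamma}$ is trivial, since $A + \gamma I = B + \gamma I$ forces $A = B$; this is exactly what is needed to transfer the identity of indiscernibles.

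Since $d_{\aiHS}$ and $d_{\logHS}$ are the Riemannian/geodesic distances on $\PC_2(\H)$ constructed in \cite{Larotonda:2007} and \cite{MinhSB:NIPS2014}, they are metrics on $\PC_2(\H)$; restricting to the image $\iota_{\gamma}(\Sym^{+}(\H)\cap\HS(\H))$ preserves non-negativity, symmetry, and the triangle inequality verbatim, while identity of indiscernibles follows from injectivity of $\iota_{\gamma}$ together with the fact that each distance vanishes exactly when its two arguments coincide. I expect no serious difficulty here: the only axiom that is not entirely elementary is the triangle inequality for $d_{\aiHS}$, and this is precisely what is already guaranteed by the geodesic (Riemannian) structure of \cite{Larotonda:2007}, so it is the step I would lean on rather than re-prove. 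If one prefers a self-contained verification, $d_{\logHS}$ is transparently a metric because it is literally the pullback of the norm $\|\cdot\|_{\eHS}$ under the injective logarithm $X \mapsto \log X$; and for $d_{\aiHS}$ the symmetry $d_{\aiHS}(P,Q) = d_{\aiHS}(Q,P)$ follows from the reciprocity of the eigenvalues of $Q^{-1/2}PQ^{-1/2}$ and $P^{-1/2}QP^{-1/2}$ under $\log$, leaving only the triangle inequality to be cited.
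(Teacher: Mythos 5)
Your proposal is correct and follows exactly the route the paper intends: the paper states this result without proof ("We immediately have the following result"), relying precisely on the observation that $A + \gamma I \in \PC_2(\H)$ for $A \in \Sym^{+}(\H)\cap\HS(\H)$ and $\gamma > 0$, so that the metric axioms are inherited from the known metric structure of $d_{\aiHS}$ and $d_{\logHS}$ on $\PC_2(\H)$, with the identity of indiscernibles supplied by the injectivity of the shift $A \mapsto A+\gamma I$. Your write-up simply makes explicit what the paper treats as immediate.
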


{\bf Related and further generalizations}. Similar to the extended Hilbert-Schmidt operators, we can define
the {\it extended trace class operators} \cite{Minh:LogDet2016}  to be $\Tr_X(\H) = \{A+\gamma I: A \in \Tr(\H), \gamma \in \R\}$ along with the {\it extended Fredholm determinant} $\detX(A+\gamma I)$ and subsequently the {\it extended Hilbert-Carleman determinant}
\cite{Minh:Positivity2020}. With these concepts, we obtained the {\it infinite-dimensional Alpha Log-Det divergences} \cite{Minh:LogDet2016}
and {\it Alpha--Beta Log-Det divergences} \cite{Minh:INGE2019} between positive definite (unitized) trace class operators and subsequently on the entire Hilbert manifold $\PC_2(\H)$
 \cite{Minh:Positivity2020}.  The Alpha-Beta Log-Det divergences form a highly general family of divergences on $\PC_2(\H)$ and include the affine-invariant Riemannian distance $d_{\aiHS}$ as a special case.
Closely related to the $\Lcal^2$-Wasserstein distance is its entropic regularization, the {\it Sinkhorn divergence}. For two Gaussian measures $\mu_i \sim \Ncal(m_i, C_i)$ on $\H$, $i=0,1$, it is given by \cite{Minh2020:EntropicHilbert}
\begin{align}
	\Srm^{\ep}_{2}(\mu_0, \mu_1) &= ||m_0 - m_1||^2 + \frac{\ep}{4}\trace\left[M^{\ep}_{00} - 2M^{\ep}_{01} + M^{\ep}_{11}\right] 
\label{equation:gauss-sinkhorn-infinite}
%
+ \frac{\ep}{4}\log\det\left[\frac{\left(I + \frac{1}{2}M^{\ep}_{01}\right)^2}{\left(I + \frac{1}{2}M^{\ep}_{00}\right)\left(I + \frac{1}{2}M^{\ep}_{11}\right)}\right], \;\;\;\ep > 0,
\end{align}
with $\lim_{\ep \approach 0}S^{\ep}_2(\mu_0, \mu_1) = W_2^2(\mu_0, \mu_1)$.
Here $M^{\ep}_{ij} = -I + \left(I + \frac{16}{\epsilon^2}C_i^{1/2}C_jC_i^{1/2}\right)^{1/2}$, $i,j=0,1$, and $\det$ denotes the Fredholm determinant.
The {\it $\alpha$-Procrustes distances} can also be generalized to the infinite-dimensional setting of $\PC_2(\H)$ and
include both the Bures-Wasserstein and Log-Hilbert-Schmidt distances as special cases \cite{Minh:GSI2019} \cite{Minh:Alpha2019},
\begin{align}
&d^{\alpha}_{\proHS}[(A+\gamma I), (B+ \gamma I)] 
= \frac{(\trace[(A+\gamma I)^{2\alpha} + (B+ \gamma I)^{2\alpha} -2 [(B+\gamma I)^{\alpha}(A+\gamma I)^{2\alpha}(B+\gamma I)^{\alpha}]^{1/2})])^{1/2}}{|\alpha|},\alpha \in \R, \alpha \neq 0,
\nonumber
\\
&\lim_{\alpha \approach 0}d^{\alpha}_{\proHS}[(A+\gamma I), (B+ \gamma I)] = ||\log(A+\gamma I) - \log(B+\gamma I)||_{\eHS}.
\end{align}
In particular, for $A,B \in \Sym^{+}(\H) \cap \Tr(\H)$ and $\alpha = 1/2$,
$\lim_{\gamma \approach 0}d^{1/2}_{\proHS}[(A+\gamma I), (B+ \gamma I)] 
=  2 (\trace[A+B - 2(B^{1/2}AB^{1/2})])^{1/2}$, which is twice
the Bures-Wasserstein distance.

\subsection{Finite-rank and finite-dimensional approximations}
\label{section:finite-rank-approx}

In practice, it is typically necessary to deal with {\it finite-rank} and/or {\it finite-dimensional approximations}
of infinite-dimensional distances. In the cases of $d_{\aiHS}$ and $d_{\logHS}$, finite-rank and finite-dimensional approximations are consequences of the following 
general convergence results, which are subsequently employed in the Gaussian process setting. 
We first note the decomposition $||\log(A+\gamma I) - \log(B+\nu I)||^2_{\eHS} = ||\log(\frac{A}{\gamma} + I) - \log(\frac{B}{\nu} + I)||_{\HS}^2 + (\log\frac{\gamma}{\nu})^2$
and similarly $||\log[(A+\gamma I)^{-1/2}(B+\nu I)(A+\gamma I)^{-1/2}]||^2_{\eHS} = ||\log[(\frac{A}{\gamma} + I)^{-1/2}(\frac{B}{\nu} + I)(\frac{A}{\gamma} + I)^{-1/2}]||_{\HS}^2 + (\log\frac{\gamma}{\nu})^2$, 
thus for $\gamma = \nu$
\begin{align}
	&||\log(A+\gamma I) - \log(B+\gamma I)||_{\eHS} = ||\log(A+\gamma I) - \log(B+\gamma I)||_{\HS} = \left\|\log\left(\frac{A}{\gamma} + I\right) - \log\left(\frac{B}{\gamma} + I\right)\right\|_{\HS},
	\\
	&||\log[(A+\gamma I)^{-1/2}(B+\gamma I)(A+\gamma I)^{-1/2}]||_{\eHS} = ||\log[(A+\gamma I)^{-1/2}(B+\gamma I)(A+\gamma I)^{-1/2}]||_{\HS}
	\nonumber
	\\
	&\quad =\left\|\log\left[\left(\frac{A}{\gamma} + I\right)^{-1/2}\left(\frac{B}{\gamma} + I\right)\left(\frac{A}{\gamma} + I\right)^{-1/2}\right]\right\|_{\HS}.
\end{align}
\begin{theorem}
	[\textbf{Convergence in Log-Hilbert-Schmidt distance}]
	\label{theorem:logHS-convergence}
	Let $\gamma \in \R, \gamma > 0$ be fixed.
	Let
	$A, \{A_n\}_{n \in \N} \in \Sym(\H) \cap \HS(\H)$.
	Assume that $(\gamma I+A) > 0, \gamma I+A_n > 0$ $\forall n \in \N$. Then
	$\log(\gamma I+A_n)$, $\log(\gamma I+A) \in \Sym(\H) \cap \HS_X(\H)$.

(i) If $A,A_n \in \Sym^{+}(\H) \cap \HS(\H)$ $\forall n \in \Nbb$, then
\begin{align}
	||\log(\gamma I+A_n) - \log(\gamma I+A)||_{\HS} \leq \frac{1}{\gamma}||A_n - A||_{\HS} \;\; \forall n \in \Nbb.
\end{align}
	
	(ii) In general, if $\lim_{n \approach \infty}||A_n - A|| = 0$, 
let $M_A >0$ be such that $\la x, (\gamma I+A)x\ra \geq M_A||x||^2$ $\forall x \in \H$.
For $0 < \epsilon < M_A$ fixed, let $N(\epsilon) \in \Nbb$ such that
$||A_n -A||<\epsilon$ $\forall n \geq N(\ep)$, then
\begin{align}
||\log(\gamma I +A_n) - \log(\gamma I+A)||_{\HS} \leq \frac{1}{M_A-\epsilon}||A_n - A||_{\HS} \;\;\forall n \geq N(\epsilon).
\end{align}
In both cases, $\lim_{n \approach \infty}||A_n - A||_{\HS} = 0$ implies
$\lim_{n \approach \infty}||\log(\gamma I+A_n) - \log(\gamma I+A)||_{\HS} = 0$.
\end{theorem}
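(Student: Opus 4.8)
The plan is to reduce both bounds to a single integral representation of the operator logarithm, which turns the difference $\log(\gamma I + A_n) - \log(\gamma I + A)$ into an expression whose Hilbert--Schmidt norm is directly controlled by $||A_n - A||_{\HS}$. First I would invoke the representation, valid for every $X \in \bP(\H)$ bounded below away from $0$,
\[
\log(X) = \int_0^{\infty}\left[\frac{1}{1+s}I - (X+sI)^{-1}\right]ds,
\]
with convergence in operator norm. Since $\gamma I + A$ and $\gamma I + A_n$ lie in $\PC_2(\H)$ by hypothesis, their logarithms are well defined and bounded, and the scalar terms cancel upon subtraction, leaving
\[
\log(\gamma I + A_n) - \log(\gamma I + A) = \int_0^{\infty}\left[(\gamma I + A + sI)^{-1} - (\gamma I + A_n + sI)^{-1}\right]ds.
\]
The resolvent identity then rewrites the integrand as the sandwich $(\gamma I + A_n + sI)^{-1}(A_n - A)(\gamma I + A + sI)^{-1}$, in which the Hilbert--Schmidt operator $A_n - A$ is flanked by two bounded resolvents.

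Next I would use the operator-ideal inequality $||MNP||_{\HS} \leq ||M||\,||N||_{\HS}\,||P||$, with $||\cdot||$ the operator norm, to pull $||A_n - A||_{\HS}$ out of the integral, obtaining
\[
||\log(\gamma I + A_n) - \log(\gamma I + A)||_{\HS} \leq ||A_n - A||_{\HS}\int_0^{\infty}||(\gamma I + A_n + sI)^{-1}||\;||(\gamma I + A + sI)^{-1}||\,ds.
\]
It then remains to bound each resolvent norm from the available lower bounds on the operators. In case (i), positivity of $A$ and $A_n$ gives $\gamma I + A + sI \geq (\gamma+s)I$ and likewise for $A_n$, so each resolvent norm is at most $(\gamma+s)^{-1}$ and $\int_0^{\infty}(\gamma+s)^{-2}ds = 1/\gamma$, which yields the constant $1/\gamma$. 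In case (ii), the coercivity bound gives $\gamma I + A \geq M_A I$, while $||A_n - A|| < \epsilon$ for $n \geq N(\epsilon)$ gives $\la x, (\gamma I + A_n)x\ra \geq (M_A - \epsilon)||x||^2$; using the common lower bound $(M_A - \epsilon)I$ for both operators bounds each resolvent norm by $(M_A - \epsilon + s)^{-1}$, and $\int_0^{\infty}(M_A - \epsilon + s)^{-2}ds = 1/(M_A - \epsilon)$ gives the stated constant.

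For the membership claim $\log(\gamma I + A) \in \Sym(\H) \cap \HS_X(\H)$ I would apply the same representation with reference point $\gamma I$: since $\log(\gamma I) = (\log\gamma)I$, the difference $\log(\gamma I + A) - (\log\gamma)I = \int_0^{\infty}\frac{1}{\gamma+s}A(\gamma I + A + sI)^{-1}ds$ is a Bochner integral of a Hilbert--Schmidt integrand with integrable norm, hence itself Hilbert--Schmidt; thus $\log(\gamma I + A)$ is a scalar plus a Hilbert--Schmidt operator, and self-adjointness is inherited from $A$. The final convergence statements follow at once: in case (i) directly, and in case (ii) upon noting that $||A_n - A|| \leq ||A_n - A||_{\HS}$, so Hilbert--Schmidt convergence implies operator-norm convergence and hence applicability of (ii), after which the right-hand side tends to $0$.

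The step I expect to be the main obstacle is the careful analytic justification of the integral representation itself: establishing operator-norm convergence of the improper integral, verifying Bochner integrability of the $\HS$-valued integrand, and legitimately interchanging the Hilbert--Schmidt norm with the integral via the triangle (Minkowski) inequality for Bochner integrals. Once this scaffolding is secured, the resolvent identity, the lower-bound estimates on the resolvents, and the two elementary scalar integrals are entirely routine.
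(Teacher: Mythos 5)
Your argument is correct, but it takes a genuinely different route from the paper's own proof of this theorem. The paper proves Theorem \ref{theorem:logHS-convergence} by approximating the logarithm with operator powers: it combines the limit $\lim_{\alpha \approach 0}\|\frac{(I+A)^{\alpha}-I}{\alpha} - \log(I+A)\|_{\HS} = 0$ (Lemma \ref{lemma:limit-log(I+A)}) with the Kittaneh-type Schatten-norm inequality $\|(I+A_n)^{\alpha} - (I+A)^{\alpha}\|_{\HS} \leq \alpha(M_A-\epsilon)^{\alpha-1}\|A_n-A\|_{\HS}$ (Lemma \ref{lemma:inequality-Schatten} and Corollaries \ref{corollary:Schatten-class-rth-root}, \ref{corollary:limit-(I+A)r}), then lets $\alpha \approach 0$. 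You instead use a resolvent integral representation of $\log$, the resolvent identity, and the ideal inequality $\|MNP\|_{\HS} \leq \|M\|\,\|N\|_{\HS}\,\|P\|$; this is essentially the technique the paper itself deploys later for Lemma \ref{lemma:logA-norm-p} and Lemma \ref{lemma:log-I+A-integral-representation-general} (in the form $\log(I+A) = A\int_0^1(I+tA)^{-1}dt$, which is your integral after the substitution $t = 1/(1+s)$ and which sidesteps the improper integral over $[0,\infty)$ that you flag as the delicate step). Your route is more self-contained --- it needs no external power inequality, and it yields the membership claim $\log(\gamma I + A) \in \Sym(\H)\cap\HS_X(\H)$ directly from the same formula --- while the paper's power route ties the estimate to the family of $\alpha$-power metrics it discusses and reuses machinery already needed for Corollary \ref{corollary:limit-(I+A)r}. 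One remark on your anticipated obstacle: since $A$ is self-adjoint here, the integral representation follows at once from the spectral theorem applied to the scalar identity, so the analytic scaffolding is lighter than you fear; the compactly supported form on $[0,1]$ makes the Bochner integrability and the interchange of $\|\cdot\|_{\HS}$ with the integral immediate.
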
 

\begin{remark}
Scenario (i) in Theorem \ref{theorem:logHS-convergence} applies immediately to the case $A$, 
$A_n$ are covariance operators on $\H$. In general, the setting in (ii) is needed since we can have $I + A > 0$
without $A$ being positive.
For example, for $I+A > 0, I+B > 0$, as in Theorem \ref{theorem:affineHS-convergence} for the affine-invariant Riemannian distance,
we have $(I+B)^{-1/2}(I+A)(I+B)^{-1/2}  = I+C > 0$, where the operator $C = (I+B)^{-1/2}(A-B)(I+B)^{-1/2}$
can be positive or negative or indefinite.
\end{remark}

\begin{theorem}
	[\textbf{Approximation of Log-Hilbert-Schmidt distance}]
	\label{theorem:logHS-approx-sequence-0}
	Let $\gamma_i \in \R, \gamma_i > 0$, $i=1,2$ be fixed.
	Let
	$A,B, \{A_n\}_{n \in \N}$,
	$\{B_n\}_{n \in \N}$ 
	$\in \Sym(\H) \cap \HS(\H)$
	be such that $\gamma_1 I+A > 0$, $\gamma_2 I+B > 0$, $\gamma_1 I+A_n > 0, \gamma_2 I+B_n > 0 \forall n \in \Nbb$.
	
	(i) If $A_n, B_n , A,B \in \Sym^{+}(\H) \cap \HS(\H)$, then
	\begin{align}
	&\left|||\log(\gamma_1 I + A_n) - \log(\gamma_2 I +B_n)||_{\eHS} - ||\log(\gamma_1 I + A) - \log(\gamma_2 I +B)||_{\eHS}\right| 
	\nonumber
	\\
	&\leq \frac{1}{\gamma_1}||A_n -A||_{\HS} + \frac{1}{\gamma_2}||B_n - B||_{\HS}.
	\end{align}
(ii) In general case, assume that $\lim_{n \approach \infty}||A_n - A||_{\HS} = 0$, $\lim_{n \approach 0}||B_n - B||_{\HS} = 0$.
	Let $M_A, M_B  > 0$ be such that $\la x, (\gamma_1 I+A)x\ra \geq M_A||x||^2$, $\la x, (\gamma_2 I+B)x\geq M_B||x||^2$ $\forall x \in \H$. 
	Then $\forall 0 < \ep < \min\{M_A, M_B\}$, $\exists N(\ep) \in \Nbb$ such that $\forall n \geq N(\ep)$, $||A_n - A|| < \ep, ||B_n - B|| < \ep$, and
	\begin{align}
		&\left|||\log(\gamma_1 I + A_n) - \log(\gamma_2 I +B_n)||_{\eHS} - ||\log(\gamma_1 I + A) - \log(\gamma_2 I +B)||_{\eHS}\right| 
		\nonumber
		\\
		&\leq \frac{1}{M_A - \ep}||A_n -A||_{\HS} + \frac{1}{M_B-\ep}||B_n - B||_{\HS}.
	\end{align}
\end{theorem}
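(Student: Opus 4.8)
The plan is to reduce everything to the single-operator convergence bounds already established in Theorem \ref{theorem:logHS-convergence}, using the reverse triangle inequality together with the structure of the extended Hilbert-Schmidt norm. Write $X_n = \log(\gamma_1 I + A_n)$, $X = \log(\gamma_1 I + A)$, $Y_n = \log(\gamma_2 I + B_n)$, $Y = \log(\gamma_2 I + B)$; by the hypotheses and Theorem \ref{theorem:logHS-convergence} these all lie in $\Sym(\H) \cap \HS_X(\H)$. Applying the reverse triangle inequality in the Hilbert space $(\HS_X(\H), \|\cdot\|_{\eHS})$ with $u = X_n - Y_n$ and $v = X - Y$ gives
\[
\left|\,\|X_n - Y_n\|_{\eHS} - \|X - Y\|_{\eHS}\,\right| \leq \|(X_n - X) - (Y_n - Y)\|_{\eHS}.
\]

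First I would observe that each difference $X_n - X$ and $Y_n - Y$ is purely Hilbert-Schmidt, carrying no scalar component. Indeed, $\log(\gamma_1 I + A_n) = (\log\gamma_1) I + \log(I + A_n/\gamma_1)$ with $\log(I + A_n/\gamma_1) \in \HS(\H)$, and likewise for $A$; hence the scalar parts $(\log\gamma_1) I$ cancel in $X_n - X$, and the $(\log\gamma_2)I$ parts cancel in $Y_n - Y$. Consequently $(X_n - X) - (Y_n - Y) \in \HS(\H)$, and because its scalar component vanishes, its extended norm coincides with its ordinary Hilbert-Schmidt norm:
\[
\|(X_n - X) - (Y_n - Y)\|_{\eHS} = \|(X_n - X) - (Y_n - Y)\|_{\HS} \leq \|X_n - X\|_{\HS} + \|Y_n - Y\|_{\HS},
\]
the last step being the ordinary triangle inequality in $\HS(\H)$.

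It then remains to bound each summand by the corresponding single-operator estimate. For part (i), since $A_n, A \in \Sym^{+}(\H) \cap \HS(\H)$, Theorem \ref{theorem:logHS-convergence}(i) applied with $\gamma = \gamma_1$ gives $\|X_n - X\|_{\HS} \leq \frac{1}{\gamma_1}\|A_n - A\|_{\HS}$, and applied with $\gamma = \gamma_2$ to $B$ gives $\|Y_n - Y\|_{\HS} \leq \frac{1}{\gamma_2}\|B_n - B\|_{\HS}$; substituting yields the claimed bound. For part (ii) I would instead invoke Theorem \ref{theorem:logHS-convergence}(ii): given $0 < \ep < \min\{M_A, M_B\}$, take $N(\ep)$ to be the maximum of the two thresholds that theorem supplies for the sequences $\{A_n\}$ and $\{B_n\}$, so that $\|A_n - A\| < \ep$ and $\|B_n - B\| < \ep$ hold simultaneously for all $n \geq N(\ep)$; the theorem then furnishes $\|X_n - X\|_{\HS} \leq \frac{1}{M_A - \ep}\|A_n - A\|_{\HS}$ and $\|Y_n - Y\|_{\HS} \leq \frac{1}{M_B - \ep}\|B_n - B\|_{\HS}$, completing the estimate.

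The argument is essentially a short consequence of Theorem \ref{theorem:logHS-convergence} once the norm bookkeeping is in place, so there is no serious obstacle; the only point requiring genuine care is the cancellation of scalar components, since this is exactly what lets the extended norm of the difference be replaced by the plain Hilbert-Schmidt norm and thereby connects the $\eHS$-distance appearing on the left with the $\HS$-bounds supplied by the earlier theorem. For part (ii) one must additionally ensure $\ep$ is small enough that both $M_A - \ep$ and $M_B - \ep$ remain positive, which is precisely the stated restriction $\ep < \min\{M_A, M_B\}$.
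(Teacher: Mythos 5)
Your proposal is correct and follows essentially the same route as the paper: the paper applies the quadrilateral inequality $|d(P_n,Q_n)-d(P,Q)|\leq d(P_n,P)+d(Q_n,Q)$ for the metric $d_{\logHS}$ and then invokes Theorem \ref{theorem:logHS-convergence} on each term, which is exactly your reverse-triangle-plus-triangle argument in the normed space $(\HS_X(\H),\|\cdot\|_{\eHS})$ followed by the same single-operator bounds. Your explicit remark that the scalar components $(\log\gamma_i)I$ cancel, so the $\eHS$-norm of the difference reduces to the $\HS$-norm, is the same bookkeeping the paper records in the displayed decomposition just before Theorem \ref{theorem:logHS-convergence}.
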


\begin{theorem}
	[\textbf{Convergence in affine-invariant Riemannian distance}]
	\label{theorem:affineHS-convergence}
	Let $\gamma \in \R, \gamma > 0$ be fixed.
Let
$A,\{A_n\}_{n \in \N}$
$\in \Sym(\H) \cap \HS(\H)$
be such that $\gamma I+A > 0$, $\gamma I+A_n > 0 \forall n \in \Nbb$, and $\lim_{n \approach \infty}||A_n - A||_{\HS} = 0$.
Let $M_A > 0$ be such that $\la x, (\gamma I+A)x\ra \geq M_A||x||^2$ $\forall x \in \H$. 
Then $\forall 0 < \ep < M_A$, $\exists N(\ep) \in \Nbb$ such that $\forall n \geq N(\ep)$, $||A_n - A|| < \ep$ and
\begin{align}
||\log[(\gamma I+A)^{-1/2}(\gamma I+A_n)(\gamma I+A)^{-1/2}]||_{\HS}\leq \frac{1}{M_A - \ep}||A_n - A||_{\HS}.
\end{align}
In particular, if $A \in \Sym^{+}(\H) \cap \HS(\H)$, then we can set $M_A = \gamma$ and consequently, $\forall 0 < \ep < \gamma$,
\begin{align}
	||\log[(\gamma I+A)^{-1/2}(\gamma I+A_n)(\gamma I+A)^{-1/2}]||_{\HS}\leq \frac{1}{\gamma - \ep}||A_n - A||_{\HS} \;\;\forall n \geq N(\ep).
\end{align}
\end{theorem}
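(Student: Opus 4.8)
The plan is to conjugate so that the affine-invariant expression becomes a Log-Hilbert-Schmidt expression \emph{centered at the identity}, and then apply Theorem~\ref{theorem:logHS-convergence}(ii). First I would write $P = \gamma I + A$ and $P_n = \gamma I + A_n$, both self-adjoint and strictly positive by hypothesis. The lower bound $\la x, (\gamma I + A)x\ra \geq M_A||x||^2$ says exactly that $P \geq M_A I$, so $P^{-1/2}$ is bounded, self-adjoint, and $||P^{-1/2}||^2 = ||P^{-1}|| \leq 1/M_A$. The key algebraic step is the factorization $(\gamma I + A)^{-1/2}(\gamma I + A_n)(\gamma I + A)^{-1/2} = P^{-1/2} P_n P^{-1/2} = I + D_n$, where $D_n := P^{-1/2}(A_n - A)P^{-1/2}$; hence the quantity to be bounded is precisely $||\log(I + D_n)||_{\HS}$.

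Next I would record the relevant properties of $D_n$. Since $P^{-1/2}$ is self-adjoint and $A_n - A \in \Sym(\H)$, the operator $D_n$ is self-adjoint; since $A_n - A \in \HS(\H)$ and $\HS(\H)$ is a two-sided ideal, $D_n \in \HS(\H)$. Using $||XYZ||_{\HS} \leq ||X||\,||Y||_{\HS}\,||Z||$ together with $||P^{-1/2}||^2 \leq 1/M_A$ yields the two estimates $||D_n|| \leq \frac{1}{M_A}||A_n - A||$ and $||D_n||_{\HS} \leq \frac{1}{M_A}||A_n - A||_{\HS}$. Moreover $I + D_n = P^{-1/2}P_n P^{-1/2} > 0$ because $P_n > 0$ and $P^{-1/2}$ is invertible. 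By the convergence hypothesis I would fix $N(\ep)$ with $||A_n - A|| < \ep$ for all $n \geq N(\ep)$; for such $n$ the first estimate gives $||D_n|| < \ep/M_A =: \ep'$, and $\ep < M_A$ guarantees $\ep' \in (0,1)$.

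Now comes the application of the earlier result. I would invoke the estimate of Theorem~\ref{theorem:logHS-convergence}(ii) with the roles ``$\gamma$''$\,=1$, base operator $=0$, and perturbation $=D_n$, so that $\gamma I + A = I$ has lower bound $1$ and $\gamma I + A_n = I + D_n > 0$; since the Lipschitz bound there depends only on the lower bound and on $||D_n|| < \ep'$, it applies to this fixed perturbation of the identity and gives $||\log(I + D_n)||_{\HS} = ||\log(I+D_n) - \log(I)||_{\HS} \leq \frac{1}{1 - \ep'}||D_n||_{\HS}$. Substituting $\ep' = \ep/M_A$ and $||D_n||_{\HS} \leq \frac{1}{M_A}||A_n - A||_{\HS}$, the two constants collapse, $\frac{1}{1 - \ep/M_A}\cdot\frac{1}{M_A} = \frac{1}{M_A - \ep}$, which is exactly the claimed bound. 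I expect the main difficulty to be bookkeeping rather than conceptual: one must keep $||\cdot||$ (operator norm) distinct from $||\cdot||_{\HS}$, use the threshold $\ep' = \ep/M_A$ (not $\ep$) when quoting Theorem~\ref{theorem:logHS-convergence}, and notice that the factor $1/M_A$ from $||P^{-1/2}||^2$ enters \emph{both} the control of $||D_n||$ (hence $\ep'$) and of $||D_n||_{\HS}$, which is precisely why the final constant simplifies so cleanly.

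Finally, for the special case I would simply observe that if $A \in \Sym^{+}(\H)$ then $A \geq 0$, whence $\gamma I + A \geq \gamma I$ and one may take $M_A = \gamma$; substituting $M_A = \gamma$ into the bound gives $\frac{1}{\gamma - \ep}||A_n - A||_{\HS}$ for every $0 < \ep < \gamma$.
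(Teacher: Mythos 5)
Your proposal is correct and follows essentially the same route as the paper's proof: both rewrite $(\gamma I+A)^{-1/2}(\gamma I+A_n)(\gamma I+A)^{-1/2}$ as $I+D_n$ with $D_n=(\gamma I+A)^{-1/2}(A_n-A)(\gamma I+A)^{-1/2}$ and use $||(\gamma I+A)^{-1}||\leq 1/M_A$ to control both $||D_n||$ and $||D_n||_{\HS}$, so that the two factors of $1/M_A$ combine into the constant $1/(M_A-\ep)$. The only (harmless) deviation is the final step: the paper bounds $||\log(I+D_n)||_{\HS}$ via the power-series estimate of Lemma \ref{lemma:logI+A-norm-p}, namely $||\log(I+X)||_{\HS}\leq ||X||_{\HS}/(1-||X||)$ for $||X||<1$, whereas you re-invoke Theorem \ref{theorem:logHS-convergence}(ii) at the base point $0$ with threshold $\ep'=\ep/M_A$; since that theorem's bound only uses $||D_n||<\ep'$ for the given $n$, this substitution is legitimate and yields the identical constant.
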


\begin{theorem}
	[\textbf{Approximation of affine-invariant Riemannian distance}]
	\label{theorem:affineHS-approx-sequence-0}
	Let $\gamma_i \in \R, \gamma_i > 0$, $i=1,2$, be fixed.
	Let
	$A,B, \{A_n\}_{n \in \N}$,
	$\{B_n\}_{n \in \N}$ 
	$\in \Sym(\H) \cap \HS(\H)$
	be such that $\gamma_1 I+A > 0$, $\gamma_2 I+B > 0$, $\gamma_1 I+A_n > 0, \gamma_2 I+B_n > 0 \forall n \in \Nbb$, and $\lim_{n \approach \infty}||A_n - A||_{\HS} = 0$, $\lim_{n \approach 0}||B_n - B||_{\HS} = 0$.
	Let $M_A, M_B  > 0$ be such that $\la x, (\gamma_1 I+A)x\ra \geq M_A||x||^2$, $\la x, (\gamma_2 I+B)x\geq M_B||x||^2$ $\forall x \in \H$. 
	Then $\forall 0 < \ep < \min\{M_A, M_B\}$, $\exists N(\ep) \in \Nbb$ such that $\forall n \geq N(\ep)$, $||A_n - A|| < \ep, ||B_n - B|| < \ep$, and
	\begin{align}
		&\left|||\log[(\gamma_1 I+A_n)^{-1/2}(\gamma_2 I+B_n)(\gamma_1 I+A_n)^{-1/2}]||_{\eHS}
		-||[\log(\gamma_1 I+A)^{-1/2}(\gamma_2 I+B)(\gamma_1 I+A)^{-1/2}]||_{\eHS}\right|
		\nonumber
		\\
		&\leq \frac{1}{M_A - \ep}||A_n - A||_{\HS} + \frac{1}{M_B-\ep}||B_n -B||_{\HS}.
	\end{align}
	In particular, if $A,B \in \Sym^{+}(\H) \cap \HS(\H)$, then we can set $M_A = \gamma_1, M_B = \gamma_2$ and consequently, $\forall 0 < \ep < \min\{\gamma_1, \gamma_2\}$,
	\begin{align}
		&\left|||\log[(\gamma_1 I+A_n)^{-1/2}(\gamma_2 I+B_n)(\gamma I_1+A_n)^{-1/2}]||_{\eHS}
		-||[\log(\gamma I_1+A)^{-1/2}(\gamma_2 I+B)(\gamma_1 I+A)^{-1/2}]||_{\eHS}\right|
		\nonumber
		\\
		&\leq \frac{1}{\gamma_1- \ep}||A_n - A||_{\HS} + \frac{1}{\gamma_2-\ep}||B_n -B||_{\HS}.
	\end{align}
\end{theorem}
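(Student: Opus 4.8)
The plan is to reduce the two-sided approximation bound to two one-sided convergence estimates via the triangle inequality for the affine-invariant Riemannian distance, each of which is then supplied directly by Theorem \ref{theorem:affineHS-convergence}. Write $P = \gamma_1 I + A$, $P_n = \gamma_1 I + A_n$, $Q = \gamma_2 I + B$, $Q_n = \gamma_2 I + B_n$. Under the stated hypotheses these four operators all lie in $\PC_2(\H) = \bP(\H) \cap \HS_X(\H)$, so every distance below is well-defined and finite, and the two quantities whose difference we must control are exactly $d_{\aiHS}(P_n, Q_n)$ and $d_{\aiHS}(P, Q)$ written in the (symmetric) form $\|\log(P^{-1/2}QP^{-1/2})\|_{\eHS}$.

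The crucial point is that $d_{\aiHS}$ is a genuine Riemannian (hence geodesic) distance on the Hilbert manifold $\PC_2(\H)$ \cite{Larotonda:2007}, so it is symmetric and satisfies the triangle inequality. Applying the triangle inequality twice and using symmetry gives
\begin{align}
\left| d_{\aiHS}(P_n, Q_n) - d_{\aiHS}(P, Q) \right| \leq d_{\aiHS}(P_n, P) + d_{\aiHS}(Q_n, Q).
\end{align}
Unlike the Log-Hilbert-Schmidt case (Theorem \ref{theorem:logHS-approx-sequence-0}), where the distance has the linear form $\|\log(\cdot) - \log(\cdot)\|_{\eHS}$ and the reverse triangle inequality in the Hilbert space $\HS_X(\H)$ can be invoked directly, here the dependence of $d_{\aiHS}$ on its two arguments is non-additive, so this \emph{metric} triangle inequality is the essential mechanism replacing the vector-space one. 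I expect this to be the main conceptual obstacle: one must rely on $d_{\aiHS}$ being a bona fide metric on $\PC_2(\H)$, rather than attempt a direct algebraic comparison of $\log(P_n^{-1/2}Q_nP_n^{-1/2})$ against $\log(P^{-1/2}QP^{-1/2})$.

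It remains to estimate the two terms on the right. Since $P$ and $P_n$ share the same scalar part $\gamma_1$, we have $P^{-1/2}P_nP^{-1/2} = I + C_n$ with $C_n = P^{-1/2}(A_n - A)P^{-1/2} \in \HS(\H)$, so $\log(P^{-1/2}P_nP^{-1/2}) \in \HS(\H)$ has vanishing scalar part; consequently $\|\cdot\|_{\eHS}$ and $\|\cdot\|_{\HS}$ agree on it, exactly as recorded in the decomposition preceding Theorem \ref{theorem:logHS-convergence}. Hence $d_{\aiHS}(P_n, P) = \|\log[(\gamma_1 I + A)^{-1/2}(\gamma_1 I + A_n)(\gamma_1 I + A)^{-1/2}]\|_{\HS}$, which is precisely the quantity bounded in Theorem \ref{theorem:affineHS-convergence} with base point $\gamma_1 I + A$: for $0 < \epsilon < M_A$ there is $N_1(\epsilon)$ with $\|A_n - A\| < \epsilon$ and $d_{\aiHS}(P_n, P) \leq \frac{1}{M_A - \epsilon}\|A_n - A\|_{\HS}$ for all $n \geq N_1(\epsilon)$. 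The identical argument with $\gamma_2, B, B_n, M_B$ produces $N_2(\epsilon)$ and $d_{\aiHS}(Q_n, Q) \leq \frac{1}{M_B - \epsilon}\|B_n - B\|_{\HS}$.

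Finally, for $0 < \epsilon < \min\{M_A, M_B\}$ set $N(\epsilon) = \max\{N_1(\epsilon), N_2(\epsilon)\}$; then for all $n \geq N(\epsilon)$ we have $\|A_n - A\| < \epsilon$, $\|B_n - B\| < \epsilon$, and combining the two estimates with the triangle inequality yields the claimed bound. The stated special case is then immediate: when $A, B \in \Sym^{+}(\H) \cap \HS(\H)$, the corresponding special case of Theorem \ref{theorem:affineHS-convergence} permits the choices $M_A = \gamma_1$ and $M_B = \gamma_2$, giving the sharper constants.
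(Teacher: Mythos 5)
Your proposal is correct and follows essentially the same route as the paper's own proof: both reduce the two-sided bound to $d_{\aiHS}(\gamma_1 I+A_n,\gamma_1 I+A) + d_{\aiHS}(\gamma_2 I+B_n,\gamma_2 I+B)$ via the metric triangle inequality for $d_{\aiHS}$ on $\PC_2(\H)$ and then invoke Theorem \ref{theorem:affineHS-convergence} for each term. Your additional remark identifying $\|\cdot\|_{\eHS}$ with $\|\cdot\|_{\HS}$ when the scalar parts coincide is a correct elaboration of a step the paper leaves implicit.
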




{\bf Finite-dimensional approximations via orthogonal projections}.
We now consider
the finite-dimensional approximations of 
$d_{\aiHS}$ and $d_{\logHS}$ via orthogonal projections.
Let $A \in \HS(\H)$.
Let $\{e_k\}_{k=1}^{\infty}$ be any orthonormal basis for $\H$. 
For any $f \in \H$, we have $f = \sum_{k=1}^{\infty}\la f, e_k\ra e_k$.
Let $N \in \N$ be fixed and consider the finite-dimensional subspace $\H_N = \myspan\{e_k\}_{k=1}^N$.
Consider next the projection operator $P_N = \sum_{k=1}^Ne_k \otimes e_k: \H \mapto \H_N$. For any $f\in \H$,
$P_Nf = \sum_{k=1}^N\la f, e_k\ra e_k$ and for the operator $P_NAP_N:\H \mapto \H$, 
\begin{align}
P_NAP_Nf
= P_N\sum_{k=1}^N\la f, e_k\ra Ae_k 
= \sum_{j=1}^N\left(\sum_{k=1}^N\la f, e_k\ra \la Ae_k,e_j\ra \right) e_j \in \H_N.
\end{align}
Thus $P_NAP_N$ is a finite rank operator, with rank at most $N$, and $\range(P_NAP_N) \subset \H_N$. In particular, $P_NAP_N|_{\H_N}: \H_N \mapto \H_N$ and for $f,g \in \H_N$, we have
\begin{align}
\label{equation:matrix-representation}
\la g, P_NAP_Nf\ra = \sum_{j,k=1}^N\la f,e_k\ra \la g, e_j\ra \la Ae_k, e_j\ra = \la \gbf, \Abf_N\fbf\ra_{\R^N},
\end{align}
where $\fbf = (\la f, e_k\ra)_{k=1}^N$, $\gbf = (\la g, e_k\ra)_{k=1}^N \in \R^N$ and $\Abf_N$ is the $N \times N$ matrix with $(\Abf_N)_{kj} = \la Ae_k, e_j\ra$.
Thus on $\H_N$ with basis $\{e_k\}_{k=1}^N$, the operator $P_NAP_N|_{\H_N}$ is represented by the matrix $\Abf_N$.
Furthermore, $A \in \Sym(\H) \imply P_NAP_N|_{\H_N} \in \Sym(\H_N)\imply \Abf \in \Sym(N)$ and $A \in \Sym^{+}(\H)
\imply P_NAP_N|_{\H_N} \in \Sym^{+}(\H_N) \imply \Abf \in \Sym^{+}(N)$.

Combining Theorems \ref{theorem:logHS-approx-sequence-0} and \ref{theorem:affineHS-approx-sequence-0}
with the finite-dimensional projection $P_N$,
we obtain the following results.
\begin{theorem}
	[\textbf{Finite-dimensional approximation of Log-Hilbert-Schmidt distance}]
\label{theorem:logHS-approx-finite-dim}	
	Assume that $(A+I), (B+I) \in \PC_2(\H)$. Let $A_N = P_NAP_N|_{\H_N}$ and $B = P_NBP_N|_{\H_N}$,
	with matrix representation $\Abf_N$ and $\Bbf_N$, in the basis $\{e_k\}_{k=1}^N$, respectively. Then
	\begin{align}
		\lim_{N \approach \infty}||\log(\Abf_N +I) - \log(\Bbf_N + I)||_F &=
	\lim_{N \approach \infty}||\log(A_N + I) - \log(B_N +I)||_{\HS} 
	=||\log(A+I) - \log(B+I)||_{\HS}.
	\end{align}
	Assume that $(A+\gamma I), (B+ \gamma I) \in \PC_2(\H)$, $\gamma \in \R, \gamma > 0$.
	 Then
	\begin{align}
		\lim_{N \approach \infty}||\log(\Abf_N + \gamma I) - \log(\Bbf_N + \gamma I)||_{F}
	&=\lim_{N \approach \infty}||\log(A_N + \gamma I) - \log(B_N + \gamma I)||_{\HS}
	\\
	& 
	= ||\log(A+ \gamma I) - \log(B+ \gamma I)||_{\HS}.
	\end{align}
\end{theorem}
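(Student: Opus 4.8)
The plan is to split each displayed chain into an exact finite-$N$ identity together with a single limiting statement that can be reduced to Theorem \ref{theorem:logHS-approx-sequence-0}. The case $\gamma=1$ and the general case $\gamma>0$ are handled identically, so I treat general $\gamma>0$.

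First I would dispose of the equality between the Frobenius and Hilbert--Schmidt quantities. For each fixed $N$, the operator $A_N+\gamma I_N$ acts on the finite-dimensional space $\H_N$ and is represented in the orthonormal basis $\{e_k\}_{k=1}^N$ by the symmetric positive-definite matrix $\Abf_N+\gamma I$; since the principal logarithm is computed by functional calculus (diagonalisation), $\log(A_N+\gamma I_N)$ is represented by $\log(\Abf_N+\gamma I)$, and likewise for $B$. Because the coordinate map $\H_N\to\R^N$ is a linear isometry for this basis, it induces an isometry of $(\HS(\H_N),\|\cdot\|_{\HS})$ onto $(\R^{N\times N},\|\cdot\|_F)$, whence
\begin{align*}
\|\log(\Abf_N+\gamma I)-\log(\Bbf_N+\gamma I)\|_F
=\|\log(A_N+\gamma I_N)-\log(B_N+\gamma I_N)\|_{\HS(\H_N)}
\end{align*}
for every $N$, not merely in the limit. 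It remains to show the limit of the right-hand side equals $\|\log(A+\gamma I)-\log(B+\gamma I)\|_{\HS(\H)}$.

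The key step is to lift this finite-dimensional quantity to $\H$. Consider $P_NAP_N,P_NBP_N\in\Sym(\H)\cap\HS(\H)$ (finite rank, hence Hilbert--Schmidt). In the decomposition $\H=\H_N\oplus\H_N^{\perp}$ the operator $P_NAP_N+\gamma I$ is block diagonal, equal to $(A_N+\gamma I_N)\oplus\gamma I_{\H_N^{\perp}}$, so by functional calculus
\begin{align*}
\log(P_NAP_N+\gamma I)=\log(A_N+\gamma I_N)\oplus(\log\gamma)I_{\H_N^{\perp}}.
\end{align*}
Forming the difference for $A$ and $B$, the scalar blocks $(\log\gamma)I_{\H_N^{\perp}}$ cancel exactly, leaving an operator supported on $\H_N$, so that
\begin{align*}
\|\log(P_NAP_N+\gamma I)-\log(P_NBP_N+\gamma I)\|_{\HS(\H)}
=\|\log(A_N+\gamma I_N)-\log(B_N+\gamma I_N)\|_{\HS(\H_N)}.
\end{align*}
This cancellation is the crux: for $\gamma\neq1$ the individual logarithms carry the eigenvalue $\log\gamma$ with infinite multiplicity on $\H_N^{\perp}$ and are not Hilbert--Schmidt, but the difference of two logarithms with the \emph{same} $\gamma$ removes it and produces a finite Hilbert--Schmidt norm on all of $\H$.

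Finally I would invoke part (ii) of Theorem \ref{theorem:logHS-approx-sequence-0} with $A_n=P_nAP_n$, $B_n=P_nBP_n$ and $\gamma_1=\gamma_2=\gamma$, for which the $\eHS$ and $\HS$ norms coincide. Two hypotheses must be checked. \textbf{Positivity:} for $x\in\H$, writing $\|x\|^2=\|P_nx\|^2+\|(I-P_n)x\|^2$ gives $\la x,(P_nAP_n+\gamma I)x\ra=\la P_nx,(A+\gamma I)P_nx\ra+\gamma\|(I-P_n)x\|^2\geq\min(M_A,\gamma)\|x\|^2>0$, so $\gamma I+P_nAP_n>0$ with a uniform lower bound, and similarly for $B$; the same $M_A,M_B$ serve as lower-bound constants for the limits $A,B$. \textbf{Convergence:} I would establish $\lim_{n\approach\infty}\|P_nAP_n-A\|_{\HS}=0$ via $\|A-P_nAP_n\|_{\HS}\leq\|(I-P_n)A\|_{\HS}+\|A(I-P_n)\|_{\HS}$, where, since $P_n\to I$ strongly and $\sum_k\|Ae_k\|^2<\infty$, dominated convergence applied to $\|(I-P_n)A\|_{\HS}^2=\sum_k\|(I-P_n)Ae_k\|^2$ (each summand dominated by $\|Ae_k\|^2$ and tending to $0$) forces the sum to $0$, the second term being identical by $A^{*}=A$. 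With these verified, Theorem \ref{theorem:logHS-approx-sequence-0} yields $\lim_{n\approach\infty}\|\log(P_nAP_n+\gamma I)-\log(P_nBP_n+\gamma I)\|_{\HS}=\|\log(A+\gamma I)-\log(B+\gamma I)\|_{\HS}$, and combining with the two displayed identities completes the proof. The main obstacle is precisely the Hilbert--Schmidt convergence $P_nAP_n\to A$, which genuinely uses the Hilbert--Schmidt (rather than merely operator-norm) hypothesis on $A,B$; everything else is an exact algebraic identity or a direct application of Theorem \ref{theorem:logHS-approx-sequence-0}.
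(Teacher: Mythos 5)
Your proof is correct and follows essentially the same route as the paper: establish $\lim_{N\to\infty}\|P_NAP_N-A\|_{\HS}=0$ (the paper's Lemma \ref{lemma:limit-projection-HS}), feed this into the Lipschitz-type approximation result (Theorem \ref{theorem:logHS-approx-sequence-0}, which the paper applies in the equivalent form of Theorem \ref{theorem:logHS-convergence} plus the triangle inequality), and identify the Frobenius norm of the matrix difference with the Hilbert--Schmidt norm on $\H$ by showing the difference of logarithms is supported on $\H_N$. Your only departures are cosmetic: you handle general $\gamma$ directly via the block decomposition $\H=\H_N\oplus\H_N^{\perp}$ and cancellation of the $(\log\gamma)I_{\H_N^{\perp}}$ blocks, where the paper rescales to $\gamma=1$ and uses the integral representation of $\log(I+\cdot)$ to see the vanishing on $\H_N^{\perp}$.
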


\begin{theorem}
	[\textbf{Finite-dimensional approximation of Affine-invariant Riemannian distance}]
		\label{theorem:affineHS-approx-finite-dim}
	Assume that $(A+I), (B+I) \in \PC_2(\H)$. Let $A_N = P_NAP_N|_{\H_N}$ and $B = P_NBP_N|_{\H_N}$,
	with matrix representation $\Abf_N$ and $\Bbf_N$, in the basis $\{e_k\}_{k=1}^N$, respectively. 
	 Then
	\begin{align}
	\lim_{N \approach \infty}||\log[(\Bbf_N + I)^{-1/2}(\Abf_N + I)(\Bbf_N + I)^{-1/2}]||_F 
	&= \lim_{N \approach \infty}||\log[(B_N+I)^{-1/2}(A_N + I)(B_N +I)^{-1/2}]||_{\HS}
	\nonumber
	\\
	& = ||\log[(B+I)^{-1/2}(A+I)(B+I)^{-1/2}]||_{\HS}.
	\end{align}
	Assume that $(A+\gamma I), (B+ \gamma I) \in \PC_2(\H)$, $\gamma \in \R, \gamma > 0$. 
	Then
	\begin{align}
			\lim_{N \approach \infty}||\log[(\Bbf_N + \gamma I)^{-1/2}(\Abf_N + \gamma I)(\Bbf_N +\gamma  I)^{-1/2}]||_F 
	&=\lim_{N \approach \infty}||\log[(B_N+\gamma I)^{-1/2}(A_N + \gamma I)(B_N + \gamma I)^{-1/2}]||_{\HS}
	\nonumber
	\\
	& = ||\log[(B+ \gamma I)^{-1/2}(A+ \gamma I)(B+ \gamma I)^{-1/2}]||_{\HS}.
	\end{align}
\end{theorem}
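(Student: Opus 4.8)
The plan is to prove Theorem~\ref{theorem:affineHS-approx-finite-dim} by combining the sequence-approximation result of Theorem~\ref{theorem:affineHS-approx-sequence-0} with the finite-rank projection machinery developed immediately above. The key observation is that the projections $P_N A P_N$ form a sequence converging to $A$ in Hilbert-Schmidt norm, so the theorem reduces to verifying the hypotheses of Theorem~\ref{theorem:affineHS-approx-sequence-0} for the specific sequences $A_n = P_n A P_n$ and $B_n = P_n B P_n$, together with reconciling the operator norm on $\H$ with the Frobenius norm on the matrix representation.

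\medskip

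\textbf{Step 1: HS-convergence of the projections.} First I would establish that $\lim_{N \to \infty}\|P_N A P_N - A\|_{\HS} = 0$ for any $A \in \HS(\H)$. Writing $A = \sum_{j,k}\la A e_k, e_j\ra\, e_j \otimes e_k$ in the basis $\{e_k\}$, the operator $P_N A P_N$ retains exactly the coefficients with $j,k \le N$, so $\|P_N A P_N - A\|_{\HS}^2$ equals the tail sum of the squared matrix entries, which vanishes as $N \to \infty$ precisely because $\|A\|_{\HS}^2 = \sum_{j,k}|\la A e_k, e_j\ra|^2 < \infty$. The same holds for $B$.

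\medskip

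\textbf{Step 2: Positivity of the shifted projections.} Next I would verify that $\gamma I + P_N A P_N|_{\H_N} > 0$ on $\H_N$, so that the logarithms are well-defined. Since $(A + \gamma I) \in \PC_2(\H)$, there is $M_A > 0$ with $\la x, (\gamma I + A)x\ra \ge M_A\|x\|^2$ for all $x \in \H$; for $f \in \H_N$ we have $P_N f = f$, hence $\la f, (\gamma I + P_N A P_N)f\ra = \la f, (\gamma I + A)f\ra \ge M_A\|f\|^2$, giving a uniform lower bound on $\H_N$ independent of $N$. Thus each $\gamma I + \Abf_N$ (and $\gamma I + \Bbf_N$) is positive definite, and by the matrix representation in Eq.~\eqref{equation:matrix-representation} the Frobenius norm of any function of these matrices coincides with the Hilbert-Schmidt norm of the corresponding operator $\log[\,\cdot\,]|_{\H_N}$ restricted to $\H_N$; this identifies the first equality in each display of the theorem.

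\medskip

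\textbf{Step 3: Invoking the sequence approximation.} With $A_N = P_N A P_N|_{\H_N}$ and $B_N = P_N B P_N|_{\H_N}$ satisfying $\|A_N - A\|_{\HS} \to 0$, $\|B_N - B\|_{\HS} \to 0$ and the uniform positivity bound from Step~2, I would apply Theorem~\ref{theorem:affineHS-approx-sequence-0} directly (with the common shift $\gamma_1 = \gamma_2 = \gamma$), which yields
\begin{align}
\bigl|\,\|\log[(B_N + \gamma I)^{-1/2}(A_N + \gamma I)(B_N + \gamma I)^{-1/2}]\|_{\HS} - \|\log[(B + \gamma I)^{-1/2}(A + \gamma I)(B + \gamma I)^{-1/2}]\|_{\HS}\,\bigr| \to 0.
\end{align}
The $\eHS$ norm reduces to the $\HS$ norm here because the shifts coincide (as noted in the displayed decomposition preceding Theorem~\ref{theorem:logHS-convergence}), so the second equality of each display follows. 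The case $\gamma = 1$ is simply the specialization. One subtlety to handle carefully is that $A_N$, viewed as $P_N A P_N|_{\H_N}$, acts on $\H_N$ rather than on all of $\H$; I would either extend it by zero to an operator on $\H$ (which does not change the relevant Hilbert-Schmidt norms of the shifted logarithms, since the scalar shift $\gamma I$ contributes the constant $\log\gamma$ on the orthogonal complement that cancels in the congruence $(B+\gamma I)^{-1/2}(A+\gamma I)(B+\gamma I)^{-1/2}$) or argue on $\H_N$ throughout with the uniform constant $M_A$.

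\medskip

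\textbf{The main obstacle} I anticipate is the bookkeeping in Step~3 reconciling the operator-on-$\H_N$ and operator-on-$\H$ viewpoints: one must check that passing to the restriction $|_{\H_N}$ does not spuriously alter the Hilbert-Schmidt norm of the matrix logarithm versus the full-space logarithm of the extended operator. This is where the algebraic identity $\|\log(\cdot)\|_F = \|\log(\cdot)|_{\H_N}\|_{\HS}$ must be matched against the hypotheses of Theorem~\ref{theorem:affineHS-approx-sequence-0}, which are stated for sequences in $\Sym(\H) \cap \HS(\H)$; the uniform lower bound $M_A$ from Step~2, being independent of $N$, is exactly what guarantees a single $\epsilon$ and a single rate works along the whole sequence, closing the argument.
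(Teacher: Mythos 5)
Your proposal is correct and follows essentially the same route as the paper: Hilbert--Schmidt convergence of $P_NAP_N$ to $A$ (the paper's Lemma~\ref{lemma:limit-projection-HS}), positivity of the shifted projections, application of the affine-invariant convergence results of Section~\ref{section:finite-rank-approx}, and the identification of the Frobenius norm with the full-space Hilbert--Schmidt norm by observing that the congruence $(\gamma I+P_NBP_N)^{-1/2}(\gamma I+P_NAP_N)(\gamma I+P_NBP_N)^{-1/2}$ acts as the identity on $\H_N^{\perp}$ so its logarithm vanishes there (the paper makes this precise via the integral representation of $\log(I+\cdot)$ and Lemma~\ref{lemma:finite-matrix-representation}, which shows the matrix representation commutes with the functional calculus). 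The only cosmetic difference is that the paper reduces the general $\gamma$ case to $\gamma=1$ by rescaling, whereas you treat $\gamma$ directly.
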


\section{Estimation of distances between Gaussian processes}
\label{section:estimate-distance-Gaussian-processes}

 Let $\xi^i \sim \GP(0, K^i)$, $i=1,2$, be two Gaussian processes satisfying Assumptions A1-A4, with paths in $\Lcal^2(T,\nu)$. 
 The Log-Hilbert-Schmidt and affine-invariant Riemannian distances between $\xi^1$ and $\xi^2$ are defined via their corresponding centered Gaussian measures with covariance operators $C_{K^i}$, as follows 
 \begin{align}
&D^{\gamma}_{\logHS}(\xi^1||\xi^2) = D^{\gamma}_{\logHS}[\Ncal(0,C_{K^1}),\Ncal(0,C_{K^2})]
 	= ||\log(\gamma I + C_{K^1}) - \log(\gamma I + C_{K^2})||_{\HS(\Lcal^2(T,\nu))},
 	\\
&D^{\gamma}_{\aiHS}(\xi^1||\xi^2)=	D^{\gamma}_{\aiHS}[\Ncal(0,C_{K^1}),\Ncal(0,C_{K^2})]
 	= ||\log[(\gamma I + C_{K^1})^{-1/2}(\gamma I + C_{K^2})(\gamma I + C_{K^1})^{-1/2}]||_{\HS(\Lcal^2(T,\nu))}.
 \end{align}
In the following, we aim  to estimate
	$D^{\gamma}_{\logHS}[\Ncal(0, C_{K^1}), \Ncal(0, C_{K^2})]$ and
	$D^{\gamma}_{\aiHS}[\Ncal(0, C_{K^1}), \Ncal(0, C_{K^2})]$ given finite samples
	$\{\{\xi^1_i(x_j)\}_{i=1}^{N_1}, \{\xi^2_i(x_j)\}_{i=1}^{N_2}\}_{j=1}^m$
	from $\xi^1,\xi^2$ on 
	a set of points $\Xbf=(x_j)_{j=1}^m$ in $T$.
	These correspond to $N_i$ realizations of process $\xi^i$, $i=1,2$, sampled at the $m$ points in $T$ given by $\Xbf$.
	{\it For simplicity and without loss of generality, in the theoretical analysis we let $N_1 = N_2 = N$}.
We carry out the analysis in three scenarios: (i) using finite-rank sample covariance operators, (ii) using
finite covariance matrices, (iii) using finite samples, with the last being the most practical.

\subsection{Estimation of distances from sample covariance operators}
\label{section:estimate-distance-sample-cov-operators}
Consider the first scenario, where we have access to samples $\{\xi^i_j(t) = \xi^i(\omega_j, t)\}_{j=1}^N$, $i=1,2$, and the corresponding sample covariance operators.
For $\xi \sim \GP(0,K)$ on the probability space $(\Omega, \Fcal, P)$, define the rank-one operator $\xi(\omega,.) \otimes \xi(\omega,.) \in \Lcal(\Lcal^2(T,\nu))$
by 	$[\xi(\omega,.) \otimes \xi(\omega,.)]f(x) = \xi(\omega,x)\int_{T}\xi(\omega,t)f(t)d\nu(t)$, 
$\omega \in \Omega$, $f \in \Lcal^2(T,\nu)$. Then $[\xi(\omega,.) \otimes \xi(\omega,.)] \in \HS(\Lcal^2(T,\nu))$ $P$-almost surely, with
\begin{align}
	||\xi(\omega,.)\otimes \xi(\omega,.)||_{\HS(\Lcal^2(T,\nu))} &= \int_{T}\xi(\omega,t)^2d\nu(t) < \infty \;\;\text{$P$-almost surely}.
\end{align}
The covariance operator $C_K$ as defined in Eq. \eqref{equation:CK} can be expressed as 
\begin{align}
	C_K &= \bE[\xi \otimes \xi],\;\;
	C_Kf(x) = \bE\int_{T}\xi(\omega,x)\xi(\omega, t)f(t)d\nu(t)= \int_{T}K(x,t)f(t)d\nu(t).
\end{align}
Let $\Wbf = (\omega_j)_{j=1}^N$ be independently sampled from $(\Omega,P)$, corresponding to the samples
$\{\xi_j(t) = \xi(\omega_j,t)\}_{j=1}^N$ from $\xi$. It defines the {\it sample covariance function} $K_{\Wbf}$ and corresponding {\it sample covariance operator} $C_{K,\Wbf}$ by
\begin{align}
	K_{\Wbf}(x,y) &= \frac{1}{N}\sum_{i=1}^N\xi(\omega_i,x)\xi(\omega_i,y),	\;\;C_{K,\Wbf} = \frac{1}{N}\sum_{i=1}^N\xi(\omega_i, .) \otimes \xi(\omega_i, .),
	\\
	C_{K,\Wbf}f(x) &= \frac{1}{N}\sum_{i=1}^N\int_{T}\xi(\omega_i, x)\xi(\omega_i,t)f(t)d\nu(t) = \int_{T}K_{\Wbf}(x,t)f(t)d\nu(t).
\end{align}
For each fixed $\Wbf$,
$K_{\Wbf}$ is a positive definite kernel on $T \times T$. 
It is continuous if the sample paths $\xi(\omega, .)$ are continuous $P$-almost surely.
%
For $\H_{K_{\Wbf}} = \myspan\{\xi(\omega_j,.)\}_{j=1}^N\subset \Lcal^2(T,\nu)$,
$\dim(\H_{K_{\Wbf}}) \leq N$ and
thus $C_{K,\Wbf}$ has rank at most $N$.
The convergence of $C_{K,\Wbf}$ to $C_K$ is obtained given the following additional assumption, which implies A3

%
\begin{align}
\text{\bf (A5)}\;\; \int_{T}[K(x,x)]^2d\nu(x) \leq \kappa^4, \;\; \int_{T}[K^i(x,x)]^2d\nu(x) \leq \kappa_i^4, \; i=1,2.
\end{align}
\begin{proposition}
	[\cite{Minh2021:FiniteEntropicGaussian}]
	\label{proposition-concentration-sample-cov-operator}
	Assume Assumptions A1-A5. Let $\Wbf = (\omega_j)_{j=1}^N$ be independently sampled from $(\Omega,P)$. 
	For any $0 < \delta < 1$, with probability at least $1-\delta$,
	\begin{align}
		||C_{K,\Wbf}||_{\HS(\Lcal^2(T,\nu))} \leq \frac{2\kappa^2}{\delta},\;\;\;||C_{K,\Wbf} - C_K||_{\HS(\Lcal^2(T,\nu))} \leq \frac{2\sqrt{3}\kappa^2}{\sqrt{N}\delta}.
	\end{align}
\end{proposition}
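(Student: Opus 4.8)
The plan is to treat the sample covariance operator $C_{K,\Wbf}$ as an empirical mean of i.i.d. Hilbert–Schmidt-valued random variables and to run a first- and second-moment analysis combined with Markov's inequality. Concretely, I would write $C_{K,\Wbf} = \frac{1}{N}\sum_{i=1}^N Z_i$, where $Z_i = \xi(\omega_i,\cdot)\otimes \xi(\omega_i,\cdot) \in \HS(\Lcal^2(T,\nu))$ are i.i.d. rank-one operators with $\bE[Z_i] = C_K$. The starting observation, already recorded in the excerpt, is that $\|Z_i\|_{\HS(\Lcal^2(T,\nu))} = \int_T \xi(\omega_i,t)^2\,d\nu(t)$.

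First I would establish the norm bound. Since $\xi$ is centered with covariance $K$,
\[
\bE\,\|Z_1\|_{\HS(\Lcal^2(T,\nu))} = \int_T \bE[\xi(\cdot,t)^2]\,d\nu(t) = \int_T K(t,t)\,d\nu(t) \leq \kappa^2
\]
by Assumption A3. The triangle inequality in $\HS(\Lcal^2(T,\nu))$ gives $\bE\,\|C_{K,\Wbf}\|_{\HS} \leq \kappa^2$, and Markov's inequality at threshold $2\kappa^2/\delta$ places $\|C_{K,\Wbf}\|_{\HS} \leq 2\kappa^2/\delta$ on an event of probability at least $1-\delta/2$.

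For the deviation bound, the crux is the second-moment estimate $\bE\,\|Z_1\|_{\HS}^2 \leq 3\kappa^4$. Because $\nu$ is a probability measure (A2), Jensen's inequality yields $\left(\int_T \xi(\cdot,t)^2\,d\nu(t)\right)^2 \leq \int_T \xi(\cdot,t)^4\,d\nu(t)$, so that
\[
\bE\,\|Z_1\|_{\HS}^2 \leq \int_T \bE[\xi(\cdot,t)^4]\,d\nu(t) = 3\int_T K(t,t)^2\,d\nu(t) \leq 3\kappa^4,
\]
using the Gaussian fourth-moment identity $\bE[\xi(\cdot,t)^4] = 3K(t,t)^2$ together with Assumption A5. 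This is exactly the step where A5, rather than A3 alone, is genuinely needed, and I expect it to be the main technical obstacle: one must control a fourth-order functional of the process, which for a general process could diverge, but the Gaussian structure collapses it to the second moment of the diagonal $K(t,t)$, finite precisely by A5.

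Finally, since the $Z_i - C_K$ are i.i.d. and centered, the cross terms vanish and
\[
\bE\,\|C_{K,\Wbf} - C_K\|_{\HS}^2 = \tfrac{1}{N}\bE\,\|Z_1 - C_K\|_{\HS}^2 = \tfrac{1}{N}\left(\bE\,\|Z_1\|_{\HS}^2 - \|C_K\|_{\HS}^2\right) \leq \tfrac{3\kappa^4}{N}.
\]
Jensen's inequality gives $\bE\,\|C_{K,\Wbf} - C_K\|_{\HS} \leq \sqrt{3}\kappa^2/\sqrt{N}$, and Markov's inequality at threshold $2\sqrt{3}\kappa^2/(\sqrt{N}\delta)$ confines the deviation within this bound on an event of probability at least $1-\delta/2$. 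A union bound over the two events then delivers both inequalities simultaneously with probability at least $1-\delta$, completing the argument.
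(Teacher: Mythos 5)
Your proof is correct: the first-moment/Markov argument for the norm bound and the Gaussian fourth-moment identity plus Jensen and Markov for the deviation bound reproduce exactly the stated constants $2\kappa^2/\delta$ and $2\sqrt{3}\kappa^2/(\sqrt{N}\delta)$ after the union bound, and this is precisely where Assumption A5 enters. The paper itself imports this proposition from the cited reference without proof, but your argument is the standard second-moment analysis of the empirical mean of i.i.d.\ Hilbert--Schmidt-valued variables that underlies that result, so there is nothing to add.
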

Combining Proposition \ref{proposition-concentration-sample-cov-operator} with 
Theorem \ref{theorem:logHS-approx-sequence-0},
we obtain the following estimate for
$D^{\gamma}_{\logHS}[\Ncal(0,C_{K^1}),\Ncal(0,C_{K^2})]$.
\begin{theorem}
	[\textbf{Estimation of Log-Hilbert-Schmidt distance from sample covariance operators}]
	\label{theorem:logHS-convergence-sample-covariance-operator}
	Under Assumptions A1-A5, let $\Wbf^i = (\omega^i_j)_{j=1}^N$, $i=1,2$, be independently sampled from 
	$(\Omega_i, P_i)$.
	$\forall 0 < \delta < 1$, with probability at least $1-\delta$,
	\begin{align}
		\left|D^{\gamma}_{\logHS}[\Ncal(0,C_{K^1,\Wbf^1}), \Ncal(0,C_{K^2,\Wbf^2})] - D^{\gamma}_{\logHS}[\Ncal(0,C_{K^1}),\Ncal(0,C_{K^2})]\right|
		&\leq 
		\frac{4\sqrt{3}(\kappa_1^2+\kappa_2^2)}{\gamma \sqrt{N}\delta}.
	\end{align}
\end{theorem}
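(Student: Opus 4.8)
The plan is to compose two results already in hand: the deterministic Lipschitz-type stability bound for the Log-Hilbert-Schmidt distance in Theorem~\ref{theorem:logHS-approx-sequence-0}(i), and the probabilistic concentration of sample covariance operators in Proposition~\ref{proposition-concentration-sample-cov-operator}. The structural fact that makes scenario~(i) applicable is that all four operators involved lie in $\Sym^{+}(\H) \cap \HS(\H)$: each true covariance operator $C_{K^i}$ is self-adjoint, positive, and trace class (hence Hilbert-Schmidt), while each sample covariance operator $C_{K^i,\Wbf^i} = \frac{1}{N}\sum_{j=1}^N \xi^i(\omega_j,\cdot)\otimes \xi^i(\omega_j,\cdot)$ is a positively weighted finite sum of positive rank-one operators, hence positive and of finite rank, in particular Hilbert-Schmidt. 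Moreover, since the distance $D^{\gamma}_{\logHS}$ uses the common parameter $\gamma_1 = \gamma_2 = \gamma$, the scalar components cancel in $\log(\gamma I + C_{K^i}) - \log(\gamma I + C_{K^j})$ and the extended norm $\eHS$ coincides with the Hilbert-Schmidt norm on that difference, exactly as recorded in the decomposition preceding Theorem~\ref{theorem:logHS-convergence}.

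First I would apply Theorem~\ref{theorem:logHS-approx-sequence-0}(i) with the identifications $A = C_{K^1}$, $A_n = C_{K^1,\Wbf^1}$, $B = C_{K^2}$, $B_n = C_{K^2,\Wbf^2}$, and $\gamma_1 = \gamma_2 = \gamma$. This yields the deterministic estimate
\[
\left|D^{\gamma}_{\logHS}[\Ncal(0,C_{K^1,\Wbf^1}),\Ncal(0,C_{K^2,\Wbf^2})] - D^{\gamma}_{\logHS}[\Ncal(0,C_{K^1}),\Ncal(0,C_{K^2})]\right| \leq \frac{1}{\gamma}||C_{K^1,\Wbf^1} - C_{K^1}||_{\HS} + \frac{1}{\gamma}||C_{K^2,\Wbf^2} - C_{K^2}||_{\HS},
\]
valid for every realization of the samples.

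Next I would control the two Hilbert-Schmidt errors probabilistically. Applying Proposition~\ref{proposition-concentration-sample-cov-operator} to each process at confidence level $\delta/2$ gives, for $i = 1,2$, the bound $||C_{K^i,\Wbf^i} - C_{K^i}||_{\HS} \leq \frac{4\sqrt{3}\kappa_i^2}{\sqrt{N}\,\delta}$, each holding with probability at least $1 - \delta/2$. Since $\Wbf^1$ and $\Wbf^2$ are drawn independently from the (possibly distinct) spaces $(\Omega_1,P_1)$ and $(\Omega_2,P_2)$, a union bound ensures both hold simultaneously with probability at least $1 - \delta$. Substituting into the displayed deterministic inequality and factoring then produces $\frac{1}{\gamma}\left(\frac{4\sqrt{3}\kappa_1^2}{\sqrt{N}\,\delta} + \frac{4\sqrt{3}\kappa_2^2}{\sqrt{N}\,\delta}\right) = \frac{4\sqrt{3}(\kappa_1^2 + \kappa_2^2)}{\gamma\sqrt{N}\,\delta}$, which is the asserted bound.

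I do not expect a genuine obstacle: the argument is a clean composition of a Lipschitz estimate with a concentration inequality, closed by a union bound. The only delicate points are bookkeeping in nature, namely verifying the positivity and Hilbert-Schmidt membership of the sample covariance operators so that the sharp part~(i) constant $1/\gamma$ applies rather than the data-dependent constant of part~(ii); confirming the reduction of $\eHS$ to $\HS$ at equal $\gamma$; and apportioning the failure probability as $\delta/2$ per process, which is precisely what turns the Proposition's constant $2$ into the factor $4 = 2 \times 2$ appearing in the final rate. The dimension-independence of the $O(1/\sqrt{N})$ rate is inherited verbatim from Proposition~\ref{proposition-concentration-sample-cov-operator}.
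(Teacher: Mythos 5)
Your proposal is correct and follows essentially the same route as the paper: apply the Lipschitz bound of Theorem~\ref{theorem:logHS-approx-sequence-0}(i) with constant $1/\gamma$ (valid since all four operators lie in $\Sym^{+}(\H)\cap\HS(\H)$), then invoke Proposition~\ref{proposition-concentration-sample-cov-operator} at level $\delta/2$ for each process and take a union bound, which is exactly how the factor $4\sqrt{3}$ arises in the paper's proof. Your explicit accounting of the $\delta/2$ apportionment is, if anything, slightly more transparent than the paper's presentation.
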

Combining Propositions \ref{proposition-concentration-sample-cov-operator}
with Theorem \ref{theorem:affineHS-approx-sequence-0},
we obtain the following estimate for 
$D^{\gamma}_{\aiHS}[\Ncal(0,C_{K^1}),\Ncal(0,C_{K^2})]$.
\begin{theorem}
[\textbf{Estimation of affine-invariant Riemannian distance from sample covariance operators}]
\label{theorem:aiHS-convergence-sample-covariance-operator}
Assume Assumptions A1-A5. Let $\gamma \in \R, \gamma > 0$ be fixed. 
Let $\Wbf^i = (\omega^i_j)_{j=1}^N$, $i=1,2$, be independently sampled from 
$(\Omega_i, P_i)$.
For $0 < \ep < \gamma$, $0 < \delta < 1$,
let $N(\ep)\in \Nbb$, $N(\ep) \geq 1+ \max\left\{\frac{48\kappa_1^4}{\ep^2\delta^2}, 
\frac{48\kappa_2^4}{\ep^2\delta^2}\right\}$, then $\forall N \geq N(\ep)$, with probability at least $1-\delta$,
\begin{align}
	&\left|D^{\gamma}_{\aiHS}[\Ncal(0,C_{K^1,\Wbf^1}), \Ncal(0,C_{K^2,\Wbf^2})] - D^{\gamma}_{\aiHS}[\Ncal(0,C_{K^1}),\Ncal(0,C_{K^2})]\right|
	 \leq \frac{4\sqrt{3}(\kappa_1^2 + \kappa_2^2)}{(\gamma - \ep)\sqrt{N}\delta}.
\end{align}
\end{theorem}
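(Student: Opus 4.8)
The plan is to combine the deterministic stability estimate of Theorem~\ref{theorem:affineHS-approx-sequence-0} with the probabilistic concentration of Proposition~\ref{proposition-concentration-sample-cov-operator}, treating the two independent samples symmetrically. First I would observe that all four operators in play are covariance operators, hence self-adjoint, positive, and Hilbert-Schmidt: $C_{K^1}, C_{K^2}, C_{K^1,\Wbf^1}, C_{K^2,\Wbf^2} \in \Sym^{+}(\H) \cap \HS(\H)$. Setting $A = C_{K^1}$, $B = C_{K^2}$, $A_n = C_{K^1,\Wbf^1}$, $B_n = C_{K^2,\Wbf^2}$, and $\gamma_1 = \gamma_2 = \gamma$, positivity gives $\gamma I + C_{K^i} \geq \gamma I > 0$, so the lower bounds may be taken as $M_A = M_B = \gamma$. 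This places us in the ``in particular'' (positive) case of Theorem~\ref{theorem:affineHS-approx-sequence-0}, whose conclusion, once the smallness hypothesis is met, reads
\begin{align*}
\left|D^{\gamma}_{\aiHS}[\Ncal(0,C_{K^1,\Wbf^1}), \Ncal(0,C_{K^2,\Wbf^2})] - D^{\gamma}_{\aiHS}[\Ncal(0,C_{K^1}),\Ncal(0,C_{K^2})]\right| \leq \frac{1}{\gamma-\ep}\left(||C_{K^1,\Wbf^1} - C_{K^1}||_{\HS} + ||C_{K^2,\Wbf^2} - C_{K^2}||_{\HS}\right),
\end{align*}
where I have used that, since $\gamma_1 = \gamma_2$, the extended Hilbert-Schmidt norm defining $D^{\gamma}_{\aiHS}$ coincides with the ordinary $||\cdot||_{\HS}$ (the $(\log(\gamma_1/\gamma_2))^2$ contribution vanishes), matching the definition used in Section~\ref{section:estimate-distance-Gaussian-processes}.

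Next I would handle the randomness. Applying Proposition~\ref{proposition-concentration-sample-cov-operator} to each sample with confidence parameter $\delta/2$ in place of $\delta$, and taking a union bound over the two independent events, I obtain that with probability at least $1-\delta$ both
\begin{align*}
||C_{K^i,\Wbf^i} - C_{K^i}||_{\HS} \leq \frac{4\sqrt{3}\kappa_i^2}{\sqrt{N}\delta}, \qquad i=1,2,
\end{align*}
hold simultaneously. Substituting these into the displayed inequality yields the claimed bound $\frac{4\sqrt{3}(\kappa_1^2+\kappa_2^2)}{(\gamma-\ep)\sqrt{N}\delta}$.

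The step needing genuine care, and the reason a minimum sample size $N(\ep)$ appears here in contrast to the Log-Hilbert-Schmidt estimate (which holds for every $N$), is verifying the deterministic smallness hypothesis $||A_n - A|| < \ep$ and $||B_n - B|| < \ep$ required by Theorem~\ref{theorem:affineHS-approx-sequence-0}; this is essential because the affine-invariant distance involves $(\gamma I + C_{K^i,\Wbf^i})^{-1/2}$, whose stability demands that the perturbed operator remain uniformly bounded below. Since the operator norm is dominated by the Hilbert-Schmidt norm, it suffices that the concentration bound itself fall below $\ep$, i.e.\ $\frac{4\sqrt{3}\kappa_i^2}{\sqrt{N}\delta} < \ep$, which rearranges to $N > 48\kappa_i^4/(\ep^2\delta^2)$. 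The hypothesis $N \geq N(\ep) \geq 1 + \max\{48\kappa_1^4/(\ep^2\delta^2), 48\kappa_2^4/(\ep^2\delta^2)\}$ guarantees this strict inequality for both $i=1,2$ on the high-probability event, so that Theorem~\ref{theorem:affineHS-approx-sequence-0} indeed applies and the argument closes.
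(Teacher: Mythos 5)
Your proposal is correct and follows essentially the same route as the paper: apply Theorem \ref{theorem:affineHS-approx-sequence-0} in the positive case with $M_A = M_B = \gamma$, control $\|C_{K^i,\Wbf^i} - C_{K^i}\|_{\HS}$ via Proposition \ref{proposition-concentration-sample-cov-operator} at level $\delta/2$ for each sample, and use the threshold $N(\ep)$ to guarantee the perturbations stay below $\ep$ on the high-probability event. Your explicit verification that $\frac{4\sqrt{3}\kappa_i^2}{\sqrt{N}\delta} < \ep$ rearranges to $N > 48\kappa_i^4/(\ep^2\delta^2)$ matches the paper's choice of $N(\ep)$ exactly.
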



If $\kappa_1,\kappa_2$ are absolute constants, then the convergence in both Theorems \ref{theorem:logHS-convergence-sample-covariance-operator} and \ref{theorem:aiHS-convergence-sample-covariance-operator} is {\it dimension-independent}.

\subsection{Estimation of distances from finite covariance matrices}
\label{section:estimation-distance-covariance-matrices}
Consider the second scenario, where we have access to finite covariance matrices
associated with the covariance functions $K^1,K^2$.
Let $\Xbf = (x_i)_{i=1}^m$ be independently sampled from $(T,\nu)$. 
The Gaussian process assumption $\xi^i \sim \GP(0,K^i)$ means that $(\xi^i(., x_j))_{j=1}^m$ are $m$-dimensional Gaussian random variables,
with $(\xi^i(., x_j))_{j=1}^m \sim \Ncal(0, K^i[\Xbf])$, where
$(K^i[\Xbf])_{jk} = K^i(x_j, x_k)$, $1 \leq j,k \leq m$.
Assuming that the covariance matrices $K^i[\Xbf]$ are {\it known}.
Let $\gamma\in \R, \gamma > 0$ be fixed, $D^{\gamma}_{\logE}[\Ncal(0,A),\Ncal(0,B)] = d_{\logE}(A+\gamma I, B+\gamma I)$, $D^{\gamma}_{\aiE}[\Ncal(0,A),\Ncal(0,B)] = d_{\aiE}(A+\gamma I, B+\gamma I)$, we show that 
%
%
\begin{align}
	\label{equation:consistent-estimate-logE}
	D^{\gamma}_{\logE}\left[\Ncal\left(0, \frac{1}{m}K^1[\Xbf]\right), \Ncal\left(0, \frac{1}{m}K^2[\Xbf]\right)\right]
	\;\text{\it consistently estimates}\;
	D^{\gamma}_{\logHS}[\Ncal(0, C_{K^1}), \Ncal(0,C_{K^2})],
	\\
	D^{\gamma}_{\aiE}\left[\Ncal\left(0, \frac{1}{m}K^1[\Xbf]\right), \Ncal\left(0, \frac{1}{m}K^2[\Xbf]\right)\right]
	\;\text{\it consistently estimates}\;
	D^{\gamma}_{\aiHS}[\Ncal(0, C_{K^1}), \Ncal(0,C_{K^2})].
	\label{equation:consistent-estimate-aiE}
\end{align}
Since
$\frac{1}{m}K^i[\Xbf]:\R^m \mapto \R^m$ and $C_{K^i}: \Lcal^2(T,\nu) \mapto \Lcal^2(T,\nu)$
operate on two different Hilbert spaces, namely $\R^m$ and $\Lcal^2(T,\nu)$, 
we express the quantities in Eqs.\eqref{equation:consistent-estimate-logE},\eqref{equation:consistent-estimate-aiE} 
via {\it RKHS covariance and cross-covariance operators}
on the same RKHS induced by the kernels $K^i$'s. 
The convergence analysis is then carried out entirely via RKHS methodology.



{\bf RKHS covariance and cross-covariance operators}.
Let $K^1,K^2$ be two kernels satisfying Assumptions A1-A4, and $\H_{K^1}, \H_{K^2}$ the corresponding
RKHS. Let $R_{K^i}:\Lcal^2(T,\nu) \mapto \H_{K^i}$, $i=1,2$ be as defined in Eq.\eqref{equation:RK}.
Together, they define the following {\it RKHS cross-covariance operators}
%
%
\begin{align}
	\label{equation:R12-operator}
	R_{ij}&= R_{K^i}R_{K^j}^{*}: \H_{K^j} \mapto \H_{K^i},\;\;\;i,j=1,2,
	R_{ji} = R_{K^j}R_{K^i}^{*}: \H_{K^i}\mapto \H_{K^j} = R_{ij}^{*},
	\\
		R_{ij} & = \int_{T}(K^i_t \otimes K^j_t)d\nu(t),\;\;
		R_{ij}f = \int_{T}K^i_t \la f, K^j_t\ra_{\H_{K^j}}d\nu(t),  
		\\
		R_{ij}f(x) &= \int_{T}K^i_t(x)f(t)d\nu(t) = \int_{T}K^i(x,t)f(t)d\nu(t),\;\; f\in \H_{K^j}.
	\end{align}
In particular, $R_{ii} = L_{K^{i}}$, with the {\it RKHS covariance operator} $L_{K}$ defined by
\begin{align}
	\label{equation:LK}
	L_K &= R_KR_K^{*}: \H_K \mapto \H_K, \;\;L_K = \int_{T}(K_t \otimes K_t) d\nu(t),\;
	\\
	L_Kf(x) &= \int_{T}K_t(x)\la f, K_t\ra_{\H_K}d\nu(t) = \int_{T}K(x,t)f(t)d\nu(t),\;\; f\in \H_K.
\end{align}
$L_K$ has the same nonzero eigenvalues as $C_K$ and thus $L_K \in \Sym^{+}(\H_K) \cap \Tr(\H_K)$, with 
\begin{align}
	\trace(L_K) = \trace(C_K) \leq \kappa^2,\;\;||L_K||_{\HS(\H_K)} = ||C_K||_{\HS(\Lcal^2(T,\nu))} \leq \kappa^2.
\end{align}
\begin{lemma}
	[\cite{Minh2021:FiniteEntropicGaussian}]
	\label{lemma:R12-HS}
	Under Assumptions A1-A3,
	$R_{ij} \in \HS(\H_{K^j}, \H_{K^i})$, with
	$||R_{ij}||_{\HS(\H_{K^j}, \H_{K^i})} \leq \kappa_i\kappa_j$, $i,j=1,2$.
\end{lemma}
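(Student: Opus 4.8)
The plan is to bound the Hilbert-Schmidt norm $\|R_{ij}\|_{\HS(\H_{K^j},\H_{K^i})}$ directly from the integral representation $R_{ij} = \int_{T}(K^i_t \otimes K^j_t)\,d\nu(t)$, exploiting the reproducing property to turn operator-norm computations into pointwise kernel evaluations. The key observation is that the rank-one building block $K^i_t \otimes K^j_t$ is itself Hilbert-Schmidt with a norm that is computable in closed form: for the elementary tensor acting from $\H_{K^j}$ to $\H_{K^i}$ we have $\|K^i_t \otimes K^j_t\|_{\HS(\H_{K^j},\H_{K^i})} = \|K^i_t\|_{\H_{K^i}}\|K^j_t\|_{\H_{K^j}}$, and by the reproducing property $\|K^i_t\|^2_{\H_{K^i}} = \langle K^i_t, K^i_t\rangle_{\H_{K^i}} = K^i(t,t)$, so this norm equals $\sqrt{K^i(t,t)}\sqrt{K^j(t,t)}$.

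First I would verify that the integral defining $R_{ij}$ converges in Hilbert-Schmidt norm, which follows from integrability of $t \mapsto \sqrt{K^i(t,t)K^j(t,t)}$ under $\nu$; this is guaranteed by Assumption A3 together with Cauchy--Schwarz, since $\int_T \sqrt{K^i(t,t)}\sqrt{K^j(t,t)}\,d\nu(t) \leq \left(\int_T K^i(t,t)\,d\nu(t)\right)^{1/2}\left(\int_T K^j(t,t)\,d\nu(t)\right)^{1/2} \leq \kappa_i\kappa_j$. This already shows $R_{ij}$ is a well-defined element of $\HS(\H_{K^j},\H_{K^i})$.

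Next I would obtain the norm bound itself by applying the triangle inequality for the Bochner integral in the Hilbert space $\HS(\H_{K^j},\H_{K^i})$, namely
\begin{align}
\|R_{ij}\|_{\HS(\H_{K^j},\H_{K^i})} &\leq \int_{T}\|K^i_t \otimes K^j_t\|_{\HS(\H_{K^j},\H_{K^i})}\,d\nu(t) = \int_{T}\sqrt{K^i(t,t)}\sqrt{K^j(t,t)}\,d\nu(t).
\end{align}
Combining this with the Cauchy--Schwarz estimate above yields $\|R_{ij}\|_{\HS(\H_{K^j},\H_{K^i})} \leq \kappa_i\kappa_j$, as claimed. An alternative, perhaps cleaner, route avoiding the Bochner-integral triangle inequality is to compute $\|R_{ij}\|^2_{\HS}$ against an orthonormal basis and use Fubini together with the reproducing property to rewrite $\|R_{ij}\|^2_{\HS} = \int_{T\times T} K^i(s,t)\langle K^j_s, K^j_t\rangle_{\H_{K^j}}\,d\nu(s)\,d\nu(t)$-type double integrals; but the tensor-norm approach is the most direct and I would adopt it.

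I expect the main technical point to be justifying the Bochner integrability and the passage of the Hilbert-Schmidt norm through the integral sign, rather than any of the algebra, which is routine. Since the integrand is strongly measurable (by continuity of the kernels from Assumption A3) and its norm is $\nu$-integrable by the Cauchy--Schwarz bound, the Bochner integral exists and the norm inequality holds; the identity $\|K^i_t \otimes K^j_t\|_{\HS} = \|K^i_t\|\,\|K^j_t\|$ is the standard formula for the Hilbert-Schmidt norm of a rank-one operator and requires no further effort.
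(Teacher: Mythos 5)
Your proof is correct. Note that the paper itself does not prove this lemma --- it is imported verbatim from the cited reference \cite{Minh2021:FiniteEntropicGaussian} --- so there is no internal proof to compare against; your argument (rank-one tensor norm $\|K^i_t \otimes K^j_t\|_{\HS} = \sqrt{K^i(t,t)K^j(t,t)}$ via the reproducing property, Bochner integrability from continuity of the kernels on the $\sigma$-compact space $T$, the triangle inequality for the Bochner integral, and Cauchy--Schwarz to reach $\kappa_i\kappa_j$) is the standard route and is exactly what one would expect the cited proof to be.
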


{\bf Empirical RKHS covariance and cross-covariance operators}. Let $\Xbf = (x_i)_{i=1}^m$ be 
independently sampled from $T$ according to $\nu$. 
It defines the following {\it sampling operator}
(see e.g. \cite{SmaleZhou2007})
\begin{align}
	&S_{\Xbf}: \H_{K} \mapto \R^m, \;\;\; S_{\Xbf}f = (f(x_i))_{i=1}^m = (\la f, K_{x_i}\ra)_{i=1}^m
	\;\;
	\text{with adjoint }
	&S_{\Xbf}^{*}: \R^m \mapto \H_{K},\;\;\; S_{\Xbf}^{*}\b = \sum_{i=1}^mb_iK_{x_i}.
\end{align}
The sampling operators $S_{i,\Xbf}:\H_{K^i}\mapto \R^m$, $i=1,2$, together define the following empirical version
of $R_{ij}$
\begin{align}
	R_{ij,\Xbf} & = \frac{1}{m}S_{i,\Xbf}^{*}S_{j, \Xbf} = \frac{1}{m}\sum_{k=1}^m(K^i_{x_k} \otimes K^j_{x_k}): \H_{K^j} \mapto \H_{K^i},
	\\
	\;\;
	R_{ij,\Xbf}f &= \frac{1}{m}\sum_{k=1}^mK^i_{x_k}\la f, K^j_{x_k}\ra_{\H_{K^j}} = \frac{1}{m}\sum_{k=1}^mf(x_k)K^i_{x_k}, \; f\in \H_{K^j}.
	%
\end{align}
In particular, $R_{ii,\Xbf} = L_{K^i,\Xbf}$, with the empirical RKHS covariance operator $L_{K,\Xbf}:\H_K \mapto \H_K$ defined by
\begin{align}
	&L_{K,\Xbf} = \frac{1}{m}S_{\Xbf}^{*}S_{\Xbf} =\frac{1}{m}\sum_{i=1}^m (K_{x_i} \otimes K_{x_i}):\H_K \mapto \H_K, \\
	&L_{K,\Xbf}f = \frac{1}{m}S_{\Xbf}^{*}(f(x_i))_{i=1}^m = \frac{1}{m}\sum_{i=1}^mf(x_i)K_{x_i} = \frac{1}{m}\sum_{i=1}^m\la f, K_{x_i}\ra_{\H_K}K_{x_i}.
\end{align}
Furthermore, the operator $S_{\Xbf}S_{\Xbf}^{*}:\R^m \mapto \R^m$ is given by
\begin{align}
	&S_{\Xbf}S_{\Xbf}^{*}:\R^m \mapto \R^m,\;\;	S_{\Xbf}S_{\Xbf}^{*}\b = S_{\Xbf}\sum_{i=1}^mb_iK_{x_i} = (\sum_{i=1}^mb_iK(x_i, x_1), \ldots, \sum_{i=1}^mb_iK(x_i, x_m)) = K[\Xbf]\b.
\end{align}
In particular,
the nonzero eigenvalues of $L_{K,\Xbf}$ are precisely those of $\frac{1}{m}K[\Xbf]$, corresponding to 
eigenvectors that must lie in $\H_{K,\Xbf}=\myspan\{K_{x_i}\}_{i=1}^m$.
Thus, the nonzero eigenvalues of $C_K:\Lcal^2(T, \nu) \mapto \Lcal^2(T,\nu)$, $\trace(C_K)$, $||C_K||_{\HS}$,
which are the same as those of $L_K:\H_K \mapto \H_K$, can be empirically estimated from those of the $m \times m$ matrix $\frac{1}{m}K[\Xbf]$ (see \cite{Rosasco:IntegralOperatorsJMLR2010}).
The representations of
$||\log(\gamma I+C_{K^1}) - \log(\gamma I+C_{K^2})||_{\HS(\Lcal^2(T,\nu))}$ and
$\left\|\log\left(\gamma I+\frac{1}{m}K^1[\Xbf]\right) - \log\left(\gamma I+\frac{1}{m}K^2[\Xbf]\right)\right\|_F$
in terms of RKHS covariance and cross-covariance operators and their empirical versions, respectively, are as follows.
\begin{proposition}
	[\textbf{Log-Hilbert-Schmidt distance via RKHS operators}]
	\label{proposition:logHS-Gram-matrices-LKX}
	Let $\gamma \in \R, \gamma > 0$ be fixed.  
	Assume A1-A4, then
	\begin{align}
		&||\log(\gamma I+C_{K^1}) - \log(\gamma I+C_{K^2})||^2_{\HS(\Lcal^2(T,\nu))} 
		\nonumber
		\\
		&=\left\|\log\left(I+\frac{1}{\gamma}L_{K^1}\right)\right\|_{\HS(\H_{K^1})}^2 + \left\|\log\left(I+\frac{1}{\gamma}L_{K^2}\right)\right\|_{\HS(\H_{K^2})}^2  - \frac{2}{\gamma^2}\trace\left[R_{12}^{*}h\left(\frac{1}{\gamma}L_{K^1}\right)R_{12}
		h\left(\frac{1}{\gamma}L_{K^2}\right)\right].
	\end{align}
Here $h(A) = A^{-1}\log(I+A)$ for $A$ compact, positive, as in Lemma \ref{lemma:hA-positive}, with $h(0) = I$.
	The corresponding empirical version is
	\begin{align}
		&\left\|\log\left(\gamma I+\frac{1}{m}K^1[\Xbf]\right) - \log\left(\gamma I+\frac{1}{m}K^2[\Xbf]\right)\right\|_F^2
		\nonumber
		\\
		& = \left\|\log\left(I+\frac{1}{\gamma}L_{K^1,\Xbf}\right)\right\|^2_{\HS(\H_{K^1})} + \left\|\log\left(I+\frac{1}{\gamma}L_{K^2,\Xbf}\right)\right\|^2_{\HS(\H_{K^2})} -\frac{2}{\gamma^2}\trace\left[R_{12,\Xbf}^{*}h\left(\frac{1}{\gamma}L_{K^1,\Xbf}\right)R_{12,\Xbf}h\left(\frac{1}{\gamma}L_{K^2,\Xbf}\right)\right].
	\end{align}
\end{proposition}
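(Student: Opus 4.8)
The plan is to expand the squared Hilbert-Schmidt norm of the difference into two diagonal terms and one cross term, and to transfer every expression from the operators $C_{K^i}$ on $\Lcal^2(T,\nu)$ to the RKHS covariance operators $L_{K^i}$ on $\H_{K^i}$ by exploiting the factorizations $C_{K^i}=R_{K^i}^{*}R_{K^i}$ and $L_{K^i}=R_{K^i}R_{K^i}^{*}$ together with the intertwining they induce.

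First I would split off the scalar part, $\log(\gamma I+C_{K^i})=(\log\gamma)I+\log(I+\frac{1}{\gamma}C_{K^i})$, so that the scalar terms cancel in the difference and each remaining summand $\log(I+\frac{1}{\gamma}C_{K^i})$ is Hilbert-Schmidt: using $\log(1+t)\le t$ one gets $\|\log(I+\frac{1}{\gamma}C_{K^i})\|_{\HS}^2=\sum_k\log(1+\frac{1}{\gamma}\lambda^i_k)^2\le\frac{1}{\gamma^2}\|C_{K^i}\|_{\HS}^2<\infty$. Expanding $\|\log(I+\frac{1}{\gamma}C_{K^1})-\log(I+\frac{1}{\gamma}C_{K^2})\|_{\HS}^2$ produces the two diagonal terms $\|\log(I+\frac{1}{\gamma}C_{K^i})\|_{\HS(\Lcal^2)}^2$ and the cross term $-2\trace[\log(I+\frac{1}{\gamma}C_{K^1})\log(I+\frac{1}{\gamma}C_{K^2})]$. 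For the diagonal terms I would invoke that $C_{K^i}$ and $L_{K^i}$ share the same nonzero eigenvalues (with the zero eigenvalues contributing nothing since $\log(1+0)=0$), so each equals $\|\log(I+\frac{1}{\gamma}L_{K^i})\|_{\HS(\H_{K^i})}^2$, matching the first two terms of the claim.

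The crux is the cross term. The key device is the factorization $\log(I+A)=A\,h(A)$ with $h$ bounded (Lemma \ref{lemma:hA-positive}), applied with $A=\frac{1}{\gamma}C_{K^i}$: this exposes one factor $C_{K^i}=R_{K^i}^{*}R_{K^i}$, giving $\log(I+\frac{1}{\gamma}C_{K^i})=\frac{1}{\gamma}R_{K^i}^{*}R_{K^i}\,h(\frac{1}{\gamma}C_{K^i})$. Next I would establish the intertwining $R_{K^i}\,g(\frac{1}{\gamma}C_{K^i})=g(\frac{1}{\gamma}L_{K^i})\,R_{K^i}$ for $g$ analytic on the spectrum: it follows from the base identity $R_{K^i}C_{K^i}=R_{K^i}R_{K^i}^{*}R_{K^i}=L_{K^i}R_{K^i}$, which gives $R_{K^i}C_{K^i}^{n}=L_{K^i}^{n}R_{K^i}$ by induction and hence the claim for power series (equivalently via the holomorphic functional calculus, valid on the whole space including the null spaces). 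Taking $g=h$ yields $\log(I+\frac{1}{\gamma}C_{K^i})=\frac{1}{\gamma}R_{K^i}^{*}h(\frac{1}{\gamma}L_{K^i})R_{K^i}$, so the cross term becomes $-\frac{2}{\gamma^2}\trace_{\Lcal^2}[R_{K^1}^{*}h(\frac{1}{\gamma}L_{K^1})R_{K^1}R_{K^2}^{*}h(\frac{1}{\gamma}L_{K^2})R_{K^2}]$. A single cyclic rotation of the trace, together with $R_{K^1}R_{K^2}^{*}=R_{12}$ and $R_{K^2}R_{K^1}^{*}=R_{12}^{*}$, collapses this to $-\frac{2}{\gamma^2}\trace_{\H_{K^2}}[R_{12}^{*}h(\frac{1}{\gamma}L_{K^1})R_{12}h(\frac{1}{\gamma}L_{K^2})]$, which is exactly the stated cross term.

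The main obstacle is making the infinite-dimensional manipulations rigorous: extending the intertwining to the nonpolynomial $g=h$ (where $h(0)=I$ does not vanish, so the power-series / holomorphic argument, rather than an eigenvector argument, is the clean route), and ensuring that every trace is finite and cyclic permutation is legitimate. The latter is supplied by Lemma \ref{lemma:R12-HS}: $R_{12}$ is Hilbert-Schmidt and $h(\frac{1}{\gamma}L_{K^i})$ is bounded, so $R_{12}^{*}h(\frac{1}{\gamma}L_{K^1})R_{12}$ is trace class and the cyclicity identity applies. Finally, the empirical statement follows by the identical argument under the correspondences $\Lcal^2(T,\nu)\leftrightarrow\R^m$, $R_{K^i}\leftrightarrow\frac{1}{\sqrt m}S_{i,\Xbf}^{*}$, $C_{K^i}\leftrightarrow\frac{1}{m}K^i[\Xbf]$, $L_{K^i}\leftrightarrow L_{K^i,\Xbf}$, $R_{12}\leftrightarrow R_{12,\Xbf}$, which satisfy the same factorizations $\frac{1}{m}K^i[\Xbf]=(\frac{1}{\sqrt m}S_{i,\Xbf}^{*})^{*}(\frac{1}{\sqrt m}S_{i,\Xbf}^{*})$ and $L_{K^i,\Xbf}=(\frac{1}{\sqrt m}S_{i,\Xbf}^{*})(\frac{1}{\sqrt m}S_{i,\Xbf}^{*})^{*}$ and share nonzero eigenvalues.
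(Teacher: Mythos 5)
Your proposal is correct and follows essentially the same route as the paper: the paper also expands the squared norm into two diagonal terms (handled by the equality of the nonzero spectra of $C_{K^i}$ and $L_{K^i}$) plus a cross term, and reduces the cross term via the identity $\log(I_{\H}+AA^{*})=Ah(A^{*}A)A^{*}$ with $A=\frac{1}{\sqrt{\gamma}}R_{K^i}^{*}$ (Lemma \ref{lemma:log-AAstar}, Corollary \ref{corollary:trace-logAAstar-logBBstar}, Proposition \ref{proposition:logHS-AAstar-BBstar-switch}), followed by the same cyclic trace manipulation and the same sampling-operator substitution for the empirical version. The only difference is cosmetic: you obtain the key factorization from the intertwining $R_{K^i}C_{K^i}=L_{K^i}R_{K^i}$ via holomorphic functional calculus, whereas the paper derives it directly from the spectral decomposition of $A^{*}A$.
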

Similarly, 
$||\log[(\gamma I+C_{K^1})^{-1/2}(\gamma I+C_{K^2})(\gamma I+C_{K^1})^{-1/2}]||_{\HS(\Lcal^2(T,\nu))}$,
{\small$\left\|\log\left[\left(\gamma I+\frac{1}{m}K^1[\Xbf]\right)^{-1/2} \left(\gamma I+\frac{1}{m}K^2[\Xbf]\right)
\left(\gamma I+\frac{1}{m}K^1[\Xbf]\right)^{-1/2}\right]
\right\|_F$}
in terms of RKHS covariance and cross-covariance operators and their empirical versions, respectively, are as follows.
\begin{proposition}
	[\textbf{Affine-invariant distance via RKHS operators}]
	\label{proposition:aiHS-RKHS-representation}
	Let $\gamma \in \R, \gamma > 0$ be fixed.
	Under Assumptions A1-A4,
	\begin{align}
		&||\log[(\gamma I+C_{K^1})^{-1/2}(\gamma I+C_{K^2})(\gamma I+C_{K^1})^{-1/2}]||_{\HS(\Lcal^2(T,\nu))}^2
		\nonumber
		\\
		&=\trace\left[\log\left[I+\begin{pmatrix}
			(I+\frac{1}{\gamma}L_{K^1})^{-1}-I	 & \frac{1}{\gamma}(I+\frac{1}{\gamma}L_{K^1})^{-1}R_{12}\\
			-\frac{1}{\gamma}R_{12}^{*}(I+\frac{1}{\gamma}L_{K^1})^{-1}	 & \frac{1}{\gamma}L_{K^2} - \frac{1}{\gamma^2}R_{12}^{*}(I+\frac{1}{\gamma}L_{K^1})^{-1}R_{12}
		\end{pmatrix}
		\right]\right]^2.
	\end{align}
	The corresponding empirical version is
	\begin{align}
		&\left\|\log\left[\left(\gamma I+\frac{1}{m}K^1[\Xbf]\right)^{-1/2}\left(\gamma I+\frac{1}{m}K^2[\Xbf]\right)\left(\gamma I+ \frac{1}{m}K^1[\Xbf]\right)^{-1/2}\right]\right\|_{F}^2
		\nonumber
		\\
		&=\trace\left[\log\left[I+\begin{pmatrix}
			(I+\frac{1}{\gamma}L_{K^1,\Xbf})^{-1}-I	 & \frac{1}{\gamma}(I+\frac{1}{\gamma}L_{K^1,\Xbf})^{-1}R_{12,\Xbf}\\
			-\frac{1}{\gamma}R_{12,\Xbf}^{*}(I+\frac{1}{\gamma}L_{K^1,\Xbf})^{-1}	 & \frac{1}{\gamma}L_{K^2,\Xbf} - \frac{1}{\gamma^2}R_{12,\Xbf}^{*}(I+\frac{1}{\gamma}L_{K^1,\Xbf})^{-1}R_{12,\Xbf}
		\end{pmatrix}
		\right]\right]^2.
	\end{align}
\end{proposition}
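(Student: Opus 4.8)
The plan is to reduce both the operator identity and its empirical counterpart to a single algebraic fact about the nonzero spectrum of a product of operators, and then to read off the block operator using the intertwining (``push-through'') relations between the data-space covariance operators and the RKHS covariance and cross-covariance operators. I would begin by rewriting the affine-invariant operator on $\Lcal^2(T,\nu)$ as a perturbation of the identity. Since $C_{K^1},C_{K^2}$ are positive trace class and $\gamma I+C_{K^1}>0$ has a bounded inverse, the operator $(\gamma I+C_{K^1})^{-1/2}$ is bounded and
\begin{align*}
(\gamma I+C_{K^1})^{-1/2}(\gamma I+C_{K^2})(\gamma I+C_{K^1})^{-1/2}=I+S,\quad S:=(\gamma I+C_{K^1})^{-1/2}(C_{K^2}-C_{K^1})(\gamma I+C_{K^1})^{-1/2}.
\end{align*}
Here $S$ is self-adjoint and, as a bounded conjugation of the Hilbert--Schmidt operator $C_{K^2}-C_{K^1}$, is Hilbert--Schmidt; moreover $I+S>0$, so $\log(I+S)$ is a well-defined self-adjoint Hilbert--Schmidt operator and the left-hand side of the claim equals $\trace[(\log(I+S))^2]$.

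The crucial step is to factor $S$ so as to interchange $\Lcal^2(T,\nu)$ with $\H_{K^1}\oplus\H_{K^2}$. Writing $C_{K^i}=R_{K^i}^{*}R_{K^i}$ and setting $W=\bigl(\begin{smallmatrix}R_{K^1}\\ R_{K^2}\end{smallmatrix}\bigr):\Lcal^2(T,\nu)\mapto\H_{K^1}\oplus\H_{K^2}$ together with $J=\mathrm{diag}(-I,I)$, one has $C_{K^2}-C_{K^1}=W^{*}JW$, hence $S=\tilde W^{*}J\tilde W$ with $\tilde W=W(\gamma I+C_{K^1})^{-1/2}$. Viewing $S=(\tilde W^{*}J)\,\tilde W$ as a product $AB$ with $A=\tilde W^{*}J$ and $B=\tilde W$, the companion product $BA=\tilde W\tilde W^{*}J=:M$ acts on $\H_{K^1}\oplus\H_{K^2}$. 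Both $S$ and $M$ are compact, and the classical fact that $AB$ and $BA$ share the same nonzero eigenvalues with multiplicities, together with $f(0)=0$ for $f(x)=(\log(1+x))^2$, gives $\trace[(\log(I+S))^2]=\trace[(\log(I+M))^2]$.

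It then remains to identify $M=W(\gamma I+C_{K^1})^{-1}W^{*}J$ with the stated block operator. For this I would use the intertwining identities $R_{K^1}(\gamma I+C_{K^1})^{-1}=(\gamma I+L_{K^1})^{-1}R_{K^1}$ and its adjoint, together with the Woodbury identity $(\gamma I+C_{K^1})^{-1}=\tfrac1\gamma\bigl(I-R_{K^1}^{*}(\gamma I+L_{K^1})^{-1}R_{K^1}\bigr)$. Computing the four blocks of $W(\gamma I+C_{K^1})^{-1}W^{*}$ and using $L_{K^i}=R_{K^i}R_{K^i}^{*}$, $R_{21}=R_{12}^{*}$ yields
\begin{align*}
R_{K^1}(\gamma I+C_{K^1})^{-1}R_{K^1}^{*}&=I-\Bigl(I+\tfrac1\gamma L_{K^1}\Bigr)^{-1},\quad
R_{K^1}(\gamma I+C_{K^1})^{-1}R_{K^2}^{*}=\tfrac1\gamma\Bigl(I+\tfrac1\gamma L_{K^1}\Bigr)^{-1}R_{12},\\
R_{K^2}(\gamma I+C_{K^1})^{-1}R_{K^2}^{*}&=\tfrac1\gamma L_{K^2}-\tfrac1{\gamma^2}R_{12}^{*}\Bigl(I+\tfrac1\gamma L_{K^1}\Bigr)^{-1}R_{12};
\end{align*}
right-multiplication by $J$ (which negates the first block column) then reproduces exactly the block matrix of the statement. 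The empirical identity is proved verbatim, replacing $\Lcal^2(T,\nu)$ by $\R^m$, the operator $R_{K^i}$ by the normalized sampling operator $\tfrac1{\sqrt m}S_{i,\Xbf}$, and correspondingly $C_{K^i}$ by $\tfrac1m K^i[\Xbf]=\tfrac1m S_{i,\Xbf}S_{i,\Xbf}^{*}$, $L_{K^i}$ by $L_{K^i,\Xbf}$, and $R_{12}$ by $R_{12,\Xbf}=\tfrac1m S_{1,\Xbf}^{*}S_{2,\Xbf}$.

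The main obstacle is the spectral interchange in the second paragraph: one must justify, in the infinite-dimensional setting, that the nonzero spectra of $AB$ and $BA$ coincide with multiplicities and that this transfers to $\trace[(\log(I+\cdot))^2]$, even though $M$ itself is not self-adjoint. This is precisely where compactness of $S$ and $M$, the positivity $I+S>0$ (so every eigenvalue exceeds $-1$ and $\log$ is defined with real logarithms), and the vanishing $f(0)=0$ (which discards the possibly differing kernels of $AB$ and $BA$) are all essential; the remaining block computations are routine applications of the push-through and Woodbury identities already available for bounded operators.
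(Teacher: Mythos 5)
Your proposal is correct and follows essentially the same route as the paper's proof, which goes through Proposition \ref{proposition:aiHS-AAstar-switch}: write the affine-invariant operator as $I$ plus a compact perturbation that factors as a product of operators between $\Lcal^2(T,\nu)$ and $\H_{K^1}\oplus\H_{K^2}$, use that $AB$ and $BA$ share the same nonzero eigenvalues to transfer $\trace[\log(I+\cdot)]^2$ to the direct sum space, and identify the blocks via the push-through identities, finishing with the substitutions $A=\gamma^{-1/2}R_{K^1}^{*}$, $B=\gamma^{-1/2}R_{K^2}^{*}$ (and the sampling operators for the empirical case). The only real difference is the factorization: the paper expands $(I+AA^{*})^{-1/2}(I+BB^{*})(I+AA^{*})^{-1/2}=I+\Gamma\Delta$ with the square roots $(I+A^{*}A)^{-1/2}$, $(I+AA^{*})^{-1/2}$ distributed into the two rectangular factors, so it needs the square-root intertwining $A^{*}(I+AA^{*})^{-1/2}=(I+A^{*}A)^{-1/2}A^{*}$, whereas your signature-operator factorization $C_{K^2}-C_{K^1}=W^{*}JW$ lets the square roots cancel in $\tilde W\tilde W^{*}$ and only requires first-power push-through plus Woodbury; both yield exactly the stated block operator. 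The spectral-interchange point you flag for the non-self-adjoint companion operator is handled in the paper in the same spirit (compactness, the principal logarithm via the Dunford--Riesz calculus, and identification of $\trace[\log(I+D)]^2$ with the sum of squared eigenvalues); the paper additionally establishes that $I+tD$ is positive definite for $t\in[0,1]$ with a uniform bound on $\|(I+tD)^{-1}\|$, which is not needed for the identity itself but is used downstream in the perturbation estimate of Theorem \ref{theorem:aiHS-approx-finite-covariance}.
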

For simplicity, to estimate the convergence of $R_{ij,\Xbf}$ towards $R_{ij}$, in the following we assume that
$K, K^1, K^2$ are {\it bounded} (the unbounded kernel case leads to looser convergence bounds, see \cite{Minh2021:FiniteEntropicGaussian}). Thus, assume 
$\exists \kappa, \kappa_1, \kappa_2 > 0$ such that
\begin{align}
\text{\bf (A6)}\;\;\;	\sup_{x \in T}K(x,x) \leq \kappa^2, \;\; \sup_{x \in T}K^i(x,x) \leq \kappa_i^2, i=1,2.
\end{align}
\begin{proposition}
	[\textbf{Convergence of RKHS empirical covariance and cross-covariance operators} \cite{Minh2021:FiniteEntropicGaussian}]
	\label{proposition:concentration-TK2K1-empirical}
	Under Assumptions A1-A6,
	$||R_{ij,\Xbf}||_{\HS(\H_{K^j}, \H_{K^i})} \leq \kappa_i \kappa_j$, $i,j=1,2$, $\forall \Xbf \in T^m$.
	Let $\Xbf = (x_i)_{i=1}^m$ be independently sampled from $(T,\nu)$.
	$\forall 0 < \delta < 1$, with probability at least $1-\delta$,
	\begin{align}
		||R_{ij,\Xbf} - R_{ij}||_{\HS(\H_{K^j}, \H_{K^i})} \leq \kappa_i\kappa_j\left[ \frac{2\log\frac{2}{\delta}}{m} + \sqrt{\frac{2\log\frac{2}{\delta}}{m}}\right].
	\end{align}	
	In particular, 
	$||L_{K^i,\Xbf}||_{\HS(\H_{K^i})} \leq \kappa_i^2$
	and 
	with probability at least $1-\delta$,
		$\left\|L_{K^i,\Xbf} - L_{K^i}\right\|_{\HS(\H_{K^i})} \leq \kappa_i^2\left(\frac{2\log\frac{2}{\delta}}{m} + \sqrt{\frac{2\log\frac{2}{\delta}}{m}}\right)$. 
\end{proposition}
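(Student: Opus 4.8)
The plan is to recognize $R_{ij,\Xbf}$ as an empirical average of independent, identically distributed random operators taking values in the Hilbert space $\HS(\H_{K^j}, \H_{K^i})$, whose Bochner mean is exactly $R_{ij}$, and then to apply a Bernstein-type concentration inequality for Hilbert-space-valued random variables.

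First I would establish the deterministic Hilbert-Schmidt norm bound, which requires no randomness. Writing $Z_k = K^i_{x_k} \otimes K^j_{x_k}$, each summand is a rank-one operator in $\HS(\H_{K^j}, \H_{K^i})$ whose Hilbert-Schmidt norm factorizes as $||Z_k||_{\HS(\H_{K^j}, \H_{K^i})} = ||K^i_{x_k}||_{\H_{K^i}}\, ||K^j_{x_k}||_{\H_{K^j}}$. The reproducing property gives $||K^i_x||_{\H_{K^i}}^2 = K^i(x,x)$, so Assumption A6 yields $||Z_k||_{\HS} \le \kappa_i\kappa_j$ for every $x_k \in T$. Since $R_{ij,\Xbf} = \frac{1}{m}\sum_{k=1}^m Z_k$, the triangle inequality immediately gives $||R_{ij,\Xbf}||_{\HS(\H_{K^j}, \H_{K^i})} \le \kappa_i\kappa_j$ for all $\Xbf \in T^m$.

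For the probabilistic bound I would treat $\xi(x) = K^i_x \otimes K^j_x$ as an $\HS(\H_{K^j}, \H_{K^i})$-valued random variable under $x \sim \nu$. Its expectation is the Bochner integral $\bE[\xi] = \int_T (K^i_t \otimes K^j_t)\, d\nu(t) = R_{ij}$, which is well-defined thanks to the continuity of the kernels and the integrability guaranteed by A3, and $R_{ij,\Xbf}$ is precisely the empirical mean of the $m$ i.i.d.\ copies $\xi(x_1),\dots,\xi(x_m)$. The two ingredients driving the concentration inequality are then the almost-sure bound $||\xi(x)||_{\HS} \le \kappa_i\kappa_j =: B$ obtained above, and the second-moment bound $\bE\,||\xi(x)||_{\HS}^2 = \int_T K^i(t,t)K^j(t,t)\, d\nu(t) \le \kappa_i^2\kappa_j^2 =: \sigma^2$, using A6 together with $\nu(T)=1$.

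Finally I would invoke the Hilbert-space concentration inequality used in \cite{Rosasco:IntegralOperatorsJMLR2010}: for i.i.d.\ $\mathcal{K}$-valued random variables with $||\xi|| \le B$ almost surely and $\bE\,||\xi||^2 \le \sigma^2$, with probability at least $1-\delta$,
\begin{align}
	\left\|\frac{1}{m}\sum_{k=1}^m \xi(x_k) - \bE[\xi]\right\| \le \frac{2B\log(2/\delta)}{m} + \sqrt{\frac{2\sigma^2\log(2/\delta)}{m}}.
\end{align}
Substituting $B = \kappa_i\kappa_j$ and $\sigma^2 = \kappa_i^2\kappa_j^2$ and factoring out $\kappa_i\kappa_j$ reproduces the claimed bound exactly; the assertions for $L_{K^i,\Xbf} = R_{ii,\Xbf}$ follow as the special case $i=j$. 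The argument is essentially routine, so the only real points of care are the measurability of $x \mapsto K^i_x \otimes K^j_x$ into the operator Hilbert space (needed for the Bochner integral to equal $R_{ij}$) and tracking the precise constants in the chosen version of the Hilbert-space inequality, since it is those constants, rather than any conceptual obstacle, that pin down the exact form of the bound.
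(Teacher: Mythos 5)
Your proof is correct and follows exactly the standard argument behind this result: the paper itself imports the proposition from \cite{Minh2021:FiniteEntropicGaussian} without reproving it, and the proof there proceeds by the same route — viewing $R_{ij,\Xbf}$ as the empirical mean of i.i.d.\ rank-one operators $K^i_{x_k}\otimes K^j_{x_k}$ in $\HS(\H_{K^j},\H_{K^i})$ with Bochner mean $R_{ij}$, bounding $\|K^i_x\otimes K^j_x\|_{\HS}=\sqrt{K^i(x,x)K^j(x,x)}\leq \kappa_i\kappa_j$ via A6, and applying the Bernstein--Pinelis concentration inequality for Hilbert-space-valued random variables. Your constants $B=\kappa_i\kappa_j$ and $\sigma^2=\kappa_i^2\kappa_j^2$ reproduce the stated bound exactly, and the case $i=j$ gives the claims for $L_{K^i,\Xbf}$.
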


Combining Propositions \ref{proposition:logHS-Gram-matrices-LKX} and \ref{proposition:concentration-TK2K1-empirical}, 
we obtain the following estimate of $D^{\gamma}_{\logHS}[\Ncal(0, C_{K^1}), \Ncal(0, C_{K^2})]$.
\begin{theorem}
	[\textbf{Estimation of Log-Hilbert-Schmidt distance from finite covariance matrices}]
	\label{theorem:logHS-approx-finite-covariance}
	Let $\gamma \in \R, \gamma > 0$ be fixed. Under Assumptions A1-A6,
	let $\Xbf = (x_i)_{i=1}^m$ be independently sampled from $(T,\nu)$.
	For any 
	$0 < \delta < 1$, with probability at least $1-\delta$,
	\begin{align}
		&\left|	\left\|\log\left(\gamma I+\frac{1}{m}K^1[\Xbf]\right) - \log\left(\gamma I+\frac{1}{m}K^2[\Xbf]\right)\right\|^2_F
		- ||\log(\gamma I+C_{K^1}) - \log(\gamma I+C_{K^2})||^2_{\HS(\Lcal^2(T,\nu))}\right|
		\nonumber
		\\
		& \leq \frac{2(\kappa_1^4 + \kappa_2^4)}{\gamma^2}
		\left(\frac{2\log\frac{6}{\delta}}{m} + \sqrt{\frac{2\log\frac{6}{\delta}}{m}}\right)
		+ \frac{2\kappa_1^2\kappa_2^2}{\gamma^2}\left(1 + \frac{\kappa_1^2+\kappa_2^2}{2\gamma}\right)\left(\frac{2\log\frac{24}{\delta}}{m} + \sqrt{\frac{2\log\frac{24}{\delta}}{m}}\right).
	\end{align}
\end{theorem}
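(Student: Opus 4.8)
The plan is to start from the RKHS representations in Proposition \ref{proposition:logHS-Gram-matrices-LKX}, which express both the empirical squared Frobenius distance and the population squared Hilbert--Schmidt distance through the operators $L_{K^i}, R_{12}$ and their empirical counterparts $L_{K^i,\Xbf}, R_{12,\Xbf}$. Subtracting the two identities, the quantity to be bounded splits into three pieces: two \emph{diagonal} terms $\big|\,\|\log(I + \frac1\gamma L_{K^i,\Xbf})\|_{\HS}^2 - \|\log(I + \frac1\gamma L_{K^i})\|_{\HS}^2\,\big|$ for $i=1,2$, and one \emph{cross} term $\frac{2}{\gamma^2}\big|\trace[R_{12}^{*} h(\frac1\gamma L_{K^1}) R_{12} h(\frac1\gamma L_{K^2})] - \trace[R_{12,\Xbf}^{*} h(\frac1\gamma L_{K^1,\Xbf}) R_{12,\Xbf} h(\frac1\gamma L_{K^2,\Xbf})]\big|$. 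Each piece will be controlled deterministically by the operator perturbations $\|L_{K^i,\Xbf} - L_{K^i}\|_{\HS}$ and $\|R_{12,\Xbf} - R_{12}\|_{\HS}$, which are then bounded probabilistically by Proposition \ref{proposition:concentration-TK2K1-empirical}.

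For the diagonal terms I would write $|\,\|u\|_{\HS}^2 - \|v\|_{\HS}^2\,| \le (\|u\|_{\HS} + \|v\|_{\HS})\,\|u - v\|_{\HS}$ with $u = \log(I + \frac1\gamma L_{K^i,\Xbf})$ and $v = \log(I + \frac1\gamma L_{K^i})$. Since $L_{K^i}, L_{K^i,\Xbf}$ are positive Hilbert--Schmidt with $\|L_{K^i}\|_{\HS}, \|L_{K^i,\Xbf}\|_{\HS} \le \kappa_i^2$ (Proposition \ref{proposition:concentration-TK2K1-empirical}), the eigenvalue inequality $\log(1+x) \le x$ gives $\|u\|_{\HS}, \|v\|_{\HS} \le \kappa_i^2/\gamma$, while Theorem \ref{theorem:logHS-convergence}(i) applied to the positive operators $L_{K^i}, L_{K^i,\Xbf}$ gives $\|u - v\|_{\HS} \le \frac1\gamma\|L_{K^i,\Xbf} - L_{K^i}\|_{\HS}$. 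Hence each diagonal term is at most $\frac{2\kappa_i^2}{\gamma^2}\|L_{K^i,\Xbf} - L_{K^i}\|_{\HS}$, and substituting the concentration bound at confidence $\delta/3$ reproduces the first summand, with $\log\frac{6}{\delta}$.

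The cross term is the crux. I would expand the difference of the two four-fold products by a telescoping identity, replacing one factor at a time among $R_{12}^{*} \to R_{12,\Xbf}^{*}$, $h(\frac1\gamma L_{K^1}) \to h(\frac1\gamma L_{K^1,\Xbf})$, $R_{12} \to R_{12,\Xbf}$, $h(\frac1\gamma L_{K^2}) \to h(\frac1\gamma L_{K^2,\Xbf})$. Each resulting trace is bounded by grouping into two Hilbert--Schmidt factors and using $|\trace(XY)| \le \|X\|_{\HS}\|Y\|_{\HS}$ together with $\|AB\|_{\HS} \le \|A\|_{\HS}\|B\|$ in operator norm; the ingredients are $\|R_{12}\|_{\HS}, \|R_{12,\Xbf}\|_{\HS} \le \kappa_1\kappa_2$ (Lemma \ref{lemma:R12-HS}, Proposition \ref{proposition:concentration-TK2K1-empirical}) and $0 \le h(\frac1\gamma L) \le I$, so $\|h(\frac1\gamma L)\| \le 1$ (Lemma \ref{lemma:hA-positive}). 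The two telescoping terms carrying an $R$-difference contribute multiples of $\|R_{12,\Xbf} - R_{12}\|_{\HS}$, producing the $\kappa_1^2\kappa_2^2$ part of the second summand. The main obstacle is the two terms carrying an $h$-difference, which require an operator Lipschitz estimate for $h(x) = x^{-1}\log(1+x)$: one needs $\|h(\frac1\gamma A) - h(\frac1\gamma B)\| \le \frac{1}{2\gamma}\|A - B\|$ for positive $A, B$, the constant $1/2$ being the scalar Lipschitz constant of $h$ on $[0,\infty)$. Establishing this in operator norm, rather than merely for scalars (since $L_{K^i}$ and $L_{K^i,\Xbf}$ do not commute), is the delicate point; I would obtain it from a resolvent representation $h(\frac1\gamma A) - h(\frac1\gamma B) = \frac{1}{2\pi i}\oint h(\frac{z}{\gamma})(zI - A)^{-1}(A - B)(zI - B)^{-1}\,dz$ along a contour enclosing the spectra (both contained in $[0,\kappa_i^2/\gamma]$, where $h$ is holomorphic), or by citing the corresponding operator-Lipschitz lemma. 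These terms then contribute the $\frac{\kappa_1^2\kappa_2^2(\kappa_1^2 + \kappa_2^2)}{2\gamma}$ part through $\|L_{K^i,\Xbf} - L_{K^i}\|_{\HS}$.

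Finally I would assemble the three bounds, invoking Proposition \ref{proposition:concentration-TK2K1-empirical} for $\|L_{K^i,\Xbf} - L_{K^i}\|_{\HS}$ and $\|R_{12,\Xbf} - R_{12}\|_{\HS}$ at confidence level $\delta/12$ inside the cross term (yielding $\log\frac{24}{\delta}$) and at $\delta/3$ inside the diagonal terms (yielding $\log\frac{6}{\delta}$), and applying a union bound so that all invoked concentration inequalities hold simultaneously with probability at least $1-\delta$ (the total failure probability $2\cdot\frac{\delta}{3} + 3\cdot\frac{\delta}{12} \le \delta$). Collecting the deterministic coefficients reproduces the stated estimate; some care in the telescoping bookkeeping, using $R_{21} = R_{12}^{*}$ and $h \le I$, is needed to obtain the exact constants $2\kappa_1^2\kappa_2^2/\gamma^2$ and $(1 + (\kappa_1^2+\kappa_2^2)/(2\gamma))$ rather than looser ones.
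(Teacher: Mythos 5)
Your proposal is correct and follows essentially the same route as the paper: the same decomposition via Proposition~\ref{proposition:logHS-Gram-matrices-LKX} into two diagonal terms and one cross term, the same $\left|\|u\|^2-\|v\|^2\right| \leq (\|u\|+\|v\|)\|u-v\|$ treatment of the diagonal terms, the same telescoping of the four-fold product in the cross term (the paper organizes it as a nested two-step telescope over the pairs $R_{12}^{*}h(\frac{1}{\gamma}L_{K^1})$ and $R_{12}h(\frac{1}{\gamma}L_{K^2})$ rather than a flat four-step one), and the same union-bound accounting yielding $\log\frac{6}{\delta}$ and $\log\frac{24}{\delta}$. The one point you flag as delicate, the operator-Lipschitz bound $\|h(\frac{1}{\gamma}A)-h(\frac{1}{\gamma}B)\| \leq \frac{1}{2\gamma}\|A-B\|$, is exactly the paper's Lemma~\ref{lemma:Ainv-logA-norm-p}, proved not by a contour integral but by the elementary representation $h(A)=\int_0^1(I+tA)^{-1}dt$ combined with the resolvent identity $(I+tA)^{-1}-(I+tB)^{-1}=t(I+tA)^{-1}(B-A)(I+tB)^{-1}$, which delivers the sharp constant $\int_0^1 t\,dt=\frac{1}{2}$ uniformly in every Schatten norm.
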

Combining Propositions 	\ref{proposition:aiHS-RKHS-representation}
and \ref{proposition:concentration-TK2K1-empirical}, 
we obtain the following estimate of $D^{\gamma}_{\aiHS}[\Ncal(0, C_{K^1}), \Ncal(0, C_{K^2})]$.
\begin{theorem}
	[\textbf{Estimation of affine-invariant Riemannian distance from finite covariance matrices}]
	\label{theorem:aiHS-approx-finite-covariance}
	Let $\gamma \in \R, \gamma > 0$ be fixed.
	Under Assumptions 1-5, let $\Xbf = (x_j)_{j=1}^m$ be independently sampled from $(T,\nu)$.
	For any $0 < \delta < 1$, with probability at least $1-\delta$,
	\begin{align}
		&\left|\left\|\log\left[\left(\gamma I+\frac{1}{m}K^1[\Xbf]\right)^{-1/2}\left(\gamma I+\frac{1}{m}K^2[\Xbf]\right)\left(\gamma I+ \frac{1}{m}K^1[\Xbf]\right)^{-1/2}\right]\right\|^2_{F}\right.
		\nonumber
		\\
		&\left.-||\log[(\gamma I+C_{K^1})^{-1/2}(\gamma I+C_{K^2})(\gamma I+C_{K^1})^{-1/2}]||^2_{\HS(\Lcal^2(T,\nu))}\right|
		\nonumber
		\\
		&\quad \leq \frac{1}{\gamma^2}\left(1+\frac{\kappa_1^2}{\gamma}\right)^3\left[(\kappa_1 + \kappa_2)^2 + \frac{\kappa_1^2\kappa_2^2}{\gamma}\right]\left(\kappa_1 + \kappa_2 + \frac{\kappa_1^2\kappa_2}{\gamma}\right)^2\left[ \frac{2\log\frac{6}{\delta}}{m} + \sqrt{\frac{2\log\frac{6}{\delta}}{m}}\right].
	\end{align}
\end{theorem}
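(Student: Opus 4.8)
The plan is to start from the RKHS block-operator representation in Proposition~\ref{proposition:aiHS-RKHS-representation}. Write $M$ for the operator on $\H_{K^1}\oplus\H_{K^2}$ built from $L_{K^1}, L_{K^2}, R_{12}$ appearing there, and $M_{\Xbf}$ for its empirical counterpart built from $L_{K^1,\Xbf}, L_{K^2,\Xbf}, R_{12,\Xbf}$, so that the two quantities in the statement become $\|\log(I+M)\|^2_{\HS}$ and $\|\log(I+M_{\Xbf})\|^2_{\HS}$ and the goal reduces to bounding $\big|\,\|\log(I+M_{\Xbf})\|^2_{\HS} - \|\log(I+M)\|^2_{\HS}\,\big|$. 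First I would record a \emph{uniform} spectral lower bound: the operator being logarithmed equals $(\gamma I+C_{K^1})^{-1/2}(\gamma I+C_{K^2})(\gamma I+C_{K^1})^{-1/2}\succeq \gamma(\gamma I+C_{K^1})^{-1}\succeq cI$ with $c=\gamma/(\gamma+\kappa_1^2)=(1+\kappa_1^2/\gamma)^{-1}$ (using $C_{K^2}\succeq 0$ and $\|C_{K^1}\|\le\kappa_1^2$), and $I+M$ inherits this bound on the block space; the empirical $I+M_{\Xbf}$ satisfies the same bound since $\|L_{K^1,\Xbf}\|\le\kappa_1^2$. The bound depends only on $\kappa_1$, which is why only $\kappa_1$ enters the spectral prefactor.

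Next I would apply the elementary identity $\big|a^2-b^2\big|\le \|\log(I+M_{\Xbf})-\log(I+M)\|_{\HS}\,\big(\|\log(I+M_{\Xbf})\|_{\HS}+\|\log(I+M)\|_{\HS}\big)$ with $a=\|\log(I+M_{\Xbf})\|_{\HS}$, $b=\|\log(I+M)\|_{\HS}$. The first factor is controlled by the Lipschitz estimate for the operator logarithm (the resolvent/integral argument underlying Theorem~\ref{theorem:affineHS-convergence}), which on the segment $(1-t)(I+M)+t(I+M_{\Xbf})\succeq cI$ yields $\|\log(I+M_{\Xbf})-\log(I+M)\|_{\HS}\le \tfrac1c\|M_{\Xbf}-M\|_{\HS}$. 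The same estimate with base point $I$ gives the a priori bounds $\|\log(I+M)\|_{\HS}\le \tfrac1c\|M\|_{\HS}$ and likewise for $M_{\Xbf}$; a blockwise computation using $\|(I+\tfrac1\gamma L_{K^1})^{-1}\|\le 1$, $\|L_{K^i}\|_{\HS}\le\kappa_i^2$ and $\|R_{12}\|_{\HS}\le\kappa_1\kappa_2$ (Lemma~\ref{lemma:R12-HS}) then gives $\|M\|_{\HS},\|M_{\Xbf}\|_{\HS}\le \tfrac1\gamma\big[(\kappa_1+\kappa_2)^2+\tfrac{\kappa_1^2\kappa_2^2}{\gamma}\big]$, which is exactly the kappa-polynomial in the second bracket of the target bound.

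The main obstacle is the blockwise estimate of $\|M_{\Xbf}-M\|_{\HS}$. Here I would treat the four blocks separately, using the resolvent identity $(I+A)^{-1}-(I+B)^{-1}=(I+A)^{-1}(B-A)(I+B)^{-1}$ (so that $\|(I+\tfrac1\gamma L_{K^1,\Xbf})^{-1}-(I+\tfrac1\gamma L_{K^1})^{-1}\|_{\HS}\le\tfrac1\gamma\|L_{K^1,\Xbf}-L_{K^1}\|_{\HS}$) together with the telescoping identity $X'Y'Z'-XYZ=(X'-X)Y'Z'+X(Y'-Y)Z'+XY(Z'-Z)$ for the products and triple products, the bottom-right block $\tfrac1\gamma L_{K^2}-\tfrac1{\gamma^2}R_{12}^{*}(I+\tfrac1\gamma L_{K^1})^{-1}R_{12}$ being the most involved. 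Tracking the operator-norm bounds $\le 1$ and $\le\kappa_1\kappa_2$ while keeping the differences in Hilbert--Schmidt norm, the blocks combine (their HS norms adding in a perfect-square pattern) to give $\|M_{\Xbf}-M\|_{\HS}\le \tfrac1\gamma\big(\kappa_1+\kappa_2+\tfrac{\kappa_1^2\kappa_2}{\gamma}\big)^2\,r$, where $r:=\max\{\tfrac1{\kappa_1^2}\|L_{K^1,\Xbf}-L_{K^1}\|_{\HS},\,\tfrac1{\kappa_2^2}\|L_{K^2,\Xbf}-L_{K^2}\|_{\HS},\,\tfrac1{\kappa_1\kappa_2}\|R_{12,\Xbf}-R_{12}\|_{\HS}\}$. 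The careful bookkeeping of constants so that they assemble into the stated factor $(\kappa_1+\kappa_2+\tfrac{\kappa_1^2\kappa_2}{\gamma})^2$ is the delicate step.

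Finally I would invoke Proposition~\ref{proposition:concentration-TK2K1-empirical} for each of $\|L_{K^1,\Xbf}-L_{K^1}\|_{\HS}$, $\|L_{K^2,\Xbf}-L_{K^2}\|_{\HS}$ and $\|R_{12,\Xbf}-R_{12}\|_{\HS}$, each at confidence $1-\delta/3$; a union bound over these three events replaces $\log\tfrac2\delta$ by $\log\tfrac6\delta$ and makes $r\le \tfrac{2\log(6/\delta)}{m}+\sqrt{\tfrac{2\log(6/\delta)}{m}}$ hold simultaneously with probability at least $1-\delta$. Multiplying the three pieces — the factor $\tfrac1c=1+\kappa_1^2/\gamma$ from the logarithm Lipschitz bound, the a priori estimate $\tfrac1c\cdot\tfrac1\gamma[(\kappa_1+\kappa_2)^2+\tfrac{\kappa_1^2\kappa_2^2}{\gamma}]$ for $a+b$, and $\tfrac1\gamma(\kappa_1+\kappa_2+\tfrac{\kappa_1^2\kappa_2}{\gamma})^2 r$ for $\|M_{\Xbf}-M\|_{\HS}$ — and collecting the spectral factors into $(1+\kappa_1^2/\gamma)^3$ yields the claimed estimate.
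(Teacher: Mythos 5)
Your plan follows the paper's proof essentially step for step: the same block-operator representation from Proposition~\ref{proposition:aiHS-RKHS-representation}, the same $|a^2-b^2|$ factorization (the paper's Lemma~\ref{lemma:trace-log-square-HS-1}), the same blockwise resolvent/telescoping estimate whose constants assemble into the perfect square $\left(\kappa_1+\kappa_2+\frac{\kappa_1^2\kappa_2}{\gamma}\right)^2$, and the same three-way union bound producing $\log\frac{6}{\delta}$. The only point you should make explicit is that $M$ is \emph{not} self-adjoint, so the claim that $I+M$ ``inherits'' the lower bound $cI$ from the operator on $\Lcal^2(T,\nu)$ is not automatic and must be verified by the quadratic-form computation on $\H_{K^1}\oplus\H_{K^2}$ — this is exactly what the paper's Proposition~\ref{proposition:aiHS-AAstar-switch} supplies, giving $\sup_{t\in[0,1]}\|(I+tM)^{-1}\|\le 1+\kappa_1^2/\gamma$.
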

The convergence in Theorems \ref{theorem:logHS-approx-finite-covariance} and \ref{theorem:aiHS-approx-finite-covariance} is thus both {\it dimension-independent} if $\kappa_1,\kappa_2$ are absolute constants.

\subsection{Estimation of distances from finite samples}
\label{section:estimation-distance-finite-samples}
Consider now the most practical scenario, where
we only have access to
samples of the Gaussian processes $\xi^1,\xi^2$ on a finite set of points $\Xbf = (x_j)_{j=1}^m$
on $T$.
We can first estimate the covariance matrices $K^1[\Xbf], K^2[\Xbf]$, compute their distances, 
and apply Theorem \ref{theorem:logHS-approx-finite-covariance} from Section \ref{section:estimation-distance-covariance-matrices}.
For the Gaussian process $\xi = (\xi(\omega, t))$ defined on the probability space
$(\Omega, \Fcal, P)$, let $\Wbf = (\omega_1, \ldots, \omega_N)$ be independently sampled from $(\Omega,P)$, which corresponds to $N$ sample paths
$\xi_i(x) = \xi(\omega_i,x), 1 \leq i \leq N, x \in T$.
Let $\Xbf =(x_i)_{i=1}^m \in T^m$ be fixed. 
Consider the following $m \times N$ data matrix
\begin{align}
	\Zbf = \begin{pmatrix}
		\xi(\omega_1, x_1), \ldots, \xi(\omega_N, x_1),
		\\
		\cdots  
		\\
		\xi(\omega_1, x_m), \ldots, \xi(\omega_N, x_m)
	\end{pmatrix}
	= [\zbf(\omega_1), \ldots \zbf(\omega_N)] \in \R^{m \times N}.
\end{align}
Here
$\zbf(\omega) = (\z_i(\omega))_{i=1}^m = (\xi(\omega, x_i))_{i=1}^m \in \R^m$.
Since $(K[\Xbf])_{ij} = \bE[\xi(\omega,x_i)\xi(\omega, x_j)]$, $1\leq i,j\leq m$,
\begin{align}
	\label{equation:K-exact-W}
	K[\Xbf] = \bE[\zbf(\omega)\zbf(\omega)^T] = \int_{\Omega}\zbf(\omega)\zbf(\omega)^TdP(\omega).
\end{align}
The empirical version of $K[\Xbf]$, using the random sample $\Wbf = (\omega_i)_{i=1}^N$, is then
\begin{align}
	\label{equation:K-hat-W}
	\hat{K}_{\Wbf}[\Xbf] = \frac{1}{N}\sum_{i=1}^N \zbf(\omega_i)\zbf(\omega_i)^T = \frac{1}{N}\Zbf\Zbf^T.
\end{align}
The convergence of $\hat{K}_{\Wbf}[\Xbf]$ to $K[\Xbf]$ is given by the following.
\begin{proposition}
	[\cite{Minh2021:FiniteEntropicGaussian}]
	\label{proposition:concentration-empirical-covariance}
	Assume Assumptions A1-A6.
	Let $\xi \sim \GP(0,K)$ 
	on $(\Omega, \Fcal,P)$
	Let $\Xbf = (x_i)_{i=1}^m \in T^m$ be fixed. Then
	$||K[\Xbf]||_F \leq m\kappa^2$.
	Let $\Wbf = (\omega_1, \ldots, \omega_N)$ be independently sampled from $(\Omega,P)$. 
	For any $0 < \delta < 1$, with probability at least $1-\delta$, 
	\begin{align}
		&||\hat{K}_{\Wbf}[\Xbf] - K[\Xbf]||_F \leq \frac{2\sqrt{3}m\kappa^2}{\sqrt{N}\delta},
		\;\;
		||\hat{K}_{\Wbf}[\Xbf]||_F \leq \frac{2m\kappa^2}{\delta}.
	\end{align}
\end{proposition}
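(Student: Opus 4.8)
The plan is to read $\hat{K}_{\Wbf}[\Xbf] = \frac{1}{N}\sum_{i=1}^N \zbf(\omega_i)\zbf(\omega_i)^T$ from Eq.~\eqref{equation:K-hat-W} as an empirical mean of the i.i.d.\ random variables $\Phi(\omega) = \zbf(\omega)\zbf(\omega)^T$, viewed as elements of the Hilbert space of $m\times m$ matrices under the Frobenius inner product, whose expectation is $\bE[\Phi] = K[\Xbf]$ by Eq.~\eqref{equation:K-exact-W}. This is exactly the finite-dimensional Gram-matrix analogue of the operator-level statement in Proposition~\ref{proposition-concentration-sample-cov-operator}, so I would follow the same first-moment-plus-Markov recipe. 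The deterministic bound is immediate: Cauchy--Schwarz for the positive definite kernel $K$ together with Assumption A6 gives $|K(x_i,x_j)| \le \sqrt{K(x_i,x_i)K(x_j,x_j)} \le \kappa^2$, hence $||K[\Xbf]||_F^2 = \sum_{i,j=1}^m K(x_i,x_j)^2 \le m^2\kappa^4$.

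Next I would record the two moment estimates for $\Phi$ that drive the concentration. Since $||\Phi(\omega)||_F = ||\zbf(\omega)\zbf(\omega)^T||_F = ||\zbf(\omega)||^2 = \sum_{i=1}^m \xi(\omega,x_i)^2$, the first moment is $\bE||\Phi||_F = \sum_{i=1}^m K(x_i,x_i) \le m\kappa^2$ by A6. For the second moment I would use that $\zbf(\omega) \sim \Ncal(0, K[\Xbf])$ is a centered Gaussian vector (Assumption A4), so by Isserlis' (Wick's) theorem $\bE[\xi(\omega,x_i)^2\xi(\omega,x_j)^2] = K(x_i,x_i)K(x_j,x_j) + 2K(x_i,x_j)^2$, whence
\begin{align}
\bE||\Phi||_F^2 = \bE||\zbf||^4 = \sum_{i,j=1}^m \bE[\xi(\omega,x_i)^2\xi(\omega,x_j)^2] = \left(\sum_{i=1}^m K(x_i,x_i)\right)^2 + 2||K[\Xbf]||_F^2 \le 3m^2\kappa^4.
\end{align}

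To finish, I would combine these with independence: since $\bE[\Phi] = K[\Xbf]$, the centered second moment of the empirical mean is $\bE||\hat{K}_{\Wbf}[\Xbf] - K[\Xbf]||_F^2 = \frac{1}{N}\left(\bE||\Phi||_F^2 - ||K[\Xbf]||_F^2\right) \le \frac{3m^2\kappa^4}{N}$, and Jensen's inequality then gives $\bE||\hat{K}_{\Wbf}[\Xbf] - K[\Xbf]||_F \le \frac{\sqrt{3}\,m\kappa^2}{\sqrt{N}}$. Applying Markov's inequality separately to the two nonnegative quantities, each at confidence level $\delta/2$, yields $P\!\left(||\hat{K}_{\Wbf}[\Xbf]||_F \ge \frac{2m\kappa^2}{\delta}\right) \le \frac{\delta}{2}$ and $P\!\left(||\hat{K}_{\Wbf}[\Xbf] - K[\Xbf]||_F \ge \frac{2\sqrt{3}\,m\kappa^2}{\sqrt{N}\,\delta}\right) \le \frac{\delta}{2}$; a union bound then makes both stated inequalities hold simultaneously with probability at least $1-\delta$. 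The factor $2$ in each bound is precisely the cost of the $\delta/2$ split.

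The only genuinely nonroutine step is the fourth-moment computation: this is where the Gaussianity of $\xi$ is essential, since Isserlis' theorem is what turns $\bE||\zbf||^4$ into an expression controlled purely by $K$, and it is the source of the constant $3$ under the square root. Everything else is the standard variance-plus-Markov argument, and the dimension factor $m$ enters only through the elementary $\ell^2$ bounds on the diagonal of $K[\Xbf]$, so no sharper tail analysis is needed to obtain the stated $1/(\sqrt{N}\delta)$ rate.
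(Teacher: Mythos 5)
Your proof is correct: the Cauchy--Schwarz bound for $||K[\Xbf]||_F$, the Isserlis computation giving $\bE||\zbf||^4 \leq 3m^2\kappa^4$, the variance-plus-Jensen bound $\bE||\hat{K}_{\Wbf}[\Xbf]-K[\Xbf]||_F \leq \sqrt{3}m\kappa^2/\sqrt{N}$, and the two Markov applications at level $\delta/2$ all check out and reproduce the stated constants exactly. The paper itself gives no proof of this proposition (it is imported from the cited reference \cite{Minh2021:FiniteEntropicGaussian}), but the $2\sqrt{3}$ and $2/\delta$ factors in the statement make clear that this first-and-second-moment-plus-Markov argument is precisely the intended one.
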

Let now $\xi^i \sim \GP(0, K^i)$, $i=1,2$, 
on the probability spaces $(\Omega_i, \Fcal_i, P_i)$, respectively.
Let $\Wbf^i = (\omega^i_j)_{j=1}^N$, be independently sampled from $(\Omega_i, P_i)$, corresponding to the sample paths
$\{\xi^i_j(t) = \xi^i(\omega_j,t)\}_{j=1}^N$, $t \in T$, from $\xi^i$, $i=1,2$.
Combining Proposition \ref{proposition:concentration-empirical-covariance} and
Theorem \ref{theorem:logHS-approx-sequence-0},
we obtain the following 
empirical estimate of 
$D^{\gamma}_{\logE}\left[\Ncal\left(0, \frac{1}{m}K^1[\Xbf]\right), \Ncal\left(0, \frac{1}{m}K^2[\Xbf]\right)\right]$
from two finite samples  of 
$\xi^1\sim \GP(0, K^1)$ and $\xi^2\sim \GP(0,K^2)$ given 
by $\Wbf^1, \Wbf^2$.
\begin{theorem}
	\label{theorem:logHS-estimate-unknown-1}
	Assume Assumptions A1-A6.
	Let $\Xbf = (x_i)_{i=1}^m \in T^m$, $m \in \Nbb$ be fixed.
	Let $\Wbf^1 = (\omega_j^1)_{j=1}^N$, $\Wbf^2 = (\omega_j^2)_{j=1}^N$ be independently sampled from
	$(\Omega_1, P_1)$ and $(\Omega_2, P_2)$, respectively.
	For any $0 < \delta<1$, with probability at least $1-\delta$,
		\begin{align}
			&\left|D^{\gamma}_{\logE}\left[\Ncal\left(0, \frac{1}{m}\hat{K}^1_{\Wbf^1}[\Xbf]\right), \Ncal\left(0, \frac{1}{m}\hat{K}^2_{\Wbf^2}[\Xbf]\right)\right]
			- D^{\gamma}_{\logE}\left[\Ncal\left(0, \frac{1}{m}K^1[\Xbf]\right), \Ncal\left(0, \frac{1}{m}K^2[\Xbf]\right)\right]
			\right|
			 \leq \frac{4\sqrt{3}(\kappa_1^2+\kappa_2^2)}{\gamma\sqrt{N}\delta}.
		\end{align}
	Here the probability is with respect to the product space $(\Omega_1,P_1)^N \times (\Omega_2,P_2)^N$.
\end{theorem}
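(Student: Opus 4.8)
The statement is obtained by combining a deterministic Lipschitz-type perturbation bound for the Log-Euclidean distance with the concentration estimate for empirical covariance matrices and a union bound over the two independent samples $\Wbf^1, \Wbf^2$. First I would observe that all four matrices in play, namely $A = \frac{1}{m}K^1[\Xbf]$, $A_n = \frac{1}{m}\hat{K}^1_{\Wbf^1}[\Xbf]$, $B = \frac{1}{m}K^2[\Xbf]$, and $B_n = \frac{1}{m}\hat{K}^2_{\Wbf^2}[\Xbf]$, are symmetric positive semi-definite $m \times m$ matrices (they are covariance matrices, the empirical ones being averages of rank-one outer products $\zbf(\omega)\zbf(\omega)^T$). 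Thus, viewing $\R^m$ as the ambient finite-dimensional Hilbert space and recalling that $D^{\gamma}_{\logE}[\Ncal(0,A),\Ncal(0,B)] = \|\log(\gamma I + A) - \log(\gamma I + B)\|_F$, I can invoke part (i) of Theorem \ref{theorem:logHS-approx-sequence-0} directly, with $\gamma_1 = \gamma_2 = \gamma$, where the extended Hilbert-Schmidt norm reduces to the Frobenius norm since the scalar components cancel ($\log(\gamma/\gamma) = 0$).

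This yields the deterministic bound
\begin{align}
	&\left|D^{\gamma}_{\logE}[\Ncal(0,A_n),\Ncal(0,B_n)] - D^{\gamma}_{\logE}[\Ncal(0,A),\Ncal(0,B)]\right| \leq \frac{1}{\gamma}\|A_n - A\|_F + \frac{1}{\gamma}\|B_n - B\|_F \nonumber \\
	&\quad = \frac{1}{\gamma m}\|\hat{K}^1_{\Wbf^1}[\Xbf] - K^1[\Xbf]\|_F + \frac{1}{\gamma m}\|\hat{K}^2_{\Wbf^2}[\Xbf] - K^2[\Xbf]\|_F. \nonumber
\end{align}
The problem is thereby reduced to controlling the two Frobenius norms on the right. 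For each process $\xi^i \sim \GP(0,K^i)$, I would apply Proposition \ref{proposition:concentration-empirical-covariance} with the bound $\kappa_i$ supplied by Assumption A6, but at confidence level $\delta/2$ rather than $\delta$: with probability at least $1 - \delta/2$ one has $\|\hat{K}^i_{\Wbf^i}[\Xbf] - K^i[\Xbf]\|_F \leq \frac{2\sqrt{3}m\kappa_i^2}{\sqrt{N}(\delta/2)} = \frac{4\sqrt{3}m\kappa_i^2}{\sqrt{N}\delta}$, so that $\frac{1}{m}\|\hat{K}^i_{\Wbf^i}[\Xbf] - K^i[\Xbf]\|_F \leq \frac{4\sqrt{3}\kappa_i^2}{\sqrt{N}\delta}$.

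I would then combine the two events by a union bound on the product space $(\Omega_1,P_1)^N \times (\Omega_2,P_2)^N$: the complement of the intersection has probability at most $\delta/2 + \delta/2 = \delta$, so with probability at least $1 - \delta$ both estimates hold simultaneously. Substituting into the deterministic bound gives $\frac{1}{\gamma}\left(\frac{4\sqrt{3}\kappa_1^2}{\sqrt{N}\delta} + \frac{4\sqrt{3}\kappa_2^2}{\sqrt{N}\delta}\right) = \frac{4\sqrt{3}(\kappa_1^2+\kappa_2^2)}{\gamma\sqrt{N}\delta}$, which is exactly the claimed estimate. I expect no genuine obstacle here: the only point requiring care is the bookkeeping of the confidence level, namely splitting $\delta$ into $\delta/2$ for each process so that the union bound returns the stated confidence $1-\delta$ while reproducing the precise constant $4\sqrt{3}$ (the extra factor of $2$ relative to the single-sample constant $2\sqrt{3}$ in Proposition \ref{proposition:concentration-empirical-covariance} coming exactly from this halving). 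Independence of $\Wbf^1$ and $\Wbf^2$ is not strictly needed for the union bound, but it is what identifies the relevant probability space as the stated product.
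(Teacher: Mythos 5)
Your proposal is correct and follows essentially the same route as the paper's proof: the deterministic Lipschitz bound from Theorem \ref{theorem:logHS-approx-sequence-0}(i), the concentration estimate of Proposition \ref{proposition:concentration-empirical-covariance} applied to each sample, and a union bound yielding the constant $4\sqrt{3}$. Your explicit bookkeeping of the $\delta/2$ split makes transparent a step the paper leaves implicit, but the argument is the same.
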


Combing Theorems \ref{theorem:logHS-estimate-unknown-1} and \ref{theorem:logHS-approx-finite-covariance}, we are finally led to the following empirical estimate
of the theoretical Log-Hilbert-Schmidt distance $D^{\gamma}_{\logHS}[\Ncal(0,C_{K^1}), \Ncal(0,C_{K^2})]$
from two finite samples $\Zbf^1,\Zbf^2$ of $\xi^1 \sim \GP(0,K^1)$ and $\xi^2 \sim \GP(0,K^2)$.

\begin{theorem}
	[\textbf{Estimation of Log-Hilbert-Schmidt distance between Gaussian processes from finite samples}]
	\label{theorem:logHS-estimate-unknown-2}
	Assume Assumptions A1-A6.
	Let $\Xbf = (x_i)_{i=1}^m$
	be independently sampled from $(T, \nu)$.
	Let $\Wbf^1 = (\omega_j^1)_{j=1}^N$, $\Wbf^2 = (\omega_j^2)_{j=1}^N$ be independently sampled from
	$(\Omega_1, P_1)$ and $(\Omega_2, P_2)$, respectively.
	For any $0 < \delta<1$, with probability at least $1-\delta$,
		\begin{align}
			&\left|D^{\gamma}_{\logE}\left[\Ncal\left(0, \frac{1}{m}\hat{K}^1_{\Wbf^1}[\Xbf]\right), \Ncal\left(0, \frac{1}{m}\hat{K}^2_{\Wbf^2}[\Xbf]\right)\right] - D^{\gamma}_{\logHS}[\Ncal(0,C_{K^1}), \Ncal(0,C_{K^2})]\right|
			\nonumber
			\\
			& \leq \frac{8\sqrt{3}(\kappa_1^2+\kappa_2^2)}{\gamma\sqrt{N}\delta}
			+ \frac{1}{\gamma}\sqrt{2(\kappa_1^4 + \kappa_2^4)
				\left(\frac{2\log\frac{12}{\delta}}{m} + \sqrt{\frac{2\log\frac{12}{\delta}}{m}}\right)
				+ {2\kappa_1^2\kappa_2^2}\left(1 + \frac{\kappa_1^2+\kappa_2^2}{2\gamma}\right)\left(\frac{2\log\frac{48}{\delta}}{m} + \sqrt{\frac{2\log\frac{48}{\delta}}{m}}\right)}.
		\end{align}
	Here the probability is with respect to the space $(T,\nu)^m \times (\Omega_1,P_1)^N \times (\Omega_2,P_2)^N$.
\end{theorem}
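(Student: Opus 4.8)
The plan is to interpose the intermediate quantity $D^{\gamma}_{\logE}\left[\Ncal\left(0, \frac{1}{m}K^1[\Xbf]\right), \Ncal\left(0, \frac{1}{m}K^2[\Xbf]\right)\right]$ between the fully empirical distance and the theoretical Log-Hilbert-Schmidt distance, and then apply the triangle inequality. This splits the quantity to be controlled into two pieces: (i) the deviation of the distance computed from the sample covariance matrices $\frac{1}{m}\hat{K}^i_{\Wbf^i}[\Xbf]$ from the distance computed from the exact covariance matrices $\frac{1}{m}K^i[\Xbf]$, which depends only on the path samplings $\Wbf^1,\Wbf^2$; and (ii) the deviation of the distance computed from $\frac{1}{m}K^i[\Xbf]$ from the theoretical $D^{\gamma}_{\logHS}[\Ncal(0,C_{K^1}), \Ncal(0,C_{K^2})]$, which depends only on the point sampling $\Xbf$. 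Handling the two pieces by the two previously established theorems, and combining the resulting high-probability events, is what I expect to yield the claim.

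For piece (i), I would invoke Theorem \ref{theorem:logHS-estimate-unknown-1} directly, since it already controls precisely this difference of unsquared distances, uniformly over the fixed point set. Applying it with confidence parameter $\delta/2$ in place of $\delta$ gives the first summand $\frac{8\sqrt{3}(\kappa_1^2+\kappa_2^2)}{\gamma\sqrt{N}\delta}$, holding with probability at least $1-\delta/2$ over $(\Omega_1,P_1)^N \times (\Omega_2,P_2)^N$; because this bound is independent of $\Xbf$, it then holds with the same probability on the full product space $(T,\nu)^m \times (\Omega_1,P_1)^N \times (\Omega_2,P_2)^N$ by Fubini.

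For piece (ii), the available estimate (Theorem \ref{theorem:logHS-approx-finite-covariance}) bounds the difference of the \emph{squared} distances, whereas the statement asks for a bound on the distances themselves. The bridge is the elementary inequality $|a-b| \leq \sqrt{|a^2-b^2|}$ valid for $a,b \geq 0$, applied with $a = D^{\gamma}_{\logE}[\Ncal(0,\tfrac{1}{m}K^1[\Xbf]),\Ncal(0,\tfrac{1}{m}K^2[\Xbf])]$ and $b = D^{\gamma}_{\logHS}[\Ncal(0,C_{K^1}),\Ncal(0,C_{K^2})]$, both being the nonnegative square roots of the squared quantities appearing in Theorem \ref{theorem:logHS-approx-finite-covariance}. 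Substituting $\delta/2$ for $\delta$ there (so that $\frac{6}{\delta} \mapsto \frac{12}{\delta}$ and $\frac{24}{\delta} \mapsto \frac{48}{\delta}$, matching the logarithmic arguments in the target bound) and then taking the square root produces exactly the second summand, valid with probability at least $1-\delta/2$ over $(T,\nu)^m$.

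Finally I would take a union bound over the two events, each holding with probability at least $1-\delta/2$ on the product space, so that both bounds hold simultaneously with probability at least $1-\delta$; adding the two summands then gives the stated estimate. The one genuinely delicate step is the square-to-linear passage in piece (ii): the inequality $|a-b|\leq\sqrt{|a^2-b^2|}$ is what converts the $O(1/m)$-type control on squared distances into the claimed form, and one must check that the $\delta/2$ splitting reproduces the logarithmic arguments $\log\frac{12}{\delta}$ and $\log\frac{48}{\delta}$ and the constants precisely. Everything else is routine triangle-inequality and union-bound bookkeeping.
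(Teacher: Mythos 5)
Your proposal is correct and follows essentially the same route as the paper: the same triangle-inequality decomposition through $D^{\gamma}_{\logE}[\Ncal(0,\tfrac{1}{m}K^1[\Xbf]),\Ncal(0,\tfrac{1}{m}K^2[\Xbf])]$, Theorem \ref{theorem:logHS-estimate-unknown-1} and Theorem \ref{theorem:logHS-approx-finite-covariance} each at confidence level $\delta/2$, the elementary inequality $(a-b)^2\leq|a^2-b^2|$ for $a,b\geq 0$ to pass from squared to unsquared distances, and an intersection of the two events on the product space. The constants and logarithmic arguments work out exactly as you describe.
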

Entirely similar results can be obtained for the affine-invariant Riemannian distance.
%

\section{Estimation of distances between RKHS Gaussian measures}
\label{section:distance-RKHS-covariance-operators}

We now consider the estimation of the distances between two RKHS Gaussian measures
induced by {\it two Borel probability measures} via {\it one positive definite kernel} on the same metric space.
This setting has been applied practically, see e.g. \cite{MinhSB:NIPS2014,Covariance:CVPR2014} for computer vision applications.

Throughout this section,
let {$\Xcal$} be a complete separable metric space.
Let 
{$K$} be a continuous positive definite kernel on {$\Xcal \times \Xcal$}. Then the reproducing kernel Hilbert space (RKHS) {$\H_K$} induced by {$K$} is separable (\cite{Steinwart:SVM2008}, Lemma 4.33).
Let {$\Phi: \Xcal \mapto \H_K$} be the corresponding canonical feature map, so that 
$K(x,y) = \la \Phi(x), \Phi(y)\ra_{\H_K}$ $\forall (x,y) \in \Xcal \times \Xcal$.
Let $\rho$ be a Borel probability measure on $\Xcal$ such that
\begin{align}
	\int_{\Xcal}||\Phi(x)||_{\H_K}^2d\rho(x) = \int_{\Xcal}K(x,x)d\rho(x) < \infty.
\end{align}
Then the following RKHS mean vector $\mu_{\Phi} \in \H_K$ and RKHS covariance operator {$C_{\Phi}:\H_K \mapto \H_K$} induced by the feature map $\Phi$ are both well-defined and are given by
\begin{align}
	\mu_{\Phi} &= \mu_{\Phi,\rho} = \int_{\Xcal}\Phi(x)d\rho(x) \in \H_K, \;\;\;
	C_{\Phi} = C_{\Phi,\rho} = \int_{\Xcal}(\Phi(x)-\mu_{\Phi})\otimes (\Phi(x)-\mu_{\Phi})d\rho(x).
\end{align}
Let {$\Xbf =[x_1, \ldots, x_m]$,$m \in \N$,} be a data matrix randomly sampled from {$\Xcal$} according to
 $\rho$, where {$m \in \Nbb$} is the number of observations.
The feature map {$\Phi$} on {$\Xbf$} 
defines
the bounded linear operator
$\Phi(\Xbf): \R^m \mapto \H_K, \Phi(\Xbf)\b = \sum_{j=1}^mb_j\Phi(x_j) , \b \in \R^m$.
The corresponding empirical mean vector and covariance operator for {$\Phi(\Xbf)$}
are defined to be
\begin{align}
	\mu_{\Phi(\Xbf)} &= \frac{1}{m}\sum_{j=1}^m\Phi(x_j) = \frac{1}{m}\Phi(\Xbf)\1_m,
	\;\;\;
	C_{\Phi(\Xbf)} = \frac{1}{m}\Phi(\Xbf)J_m\Phi(\Xbf)^{*}: \H_K \mapto \H_K,
	\label{equation:covariance-operator}
\end{align}
where $J_m = I_m -\frac{1}{m}\1_m\1_m^T,\1_m = (1, \ldots, 1)^T \in \R^m$, is the centering matrix
with $J_m^2 = J_m$.
The convergence of $\mu_{\Phi(\Xbf)}$ and $C_{\Phi(\Xbf)}$ towards $\mu_{\Phi}$ and $C_{\Phi}$, respectively, is quantified by the following
\begin{theorem}
	[\textbf{Convergence of RKHS mean and covariance operators - bounded kernels \cite{Minh:2021EntropicConvergenceGaussianMeasures}}]
	\label{theorem:CPhi-concentration}
	Assume that $\sup_{x \in \Xcal}K(x,x)\leq \kappa^2$.
	Let $\Xbf = (x_i)_{i=1}^m$, $m \in \Nbb$, be independently  sampled from $(\Xcal, \rho)$. Then
	$||\mu_{\Phi}||_{\H_K} \leq \kappa$, $||\mu_{\Phi(\Xbf)}||_{\H_K} \leq \kappa$ $\forall \Xbf \in \Xcal^m$,
	$||C_{\Phi}||_{\HS(\H_K)}|| \leq 2\kappa^2$,
	$||C_{\Phi(\Xbf)}||_{\HS(\H_K)}  \leq 2\kappa^2$ $\forall \Xbf \in \Xcal^m$.
	For any $0 < \delta <1$, with probability at least $1-\delta$, 
	\begin{align}
		||\mu_{\Phi(\Xbf)} - \mu_{\Phi}||_{\H_K} & \leq \kappa\left(\frac{2\log\frac{4}{\delta}}{m} + \sqrt{\frac{2\log\frac{4}{\delta}}{m}}\right),\;\;
		||C_{\Phi(\Xbf)} - C_{\Phi}||_{\HS(\H_K)} \leq 3\kappa^2\left(\frac{2\log\frac{4}{\delta}}{m} + \sqrt{\frac{2\log\frac{4}{\delta}}{m}}\right).
	\end{align}
\end{theorem}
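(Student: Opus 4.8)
The plan is to split the statement into deterministic operator-norm bounds, which are elementary, and two probabilistic estimates, both of which I would derive from a single Bernstein-type concentration inequality for bounded Hilbert-space-valued random variables, applied once to the feature map and once to its rank-one second moments.

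For the deterministic part, I would start from the pointwise identity $\|\Phi(x)\|_{\H_K} = \sqrt{K(x,x)} \leq \kappa$. The triangle inequality for the Bochner integral and for finite averages then gives $\|\mu_{\Phi}\|_{\H_K} \leq \int_{\Xcal}\|\Phi(x)\|_{\H_K}\,d\rho(x) \leq \kappa$ and $\|\mu_{\Phi(\Xbf)}\|_{\H_K} \leq \frac{1}{m}\sum_{j=1}^m \|\Phi(x_j)\|_{\H_K} \leq \kappa$. For the covariance operators I would rewrite them in second-moment form, $C_{\Phi} = \bE[\Phi(x)\otimes\Phi(x)] - \mu_{\Phi}\otimes\mu_{\Phi}$ and, unfolding $\frac{1}{m}\Phi(\Xbf)J_m\Phi(\Xbf)^{*}$, $C_{\Phi(\Xbf)} = \frac{1}{m}\sum_{j=1}^m \Phi(x_j)\otimes\Phi(x_j) - \mu_{\Phi(\Xbf)}\otimes\mu_{\Phi(\Xbf)}$; since $\|u\otimes u\|_{\HS(\H_K)} = \|u\|_{\H_K}^2$, each integrand or summand contributes at most $\kappa^2$, so both operators have Hilbert-Schmidt norm at most $2\kappa^2$.

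The probabilistic core rests on the inequality: for i.i.d. $\zeta_1,\dots,\zeta_m$ in a separable Hilbert space $\mathcal{K}$ with $\|\zeta\|_{\mathcal{K}} \leq M$ almost surely, one has, with probability at least $1-\eta$,
\begin{align}
\left\|\frac{1}{m}\sum_{j=1}^m \zeta_j - \bE\zeta\right\|_{\mathcal{K}} \leq \frac{2M\log\frac{2}{\eta}}{m} + \sqrt{\frac{2M^2\log\frac{2}{\eta}}{m}}.
\end{align}
I would apply this twice, each at level $\eta = \delta/2$ so that $\log\frac{2}{\eta} = \log\frac{4}{\delta}$. Taking $\mathcal{K} = \H_K$, $\zeta_j = \Phi(x_j)$ and $M = \kappa$ yields the stated mean bound on an event $E_1$ with $\Pb(E_1) \geq 1-\delta/2$. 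Taking $\mathcal{K} = \HS(\H_K)$ (separable because $\H_K$ is) and $\zeta_j = \Phi(x_j)\otimes\Phi(x_j)$ with $M = \kappa^2$ controls the second-moment fluctuation on an event $E_2$ with $\Pb(E_2) \geq 1-\delta/2$.

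On $E_1 \cap E_2$ I would use the decomposition
\begin{align}
C_{\Phi(\Xbf)} - C_{\Phi} = \left(\frac{1}{m}\sum_{j=1}^m \Phi(x_j)\otimes\Phi(x_j) - \bE[\Phi\otimes\Phi]\right) - \left(\mu_{\Phi(\Xbf)}\otimes\mu_{\Phi(\Xbf)} - \mu_{\Phi}\otimes\mu_{\Phi}\right),
\end{align}
bounding the first parenthesis by $\kappa^2\big(\frac{2\log(4/\delta)}{m} + \sqrt{\frac{2\log(4/\delta)}{m}}\big)$ via $E_2$, and the second, through the bilinear expansion $\mu_{\Phi(\Xbf)}\otimes\mu_{\Phi(\Xbf)} - \mu_{\Phi}\otimes\mu_{\Phi} = (\mu_{\Phi(\Xbf)}-\mu_{\Phi})\otimes\mu_{\Phi(\Xbf)} + \mu_{\Phi}\otimes(\mu_{\Phi(\Xbf)}-\mu_{\Phi})$, by $2\kappa\,\|\mu_{\Phi(\Xbf)}-\mu_{\Phi}\|_{\H_K}$, into which the $E_1$ estimate inserts $2\kappa^2\big(\frac{2\log(4/\delta)}{m} + \sqrt{\frac{2\log(4/\delta)}{m}}\big)$. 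The two pieces sum to the claimed factor $3\kappa^2$, and a union bound gives $\Pb(E_1\cap E_2)\geq 1-\delta$, so both estimates hold simultaneously. The main obstacle I anticipate is fixing the precise scalar concentration inequality so that its constants match the statement verbatim; once it is in hand, the only nonroutine reduction is recognizing that the second moments $\Phi(x_j)\otimes\Phi(x_j)$ live in the separable Hilbert space $\HS(\H_K)$, which lets the same vector-valued inequality apply without change.
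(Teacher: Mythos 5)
Your proposal is correct and complete. The paper does not prove this theorem itself (it is imported from the cited reference), but your argument is exactly the standard route and is the same machinery the paper uses for its other concentration statements (Propositions \ref{proposition-concentration-sample-cov-operator} and \ref{proposition:concentration-TK2K1-empirical}): deterministic bounds from $\|\Phi(x)\|_{\H_K}=\sqrt{K(x,x)}\le\kappa$, two applications of the Bernstein--Pinelis inequality for bounded Hilbert-space-valued i.i.d.\ variables at level $\delta/2$ (once in $\H_K$ with $M=\kappa$, once in $\HS(\H_K)$ with $M=\kappa^2$), the decomposition $C_{\Phi(\Xbf)}-C_{\Phi}=\bigl(\frac{1}{m}\sum_j\Phi(x_j)\otimes\Phi(x_j)-\bE[\Phi\otimes\Phi]\bigr)-\bigl(\mu_{\Phi(\Xbf)}\otimes\mu_{\Phi(\Xbf)}-\mu_{\Phi}\otimes\mu_{\Phi}\bigr)$ with the bilinear expansion of the rank-one difference, and a union bound; the constants $1\kappa^2+2\kappa^2=3\kappa^2$ and $\log\frac{4}{\delta}$ come out exactly as stated.
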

Combining Theorem \ref{theorem:CPhi-concentration} with Theorems \ref{theorem:logHS-convergence} and \ref{theorem:affineHS-convergence}, we first obtain the following convergence
of the empirical Gaussian measure $\Ncal(0, C_{\Phi(\Xbf)})$ towards the Gaussian measure $\Ncal(0, C_{\Phi})$, which are defined on the RKHS $\H_K$.
\begin{theorem}
\label{theorem:convergence-RKHS-Gaussian}
Let $\gamma \in \R, \gamma > 0$ be fixed.
		Assume that $\sup\limits_{x \in \Xcal}K(x,x)\leq \kappa^2$.
	For any $0 < \delta < 1$, with probability at least $1-\delta$,
\begin{align}
D^{\gamma}_{\logHS}[\Ncal(0,C_{\Phi(\Xbf)}), \Ncal(0, C_{\Phi})] \leq
\frac{3\kappa^2}{\gamma}\left(\frac{2\log\frac{4}{\delta}}{m} + \sqrt{\frac{2\log\frac{4}{\delta}}{m}}\right).
\end{align}
For $0 < \ep < \gamma$,
let $N(\ep) \in \Nbb$ be such that
${3\kappa^2}\left(\frac{2\log\frac{4}{\delta}}{N(\ep)} + \sqrt{\frac{2\log\frac{4}{\delta}}{N(\ep)}}\right) < \ep$,
then $\forall m \geq N(\ep)$, with probability at least $1-\delta$,
\begin{align}
D^{\gamma}_{\aiHS}[\Ncal(0,C_{\Phi(\Xbf)}), \Ncal(0, C_{\Phi})] \leq
\frac{3\kappa^2}{\gamma-\ep}\left(\frac{2\log\frac{4}{\delta}}{m} + \sqrt{\frac{2\log\frac{4}{\delta}}{m}}\right).
\end{align}
\end{theorem}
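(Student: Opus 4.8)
The plan is to derive both estimates from a single mechanism: each distance is Lipschitz in the Hilbert--Schmidt norm of the difference of the underlying covariance operators (Theorems \ref{theorem:logHS-convergence} and \ref{theorem:affineHS-convergence}), while the empirical operator $C_{\Phi(\Xbf)}$ concentrates around $C_{\Phi}$ in exactly that norm (Theorem \ref{theorem:CPhi-concentration}). The first point to record is that $C_{\Phi}$ and $C_{\Phi(\Xbf)}$ are covariance operators, hence self-adjoint, positive, and Hilbert--Schmidt; that is, $C_{\Phi}, C_{\Phi(\Xbf)} \in \Sym^{+}(\H_K) \cap \HS(\H_K)$. In particular $\gamma I + C_{\Phi} \geq \gamma I$, so the positivity constant $M_A$ of Theorems \ref{theorem:logHS-convergence} and \ref{theorem:affineHS-convergence} may be taken as $M_A = \gamma$ with reference operator $A = C_{\Phi}$. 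Every probabilistic statement will be made on the single event, of probability at least $1-\delta$, provided by Theorem \ref{theorem:CPhi-concentration}, on which $\|C_{\Phi(\Xbf)} - C_{\Phi}\|_{\HS(\H_K)} \leq 3\kappa^2\bigl(\tfrac{2\log(4/\delta)}{m} + \sqrt{\tfrac{2\log(4/\delta)}{m}}\bigr)$.

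For the Log-Hilbert--Schmidt bound I would unfold the definition $D^{\gamma}_{\logHS}[\Ncal(0,C_{\Phi(\Xbf)}), \Ncal(0,C_{\Phi})] = \|\log(\gamma I + C_{\Phi(\Xbf)}) - \log(\gamma I + C_{\Phi})\|_{\HS(\H_K)}$ and apply part (i) of Theorem \ref{theorem:logHS-convergence} with $A = C_{\Phi}$ and $A_n = C_{\Phi(\Xbf)}$, both positive, to obtain the clean estimate $\|\log(\gamma I + C_{\Phi(\Xbf)}) - \log(\gamma I + C_{\Phi})\|_{\HS} \leq \tfrac{1}{\gamma}\|C_{\Phi(\Xbf)} - C_{\Phi}\|_{\HS}$. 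Inserting the concentration bound from Theorem \ref{theorem:CPhi-concentration} then gives the stated inequality immediately, valid for every $m$ and every $0 < \delta < 1$, with no restriction on the sample size.

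For the affine-invariant bound the argument is the same in spirit but needs one extra step. Since $d_{\aiHS}$ is a genuine metric on $\Sym^{+}(\H_K) \cap \HS(\H_K)$ (Theorem \ref{theorem:metric-positive-Hilbert-Schmidt}), hence symmetric, I would first rewrite $D^{\gamma}_{\aiHS}[\Ncal(0,C_{\Phi(\Xbf)}), \Ncal(0,C_{\Phi})]$ with the \emph{fixed} operator $C_{\Phi}$ in the outer (sandwich) position, which is exactly the configuration in which Theorem \ref{theorem:affineHS-convergence} furnishes a Lipschitz bound with reference $A = C_{\Phi}$ and constant $\tfrac{1}{\gamma - \ep}$ (using $M_A = \gamma$). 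The only hypothesis of that theorem not automatically present in the stochastic setting is the \emph{operator-norm} proximity $\|C_{\Phi(\Xbf)} - C_{\Phi}\| < \ep$; I would secure it on the concentration event via $\|\cdot\| \leq \|\cdot\|_{\HS}$ together with the monotonicity in $m$ of $3\kappa^2\bigl(\tfrac{2\log(4/\delta)}{m} + \sqrt{\tfrac{2\log(4/\delta)}{m}}\bigr)$: choosing $N(\ep)$ so that this quantity is below $\ep$ at $m = N(\ep)$ forces $\|C_{\Phi(\Xbf)} - C_{\Phi}\| \leq \|C_{\Phi(\Xbf)} - C_{\Phi}\|_{\HS} < \ep$ for all $m \geq N(\ep)$ on that same event. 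Applying the (deterministic two-operator) inequality of Theorem \ref{theorem:affineHS-convergence} there and inserting the concentration estimate once more yields the claimed bound.

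The main obstacle is therefore not analytic but the careful reconciliation of the deterministic convergence hypothesis of Theorem \ref{theorem:affineHS-convergence} with the high-probability framework: the Lipschitz inequality must be invoked \emph{conditionally} on the event where the operator-norm proximity holds, and arranging for this event to coincide with the concentration event is precisely what forces the sample-size threshold $m \geq N(\ep)$ and the mildly degraded constant $\tfrac{1}{\gamma - \ep}$ in the affine-invariant case --- in contrast to the Log-Hilbert--Schmidt case, where positivity of both operators already yields $M_A = \gamma$ with no $\ep$ and hence a bound valid for all $m$.
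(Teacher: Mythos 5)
Your proposal is correct and follows essentially the same route as the paper: the $1/\gamma$ Lipschitz bound of Theorem \ref{theorem:logHS-convergence}(i) combined with the concentration estimate of Theorem \ref{theorem:CPhi-concentration} for the Log-Hilbert-Schmidt case, and the $1/(\gamma-\ep)$ bound of Theorem \ref{theorem:affineHS-convergence} (with $M_A=\gamma$, operator-norm proximity secured on the concentration event via $\|\cdot\|\leq\|\cdot\|_{\HS}$ and the threshold $m\geq N(\ep)$) for the affine-invariant case. Your explicit handling of the symmetry of $d_{\aiHS}$ and of the conditioning on the concentration event only makes explicit what the paper leaves implicit.
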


Let now {$\Xbf^1 = [x_i^1]_{i=1}^m$, $\Xbf^2 = [x_i^2]_{i=1}^m$}, be two random data matrices sampled from {$\Xcal$} according to two Borel probability distributions $\rho_1$ and $\rho_2$ on $\Xcal$ . Let $\mu_{\Phi(\Xbf^1)}, \mu_{\Phi(\Xbf^2)}$ and $C_{\Phi(\Xbf^1)}$, $C_{\Phi(\Xbf^2)}$
be the corresponding mean vectors and covariance operators induced by
the kernel 
$K$, respectively.
Define the following $m \times m$ Gram matrices
\begin{align}
	&K[\Xbf^1] = \Phi(\Xbf^1)^{*}\Phi(\Xbf^1),\;K[\Xbf^2] = \Phi(\Xbf^2)^{*}\Phi(\Xbf^2), K[\Xbf^1,\Xbf^2] = \Phi(\Xbf^1)^{*}\Phi(\Xbf^2),
	\\
	&(K[\Xbf^1])_{jk} = K(x_j^1,x_k^1), (K[\Xbf^2])_{jk} = K(x_j^2,x_k^2),(K[\Xbf^1,\Xbf^2])_{jk} = K(x^1_j, x^2_k), \; 1\leq j,k \leq m.
\end{align}
For fixed $\gamma_i \in \R, \gamma_i > 0$, $i=1,2$, the following distances are expressed explicitly in terms of the Gram matrices (\cite{MinhSB:NIPS2014,Minh:2019AlphaBeta})
\begin{align}
	\label{equation:logHS-Gram}
&||\log(C_{\Phi(\Xbf^1)} + \gamma_1 I_{\H_K}) - \log(C_{\Phi(\Xbf^2)} + \gamma_2 I_{\H_K})||_{\eHS},\;\;\;
\\
&||\log[(C_{\Phi(\Xbf^1)} + \gamma_1 I_{\H_K})^{-1/2}(C_{\Phi(\Xbf^2)} + \gamma_2 I_{\H_K})^{-1/2}(C_{\Phi(\Xbf^1)} + \gamma_1 I_{\H_K})^{-1/2}]||_{\eHS}.
\label{equation:aiHS-Gram}
\end{align}
The following shows theoretical consistency for the empirical quantities in Eqs.\eqref{equation:logHS-Gram}, \eqref{equation:aiHS-Gram}, which are used practically.
\begin{theorem}
	\label{theorem:approx-RKHS-Gaussian}
	Let $\gamma_i \in \R, \gamma_i > 0$, $i=1,2$, be fixed.
		Assume that $\sup\limits_{x \in \Xcal}K(x,x)\leq \kappa^2$.
	For any 
	$0 < \delta < 1$, with probability at least $1-\delta$,
\begin{align}
&\left|||\log(C_{\Phi(\Xbf^1)} + \gamma_1 I) - \log(C_{\Phi(\Xbf^2)} + \gamma_2 I)||_{\eHS}
- ||\log(C_{\Phi, \rho_1} + \gamma_1I) - \log(C_{\Phi,\rho_2} + \gamma_2 I)||_{\eHS}\right|
\nonumber
\\
& 
\quad \leq 3\kappa^2\left(\frac{1}{\gamma_1} + \frac{1}{\gamma_2}\right)\left(\frac{2\log\frac{8}{\delta}}{m} + \sqrt{\frac{2\log\frac{8}{\delta}}{m}}\right).
\end{align}
Here the probability is with respect to the space $(\rho_1 \times \rho_2)^m$.
For $0 < \ep < \min\{\gamma_1,\gamma_2\}$,
let $N(\ep) \in \Nbb$ be such that
${3\kappa^2}\left(\frac{2\log\frac{8}{\delta}}{N(\ep)} + \sqrt{\frac{2\log\frac{8}{\delta}}{N(\ep)}}\right) < \ep$,
then $\forall m \geq N(\ep)$, with probability at least $1-\delta$,
\begin{align}
&\left|||\log[(C_{\Phi(\Xbf^1)} + \gamma_1 I_{\H_K})^{-1/2}(C_{\Phi(\Xbf^2)} + \gamma_2 I_{\H_K})^{-1/2}(C_{\Phi(\Xbf^1)} + \gamma_1 I_{\H_K})^{-1/2}]||_{\eHS}\right.
\nonumber
\\
&\quad \left.- ||\log[(C_{\Phi, \rho_1} + \gamma_1 I_{\H_K})^{-1/2}(C_{\Phi, \rho_2} + \gamma_2 I_{\H_K})^{-1/2}(C_{\Phi, \rho_1} + \gamma_1 I_{\H_K})^{-1/2}]||_{\eHS}
\right|
\nonumber
\\
& \leq 3\kappa^2\left(\frac{1}{\gamma_1-\ep} + \frac{1}{\gamma_2-\ep}\right)\left(\frac{2\log\frac{8}{\delta}}{m} + \sqrt{\frac{2\log\frac{8}{\delta}}{m}}\right).
\end{align}
\end{theorem}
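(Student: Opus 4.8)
The plan is to combine the deterministic approximation bounds of Theorems \ref{theorem:logHS-approx-sequence-0} and \ref{theorem:affineHS-approx-sequence-0} with the concentration bound of Theorem \ref{theorem:CPhi-concentration}, joined by a union bound over the two independent samplings. The key structural observation is that all four operators $C_{\Phi,\rho_1}, C_{\Phi,\rho_2}, C_{\Phi(\Xbf^1)}, C_{\Phi(\Xbf^2)}$ are self-adjoint, positive, and Hilbert--Schmidt on $\H_K$, i.e.\ they lie in $\Sym^{+}(\H_K) \cap \HS(\H_K)$: positivity holds because each is a covariance operator (for the empirical ones, $C_{\Phi(\Xbf)} = \frac{1}{m}(\Phi(\Xbf)J_m)(\Phi(\Xbf)J_m)^{*}$ since $J_m^2 = J_m$), and the Hilbert--Schmidt property is guaranteed by the bound $\leq 2\kappa^2$ in Theorem \ref{theorem:CPhi-concentration}. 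This places us in the positive case of both approximation theorems, so we may take $M_A = \gamma_1$ and $M_B = \gamma_2$ throughout.

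First I would set $A = C_{\Phi,\rho_1}$, $B = C_{\Phi,\rho_2}$, $A_n = C_{\Phi(\Xbf^1)}$, $B_n = C_{\Phi(\Xbf^2)}$. For the Log-Hilbert--Schmidt claim, part (i) of Theorem \ref{theorem:logHS-approx-sequence-0} gives immediately
\[
\left|\,||\log(C_{\Phi(\Xbf^1)}+\gamma_1 I) - \log(C_{\Phi(\Xbf^2)}+\gamma_2 I)||_{\eHS} - ||\log(C_{\Phi,\rho_1}+\gamma_1 I) - \log(C_{\Phi,\rho_2}+\gamma_2 I)||_{\eHS}\,\right| \leq \frac{1}{\gamma_1}||C_{\Phi(\Xbf^1)} - C_{\Phi,\rho_1}||_{\HS} + \frac{1}{\gamma_2}||C_{\Phi(\Xbf^2)} - C_{\Phi,\rho_2}||_{\HS}.
\]
For the affine-invariant claim, the positive case of Theorem \ref{theorem:affineHS-approx-sequence-0} with $M_A = \gamma_1$, $M_B = \gamma_2$ yields the analogous estimate with $1/(\gamma_1-\ep)$ and $1/(\gamma_2-\ep)$ replacing $1/\gamma_1$ and $1/\gamma_2$. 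That theorem additionally requires $||A_n - A|| < \ep$ and $||B_n - B|| < \ep$, which I would secure for $m \geq N(\ep)$ by noting that the operator norm is dominated by the Hilbert--Schmidt norm and invoking the stated threshold on $N(\ep)$ together with the concentration estimate below.

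It remains to control the two Hilbert--Schmidt differences. Applying Theorem \ref{theorem:CPhi-concentration} separately to each sampling, with $\delta$ replaced by $\delta/2$ so that $\log\frac{4}{\delta/2} = \log\frac{8}{\delta}$, yields for each $i \in \{1,2\}$, with probability at least $1-\delta/2$,
\[
||C_{\Phi(\Xbf^i)} - C_{\Phi,\rho_i}||_{\HS(\H_K)} \leq 3\kappa^2\left(\frac{2\log\frac{8}{\delta}}{m} + \sqrt{\frac{2\log\frac{8}{\delta}}{m}}\right).
\]
A union bound over the two independent samples $\Xbf^1, \Xbf^2$ makes both estimates hold simultaneously with probability at least $1-\delta$. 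Substituting into the two deterministic bounds and factoring out the common concentration term then produces exactly $3\kappa^2\left(\frac{1}{\gamma_1} + \frac{1}{\gamma_2}\right)(\cdots)$ for the Log-Hilbert--Schmidt distance and $3\kappa^2\left(\frac{1}{\gamma_1-\ep} + \frac{1}{\gamma_2-\ep}\right)(\cdots)$ for the affine-invariant distance, as claimed.

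I do not anticipate a genuine obstacle: the argument is a clean assembly of previously established pieces, and the two distances are handled in parallel. The only points requiring care are bookkeeping rather than conceptual --- choosing $\delta/2$ in each application of Theorem \ref{theorem:CPhi-concentration} so that the logarithmic argument comes out to $\log\frac{8}{\delta}$ after the union bound, and, in the affine-invariant case, verifying that the $N(\ep)$ threshold indeed forces the operator-norm smallness $||C_{\Phi(\Xbf^i)} - C_{\Phi,\rho_i}|| < \ep$ demanded by Theorem \ref{theorem:affineHS-approx-sequence-0}; this follows since the right-hand concentration bound is precisely the quantity appearing in the definition of $N(\ep)$.
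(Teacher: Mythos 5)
Your proposal is correct and follows essentially the same route as the paper: the paper decomposes the difference by the triangle inequality into two single-measure perturbation terms and bounds each via Theorem \ref{theorem:convergence-RKHS-Gaussian} (itself Theorem \ref{theorem:logHS-convergence} resp. \ref{theorem:affineHS-convergence} plus Theorem \ref{theorem:CPhi-concentration}), which is exactly the chain of inequalities you obtain by invoking Theorems \ref{theorem:logHS-approx-sequence-0} and \ref{theorem:affineHS-approx-sequence-0} directly. Your bookkeeping ($\delta/2$ per sample giving $\log\frac{8}{\delta}$, the union bound over $(\rho_1\times\rho_2)^m$, and the $N(\ep)$ threshold forcing $\|C_{\Phi(\Xbf^i)} - C_{\Phi,\rho_i}\| < \ep$ in the affine-invariant case) matches the paper's.
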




\section{Numerical experiments on Gaussian processes}
\label{section:experiment}

In this section, we illustrate the theoretical results above with several experiments on Gaussian processes.

{\bf Estimation of distances}.
In Figures \ref{figure:Gaussian-process-sample-1} and \ref{figure:Gaussian-process-sample-2}, we illustrate the convergence behavior studied in Section \ref{section:estimation-distance-finite-samples}. Here we show
the estimation of the Log-Hilbert-Schmidt and affine-invariant Riemannian distances between 
covariance operators of two Gaussian processes $\GP(0,K^1)$, $\GP(K^2)$, on $T = [0,1]$, with increasing number of sample paths
$N =10, 20, \ldots, 1000$. In Figure \ref{figure:Gaussian-process-sample-1}, $K^1 = \exp(-a||x-y||), K^2(x,y) = \exp\left(-\frac{||x-y||}{\sigma^2}\right)$, where $a = 1$ and $\sigma = 0.1$.
In Figure \ref{figure:Gaussian-process-sample-2}, $K^i(x,y) = \exp(-a_i||x-y||)$, where $a_1 =1$, $a_2 = 1.2$.
In both cases, the set $\Xbf = (x_j)_{j=1}^m$ is chosen randomly from $T$ using the uniform distribution, with $m = 500$. The regularization parameter is fixed at $\gamma = 10^{-9}$.

\begin{figure}
	\begin{subfigure}{0.3\textwidth}
		\includegraphics[width=\textwidth]{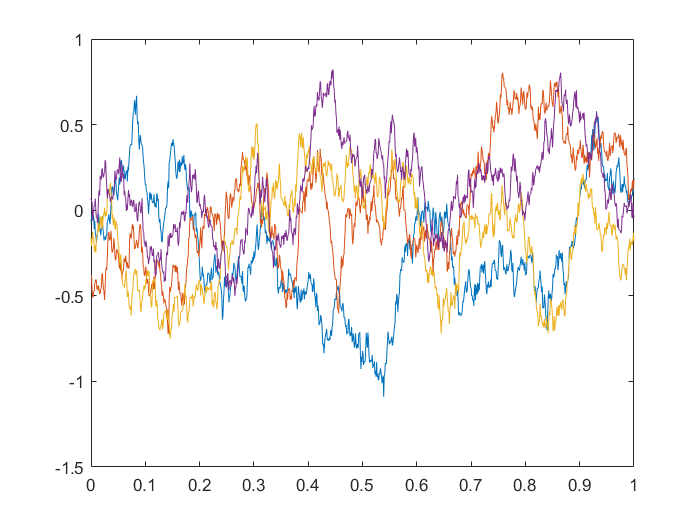}
	\end{subfigure}
	\begin{subfigure}{0.3\textwidth}
		\includegraphics[width=\textwidth]{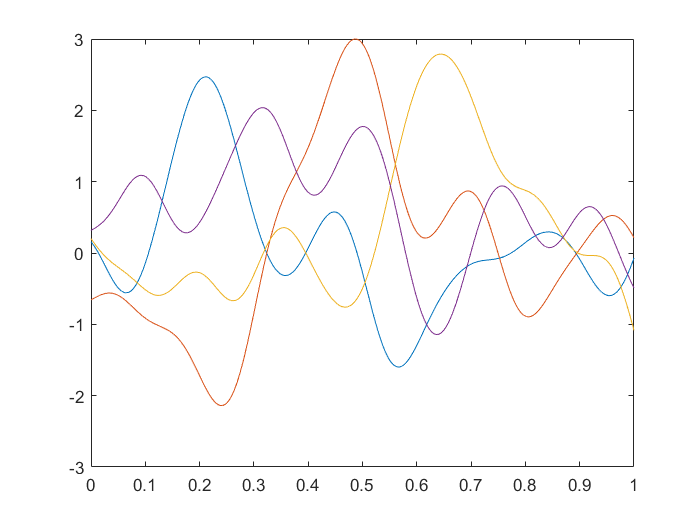}
	\end{subfigure}
	\begin{subfigure}{0.3\textwidth}
		\includegraphics[width =\textwidth]{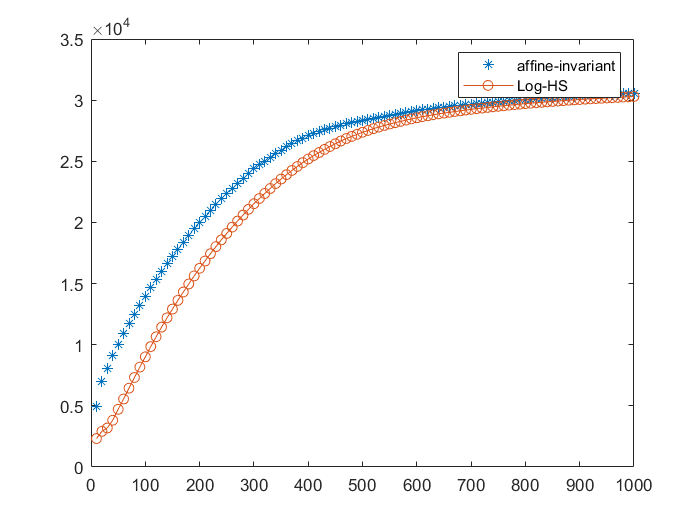}
	\end{subfigure}
	\caption{Samples of the centered Gaussian processes {$\GP(0,K^1)$, $\GP(0,K^2)$} 
		on {$T = [0,1]$} and approximations of squared distances between them.
		Left: {$K^1(x,y) = \exp(-a||x-y||)$, $a=1$}. Middle: {$K^2(x,y) = \exp(-||x-y||^2/\sigma^2)$, $\sigma = 0.1$}.
		Here the number of sample paths is {$N =10,20,\ldots, 1000$}, number of sample points is $m = 500$, and regularization parameter is {$\gamma = 10^{-9}$}}
		\label{figure:Gaussian-process-sample-1}
\end{figure}

	\begin{figure}
	\begin{subfigure}{0.3\textwidth}
		\includegraphics[width=\textwidth]{LaplacianKernel_GaussianProcess.png}
	\end{subfigure}
	\begin{subfigure}{0.3\textwidth}
		\includegraphics[width=\textwidth]{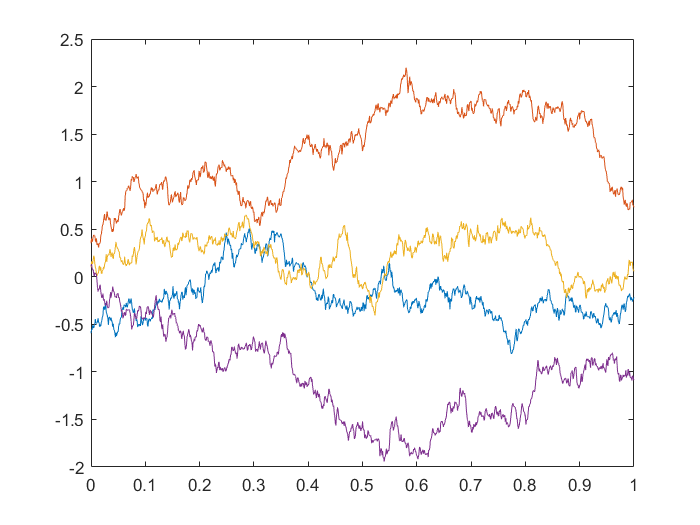}
	\end{subfigure}
	\begin{subfigure}{0.3\textwidth}
		\includegraphics[width =\textwidth]{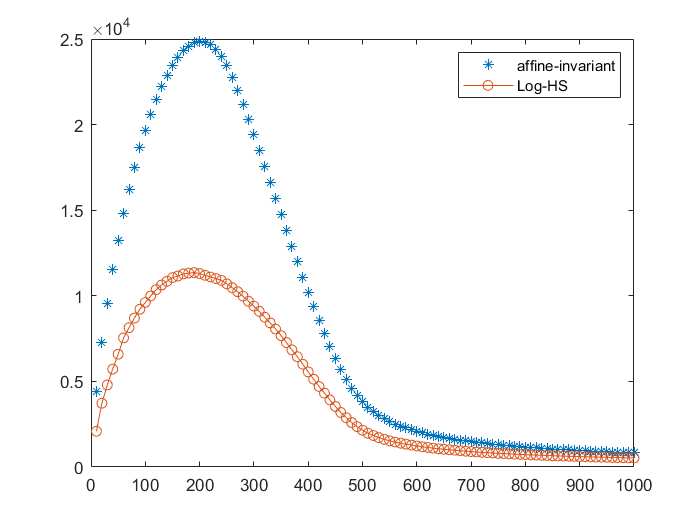}
	\end{subfigure}
	\caption{Samples of the centered Gaussian processes {$\GP(0,K^1)$, $\GP(0,K^2)$} 
		on {$T = [0,1]$} and approximations of squared distances between them.
		Left: {$K^1(x,y) = \exp(-a||x-y||)$, $a=1$}. Middle: {$K^2(x,y) = \exp(-a||x-y||)$, $a = 1.2$}.
		Here the number of sample paths is {$N =10,20,\ldots, 1000$},  number of sample points is $m = 500$, and regularization parameter is {$\gamma = 10^{-9}$}}
	\label{figure:Gaussian-process-sample-2}
\end{figure}

{\bf Classification of covariance operators}.
We carry out the following binary classification of covariance operators corresponding to two centered Gauss-Markov processes
$\GP(0,K^i)$ with
$K^i(x,y) = \exp(-\sigma_i||x-y||)$, $\sigma_i > 0$, $i=1,2$, on 
$T=[0,1]^d$ for $d=1,5$.
For each process, we generated a set of empirical covariance matrices, each defined by a set of finite samples as in Eq.\eqref{equation:K-hat-W}, using $N = 500$ sample paths on $m = 200$ points
randomly chosen by the uniform distribution on $T$.
The training and testing sets contain $10$ and $100$ empirical covariance matrices, respectively, 
split equally between the two classes. For classification, we utilized
the nearest neighbor approach. 
For the affine-invariant Riemannian and Log-Hilbert-Schmidt distances, we fixed $\gamma = 10^{-9}$. 
The experiments are repeated 5 times.

We report the average classification errors on the test set, along with standard deviations,  in four different scenarios
in Table \ref{table:binary-classify}, with examples of confusion matrices in Figure
\ref{figure:confusion-matrices}.  For the setting ($\sigma_1 = 1,\sigma_2 = 1.3$), the two Gaussian processes are easily distinguishable and perfect classification is achieved in almost all cases.
For the case ($\sigma_1 = 1,\sigma_2 = 1.1$), the two Gaussian processes are clearly much closer to each other
and the distances performed differently. The Log-Hilbert-Schmidt and affine-invariant Riemannian distances
perform consistently across different scenarios, with almost perfect classification in all settings.
The Hilbert-Schmidt distance, which does not take into account the geometrical structures of covariance matrices/operators, incurs considerable error in this case.
The Bures-Wasserstein distance, which does not possess dimension-independent convergence,
performs much worse on average in the case $d=5$ compared to $d=1$.

{\small
	\begin{table}[!t]		
		\caption{Classification errors on the test set ($\sigma_1 = 1$ in all cases)}
		\label{table:binary-classify}	
		\begin{tabular}{|c|c|c|c|c|}
			\hline
			Distance & ($\sigma_2 = 1.1$,  $d=1$) &($\sigma_2 = 1.3$, $d= 1$) &($\sigma_2 = 1.1$, $d=5$) & ($\sigma_2 = 1.3$, $d=5$)
			\\
			\hline & & & &
			\\ 
			{\it Hilbert-Schmidt} &    $33\% (7.97\%)$ & $4.60\% (2.51\%)$ & $15.00\% (5.52\%)$ &$ 1.00\% (0.71\%)$
			\\
			\hline & & & &
			\\
			{\it Bures-Wasserstein} & $8.80\% (5.54\%)$ & $0\%$ & $29.20\% (6.14\%)$ & $0\%$
			\\
			\hline & & & &
			\\
			{\it Sinkhorn ($\ep =0.1$)} & $17.40\% (7.83\%)$ & $0\%$  &$13.80\% (8.4\%)$ & $0.40\% (0.55\%)$
			\\ 
			\hline & & & &
			\\
			{\it Log-Hilbert-Schmidt} & $0\%$ & $0\%$ & $0\%$ & $0\%$
			\\
			\hline & & & &
			\\
			{\it Affine-invariant} & $0\%$ & $0\%$ & $0.2\% (0.45\%)$ & $0\%$
			\\
			\hline
		\end{tabular}	
	\end{table}
}

\begin{figure}
\centering
\begin{subfigure}[b]{0.2\textwidth}
	\centering\includegraphics[width=\textwidth]{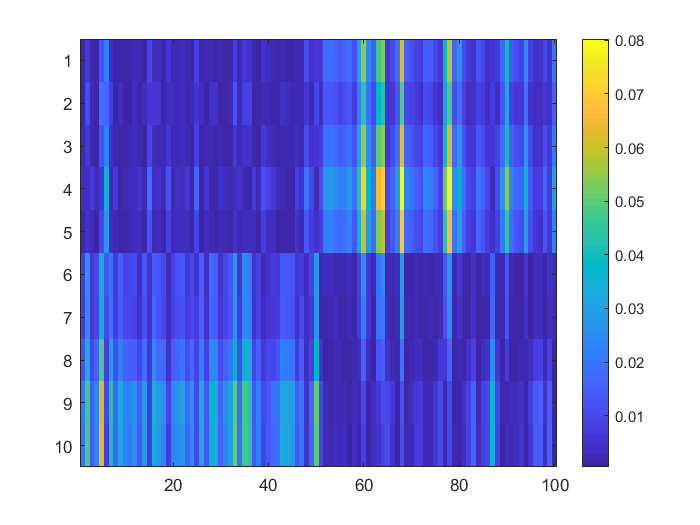}
	\caption{Hilbert-Schmidt}
\end{subfigure}
\begin{subfigure}[b]{0.2\textwidth}
	\centering
	\includegraphics[width=\textwidth]{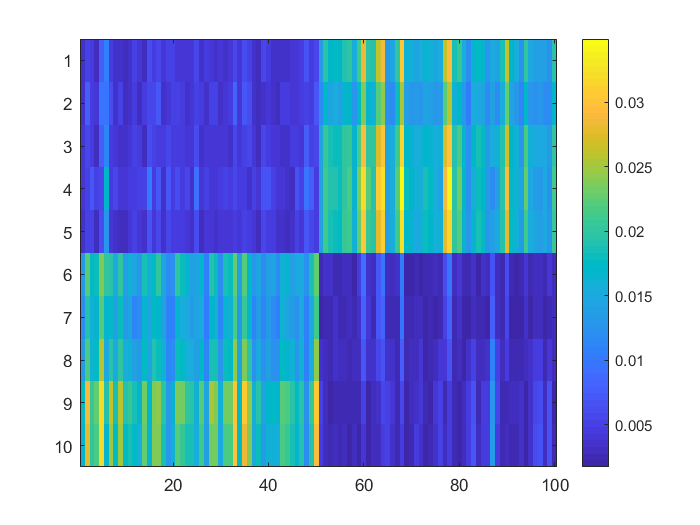}
	\caption{Bures-Wasserstein}
\end{subfigure}
\begin{subfigure}[b]{0.2\textwidth}
	\centering
	\includegraphics[width=\textwidth]{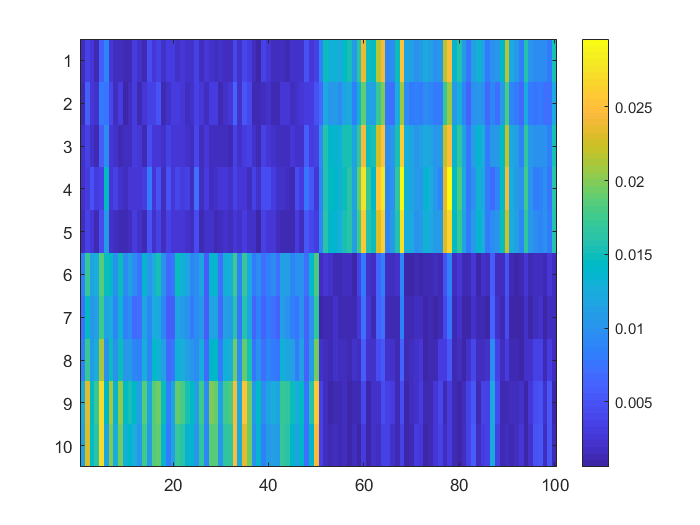}
	\caption{Sinkhorn ($\ep =0.1$)}
\end{subfigure}
\begin{subfigure}[b]{0.2\textwidth}
	\centering
	\includegraphics[width=\textwidth]{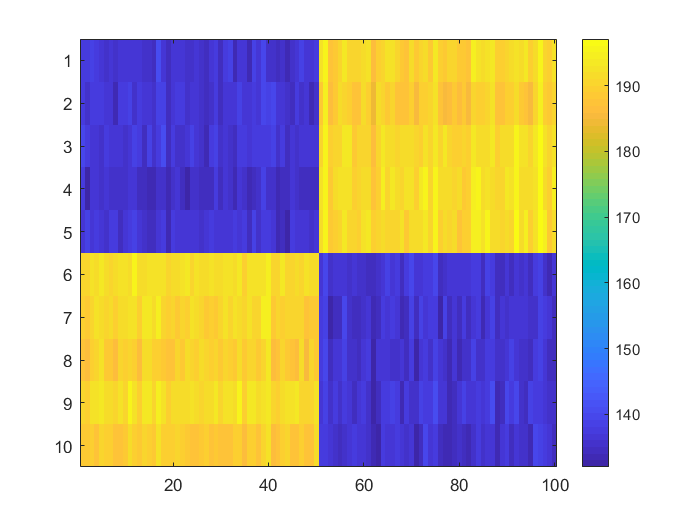}
	\caption{Log-Hilbert-Schmidt}
\end{subfigure}
	\begin{subfigure}[b]{0.2\textwidth}
		\centering\includegraphics[width=\textwidth]{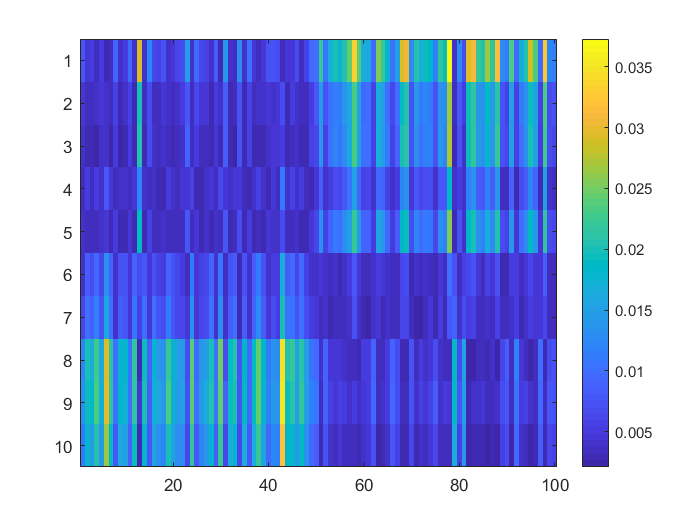}
		\caption{Hilbert-Schmidt}
	\end{subfigure}
\begin{subfigure}[b]{0.2\textwidth}
	\centering
	\includegraphics[width=\textwidth]{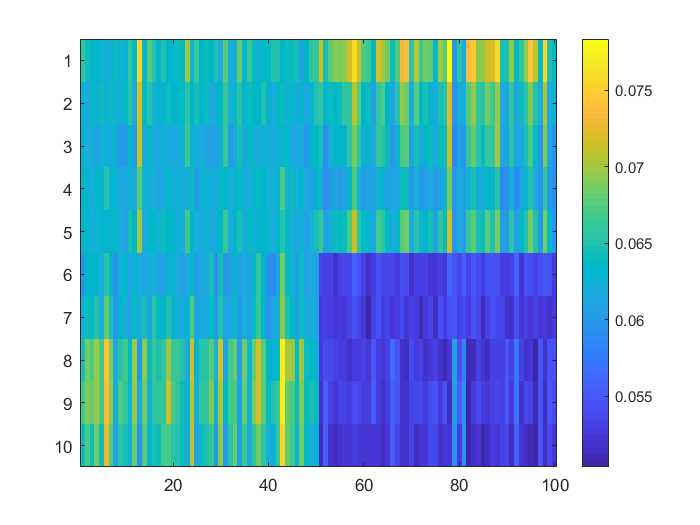}
	\caption{Bures-Wasserstein}
\end{subfigure}
\begin{subfigure}[b]{0.2\textwidth}
	\centering
	\includegraphics[width=\textwidth]{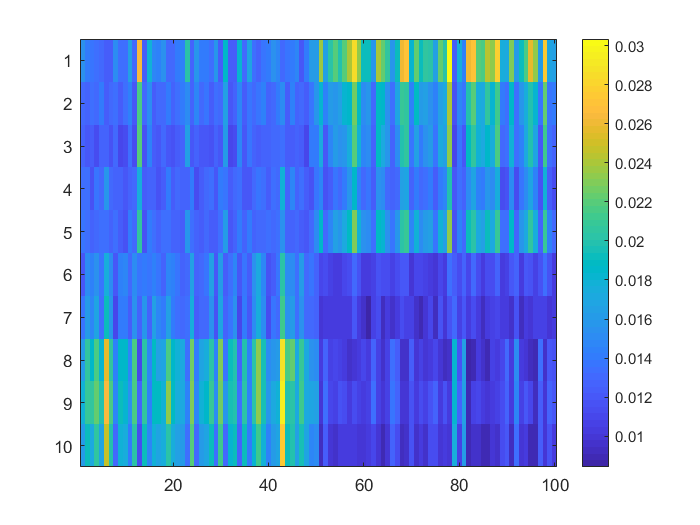}
	\caption{Sinkhorn ($\ep =0.1$)}
\end{subfigure}
\begin{subfigure}[b]{0.2\textwidth}
	\centering
	\includegraphics[width=\textwidth]{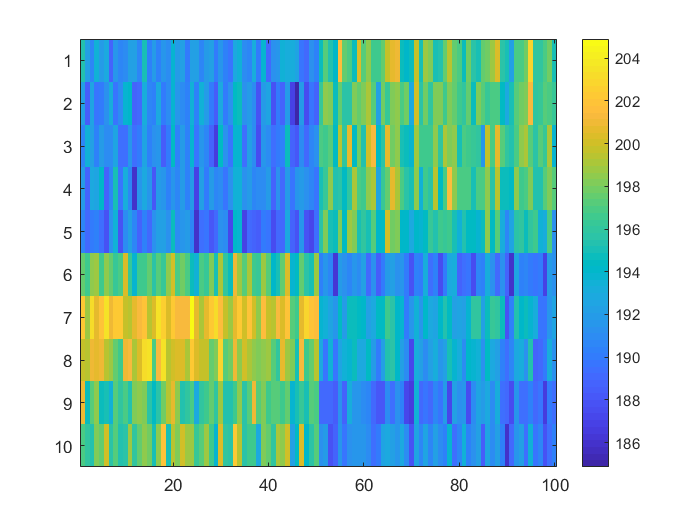}
	\caption{Log-Hilbert-Schmidt}
\end{subfigure}
\caption{Examples of confusion matrices associated with different distances in two different scenarios. Top: $\sigma_1=1, \sigma_2 = 1.3, d=1$. Bottom: $\sigma_1 =1, \sigma_2 =1.1, d=5$.}
\label{figure:confusion-matrices}
\end{figure}


\section{Proofs of main results}
\label{section:proofs}

\subsection{Proofs for the convergence of the Log-Hilbert-Schmidt distance}
We first prove Theorems \ref{theorem:logHS-convergence} and \ref{theorem:logHS-approx-sequence-0}.
In the following, let $\Csc_p(\H)$ denote the set of $p$th Schatten class operators on $\H$, under the norm $||\;||_p$, 
where $||A||_p = (\trace|A|^p)^{1/p}$, $1 \leq p \leq \infty$,
with $\Csc_1(\H)=\Tr(\H)$, 
$\Csc_2(\H) = \HS(\H)$, 
and $\Csc_{\infty}(\H)$ being the set of compact operators under the operator norm  $||\;||$.

\begin{lemma}
	[\textbf{Corollary 3.2 in \cite{Kitta:InequalitiesV}}]
	\label{lemma:inequality-Schatten}
	For any two positive operators $A,B$ on $\H$ such that $A \geq cI > 0$, $B \geq cI > 0$,
	for any bounded operator $X$ on $\H$,
	\begin{equation}
		||A^rX - XB^r||_p\leq rc^{r-1}||AX-XB||_p, 0 < r \leq 1, 1 \leq p \leq \infty.
	\end{equation}
\end{lemma}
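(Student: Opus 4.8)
The plan is to reduce the fractional commutator $A^rX-XB^r$ to the base commutator $AX-XB$ via the integral representation of the fractional power, so that the constant $rc^{r-1}$ emerges from a single scalar Beta integral. The endpoint case $r=1$ is trivial (the asserted inequality is then an equality), so I would assume $0<r<1$ throughout. The starting point is the scalar identity $x^r=\frac{\sin(r\pi)}{\pi}\int_0^\infty \frac{t^{r-1}x}{t+x}\,dt$ for $x>0$, which by the Borel functional calculus for the positive operators $A,B$ lifts to
\begin{equation}
A^r=\frac{\sin(r\pi)}{\pi}\int_0^\infty t^{r-1}A(tI+A)^{-1}\,dt,
\end{equation}
and likewise for $B$.

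First I would substitute these representations into $A^rX-XB^r$ and simplify the integrand. Writing $A(tI+A)^{-1}=I-t(tI+A)^{-1}$ and $B(tI+B)^{-1}=I-t(tI+B)^{-1}$, the identity-operator contributions cancel, leaving a resolvent difference that I would collapse with the resolvent identity
\begin{equation}
(tI+A)^{-1}X-X(tI+B)^{-1}=-(tI+A)^{-1}(AX-XB)(tI+B)^{-1}.
\end{equation}
Tracking the accompanying factor of $t$ then produces the clean single representation
\begin{equation}
A^rX-XB^r=\frac{\sin(r\pi)}{\pi}\int_0^\infty t^{r}(tI+A)^{-1}(AX-XB)(tI+B)^{-1}\,dt.
\end{equation}

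Next I would take the Schatten $p$-norm, apply the ideal (submultiplicativity) inequality $\|CYD\|_p\le\|C\|\,\|Y\|_p\,\|D\|$, valid for every $1\le p\le\infty$ with $\|\cdot\|$ the operator norm, and pass the norm under the integral. The hypotheses $A\ge cI$ and $B\ge cI$ give $\|(tI+A)^{-1}\|\le (t+c)^{-1}$ and $\|(tI+B)^{-1}\|\le(t+c)^{-1}$, whence
\begin{equation}
\|A^rX-XB^r\|_p\le\frac{\sin(r\pi)}{\pi}\,\|AX-XB\|_p\int_0^\infty\frac{t^{r}}{(t+c)^2}\,dt.
\end{equation}
The remaining scalar integral is a Beta integral, $\int_0^\infty\frac{t^{r}}{(t+c)^2}\,dt=c^{r-1}\Gamma(r+1)\Gamma(1-r)$; using $\Gamma(r+1)=r\Gamma(r)$ together with the reflection formula $\Gamma(r)\Gamma(1-r)=\pi/\sin(r\pi)$ collapses the prefactor exactly to $rc^{r-1}$, yielding the claim.

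The one genuinely technical point, rather than the algebra, is justifying the operator-valued integral. I would verify that $t\mapsto t^{r}(tI+A)^{-1}(AX-XB)(tI+B)^{-1}$ is a $\Csc_p(\H)$-valued Bochner-integrable function: its $\|\cdot\|_p$-norm is dominated by the integrable majorant $t^{r}(t+c)^{-2}\|AX-XB\|_p$, which is integrable at $0$ since $r>0$ and at $\infty$ since $r<1$. This simultaneously legitimizes the representation as an identity in $\Csc_p(\H)$ and the interchange of $\|\cdot\|_p$ with the integral. If $AX-XB\notin\Csc_p(\H)$ the asserted bound is vacuous, so there is no loss in assuming the right-hand side finite.
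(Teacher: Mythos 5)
Your argument is correct. Note that the paper does not prove this lemma at all: it is imported verbatim as Corollary 3.2 of Kittaneh's \emph{Inequalities for the Schatten $p$-norm V}, so there is no internal proof to compare against. What you have written is a sound, self-contained derivation along the standard lines: the Balakrishnan-type representation $A^r=\frac{\sin(r\pi)}{\pi}\int_0^\infty t^{r-1}A(tI+A)^{-1}\,dt$, the cancellation $A(tI+A)^{-1}=I-t(tI+A)^{-1}$ combined with the resolvent identity to produce the kernel $t^{r}(tI+A)^{-1}(AX-XB)(tI+B)^{-1}$, the ideal property of $\|\cdot\|_p$ together with $\|(tI+A)^{-1}\|\le (t+c)^{-1}$, and the Beta integral $\int_0^\infty t^{r}(t+c)^{-2}\,dt=c^{r-1}\,r\,\pi/\sin(r\pi)$, which cancels the prefactor exactly to give $rc^{r-1}$. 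The two points one could quibble over are both handled: the trivial endpoint $r=1$, and the Bochner integrability of the operator-valued integrand (dominated by $t^{r}(t+c)^{-2}\|AX-XB\|_p$, integrable at $0$ because $r>0$ and at $\infty$ because $r<1$), with the observation that the claim is vacuous when $AX-XB\notin\Csc_p(\H)$. For the application in the paper only $c=1$ and $p=2,\infty$ are ever used, but your proof covers the full stated range.
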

\begin{corollary}
	\label{corollary:Schatten-class-rth-root}
	For two operators $A, B \in \Sym^{+}(\H) \cap \Csc_p(\H)$, $1 \leq p \leq \infty$, 
	\begin{align}
		||(I+A)^{r} - (I+B)^{r}||_{p} \leq |r|\;||A-B||_{p}, \;\;\;|r| \leq 1.
	\end{align}
	In particular, for $|r|\leq 1$, $r \neq 0$, we have
	$\frac{||(I+A)^{r} - (I+B)^{r}||_{p}}{|r|} \leq ||A-B||_p$.
\end{corollary}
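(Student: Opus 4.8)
The plan is to reduce everything to Lemma \ref{lemma:inequality-Schatten} (Kittaneh's inequality) applied to the shifted operators $I+A$ and $I+B$, handling the cases $r=0$, $0<r\le 1$, and $-1\le r<0$ separately. Since $A,B\in\Sym^{+}(\H)$ are positive and compact, we have $I+A\ge I>0$ and $I+B\ge I>0$, so both are bounded below by $cI$ with $c=1$; moreover their spectra lie in $[1,1+\|A\|]$ and $[1,1+\|B\|]$ respectively, so $(I+A)^r$ and $(I+B)^r$ are well-defined bounded operators for every $r\in\R$ via the continuous functional calculus. The case $r=0$ is trivial, as $(I+A)^0-(I+B)^0=0$.

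For $0<r\le 1$, I would invoke Lemma \ref{lemma:inequality-Schatten} with the positive operators $I+A\ge I$ and $I+B\ge I$ (hence $c=1$), the bounded operator $X=I$, and exponent $r$. Because $c^{r-1}=1^{r-1}=1$ and $(I+A)I-I(I+B)=A-B$, this yields directly
\begin{align}
\|(I+A)^r-(I+B)^r\|_p\le r\,\|(I+A)-(I+B)\|_p=r\,\|A-B\|_p,
\end{align}
which is the asserted bound and also shows in passing that $(I+A)^r-(I+B)^r\in\Csc_p(\H)$, since $A-B\in\Csc_p(\H)$.

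The main obstacle is the negative range $-1\le r<0$, since Kittaneh's inequality is stated only for positive exponents. I would resolve this with the algebraic identity
\begin{align}
(I+A)^{-s}-(I+B)^{-s}=-(I+A)^{-s}\bigl[(I+A)^{s}-(I+B)^{s}\bigr](I+B)^{-s},\qquad s=-r\in(0,1],
\end{align}
which is verified by expanding the right-hand side and cancelling. Taking the Schatten norm, using the operator-ideal property $\|UXV\|_p\le\|U\|\,\|X\|_p\,\|V\|$ together with the operator-norm bounds $\|(I+A)^{-s}\|\le 1$ and $\|(I+B)^{-s}\|\le 1$ (both immediate from $I+A,I+B\ge I$ and $s>0$), and bounding the middle factor by the already-established case $0<s\le 1$, I obtain
\begin{align}
\|(I+A)^{-s}-(I+B)^{-s}\|_p\le s\,\|A-B\|_p=|r|\,\|A-B\|_p.
\end{align}
Combining the three cases gives the stated inequality for all $|r|\le 1$, and the final ``in particular'' form follows immediately by dividing by $|r|$ when $r\neq 0$.
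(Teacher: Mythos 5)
Your proof is correct and follows essentially the same route as the paper: the case $0<r\le 1$ is Lemma \ref{lemma:inequality-Schatten} with $c=1$ and $X=I$, and the negative case $r=-s$ uses the same factorization $(I+A)^{-s}-(I+B)^{-s}=(I+A)^{-s}\bigl[(I+B)^{s}-(I+A)^{s}\bigr](I+B)^{-s}$ together with $\|(I+A)^{-s}\|\le 1$, $\|(I+B)^{-s}\|\le 1$. Your explicit treatment of $r=0$ and the sign in the identity are both fine (the paper omits the sign, which is immaterial after taking norms).
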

\begin{proof}
	By assumption, $I+A \geq I$, $I+B \geq I$, so  for $0 < r \leq 1$, the inequality follows immediately from Lemma \ref{lemma:inequality-Schatten} with $c=1$. Consider now $r = -s$, $0 < s \leq 1$, then
	\begin{align*}
		&||(I+A)^{-s} - (I+B)^{-s}||_p = ||(I+A)^{-s}[(I+A)^s - (I+B)^s](I+B)^{-s}||_p
		\\
		&\leq ||(I+A)^{-s}||\;||(I+A)^{s} - (I+B)^s||_p||(I+B)^{-s}|| \leq 
		||(I+A)^{s} - (I+B)^s||_p 
		\leq s||A-B||_p,
	\end{align*}
	using result from the first case. \qed
\end{proof}

\begin{corollary}
	\label{corollary:limit-(I+A)r}
	Let $1 \leq p \leq \infty$.
	Let $r \in \R$, $0 < r \leq 1 $, be fixed. 
	Let $\{A_n\}_{n \in \Nbb}, A \in \Sym(\H) \cap \Csc_p(\H)$ be such that
	$I+A > 0, I+A_n > 0\forall n \in \Nbb$. 
	Let $M_A>0$ be such that $\la x, (I+A)x\ra \geq M_A||x||^2\forall x \in \H$.
	Assume that $\lim_{n \approach \infty}||A_n - A||_p = 0$. Then $\forall \epsilon,0 < \epsilon < M_A$, $\exists N(\epsilon) \in \Nbb$ such that
	\begin{align}
		\frac{||(I+A_n)^r - (I+A)^r||_{p}}{r} \leq (M_A - \epsilon)^{r-1}||A_n - A||_{p} \;\;\forall n \geq N(\epsilon).
	\end{align}
	\begin{align}
		\frac{||(I+A_n)^{-r} - (I+A)^{-r}||_{p}}{r} \leq \frac{1}{(M_A - \epsilon)M_A^r}||A_n - A||_{p} \;\;\forall n \geq N(\epsilon).
	\end{align}
\end{corollary}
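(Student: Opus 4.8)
The plan is to reduce both inequalities to Kittaneh's inequality (Lemma~\ref{lemma:inequality-Schatten}) applied with $X = I$, the difficulty being that $A$ and $A_n$ need not be positive, so the clean constant $c = 1$ exploited in Corollary~\ref{corollary:Schatten-class-rth-root} is unavailable here. Instead I would carry through the general constant $c = M_A - \epsilon$, which is precisely what forces the passage to the tail $n \geq N(\epsilon)$.

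First I would pass from $p$-norm convergence to operator-norm convergence: since $||B|| \leq ||B||_p$ for every $1 \leq p \leq \infty$, the hypothesis $\lim_{n \approach \infty}||A_n - A||_p = 0$ yields $\lim_{n \approach \infty}||A_n - A|| = 0$. Hence, given $0 < \epsilon < M_A$, I may pick $N(\epsilon)$ so that $||A_n - A|| < \epsilon$ for all $n \geq N(\epsilon)$. For such $n$ and any $x \in \H$, writing $\la x, (I+A_n)x\ra = \la x, (I+A)x\ra + \la x, (A_n - A)x\ra$ and combining $\la x, (I+A)x\ra \geq M_A||x||^2$ with $|\la x, (A_n - A)x\ra| \leq \epsilon ||x||^2$ gives the uniform lower bound $I + A_n \geq (M_A - \epsilon)I$.

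With this bound in hand, the first inequality follows from Lemma~\ref{lemma:inequality-Schatten} applied to the positive operators $I + A_n \geq (M_A - \epsilon)I$ and $I + A \geq M_A I \geq (M_A - \epsilon)I$, taking $X = I$ and $c = M_A - \epsilon$: this produces $||(I+A_n)^r - (I+A)^r||_p \leq r(M_A - \epsilon)^{r-1}||A_n - A||_p$, and dividing by $r$ yields the claim. For the second inequality I would use the factorization
\begin{align}
(I+A_n)^{-r} - (I+A)^{-r} = (I+A_n)^{-r}\left[(I+A)^r - (I+A_n)^r\right](I+A)^{-r},
\end{align}
take $p$-norms, and control the two outer factors by the operator-norm bounds $||(I+A_n)^{-r}|| \leq (M_A-\epsilon)^{-r}$ and $||(I+A)^{-r}|| \leq M_A^{-r}$, which follow from the spectral lower bounds via the spectral mapping theorem (the map $t \mapsto t^{-r}$ being decreasing on $(0,\infty)$). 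Combining with the bound on the middle factor established above and simplifying $(M_A - \epsilon)^{-r}(M_A-\epsilon)^{r-1} = (M_A-\epsilon)^{-1}$ gives the stated estimate after dividing by $r$.

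The only genuine subtlety is establishing the uniform positive lower bound for $I + A_n$, which is exactly where the convergence hypothesis and the tail restriction $n \geq N(\epsilon)$ enter; everything else is a direct application of Lemma~\ref{lemma:inequality-Schatten} together with elementary spectral estimates, so I expect no serious obstacle beyond careful bookkeeping of the constants.
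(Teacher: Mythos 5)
Your proposal is correct and follows essentially the same route as the paper's own proof: establish the uniform lower bound $I+A_n \geq (M_A-\epsilon)I$ for $n \geq N(\epsilon)$, apply Lemma~\ref{lemma:inequality-Schatten} with $c = M_A-\epsilon$ for the positive power, and use the same resolvent-type factorization with the operator-norm bounds $(M_A-\epsilon)^{-r}$ and $M_A^{-r}$ on the outer factors for the negative power. The constants work out identically, so there is nothing to add.
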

\begin{proof}
	For any $\epsilon$ satisfying $0 < \epsilon < M_A$, there exists $N(\epsilon) \in \N$ such that
	$||A_n - A|| < \epsilon$ $\forall n \geq N(\epsilon)$.
	By assumption, we have $I+A \geq M_AI$, so that
	\begin{align*}
		I+A_n &= I +A + (A_n - A) \geq (M_A - \epsilon)I \;\;\; \forall n \geq N(\epsilon),
		\;\;
		(I+A_n)^{-1} \leq (M_A-\epsilon)^{-1}I\;\;\; \forall n \geq N(\epsilon).
	\end{align*}
	For $0 < r \leq 1$, applying Lemma \ref{lemma:inequality-Schatten}  gives
	\begin{align*}
		||(I+A_n)^r - (I+A)^r||_{p} \leq r (M_A - \epsilon)^{r-1}||A_n - A||_{p} \;\;\forall n \geq N(\epsilon).
	\end{align*}
	For $r = -s$, $0 < s \leq 1$, we have
	\begin{align*}
		&||(I+A_n)^{-s} - (I+A)^{-s}||_p = ||(I+A_n)^{-s}[(I+A_n)^s - (I+A)^s](I+A)^{-s}||_p
		\\
		&\leq ||(I+A_n)^{-s}||\;||(I+A_n)^{s} - (I+A)^s||_p||(I+A)^{-s}||
		\leq (M_A -\epsilon)^{-s} M_A^{-s}
		||(I+A_n)^{s} - (I+A)^s||_p 
		\\
		&
		\leq s(M_A -\epsilon)^{-s} M_A^{-s}(M_A-\epsilon)^{s-1}||A_n - A||_p = s(M_A-\epsilon)^{-1}M_A^{-s}||A_n-A||_p,
	\end{align*}
	using the result from the previous case. \qed
\end{proof}


\begin{lemma} 
	[Lemma 9 in \cite{Minh:LogDetIII2018}]
	\label{lemma:limit-log(I+A)}
	Let $A\in \Sym(\H) \cap \HS(\H)$ with $I+A > 0$. Then
	\begin{align}
		\lim_{\alpha \approach 0}\left\|\frac{(I+A)^{\alpha} - I}{\alpha} - \log(I+A)\right\|_{\HS} = 0.
	\end{align}
\end{lemma}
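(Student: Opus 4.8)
The plan is to reduce the operator identity to a scalar estimate through the spectral theorem. Since $A \in \Sym(\H) \cap \HS(\H)$ is compact and self-adjoint, I would write $A = \sum_{k}\lambda_k\,\phi_k \otimes \phi_k$ with $\{\phi_k\}$ an orthonormal eigenbasis of $\H$ and $\lambda_k \approach 0$. Both $(I+A)^{\alpha}$ and $\log(I+A)$ act diagonally in this basis, so $\frac{(I+A)^{\alpha}-I}{\alpha} - \log(I+A)$ is diagonal with $k$th eigenvalue $g_{\alpha}(1+\lambda_k)$, where $g_{\alpha}(t) = \frac{t^{\alpha}-1}{\alpha} - \log t$. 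Hence
\begin{align}
\left\|\frac{(I+A)^{\alpha}-I}{\alpha} - \log(I+A)\right\|_{\HS}^2 = \sum_{k}g_{\alpha}(1+\lambda_k)^2,
\end{align}
and the task becomes controlling this series uniformly as $\alpha \approach 0$.

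Next I would derive a pointwise bound on $g_{\alpha}$ over the relevant spectral window. The hypothesis $I + A > 0$ supplies $M_A > 0$ with $\la x, (I+A)x\ra \geq M_A||x||^2$, so $1 + \lambda_k \geq M_A$ for all $k$; combined with $1 + \lambda_k \leq 1 + ||A||$, each $t_k = 1+\lambda_k$ lies in the compact interval $[M_A, 1+||A||] \subset (0,\infty)$, on which $|\log t| \leq L := \max\{|\log M_A|, \log(1+||A||)\}$. Starting from $t^{\alpha}-1 = (\log t)\int_0^{\alpha}t^s\,ds$, I would write $g_{\alpha}(t) = \frac{\log t}{\alpha}\int_0^{\alpha}(t^s - 1)\,ds$ and use $|t^s - 1| \leq |s\log t|\,e^{|s\log t|}$; for $|\alpha| \leq 1$ and $|s| \leq |\alpha|$ this produces
\begin{align}
|g_{\alpha}(t)| \leq \frac{e^{L}}{2}|\alpha|\,(\log t)^2.
\end{align}
This is the decisive estimate: $g_{\alpha}(t_k)$ vanishes linearly in $\alpha$, uniformly in $k$, with constant governed by $(\log t_k)^2$.

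It then remains to check that the dominating series is finite, i.e. $\sum_k (\log(1+\lambda_k))^4 < \infty$. Applying the mean value theorem to $\log$ on $[M_A, 1+||A||]$ gives $|\log(1+\lambda_k)| \leq |\lambda_k|/\min\{1,M_A\}$, and since $|\lambda_k| \leq ||A||$ we obtain $(\log(1+\lambda_k))^4 \leq \frac{||A||^2}{\min\{1,M_A\}^4}\lambda_k^2$; summing over $k$ and invoking $\sum_k \lambda_k^2 = ||A||_{\HS}^2 < \infty$ shows the series converges. Combining this with the pointwise bound yields
\begin{align}
\left\|\frac{(I+A)^{\alpha}-I}{\alpha} - \log(I+A)\right\|_{\HS}^2 \leq \frac{e^{2L}}{4}\,\alpha^2\sum_{k}(\log(1+\lambda_k))^4,
\end{align}
which tends to $0$ as $\alpha \approach 0$, establishing the lemma.

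The main obstacle is securing a bound on $g_{\alpha}$ that is simultaneously linear in $\alpha$ and summable in $k$. A direct Taylor expansion of $t^{\alpha} = e^{\alpha \log t}$ correctly identifies the leading term $\frac{\alpha}{2}(\log t)^2$, but controlling the remainder uniformly over the infinite spectrum and over a range of $\alpha$ is delicate; the integral representation circumvents this by furnishing a clean, explicit remainder. After that, the only genuine analytic input is the summability of $(\log(1+\lambda_k))^4$, which follows from the Hilbert-Schmidt property of $A$ together with the strict positivity $I+A > 0$ that keeps the spectrum bounded away from $0$.
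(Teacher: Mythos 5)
Your proof is correct. Note that the paper does not prove this statement at all: it is imported verbatim as Lemma~9 of the cited reference \cite{Minh:LogDetIII2018}, so there is no in-paper argument to compare against. Your spectral reduction is the natural one and every step checks out: the operator $\frac{(I+A)^{\alpha}-I}{\alpha}-\log(I+A)$ is indeed diagonal in an orthonormal eigenbasis of the compact self-adjoint operator $A$ with eigenvalues $g_{\alpha}(1+\lambda_k)$; the identity $t^{\alpha}-1=(\log t)\int_0^{\alpha}t^s\,ds$ gives the clean remainder form $g_{\alpha}(t)=\frac{\log t}{\alpha}\int_0^{\alpha}(t^s-1)\,ds$; the bound $|e^x-1|\leq|x|e^{|x|}$ combined with $1+\lambda_k\in[M_A,1+\|A\|]$ yields $|g_{\alpha}(t_k)|\leq\frac{e^L}{2}|\alpha|(\log t_k)^2$ uniformly in $k$; and the mean value theorem plus $\sum_k\lambda_k^2=\|A\|_{\HS}^2<\infty$ gives summability of $(\log(1+\lambda_k))^4$. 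Your argument in fact delivers slightly more than the lemma asserts, namely the quantitative rate $\left\|\frac{(I+A)^{\alpha}-I}{\alpha}-\log(I+A)\right\|_{\HS}=O(|\alpha|)$ with an explicit constant depending only on $M_A$, $\|A\|$, and $\|A\|_{\HS}$, whereas a softer route (pointwise convergence of $g_{\alpha}(t_k)\to 0$ plus dominated convergence against the summable majorant) would only give the qualitative limit. Either is acceptable; yours is self-contained and complete.
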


\begin{proof}
	[\textbf{Proof of Theorem \ref{theorem:logHS-convergence}}]
	(a) Consider first the case $\gamma = 1$.
	Let $n \in \Nbb$ be fixed.
	By Lemma \ref{lemma:limit-log(I+A)},
	\begin{align*}
		\lim_{\alpha \approach 0}\left\|\frac{(I+A_n)^{\alpha} - I}{\alpha} - \log(I+A_n)\right\|_{\HS} = 0,
		\;
		\lim_{\alpha \approach 0}\left\|\frac{(I+A)^{\alpha} - I}{\alpha} - \log(I+A)\right\|_{\HS} = 0.
	\end{align*}
	
	(i) If $A, A_n \in \Sym^{+}(\H) \cap \HS(\H)$, by Corollary \ref{corollary:Schatten-class-rth-root},
	for any $0 < \alpha \leq 1$,
	$\left\|\frac{(I+A_n)^{\alpha} - (I+A)^{\alpha}}{\alpha}\right\|_{\HS} \leq ||A_n-A||_{\HS}$.
	Thus for any fixed $0 < \alpha \leq 1$, for any fixed $n \in \Nbb$,
	\begin{align*}
		&\left\|\log(I+A_n) -\log(I+A) \right\|_{\HS}
		\leq \left\|\frac{(I+A_n)^{\alpha} - I}{\alpha} - \log(I+A_n)\right\|_{\HS}
		+\left\|\frac{(I+A_n)^{\alpha} - (I+A)^{\alpha}}{\alpha}\right\|_{\HS}
		+ \left\|\frac{(I+A)^{\alpha} - I}{\alpha} - \log(I+A)\right\|_{\HS}
		\\
		& \leq \left\|\frac{(I+A_n)^{\alpha} - I}{\alpha} - \log(I+A_n)\right\|_{\HS}
		+ ||A_n - A||_{\HS}
		+ \left\|\frac{(I+A)^{\alpha} - I}{\alpha} - \log(I+A)\right\|_{\HS}.
	\end{align*}
	Letting $\alpha \approach 0$ gives $||\log(I+A_n)- \log(I+A)||_{\HS} \leq ||A_n - A||_{\HS} \forall n \in \Nbb$.
	
	(ii) Consider now the general assumption $I+A > 0, I+A_n > 0$ $\forall n \in \Nbb$.
	By Corollary \ref{corollary:limit-(I+A)r}, for a fixed $0 < \alpha \leq 1$, $\forall \epsilon, 0 < \epsilon < M_A$,
	$\exists N(\epsilon) \in \Nbb$ such that 
	\begin{align}
		\frac{||(I+A_n)^{\alpha} - (I+A)^{\alpha}||_{\HS}}{\alpha} \leq (M_A-\epsilon)^{\alpha -1}||A_n - A||_{\HS}\;\;\forall n \geq N(\epsilon).
	\end{align}
	Thus for any fixed $\alpha \in \R$, $0 < \alpha \leq 1$, and any fixed $n \in \N$, $n \geq N(\epsilon)$,
	\begin{align*}
		&\left\|\log(I+A_n) -\log(I+A) \right\|_{\HS}
		\leq \left\|\frac{(I+A_n)^{\alpha} - I}{\alpha} - \log(I+A_n)\right\|_{\HS}
		+\left\|\frac{(I+A_n)^{\alpha} - (I+A)^{\alpha}}{\alpha}\right\|_{\HS}
		+ \left\|\frac{(I+A)^{\alpha} - I}{\alpha} - \log(I+A)\right\|_{\HS}
		\\
		& \leq \left\|\frac{(I+A_n)^{\alpha} - I}{\alpha} - \log(I+A_n)\right\|_{\HS}
		+ (M_A-\epsilon)^{\alpha -1}||A_n - A||_{\HS}
		+ \left\|\frac{(I+A)^{\alpha} - I}{\alpha} - \log(I+A)\right\|_{\HS}.
	\end{align*}
	Fixing $n$ and letting $\alpha \approach 0$ on the right hand side gives
	\begin{align*}
		\left\|\log(I+A_n) -\log(I+A) \right\|_{\HS} \leq (M_A-\epsilon)^{-1}||A_n - A||_{\HS} \;\;\forall n \geq N(\epsilon).
	\end{align*}
	It thus follows that $\lim_{n \approach \infty} \left\|\log(I+A_n) -\log(I+A) \right\|_{\HS} = 0$.	
		
		(b) Consider now the general case $\gamma > 0$. Part (i) then follows from (a) and 
	the identity
	$||\log(\gamma I+A_n) - \log(\gamma I+A)||_{\HS} = \left\|\log\left(I + \frac{A_n}{\gamma}\right) - \log\left(I + \frac{A}{\gamma}\right)\right\|_{\HS}
	$.
	For part (ii), we note that $\gamma I + A \geq M_A \equivalent I +\frac{A}{\gamma} \geq \frac{M}{\gamma}$.
	Furthermore, $\frac{||A_n-A||}{\gamma} \leq \frac{\ep}{\gamma}$ $\forall n \geq N(\ep)$.
	The result then follows similarly from (a).
	\qed
\end{proof}

\begin{proof}
	[\textbf{Proof of Theorem \ref{theorem:logHS-approx-sequence-0}}]
	Since $d_{\logHS}(\gamma_1 I + A, \gamma_2I +B) = ||\log(\gamma_1 I +A) - \log(\gamma_2I +B)||_{\eHS}$
	is a metric, by the triangle inequality and Theorem \ref{theorem:logHS-convergence}, we have in the case $A_n,B_n, A,B \in \Sym^{+}(\H) \cap \HS(\H)$,
	\begin{align*}
		|d_{\logHS}(\gamma_1 I + A_n, \gamma_2I +B_n) - d_{\logHS}(\gamma_1 I+ A, \gamma_2 I +B)|
		&\leq d_{\logHS}(\gamma_1 I + A_n, \gamma_1 I+A) + d_{\logHS}(\gamma_2 I + B_n, \gamma_2I + B)
		\\
		&\leq \frac{1}{\gamma_1}||A_n - A||_{\HS} + \frac{1}{\gamma_2}||B_n - B||_{\HS}.
	\end{align*}
	The general case follows similarly by Theorem \ref{theorem:logHS-convergence}.
	\qed
\end{proof}



\subsection{Proofs for the Log-Hilbert-Schmidt distance between Gaussian processes}

We now prove Theorems \ref{theorem:logHS-convergence-sample-covariance-operator}, \ref{theorem:logHS-approx-finite-covariance}, \ref{theorem:logHS-estimate-unknown-1}, and 
\ref{theorem:logHS-estimate-unknown-2}.

\begin{proof}
	[\textbf{Proof of Theorem \ref{theorem:logHS-convergence-sample-covariance-operator}}]
	By Theorem \ref{theorem:logHS-approx-sequence-0},
	\begin{align*}
		&\Delta = \left|D_{\logHS}^{\gamma}[\Ncal(0,C_{K^1,\Wbf^1}), \Ncal(0,C_{K^2,\Wbf^2})] - D_{\logHS}^{\gamma}[\Ncal(0,C_{K^1}),\Ncal(0,C_{K^2})]\right|
		\\
		&= \left|||\log(\gamma I+C_{K^1,\Wbf^1}) - \log(\gamma I+C_{K^2,\Wbf^2})||_{\HS} - ||\log(\gamma I+C_{K^1}) - \log(\gamma I+C_{K^2})||_{\HS}
		\right|
		\\
		& \leq \frac{1}{\gamma}||C_{K^1,\Wbf^1} - C_{K^1}||_{\HS} + \frac{1}{\gamma}||C_{K^2,\Wbf^2} - C_{K^2}||_{\HS}.
	\end{align*}
	By Proposition \ref{proposition-concentration-sample-cov-operator}, 
	for any $0 < \delta < 1$, with probability at least $1-\delta$,
	\begin{align*}
		||C_{K^1,\Wbf^1} - C_{K^1}||_{\HS} \leq \frac{4\sqrt{3}\kappa_1^2}{\sqrt{N}\delta} \;\;\text{and}\;\;
		||C_{K^2,\Wbf^2} - C_{K^2}||_{\HS} \leq \frac{4\sqrt{3}\kappa_2^2}{\sqrt{N}\delta}.
	\end{align*}
	Consequently, combing all previous expressions gives
	$\Delta \leq \frac{4\sqrt{3}(\kappa_1^2+\kappa_2^2)}{\gamma \sqrt{N}\delta}$.
	\qed
\end{proof}

\begin{lemma}
	\label{lemma:hA-positive}
	Let $A \in \Sym^{+}(\H)$ be compact, with eigenvalues 
	$\{\lambda_k\}_{k\in \Nbb}$, $\lambda_k \geq 0$ $\forall k \in \Nbb$, 
	and corresponding orthonormal eigenvectors $\{\phi_k\}_{k \in \Nbb}$.
	The following operator $h(A)$ is well-defined, with $h(A) \in \Sym^{+}(\H)$,
	\begin{align}
		h(A) = A^{-1}\log(I+A) = \sum_{k=1}^{\infty}\frac{\log(1+\lambda_k)}{\lambda_k}\phi_k \otimes \phi_k, \;\;\text{with }||h(A)||\leq 1.
	\end{align}
Here we use $\lim_{x \approach 0}\frac{\log(1+x)}{x} = 1$ and set $h(0) = I$.
\end{lemma}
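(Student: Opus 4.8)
The plan is to interpret $h(A)$ not as a product of the two separately-defined (and individually ill-behaved) operators $A^{-1}$ and $\log(I+A)$, but as $g(A)$ for the single scalar function $g(x) = \log(1+x)/x$, extended continuously by $g(0) = 1$, applied to $A$ through the continuous functional calculus. The whole point is that although $A^{-1}$ is unbounded whenever $A$ is compact with infinitely many nonzero eigenvalues, the quotient $g$ stays bounded near the origin, so that $g(A)$ is a genuine bounded operator.

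First I would record the elementary scalar estimate. For $x > 0$ one has $0 < \log(1+x) \leq x$, the upper bound following from $\log(1+x) = \int_0^x (1+t)^{-1}\,dt \leq \int_0^x dt = x$. Dividing by $x$ gives $0 < g(x) \leq 1$ for all $x > 0$, and together with the stated limit $\lim_{x \to 0}\log(1+x)/x = 1 = g(0)$ this shows that $g$ is continuous on $[0,\infty)$ and takes values in $(0,1]$, with the value $1$ attained only at $x=0$.

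Next, since $A \in \Sym^{+}(\H)$ is compact, the spectral theorem supplies the orthonormal eigenvectors $\{\phi_k\}$ with eigenvalues $\lambda_k \geq 0$, $\lambda_k \to 0$, and the spectrum lies in the set $\{\lambda_k\} \cup \{0\}$ on which $g$ is continuous. Defining $h(A) := \sum_k g(\lambda_k)\,\phi_k \otimes \phi_k$ (the convention $g(0) = 1$ accounting for the kernel and matching $h(0)=I$), I would check directly that for every $f \in \H$ the series $\sum_k g(\lambda_k)\la f,\phi_k\ra \phi_k$ converges, because $\sum_k |\la f,\phi_k\ra|^2 \leq ||f||^2$ and each coefficient obeys $0 \leq g(\lambda_k) \leq 1$; hence $h(A)$ is well-defined and bounded.

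Finally, the norm bound and positivity are read off the diagonal representation. As $h(A)$ is diagonalized by the orthonormal system $\{\phi_k\}$ with eigenvalues $g(\lambda_k) \in (0,1]$, it is self-adjoint, so that $||h(A)|| = \sup_k g(\lambda_k) \leq 1$, while $\la f, h(A)f\ra = \sum_k g(\lambda_k)|\la f,\phi_k\ra|^2 \geq 0$ yields $h(A) \in \Sym^{+}(\H)$. The one point demanding care, which I would flag as the main obstacle, is precisely the reinterpretation of the formal expression $A^{-1}\log(I+A)$ through the continuous function $g$ with $g(0)=1$; once this is in place, the remaining steps are routine estimates on the eigenvalues.
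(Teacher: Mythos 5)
Your proof is correct and takes essentially the same route as the paper: both arguments define $h(A)$ through the spectral decomposition of $A$, reduce everything to the scalar bound $0 \leq \log(1+x) \leq x$ for $x \geq 0$ together with the limit $\lim_{x \approach 0}\log(1+x)/x = 1$, and read off boundedness, positivity, and $||h(A)||\leq 1$ from the diagonal representation. Your explicit check that the series defining $h(A)f$ converges is a harmless elaboration of what the paper leaves implicit.
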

\begin{proof}
By the spectral decomposition $A = \sum_{k=1}^{\infty}\lambda_k \phi_k \otimes \phi_k$, we have
$\log(I+A) = \sum_{k=1}^{\infty}\log(1+\lambda_k)\phi_k \otimes \phi_k$.
Since $\lambda_k \geq 0$ $\forall k \in \Nbb$, by the inequality $\log(1+x) \leq x$ $\forall x \geq 0$ and the limit $\lim_{\lambda_k\approach 0}\frac{\log(1+\lambda_k)}{\lambda_k} = 1$ by L'Hopital's rule, we have $0 \leq \log(1+\lambda_k) \leq \lambda_k \forall k \in \Nbb$.
Thus $h(A)\in \Sym^{+}(\H)$, with $||h(A)|| \leq 1$.
\qed
\end{proof}

\begin{lemma}
	\label{lemma:log-AAstar}
Let $\H_1, \H_2$ be two separable Hilbert spaces. Let $A:\H_1 \mapto \H_2$ be compact.
Let $\{\lambda_k(A^{*}A)\}_{k\in \Nbb}$ be the eigenvalues of 
$A^{*}A \in \Sym^{+}(\H_1)$, with
corresponding orthonormal eigenvectors $\{\phi_k(A^{*}A)\}_{k \in \Nbb}$. 
Then $AA^{*} \in \Sym^{+}(\H_2)$ and
\begin{align}
\log(I_{\H_2} + AA^{*}) &= \sum_{k=1}^{\infty}\frac{\log(1+\lambda_k(A^{*}A))}{\lambda_k(A^{*}A)}
(A\phi_k(A^{*}A)) \otimes (A\phi_k(A^*A))
= Ah(A^{*}A)A^{*},
\end{align}
where $h$ is as defined in Lemma \ref{lemma:hA-positive}. If $A^{*}A \in \HS(\H_1)$, then $\log(I_{\H_1} + A^{*}A) \in \HS(\H_1)$ and $\log(I_{\H_2} + AA^{*}) \in \HS(\H_2)$.
\end{lemma}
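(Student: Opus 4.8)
The plan is to reduce everything to the classical spectral correspondence between $A^{*}A$ and $AA^{*}$ for a compact operator $A$, and then apply the continuous functional calculus with the function $f(t)=\log(1+t)$, which is continuous on $[0,\infty)$ with $f(0)=0$. First I would record that since $A$ is compact, $A^{*}A \in \Sym^{+}(\H_1)$ and $AA^{*}\in \Sym^{+}(\H_2)$ are both compact, positive and self-adjoint, and that they share the same nonzero eigenvalues with equal multiplicities.

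The key computational step is the eigenvector transfer. If $A^{*}A\phi_k = \lambda_k \phi_k$ with $\lambda_k \neq 0$, then applying $A$ gives $AA^{*}(A\phi_k) = \lambda_k (A\phi_k)$, so $A\phi_k$ is an eigenvector of $AA^{*}$ for the same eigenvalue; moreover $\|A\phi_k\|_{\H_2}^2 = \la \phi_k, A^{*}A\phi_k\ra = \lambda_k$, so the normalized vectors $\psi_k := A\phi_k/\sqrt{\lambda_k}$ form an orthonormal system in $\H_2$, orthonormality following from $\la A\phi_j, A\phi_k\ra = \lambda_k \delta_{jk}$. For zero eigenvalues, $A\phi_k = 0$, so the rank-one term $(A\phi_k)\otimes(A\phi_k)$ vanishes and may be kept in or dropped from any sum without effect; this is the bookkeeping I would flag explicitly.

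Next, since $\log(1+0)=0$, the operator $\log(I_{\H_2}+AA^{*})$ annihilates $\ker(AA^{*}) = \ker(A^{*}) = (\overline{\range A})^{\perp}$, so that its spectral decomposition runs only over the $\psi_k$: by functional calculus, $\log(I_{\H_2}+AA^{*}) = \sum_k \log(1+\lambda_k)\,\psi_k\otimes\psi_k = \sum_k \frac{\log(1+\lambda_k)}{\lambda_k}(A\phi_k)\otimes(A\phi_k)$, the series converging in operator norm as the limit of finite-rank truncations. This is the first claimed identity. To obtain the form $A h(A^{*}A)A^{*}$, I would start from the norm-convergent series $h(A^{*}A) = \sum_k \frac{\log(1+\lambda_k)}{\lambda_k}\phi_k\otimes\phi_k$ of Lemma \ref{lemma:hA-positive} (with $\|h(A^{*}A)\|\leq 1$), multiply on the left by $A$ and on the right by $A^{*}$, both bounded so they pass through the sum, and use the elementary rank-one identity $A(\phi_k\otimes\phi_k)A^{*} = (A\phi_k)\otimes(A\phi_k)$, which matches the previous series term by term.

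Finally, for the Hilbert-Schmidt statement I would invoke $0\leq \log(1+x)\leq x$ for $x\geq 0$, already used in Lemma \ref{lemma:hA-positive}: if $A^{*}A\in\HS(\H_1)$ then $\|\log(I_{\H_1}+A^{*}A)\|_{\HS}^2 = \sum_k \log(1+\lambda_k)^2 \leq \sum_k \lambda_k^2 = \|A^{*}A\|_{\HS}^2 < \infty$, and since $\log(I_{\H_2}+AA^{*})$ has exactly the same nonzero eigenvalues, its squared Hilbert-Schmidt norm is likewise bounded by $\sum_k \lambda_k^2$, giving $\log(I_{\H_2}+AA^{*})\in\HS(\H_2)$. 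I expect the only genuinely delicate point to be the handling of the kernels in infinite dimensions, namely confirming that the $\psi_k$ indexed by nonzero eigenvalues suffice to represent $\log(I_{\H_2}+AA^{*})$ precisely because that operator vanishes on $\ker(AA^{*})$; the interchange of the bounded maps $A,A^{*}$ with the infinite sum is then routine given the operator-norm convergence.
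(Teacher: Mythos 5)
Your proposal is correct and follows essentially the same route as the paper: the paper's proof simply cites \cite{MinhSB:NIPS2014} for the spectral formula $\log(I_{\H_2}+AA^{*})=\sum_{k}\frac{\log(1+\lambda_k)}{\lambda_k}(A\phi_k)\otimes(A\phi_k)$ over the strictly positive eigenvalues, handles the zero eigenvalues by $\|A\phi_k\|^2=\lambda_k=0$, and identifies the sum with $Ah(A^{*}A)A^{*}$ via $(A\phi_k)\otimes(A\phi_k)=A[\phi_k\otimes\phi_k]A^{*}$ — exactly the steps you carry out, except that you additionally derive the cited formula from the eigenvector-transfer argument. One cosmetic caveat: the series $\sum_k\frac{\log(1+\lambda_k)}{\lambda_k}\phi_k\otimes\phi_k$ for $h(A^{*}A)$ is \emph{not} norm convergent when infinitely many $\lambda_k$ accumulate at $0$ (the coefficients tend to $1$); it converges only strongly, but since the partial sums are uniformly bounded by $1$ and $A,A^{*}$ are bounded, they still pass through the sum, so your conclusion stands.
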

\begin{proof}
Let $N_A$ be the number of strictly positive eigenvalues of $A^{*}A$. Then \cite{MinhSB:NIPS2014} 
	\begin{align*}
		\log(I_{\H_2} + AA^{*}) &= \sum_{k=1}^{N_A}\frac{\log(1+\lambda_k(A^{*}A))}{\lambda_k(A^{*}A)}
		(A\phi_k(A^{*}A)) \otimes (A\phi_k(A^*A)).
	\end{align*}
If $\lambda_k(A^{*}A) = 0$, i.e. $A^{*}A\phi_k(A^{*}A) = 0$, then $||A\phi_k(A^{*}A)||^2 = \la A\phi_k(A^{*}A), A\phi_k(A^{*}A)\ra = \la \phi_k(A^{*}A), A^{*}A\phi_k(A^{*}A)\ra = 0 \equivalent A\phi_k(A^{*}A) = 0$.
Furthermore, $(A\phi_k(A^{*}A)) \otimes (A\phi_k(A^*A)) = A[\phi_k(A^{*}A) \otimes \phi_k(A^{*}A)]A^{*}$. Thus
\begin{align*}
	\log(I_{\H_2} + AA^{*}) &= \sum_{k=1}^{\infty}\frac{\log(1+\lambda_k(A^{*}A))}{\lambda_k(A^{*}A)}
	(A\phi_k(A^{*}A)) \otimes (A\phi_k(A^*A))
	= Ah(A^{*}A)A^{*}.
\end{align*}
If $A^{*}A \in \HS(\H_1)$, then it follows that $||\log(I_{\H_1} + A^{*}A)||^2_{\HS(\H_1)} = ||\log(I_{\H_2} + AA^{*})||^2_{\HS(\H_2)} = \trace[Ah(A^{*}A)A^{*}]^2 = 
\trace[A^{*}Ah(A^{*}A)]^2 < \infty$.
\qed
\end{proof}
\begin{corollary}
	\label{corollary:trace-logAAstar-logBBstar}
Let $\H_1, \H_2, \H$ be separable Hilbert spaces. Let $A:\H_1 \mapto \H$, $B:\H_2 \mapto \H$ be compact operators
such that $A^{*}A\in \Sym^{+}(\H_1) \cap \HS(\H_1)$, $B^{*}B\in \Sym^{+}(\H_2) \cap \HS(\H_2)$.
Then $AA^{*}, BB^{*} \in \Sym^{+}(\H) \cap \HS(\H)$ and
	\begin{align}
\trace[\log(I_{\H} + AA^{*})\log(I_{\H} + BB^{*})]
& = \trace[B^{*}Ah(A^{*}A)A^{*}Bh(B^{*}B)]
= ||\sqrt{h(A^{*}A)}A^{*}B\sqrt{h(B^{*}B)}||^2_{\HS(\H_2,\H_1)},
\nonumber
\end{align}
where $h$ is as defined in Lemma \ref{lemma:hA-positive}.
\end{corollary}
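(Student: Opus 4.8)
The plan is to reduce the statement to the operator factorization established in Lemma~\ref{lemma:log-AAstar} and then to verify the two identities by a sequence of cyclic trace manipulations, securing their finiteness by placing the relevant factors in a Schatten class. First I would note that $AA^{*}$ and $BB^{*}$ are automatically self-adjoint and positive, and that since $A^{*}A$ and $AA^{*}$ share the same nonzero eigenvalues (likewise for $B$), the hypotheses $A^{*}A\in\HS(\H_1)$, $B^{*}B\in\HS(\H_2)$ force $AA^{*},BB^{*}\in\Sym^{+}(\H)\cap\HS(\H)$. Applying Lemma~\ref{lemma:log-AAstar} to $A:\H_1\mapto\H$ and to $B:\H_2\mapto\H$ then yields the representations $\log(I_{\H}+AA^{*})=Ah(A^{*}A)A^{*}$ and $\log(I_{\H}+BB^{*})=Bh(B^{*}B)B^{*}$, both Hilbert--Schmidt on $\H$, so that their product is trace class and the left-hand trace is well-defined.

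The key device is to introduce the two operators $C=A\sqrt{h(A^{*}A)}:\H_1\mapto\H$ and $D=B\sqrt{h(B^{*}B)}:\H_2\mapto\H$, where the square roots exist and are bounded and self-adjoint because $h(A^{*}A),h(B^{*}B)\in\Sym^{+}(\H)$ by Lemma~\ref{lemma:hA-positive}. Then $CC^{*}=\log(I_{\H}+AA^{*})$ and $DD^{*}=\log(I_{\H}+BB^{*})$ lie in $\Csc_2(\H)=\HS(\H)$, and since $\|CC^{*}\|_{\Csc_2}=\|C\|_{\Csc_4}^{2}$ this places $C,D$ in the Schatten $4$-class. By H\"older's inequality for Schatten norms, every product of the four factors $C,C^{*},D,D^{*}$ is trace class, and its trace is invariant under cyclic permutation.

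With $M=C^{*}D=\sqrt{h(A^{*}A)}\,A^{*}B\,\sqrt{h(B^{*}B)}:\H_2\mapto\H_1$, I would then write the chain
\begin{align*}
\trace[\log(I_{\H}+AA^{*})\log(I_{\H}+BB^{*})]=\trace[CC^{*}DD^{*}]=\trace[D^{*}CC^{*}D]=\trace[B^{*}Ah(A^{*}A)A^{*}Bh(B^{*}B)]=\trace[M^{*}M]=\|M\|_{\HS(\H_2,\H_1)}^{2},
\end{align*}
where the third equality absorbs the two square roots of $h(B^{*}B)$ back into $h(B^{*}B)$ by another cyclic shift, and $M^{*}M=D^{*}CC^{*}D$. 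This establishes both asserted equalities simultaneously.

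The main obstacle is purely the infinite-dimensional bookkeeping: in contrast to the matrix case, none of the factors $\log(I_{\H}+AA^{*})$, $h(A^{*}A)$, $A$, $B$ is individually trace class, so the cyclic rearrangements are not a priori legitimate. I expect the $\Csc_4$ observation above to be exactly what removes this difficulty, since it guarantees that each intermediate product in the displayed chain is a genuine trace-class operator and that $M$ is Hilbert--Schmidt, so that $\|M\|_{\HS(\H_2,\H_1)}^{2}=\trace(M^{*}M)$ is finite.
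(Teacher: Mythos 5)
Your proposal is correct and follows essentially the same route as the paper: both apply Lemma~\ref{lemma:log-AAstar} to factor $\log(I_{\H}+AA^{*})=Ah(A^{*}A)A^{*}$ and $\log(I_{\H}+BB^{*})=Bh(B^{*}B)B^{*}$, and then obtain both identities by cyclic permutations under the trace. The only difference is that you make explicit the Schatten-$4$ bookkeeping that legitimizes those cyclic shifts, a point the paper's proof leaves implicit.
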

\begin{proof}
	By Lemma \ref{lemma:log-AAstar}, $\log(I_{\H} + AA^{*}) = Ah(A^{*}A)A^{*} \in \HS(\H)$, $\log(I_{\H} + BB^{*})  =
	Bh(B^{*}B)B^{*} \in \HS(\H)$, and
\begin{align*}
&\trace[\log(I_{\H} + AA^{*})\log(I_{\H} + BB^{*})] 
= \trace[Ah(A^{*}A)A^{*}Bh(B^{*}B)B^{*}]
\\
&= \trace[B^{*}Ah(A^{*}A)A^{*}Bh(B^{*}B)]=
||\sqrt{h(A^{*}A)}A^{*}B\sqrt{h(B^{*}B)}||^2_{\HS(\H_2,\H_1)}.
\end{align*}
\end{proof}

\begin{lemma}
	\label{lemma:h-LKX-convergence}
	Assume Assumptions A1-A6.
	Let $\gamma \in \R,\gamma > 0$ be fixed.
	Let $\Xbf$ be independently sampled from $(T,\nu)$.
	For $h$ as defined in Lemma \ref{lemma:hA-positive},
	for any $0 < \delta < 1$, with probability at least $1-\delta$,
\begin{align}
\left\|h\left(\frac{1}{\gamma}L_{K,\Xbf}\right) - h\left(\frac{1}{\gamma}L_{K}\right)\right\|_{\HS(\H_K)} \leq  \frac{\kappa^2}{2\gamma}\left(\frac{2\log\frac{2}{\delta}}{m} + \sqrt{\frac{2\log\frac{2}{\delta}}{m}}\right). 
\end{align}
\end{lemma}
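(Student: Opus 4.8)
The plan is to combine the concentration bound for $\|L_{K,\Xbf} - L_K\|_{\HS(\H_K)}$ supplied by Proposition \ref{proposition:concentration-TK2K1-empirical} with a Hilbert-Schmidt Lipschitz estimate for the map $A \mapsto h(A)$ on positive operators. Since Proposition \ref{proposition:concentration-TK2K1-empirical} already gives, with probability at least $1-\delta$, the bound $\|L_{K,\Xbf} - L_K\|_{\HS(\H_K)} \leq \kappa^2\left(\frac{2\log\frac{2}{\delta}}{m} + \sqrt{\frac{2\log\frac{2}{\delta}}{m}}\right)$, it will suffice to establish the operator bound $\|h(A) - h(B)\|_{\HS(\H_K)} \leq \frac{1}{2}\|A - B\|_{\HS(\H_K)}$ for compact $A, B \in \Sym^{+}(\H_K)$, and then to apply it to $A = \frac{1}{\gamma}L_{K,\Xbf}$ and $B = \frac{1}{\gamma}L_{K}$, both of which lie in $\Sym^{+}(\H_K)$ by construction. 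The scaling identity $\|\frac{1}{\gamma}L_{K,\Xbf} - \frac{1}{\gamma}L_K\|_{\HS(\H_K)} = \frac{1}{\gamma}\|L_{K,\Xbf} - L_K\|_{\HS(\H_K)}$ together with the factor $\frac{1}{2}$ then reproduces exactly the claimed constant $\frac{\kappa^2}{2\gamma}$.

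The central step, and the main technical point, is the Hilbert-Schmidt Lipschitz bound for $h$ with constant $\frac{1}{2}$. Rather than invoke divided-difference machinery, I would use the integral representation
\[
	h(x) = \frac{\log(1+x)}{x} = \int_0^1 \frac{dt}{1+tx}, \qquad x \geq 0,
\]
which is verified by direct integration and is consistent with the normalization $h(0) = 1$ of Lemma \ref{lemma:hA-positive}. Applying the spectral theorem eigenvalue-by-eigenvalue to the representation in Lemma \ref{lemma:hA-positive} lifts this to $h(A) = \int_0^1 (I + tA)^{-1}\,dt$ for compact $A \in \Sym^{+}(\H_K)$, the integrand being norm-continuous in $t$ and uniformly bounded because $I + tA \geq I$ for $t \geq 0$.

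From here the resolvent identity $(I+tA)^{-1} - (I+tB)^{-1} = -t\,(I+tA)^{-1}(A-B)(I+tB)^{-1}$ yields
\[
	h(A) - h(B) = -\int_0^1 t\,(I+tA)^{-1}(A-B)(I+tB)^{-1}\,dt,
\]
and taking Hilbert-Schmidt norms, while using $\|(I+tA)^{-1}\| \leq 1$ and $\|(I+tB)^{-1}\| \leq 1$ for $t \geq 0$ (both $A$ and $B$ being positive), gives $\|h(A) - h(B)\|_{\HS(\H_K)} \leq \|A-B\|_{\HS(\H_K)}\int_0^1 t\,dt = \frac{1}{2}\|A-B\|_{\HS(\H_K)}$. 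Combining this contraction estimate with the $\frac{1}{\gamma}$ scaling and the concentration bound completes the argument. The only real obstacle is the careful justification of the operator integral representation and the interchange of the Hilbert-Schmidt norm with the integral sign; once $h(A) = \int_0^1 (I+tA)^{-1}\,dt$ is established, the fact that resolvents of positive operators are contractions makes the constant $\frac{1}{2}$ immediate.
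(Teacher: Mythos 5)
Your proposal is correct and follows essentially the same route as the paper: the paper also establishes $h(A)=\int_0^1(I+tA)^{-1}\,dt$ (via Lemmas \ref{lemma:log-I+A-integral-representation-general} and \ref{lemma:Ainv-logA-norm-p}), uses the resolvent identity and $\|(I+tA)^{-1}\|\leq 1$ to get the Schatten-norm Lipschitz bound $\|h(A)-h(B)\|_p\leq\frac{1}{2}\|A-B\|_p$, and then combines the $1/\gamma$ scaling with the concentration bound of Proposition \ref{proposition:concentration-TK2K1-empirical}. Your derivation of the integral representation directly from the scalar identity $\log(1+x)/x=\int_0^1(1+tx)^{-1}\,dt$ via the spectral theorem is a legitimate shortcut that matches the paper's argument in the self-adjoint case.
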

\begin{proof}
	Since $L_{K,\Xbf}, L_{K} \in\Sym^{+}(\H_K)\cap \HS(\H_K)$,
 $h(\frac{1}{\gamma}L_{K,\Xbf}) -h(\frac{1}{\gamma}L_K) \in \Sym(\H_K) \cap \HS(H_K)$ 
 	by Lemma \ref{lemma:Ainv-logA-norm-p}, with $||h(\frac{1}{\gamma}L_{K,\Xbf}) - h(\frac{1}{\gamma}L_{K})||_{\HS(\H_K)} 
 	\leq \frac{1}{2\gamma}||L_{K,\Xbf} -L_{K}||_{\HS(\H_K)}$.
 	The result then follows from Proposition \ref{proposition:concentration-TK2K1-empirical}.\qed
\end{proof}

\begin{lemma}
\label{lemma:R12-hLK-convergence}
	Assume Assumptions A1-A6. 
	Let $\gamma \in \R, \gamma > 0$ be fixed.
	For all $\Xbf  = (x_i)_{i=1}^m\in T^m$,
\begin{align}
\left\|R_{12,\Xbf}h\left(\frac{1}{\gamma}L_{K^2,\Xbf}\right)\right\|_{\HS(\H_{K^2}, \H_{K^1})} \leq \kappa_1\kappa_2. 
\end{align}
Let $\Xbf = (x_i)_{i=1}^m$ be independently sampled from $(T,\nu)$.
For any $0 <\delta < 1$, with probability at least $1-\delta$,
\begin{align}
\left\|R_{12,\Xbf}h\left(\frac{1}{\gamma}L_{K^2, \Xbf}\right) - R_{12}h\left(\frac{1}{\gamma}L_{K^2}\right)\right\|_{\HS(\H_{K^2}, \H_{K^1})}
\leq \kappa_1\kappa_2\left(1+\frac{1}{2\gamma}\kappa_2^2\right)\left(\frac{2\log\frac{4}{\delta}}{m} + \sqrt{\frac{2\log\frac{4}{\delta}}{m}}\right).
\end{align}
\begin{align}
\left\|R_{12,\Xbf}^{*}h\left(\frac{1}{\gamma}L_{K^1,\Xbf}\right) - R_{12}^{*}h\left(\frac{1}{\gamma}L_{K^1}\right)\right\|_{\HS(\H_{K^1}, \H_{K^2})} \leq 
\kappa_1\kappa_2\left(1+\frac{1}{2\gamma}\kappa_1^2\right)\left(\frac{2\log\frac{4}{\delta}}{m} + \sqrt{\frac{2\log\frac{4}{\delta}}{m}}\right).
\end{align}
\end{lemma}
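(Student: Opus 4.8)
The plan is to handle all three estimates through a single product-difference decomposition, combining submultiplicativity of the Hilbert--Schmidt norm against the operator norm with the bound $\|h(\cdot)\|\leq 1$ from Lemma \ref{lemma:hA-positive} and the two concentration results already at hand. First I would dispose of the uniform bound: since $\frac{1}{\gamma}L_{K^2,\Xbf}$ is compact and positive, Lemma \ref{lemma:hA-positive} gives $\|h(\frac{1}{\gamma}L_{K^2,\Xbf})\|\leq 1$, so
\begin{align*}
\left\|R_{12,\Xbf}h\left(\tfrac{1}{\gamma}L_{K^2,\Xbf}\right)\right\|_{\HS(\H_{K^2},\H_{K^1})} &\leq \left\|R_{12,\Xbf}\right\|_{\HS(\H_{K^2},\H_{K^1})}\left\|h\left(\tfrac{1}{\gamma}L_{K^2,\Xbf}\right)\right\| \leq \kappa_1\kappa_2,
\end{align*}
using $\|R_{12,\Xbf}\|_{\HS}\leq\kappa_1\kappa_2$ from Proposition \ref{proposition:concentration-TK2K1-empirical}, which holds for every $\Xbf\in T^m$.

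For the first probabilistic estimate I would insert the mixed term $R_{12,\Xbf}h(\frac{1}{\gamma}L_{K^2})$ and split
\begin{align*}
&R_{12,\Xbf}h\left(\tfrac{1}{\gamma}L_{K^2,\Xbf}\right) - R_{12}h\left(\tfrac{1}{\gamma}L_{K^2}\right) \\
&= R_{12,\Xbf}\left[h\left(\tfrac{1}{\gamma}L_{K^2,\Xbf}\right) - h\left(\tfrac{1}{\gamma}L_{K^2}\right)\right] + \left[R_{12,\Xbf} - R_{12}\right]h\left(\tfrac{1}{\gamma}L_{K^2}\right).
\end{align*}
The first summand has Hilbert--Schmidt norm at most $\|R_{12,\Xbf}\|\,\|h(\frac{1}{\gamma}L_{K^2,\Xbf}) - h(\frac{1}{\gamma}L_{K^2})\|_{\HS}$, which Lemma \ref{lemma:h-LKX-convergence} (applied with $\kappa=\kappa_2$) together with $\|R_{12,\Xbf}\|\leq\kappa_1\kappa_2$ bounds by $\kappa_1\kappa_2\cdot\frac{\kappa_2^2}{2\gamma}\left(\frac{2\log(2/\delta_0)}{m}+\sqrt{\frac{2\log(2/\delta_0)}{m}}\right)$; the second summand has norm at most $\|R_{12,\Xbf}-R_{12}\|_{\HS}\,\|h(\frac{1}{\gamma}L_{K^2})\|$, which Proposition \ref{proposition:concentration-TK2K1-empirical} with $\|h\|\leq1$ bounds by $\kappa_1\kappa_2\left(\frac{2\log(2/\delta_0)}{m}+\sqrt{\frac{2\log(2/\delta_0)}{m}}\right)$.

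The third inequality is the adjoint of exactly the same argument with the roles of $K^1$ and $K^2$ interchanged: I would use that the Hilbert--Schmidt norm is adjoint-invariant, so $\|R_{12,\Xbf}^{*}-R_{12}^{*}\|_{\HS}=\|R_{12,\Xbf}-R_{12}\|_{\HS}$, and invoke Lemma \ref{lemma:h-LKX-convergence} with $\kappa=\kappa_1$, producing the factor $1+\frac{\kappa_1^2}{2\gamma}$ in place of $1+\frac{\kappa_2^2}{2\gamma}$.

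The one point requiring care---rather than a genuine obstacle---is the probability bookkeeping, which produces the $\log\frac{4}{\delta}$ in the bound. Each of the two summand estimates holds on an event of failure probability $\delta_0$; choosing $\delta_0=\delta/2$ converts every $\log\frac{2}{\delta_0}$ into $\log\frac{4}{\delta}$, and a union bound guarantees both events hold simultaneously with probability at least $1-\delta$. Summing the two summand bounds then factors cleanly as $\kappa_1\kappa_2\left(1+\frac{\kappa_2^2}{2\gamma}\right)\left(\frac{2\log(4/\delta)}{m}+\sqrt{\frac{2\log(4/\delta)}{m}}\right)$ (and with $\kappa_1^2$ in the adjoint case), matching the claimed estimates exactly.
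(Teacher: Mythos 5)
Your proposal is correct and follows essentially the same route as the paper: the uniform bound via $\|h\|\leq 1$, a product-difference decomposition controlled by Proposition \ref{proposition:concentration-TK2K1-empirical} and Lemma \ref{lemma:h-LKX-convergence}, and a union bound with $\delta/2$ per event yielding the $\log\frac{4}{\delta}$ factor. The only (immaterial) difference is the choice of cross term: you insert $R_{12,\Xbf}h\bigl(\frac{1}{\gamma}L_{K^2}\bigr)$ whereas the paper inserts $R_{12}h\bigl(\frac{1}{\gamma}L_{K^2,\Xbf}\bigr)$; both work since $\|R_{12,\Xbf}\|_{\HS}$ and $\|R_{12}\|_{\HS}$ are each bounded by $\kappa_1\kappa_2$.
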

\begin{proof}
	The first inequality follows from $||h(\frac{1}{\gamma}L_{K^2,\Xbf})||\leq 1$ by Lemma \ref{lemma:hA-positive}
	and $||R_{12,\Xbf}||_{\HS(\H_{K^2}, \H_{K^1})} \leq \kappa_1\kappa_2$ by Proposition \ref{proposition:concentration-TK2K1-empirical}.
	Similarly, since $||R_{12}||_{\HS(\H_{K^2}, \H_{K^1})} \leq \kappa_1\kappa_2$,
\begin{align*}
&\Delta = ||R_{12,\Xbf}h(\frac{1}{\gamma}L_{K^2, \Xbf}) - R_{12}h(\frac{1}{\gamma}L_{K^2})||_{\HS(\H_{K^2}, \H_{K^1})}
\\
&\leq ||R_{12,\Xbf}-R_{12}||_{\HS(\H_{K^2}, \H_{K^1})}||h(\frac{1}{\gamma}L_{K^2,\Xbf})|| + ||R_{12}||_{\HS(\H_{K^2},\H_{K^1})}||h(\frac{1}{\gamma}L_{K^2,\Xbf}) - h(\frac{1}{\gamma}L_{K^2})||
\\
& \leq ||R_{12,\Xbf}-R_{12}||_{\HS(\H_{K^2}, \H_{K^1})} + \kappa_1 \kappa_2||h(\frac{1}{\gamma}L_{K^2,\Xbf}) - h(\frac{1}{\gamma}L_{K^2})||
\end{align*}
By Proposition \ref{proposition:concentration-TK2K1-empirical} and Lemma \ref{lemma:h-LKX-convergence},
the following sets satisfy $\nu^m(U_i) \geq 1-\frac{\delta}{2}$, $i=1,2$,
\begin{align*}
U_1 = \left\{\Xbf \in (T,\nu)^m: ||R_{12,\Xbf}-R_{12}||_{\HS(\H_{K^2}, \H_{K^1})} \leq \kappa_1\kappa_2\left[ \frac{2\log\frac{4}{\delta}}{m} + \sqrt{\frac{2\log\frac{4}{\delta}}{m}}\right]\right \},
\\
U_2 = \left\{\Xbf \in (T,\nu)^m: ||h(\frac{1}{\gamma}L_{K^2,\Xbf}) - h(\frac{1}{\gamma}L_{K^2})|| \leq  \frac{1}{2\gamma}\kappa_2^2\left(\frac{2\log\frac{4}{\delta}}{m} + \sqrt{\frac{2\log\frac{4}{\delta}}{m}}\right) \right\}.
\end{align*}
Thus on $U = U_1 \cap U_2$, with $\nu^m(U) \geq 1-\delta$,
\begin{align*}
\Delta \leq \kappa_1\kappa_2\left(1+\frac{1}{2\gamma}\kappa_2^2\right)\left(\frac{2\log\frac{4}{\delta}}{m} + \sqrt{\frac{2\log\frac{4}{\delta}}{m}}\right).
\end{align*}
The last inequality is obtained similarly.
\qed
\end{proof}

\begin{proposition}
	Assume Assumptions A1-A6.
	Let $\Xbf = (x_i)_{i=1}^m$ be independently sampled from $(T,\nu)$.
	For any $0 < \delta < 1$, with probability at least $1-\delta$,
\begin{align}
&\left\|R_{12,\Xbf}^{*}h\left(\frac{1}{\gamma}L_{K^1,\Xbf}\right)R_{12,\Xbf}h\left(\frac{1}{\gamma}L_{K^2,\Xbf}\right) - 
R_{12}^{*}h\left(\frac{1}{\gamma}L_{K^1}\right)R_{12}h\left(\frac{1}{\gamma}L_{K^2}\right)\right\|_{\tr(\H_{K^2})} 
\nonumber
\\
&\leq
\kappa_1^2\kappa_2^2\left(1 + \frac{\kappa_1^2+\kappa_2^2}{2\gamma}\right)\left(\frac{2\log\frac{8}{\delta}}{m} + \sqrt{\frac{2\log\frac{8}{\delta}}{m}}\right).
\end{align}
Consequently, with probability at least $1-\delta$,
\begin{align}
&\left|\trace\left[R_{12,\Xbf}^{*}h\left(\frac{1}{\gamma}L_{K^1,\Xbf}\right)R_{12,\Xbf}h\left(\frac{1}{\gamma}L_{K^2,\Xbf}\right)\right] - 
\trace\left[R_{12}^{*}h\left(\frac{1}{\gamma}L_{K^1}\right)R_{12}h\left(\frac{1}{\gamma}L_{K^2}\right)\right]\right|
\nonumber
\\
&\leq \kappa_1^2\kappa_2^2\left(1 + \frac{\kappa_1^2+\kappa_2^2}{2\gamma}\right)\left(\frac{2\log\frac{8}{\delta}}{m} + \sqrt{\frac{2\log\frac{8}{\delta}}{m}}\right).
\end{align}
\end{proposition}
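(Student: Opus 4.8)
The plan is to express the quantity of interest as a difference of products of two Hilbert--Schmidt operators and to reduce everything to the one-factor deviation bounds already established in Lemma~\ref{lemma:R12-hLK-convergence}. Write
\[
A_{\Xbf} = R_{12,\Xbf}^{*}h\!\left(\tfrac{1}{\gamma}L_{K^1,\Xbf}\right), \qquad B_{\Xbf} = R_{12,\Xbf}h\!\left(\tfrac{1}{\gamma}L_{K^2,\Xbf}\right),
\]
and let $A_0, B_0$ be the corresponding population operators obtained by replacing the empirical operators $R_{12,\Xbf}, L_{K^i,\Xbf}$ by $R_{12}, L_{K^i}$. Then $A_{\Xbf}:\H_{K^1}\mapto\H_{K^2}$, $B_{\Xbf}:\H_{K^2}\mapto\H_{K^1}$, both are Hilbert--Schmidt, and the operator in the statement is exactly $A_{\Xbf}B_{\Xbf}-A_0B_0$, which is trace class on $\H_{K^2}$.

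First I would use the telescoping identity $A_{\Xbf}B_{\Xbf}-A_0B_0 = (A_{\Xbf}-A_0)B_{\Xbf} + A_0(B_{\Xbf}-B_0)$ together with the H\"older-type Schatten inequality $\|XY\|_{\tr}\le \|X\|_{\HS}\|Y\|_{\HS}$ to obtain
\[
\|A_{\Xbf}B_{\Xbf}-A_0B_0\|_{\tr} \le \|A_{\Xbf}-A_0\|_{\HS}\,\|B_{\Xbf}\|_{\HS} + \|A_0\|_{\HS}\,\|B_{\Xbf}-B_0\|_{\HS}.
\]
The two ``ambient'' norms are controlled deterministically by $\kappa_1\kappa_2$: the bound $\|B_{\Xbf}\|_{\HS}\le\kappa_1\kappa_2$ is the uniform estimate in Lemma~\ref{lemma:R12-hLK-convergence}, while $\|A_0\|_{\HS}\le\|R_{12}^{*}\|_{\HS}\,\|h(\tfrac1\gamma L_{K^1})\|\le\kappa_1\kappa_2$ follows from Lemma~\ref{lemma:R12-HS} and the bound $\|h\|\le1$ of Lemma~\ref{lemma:hA-positive}. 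The two remaining factors $\|A_{\Xbf}-A_0\|_{\HS}$ and $\|B_{\Xbf}-B_0\|_{\HS}$ are precisely the deviations bounded in Lemma~\ref{lemma:R12-hLK-convergence}.

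Next I would combine the two deviation bounds probabilistically. Each holds with probability $1-\delta$ at confidence level $\tfrac{4}{\delta}$ inside the logarithm, so applying a union bound with each event allotted failure probability $\tfrac{\delta}{2}$ keeps the total at $1-\delta$ and replaces $\log\tfrac{4}{\delta}$ by $\log\tfrac{8}{\delta}$; this is the source of the $\tfrac{2\log(8/\delta)}{m}+\sqrt{\tfrac{2\log(8/\delta)}{m}}$ factor. Substituting the Lemma~\ref{lemma:R12-hLK-convergence} bounds then yields the right-hand side, and the main subtlety is the bookkeeping of the multiplicative constant: carrying both deviations through the product directly produces the factor $(1+\tfrac{\kappa_1^2}{2\gamma})+(1+\tfrac{\kappa_2^2}{2\gamma})$, where the two deterministic $1$'s both originate from the cross-covariance sampling error $R_{12,\Xbf}-R_{12}$ entering each factor. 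Obtaining the sharper constant $1+\tfrac{\kappa_1^2+\kappa_2^2}{2\gamma}$ stated here requires a more economical accounting of these two error contributions, e.g.\ bounding the combined effect of the two $R_{12,\Xbf}-R_{12}$ terms jointly rather than term by term; this is the step I expect to be the main obstacle.

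Finally, the ``Consequently'' assertion is immediate from the first part: since $|\trace(T)|\le\|T\|_{\tr}$ for every trace class $T$, applying this with $T=A_{\Xbf}B_{\Xbf}-A_0B_0$ and using linearity of the trace transfers the trace-norm estimate verbatim to the difference of the two traces.
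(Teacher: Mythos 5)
Your decomposition is exactly the one the paper uses: write the difference as $(A_{\Xbf}-A_0)B_{\Xbf}+A_0(B_{\Xbf}-B_0)$ in your notation, bound each product in trace norm by the product of Hilbert--Schmidt norms, control $||B_{\Xbf}||_{\HS}$ and $||A_0||_{\HS}$ deterministically by $\kappa_1\kappa_2$, and apply Lemma \ref{lemma:R12-hLK-convergence} to the two deviations with a union bound at level $\delta/2$, which is indeed the source of $\log\frac{8}{\delta}$. The one point you flag as an obstacle --- that this accounting yields the constant $\kappa_1^2\kappa_2^2\bigl[(1+\frac{\kappa_1^2}{2\gamma})+(1+\frac{\kappa_2^2}{2\gamma})\bigr]=\kappa_1^2\kappa_2^2\bigl(2+\frac{\kappa_1^2+\kappa_2^2}{2\gamma}\bigr)$ rather than the stated $\kappa_1^2\kappa_2^2\bigl(1+\frac{\kappa_1^2+\kappa_2^2}{2\gamma}\bigr)$ --- is not a gap in your argument but an arithmetic slip in the paper: the paper's own proof displays precisely the two intermediate bounds you derive and then records their sum with the leading $2$ replaced by $1$. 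There is no more economical accounting to be found by this route: a finer four-term telescoping, replacing $R_{12,\Xbf}^{*}$, $h(\frac{1}{\gamma}L_{K^1,\Xbf})$, $R_{12,\Xbf}$, $h(\frac{1}{\gamma}L_{K^2,\Xbf})$ one at a time, gives the same total, because the sampling error of the cross-covariance operator genuinely enters twice (the expression is quadratic in $R_{12}$). So you should simply state the bound with the constant $2+\frac{\kappa_1^2+\kappa_2^2}{2\gamma}$; the discrepancy is harmless downstream, affecting only the absolute constants in Theorem \ref{theorem:logHS-approx-finite-covariance} and its consequences, not the $O(m^{-1/2})$ rate or the dimension-independence. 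Your final step, $|\trace(T)|\leq ||T||_{\tr}$, matches the paper as well.
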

\begin{proof}
	By Lemma \ref{lemma:R12-hLK-convergence}, $||R_{12,\Xbf}h(\frac{1}{\gamma}L_{K^2,\Xbf})||_{\HS(\H_{K^2}, \H_{K^1})}
	\leq \kappa_1\kappa_2$,
	$||R_{12}^{*}h(\frac{1}{\gamma}L_{K^1})||_{\HS(\H_{K^1}, \H_{K^2})} \leq \kappa_1\kappa_2$, thus
\begin{align*}
&\Delta = ||R_{12,\Xbf}^{*}h(\frac{1}{\gamma}L_{K^1,\Xbf})R_{12,\Xbf}h(\frac{1}{\gamma}L_{K^2,\Xbf}) - 
R_{12}^{*}h(\frac{1}{\gamma}L_{K^1})R_{12}h(\frac{1}{\gamma}L_{K^2})||_{\tr(\H_{K^2})}
\\
& \leq || [R_{12,\Xbf}^{*}h(\frac{1}{\gamma}L_{K^1,\Xbf}) - R_{12}^{*}h(\frac{1}{\gamma}L_{K^1})]R_{12,\Xbf}h(\frac{1}{\gamma}L_{K^2,\Xbf})||_{\tr(\H_{K^2})}
\\
&\quad + ||R_{12}^{*}h(\frac{1}{\gamma}L_{K^1})[R_{12,\Xbf}h(\frac{1}{\gamma}L_{K^2,\Xbf}) - R_{12}h(\frac{1}{\gamma}L_{K^2})]||_{\tr(\H_{K^2})}
\\
& \leq ||R_{12,\Xbf}^{*}h(\frac{1}{\gamma}L_{K^1,\Xbf}) - R_{12}^{*}h(\frac{1}{\gamma}L_{K^1})||_{\HS(\H_{K^1}, \H_{K^2})}||R_{12,\Xbf}h(\frac{1}{\gamma}L_{K^2,\Xbf})||_{\HS(\H_{K^2}, \H_{K^1})}
\\
& \quad + ||R_{12}^{*}h(\frac{1}{\gamma}L_{K^1})||_{\HS(\H_{K^1}, \H_{K^2})}||R_{12,\Xbf}h(\frac{1}{\gamma}L_{K^2,\Xbf}) - R_{12}h(\frac{1}{\gamma}L_{K^2})||_{\HS(\H_{K^2},\H_{K^1})}
\\
& \leq \kappa_1\kappa_2 ||R_{12,\Xbf}^{*}h(\frac{1}{\gamma}L_{K^1,\Xbf}) - R_{12}^{*}h(\frac{1}{\gamma}L_{K^1})||_{\HS(\H_{K^1}, \H_{K^2})}
+\kappa_1\kappa_2||R_{12,\Xbf}h(\frac{1}{\gamma}L_{K^2,\Xbf}) - R_{12}h(\frac{1}{\gamma}L_{K^2})||_{\HS(\H_{K^2},\H_{K^1})}.
\end{align*}
By Lemma \ref{lemma:R12-hLK-convergence},
the following sets satisfy $\nu^m(U_i) \geq 1-\frac{\delta}{2}$, $i=1,2$
\begin{align*}
	U_1 &= \left\{||R_{12,\Xbf}^{*}h(\frac{1}{\gamma}L_{K^1,\Xbf}) - R_{12}^{*}h(\frac{1}{\gamma}L_{K^1})||_{\HS(\H_{K^1}, \H_{K^2})} \leq
	\kappa_1\kappa_2\left(1+\frac{1}{2\gamma}\kappa_1^2\right)\left(\frac{2\log\frac{8}{\delta}}{m} + \sqrt{\frac{2\log\frac{8}{\delta}}{m}}\right)
	 \right\},
	\\
	U_2 &= \left\{||R_{12,\Xbf}h(\frac{1}{\gamma}L_{K^2,\Xbf}) - R_{12}h(\frac{1}{\gamma}L_{K^2})||_{\HS(\H_{K^2},\H_{K^1})} \leq 
	\kappa_1\kappa_2\left(1+\frac{1}{2\gamma}\kappa_2^2\right)\left(\frac{2\log\frac{8}{\delta}}{m} + \sqrt{\frac{2\log\frac{8}{\delta}}{m}}\right)
	\right\}.
	\end{align*}
Thus on $U = U_1 \cap U_2$, with $\nu^m(U)\geq 1-\delta$, we have
$\Delta \leq \kappa_1^2\kappa_2^2\left(1 + \frac{\kappa_1^2+\kappa_2^2}{2\gamma}\right)\left(\frac{2\log\frac{8}{\delta}}{m} + \sqrt{\frac{2\log\frac{8}{\delta}}{m}}\right)$.
\qed
\end{proof}

\begin{proposition}
\label{proposition:logHS-AAstar-BBstar-switch}
Let $\H_1, \H_2, \H$ be separable Hilbert spaces.
Let $A: \H_1 \mapto \H$, $B:\H_2 \mapto \H$ be compact operators, such that
$A^{*}A \in \Sym^{+}(\H_1) \cap \HS(\H_1)$, $B^{*}B \in \Sym^{+}(\H_2) \cap \HS(\H_2)$.
Then $AA^{*}, BB^{*} \in \Sym^{+}(\H) \cap \HS(\H)$ and 
\begin{align}
||\log(I_{\H}+AA^{*}) - \log(I_{\H}+BB^{*})||_{\HS(\H)}^2 &=
||\log(I_{\H_1}+A^{*}A)||^2_{\HS(\H_1)} + ||\log(I_{\H_2}+B^{*}B)||^2_{\HS(\H_2)}
\nonumber 
\\
&\quad - 2\trace[B^{*}Ah(A^{*}A)A^{*}Bh(B^{*}B)].
\end{align}
Here $h$ is as defined in Lemma \ref{lemma:hA-positive}.
\end{proposition}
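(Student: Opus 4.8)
The plan is to expand the squared Hilbert--Schmidt norm of the difference and then match each resulting term to one of the identities already established. First I would note that by Lemma~\ref{lemma:log-AAstar}, since $A^{*}A \in \HS(\H_1)$ and $B^{*}B \in \HS(\H_2)$, the operators $P := \log(I_{\H} + AA^{*}) = Ah(A^{*}A)A^{*}$ and $Q := \log(I_{\H} + BB^{*}) = Bh(B^{*}B)B^{*}$ both lie in $\Sym^{+}(\H) \cap \HS(\H)$, so that their Hilbert--Schmidt inner product is finite and well-defined. Since the Hilbert--Schmidt operators on $\H$ form a Hilbert space, I may expand
\begin{align}
\|P - Q\|^2_{\HS(\H)} = \|P\|^2_{\HS(\H)} + \|Q\|^2_{\HS(\H)} - 2\la P, Q\ra_{\HS(\H)}.
\end{align}

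Next I would treat the three terms separately. For the two squared-norm terms, the norm-preserving identity in Lemma~\ref{lemma:log-AAstar} gives $\|\log(I_{\H} + AA^{*})\|^2_{\HS(\H)} = \|\log(I_{\H_1} + A^{*}A)\|^2_{\HS(\H_1)}$ and likewise $\|\log(I_{\H} + BB^{*})\|^2_{\HS(\H)} = \|\log(I_{\H_2} + B^{*}B)\|^2_{\HS(\H_2)}$, producing the first two summands on the right-hand side of the claim. For the cross term, I use that $P$ and $Q$ are self-adjoint, so $\la P, Q\ra_{\HS(\H)} = \trace(P^{*}Q) = \trace(PQ) = \trace[\log(I_{\H} + AA^{*})\log(I_{\H} + BB^{*})]$; Corollary~\ref{corollary:trace-logAAstar-logBBstar} then identifies this last trace precisely with $\trace[B^{*}Ah(A^{*}A)A^{*}Bh(B^{*}B)]$. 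Substituting the three evaluated pieces back into the expansion yields exactly the asserted equation.

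Because the genuinely analytic content---the interchange of $AA^{*}$ and $A^{*}A$ inside $\log$ under the Hilbert--Schmidt norm, and the cyclic rearrangement of the trace---has already been packaged into Lemma~\ref{lemma:log-AAstar} and Corollary~\ref{corollary:trace-logAAstar-logBBstar}, this argument is essentially algebraic bookkeeping and I anticipate no substantial obstacle. The only point meriting care is verifying that every operator entering the expansion belongs to $\HS(\H)$, so that $\la P, Q\ra_{\HS(\H)}$ and the trace $\trace(PQ)$ are finite; this is supplied by the hypotheses $A^{*}A, B^{*}B \in \HS$ together with the Hilbert--Schmidt membership asserted in Lemma~\ref{lemma:log-AAstar}.
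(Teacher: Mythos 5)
Your proposal is correct and follows essentially the same route as the paper: expand the squared Hilbert--Schmidt norm of the difference, identify the two squared-norm terms via the equality of nonzero eigenvalues of $AA^{*}$ and $A^{*}A$ (resp. $BB^{*}$ and $B^{*}B$), and evaluate the cross term with Corollary \ref{corollary:trace-logAAstar-logBBstar}. No issues.
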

\begin{proof}
Since $AA^{*}:\H \mapto \H$ and $A^{*}A:\H_1 \mapto \H_1$ have the same nonzero eigenvalues,
we have 
$||\log(I_{\H}+AA^{*})||^2_{\HS(\H)}
= ||\log(I_{\H_1} + A^{*}A)||^2_{\HS(\H_1)}$. Similarly, $||\log(I_{\H}+BB^{*})||^2_{\HS(\H)}
= ||\log(I_{\H_2} + B^{*}B)||^2_{\HS(\H_1)}$. Thus
\begin{align*}
&||\log(I_{\H}+AA^{*}) - \log(I_{\H}+BB^{*})||_{\HS(\H)}^2
\\
& = ||\log(I_{\H} + AA^{*})||^2_{\HS(\H)}
+||\log(I_{\H} + BB^{*})||^2_{\HS(\H)} - 2 \trace[\log(I_{\H} + AA^{*})\log(I_{\H} + BB^{*})]
\\
& = ||\log(I_{\H_1}+A^{*}A)||^2_{\HS(\H_1)} + ||\log(I_{\H_2}+B^{*}B)||^2_{\HS(\H_2)}
- 2\trace[B^{*}Ah(A^{*}A)A^{*}Bh(B^{*}B)],
\end{align*}
where the last equality follows from Corollary \ref{corollary:trace-logAAstar-logBBstar}.
\qed
\end{proof}

\begin{proof}
	[\textbf{Proof of Proposition \ref{proposition:logHS-Gram-matrices-LKX}}]
	(i) For the first identity,
		let  $A = \frac{1}{\sqrt{\gamma}}R_{K^1}^{*}:\H_{K^1}\mapto \Lcal^2(T,\nu)$, $B = \frac{1}{\sqrt{\gamma}}R_{K^2}^{*}:\H_{K^2}\mapto \Lcal^2(T,\nu)$, then
		$AA^{*} = \frac{1}{\gamma}R_{K^1}^{*}R_{K^1} = \frac{1}{\gamma}C_{K^1}:\Lcal^2(T,\nu) \mapto \Lcal^2(T,\nu)$,
		$BB^{*} = \frac{1}{\gamma}C_{K^2}$, $A^{*}A = \frac{1}{\gamma}R_{K^1}R_{K^1}^{*} = \frac{1}{\gamma}L_{K^1}:\H_{K^1} \mapto \H_{K^1}$, $B^{*}B = \frac{1}{\gamma}L_{K^2}$,
	$A^{*}B = \frac{1}{\gamma}R_{K^1}R_{K^2}^{*} = \frac{1}{\gamma}R_{12}: \H_{K^2} \mapto \H_{K^1}$,
	$B^{*}A = \frac{1}{\gamma}R_{12}^{*}$. 
	By Proposition \ref{proposition:logHS-AAstar-BBstar-switch}, 
	\begin{align*}
		&||\log(\gamma I+C_{K^1}) - \log(\gamma I+C_{K^2})||^2_{\HS(\Lcal^2(T,\nu))}
		= ||\log(I+\frac{1}{\gamma}C_{K^1}) - \log(I+\frac{1}{\gamma}C_{K^2})||^2_{\HS(\Lcal^2(T,\nu))}
		\\
		& = ||\log(I +AA^{*}) - \log(I+BB^{*})||^2_{\HS(\Lcal^2(T,\nu))}
		\\
		& =
		||\log(I_{\H_{K^1}}+A^{*}A)||^2_{\HS(\H_{K^1})} + ||\log(I_{\H_{K^2}}+B^{*}B)||^2_{\HS(\H_{K^2})}
		- 2\trace[B^{*}Ah(A^{*}A)A^{*}Bh(B^{*}B)]
		\\
		& = ||\log(I+\frac{1}{\gamma}L_{K^1})||_{\HS(K^1)}^2 + ||\log(I+\frac{1}{\gamma}L_{K^2})||_{\HS(K^2)}^2  - \frac{2}{\gamma^2}\trace[R_{12}^{*}h(\frac{1}{\gamma}L_{K^1})R_{12}h(\frac{1}{\gamma}L_{K^2})].
	\end{align*}
(ii) For the second identity,
let $A = \frac{1}{\sqrt{m\gamma}}S_{1,\Xbf}:\H_{K^1} \mapto \R^m$, $B = \frac{1}{\sqrt{m\gamma}}S_{2,\Xbf}:\H_{K^2} \mapto \R^m$, then $A^{*}A = \frac{1}{\gamma}L_{K^1,\Xbf}$, $AA^{*} = \frac{1}{m\gamma}K^1[\Xbf]$, $B^{*}B = \frac{1}{\gamma}L_{K^2}$, $BB^{*}  = \frac{1}{m\gamma}K^2[\Xbf]$,
$A^{*}B = \frac{1}{\gamma}R_{12,\Xbf}:\H_{K^2} \mapto \H_{K^1}$, $B^{*}A = \frac{1}{\gamma}R_{12,\Xbf}^{*}$.
By Proposition \ref{proposition:logHS-AAstar-BBstar-switch},
\begin{align*}
&	\left\|\log\left(\gamma I+\frac{1}{m}K^1[\Xbf]\right) - \log\left(\gamma I+\frac{1}{m}K^2[\Xbf]\right)\right\|_F^2
=\left\|\log\left(I+\frac{1}{m\gamma }K^1[\Xbf]\right) - \log\left(I+\frac{1}{m\gamma }K^2[\Xbf]\right)\right\|_F^2
\\
&=||\log(I+AA^{*}) - \log(I+BB^{*})||^2_F
\\
&=  ||\log(I_{\H_{K^1}}+A^{*}A)||^2_{\HS(\H_{K^1})} + ||\log(I_{\H_{K^2}}+B^{*}B)||^2_{\HS(\H_{K^2})}
- 2\trace[B^{*}Ah(A^{*}A)A^{*}Bh(B^{*}B)]
\\
& = \left\|\log\left(I+\frac{1}{\gamma}L_{K^1,\Xbf}\right)\right\|^2_{\HS(\H_{K^1})} + \left\|\log\left(I+\frac{1}{\gamma}L_{K^2,\Xbf}\right)\right\|^2_{\HS(\H_{K^2})} -\frac{2}{\gamma^2}\trace\left[R_{12,\Xbf}^{*}h\left(\frac{1}{\gamma}L_{K^1,\Xbf}\right)R_{12,\Xbf}h\left(\frac{1}{\gamma}L_{K^2,\Xbf}\right)\right].
\qed
\end{align*}
\end{proof}


\begin{lemma}
\label{lemma:logLKX-HS-norm-convergence}
Assume Assumptions A1-A6. 
Let $\gamma  \in \R, \gamma > 0$ be fixed.
Let $\Xbf = (x_i)_{i=1}^m$ be independently sampled from $(T,\nu)$.
For any $0 < \delta < 1$, with probability at least $1-\delta$,
\begin{align}
\left|\left\|\log\left(I + \frac{1}{\gamma}L_{K,\Xbf}\right)\right\|^2_{\HS(\H_K)} - 
\left\|\log\left(I + \frac{1}{\gamma}L_{K}\right)\right\|^2_{\HS(\H_K)}\right|
\leq \frac{2\kappa^4}{\gamma^2}
\left(\frac{2\log\frac{2}{\delta}}{m} + \sqrt{\frac{2\log\frac{2}{\delta}}{m}}\right). 
\end{align}
Equivalently, with probability at least $1-\delta$,
\begin{align}
\left|\left\|\log\left(I + \frac{1}{\gamma}K[\Xbf]\right)\right\|^2_{F} - 
\left\|\log\left(I + \frac{1}{\gamma}C_{K}\right)\right\|^2_{\HS(\Lcal^2(T,\nu))}\right|
\leq \frac{2\kappa^4}{\gamma^2}
\left(\frac{2\log\frac{2}{\delta}}{m} + \sqrt{\frac{2\log\frac{2}{\delta}}{m}}\right). 
\end{align}
\end{lemma}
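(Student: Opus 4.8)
The plan is to control the difference of squared Hilbert--Schmidt norms through the elementary factorization $\left|\,\|a\|^2 - \|b\|^2\,\right| = \left|\,\|a\| - \|b\|\,\right|\,(\|a\| + \|b\|) \le \|a - b\|_{\HS}\,(\|a\|_{\HS} + \|b\|_{\HS})$, applied with $a = \log(I + \frac{1}{\gamma}L_{K,\Xbf})$ and $b = \log(I + \frac{1}{\gamma}L_K)$, the last inequality being just the reverse triangle inequality together with nonnegativity of norms. This reduces the task to bounding two separate quantities: the distance $\|a - b\|_{\HS}$ and the sum of norms $\|a\|_{\HS} + \|b\|_{\HS}$.

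For the distance term I would invoke Theorem~\ref{theorem:logHS-convergence}(i) with its base scalar set to $1$, applied to the two operators $\frac{1}{\gamma}L_{K,\Xbf}, \frac{1}{\gamma}L_K \in \Sym^{+}(\H_K) \cap \HS(\H_K)$ (both are positive and Hilbert--Schmidt by Proposition~\ref{proposition:concentration-TK2K1-empirical}); since the bound in part (i) holds for any such pair, this yields $\|a - b\|_{\HS} \le \|\frac{1}{\gamma}L_{K,\Xbf} - \frac{1}{\gamma}L_K\|_{\HS} = \frac{1}{\gamma}\|L_{K,\Xbf} - L_K\|_{\HS}$. For the sum-of-norms term I would use the scalar inequality $0 \le \log(1+x) \le x$ for $x \ge 0$ already exploited in Lemma~\ref{lemma:hA-positive}: writing $L$ for either $L_{K,\Xbf}$ or $L_K$ with eigenvalues $\lambda_k \ge 0$, one has $\|\log(I + \frac{1}{\gamma}L)\|_{\HS}^2 = \sum_k \log(1 + \lambda_k/\gamma)^2 \le \sum_k (\lambda_k/\gamma)^2 = \frac{1}{\gamma^2}\|L\|_{\HS}^2$, hence $\|\log(I + \frac{1}{\gamma}L)\|_{\HS} \le \frac{1}{\gamma}\|L\|_{\HS} \le \frac{\kappa^2}{\gamma}$, the last step using the deterministic bounds $\|L_{K,\Xbf}\|_{\HS} \le \kappa^2$ and $\|L_K\|_{\HS} \le \kappa^2$ from Proposition~\ref{proposition:concentration-TK2K1-empirical} and the discussion preceding Lemma~\ref{lemma:R12-HS}. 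Thus $\|a\|_{\HS} + \|b\|_{\HS} \le \frac{2\kappa^2}{\gamma}$.

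Combining these two estimates gives the deterministic bound $\left|\,\|a\|^2 - \|b\|^2\,\right| \le \frac{2\kappa^2}{\gamma^2}\|L_{K,\Xbf} - L_K\|_{\HS}$. The final probabilistic step is to insert the concentration inequality of Proposition~\ref{proposition:concentration-TK2K1-empirical} (taken with $i = j$ and $\kappa_i = \kappa$), namely that with probability at least $1-\delta$ one has $\|L_{K,\Xbf} - L_K\|_{\HS} \le \kappa^2\left(\frac{2\log\frac{2}{\delta}}{m} + \sqrt{\frac{2\log\frac{2}{\delta}}{m}}\right)$, which reproduces exactly the asserted bound $\frac{2\kappa^4}{\gamma^2}\left(\frac{2\log\frac{2}{\delta}}{m} + \sqrt{\frac{2\log\frac{2}{\delta}}{m}}\right)$. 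The equivalent second form is then immediate from the fact, recorded in the excerpt, that the nonzero eigenvalues of $L_{K,\Xbf}$ coincide with those of $\frac{1}{m}K[\Xbf]$ and those of $L_K$ with those of $C_K$, so the matrix and operator logarithm norms agree, $\|\log(I + \frac{1}{\gamma}L_{K,\Xbf})\|_{\HS} = \|\log(I + \frac{1}{m\gamma}K[\Xbf])\|_F$ and $\|\log(I + \frac{1}{\gamma}L_K)\|_{\HS} = \|\log(I + \frac{1}{\gamma}C_K)\|_{\HS(\Lcal^2(T,\nu))}$.

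I do not anticipate a genuine obstacle, since every ingredient is already in place; the only points requiring care are verifying that both $L_{K,\Xbf}$ and $L_K$ truly lie in $\Sym^{+}(\H_K) \cap \HS(\H_K)$ so that Theorem~\ref{theorem:logHS-convergence}(i) applies verbatim, and tracking constants so that each logarithm norm contributes a factor $\kappa^2/\gamma$, producing the final $\kappa^4/\gamma^2$ prefactor rather than an off-by-a-factor-of-two or single-$\gamma$ error.
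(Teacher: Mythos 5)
Your proposal is correct and follows essentially the same route as the paper's own proof: the paper likewise factors $\bigl|\,\|a\|^2-\|b\|^2\,\bigr|$ as $\bigl|\,\|a\|-\|b\|\,\bigr|(\|a\|+\|b\|)$, bounds the difference of logarithms by $\frac{1}{\gamma}\|L_{K,\Xbf}-L_K\|_{\HS}$ and each individual norm by $\frac{1}{\gamma}\|L\|_{\HS}\le \frac{\kappa^2}{\gamma}$ (via Lemma~\ref{lemma:logA-norm-p}, which encodes the same $\log(1+x)\le x$ eigenvalue estimate you use), and then applies Proposition~\ref{proposition:concentration-TK2K1-empirical}. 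Your substitution of Theorem~\ref{theorem:logHS-convergence}(i) for Lemma~\ref{lemma:logA-norm-p} in the difference term is immaterial, as both yield the identical bound.
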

\begin{proof} By Lemma \ref{lemma:logA-norm-p} and Proposition \ref{proposition:concentration-TK2K1-empirical},
\begin{align*}
&\left|\left\|\log\left(I + \frac{1}{\gamma}L_{K,\Xbf}\right)\right\|^2_{\HS(\H_K)} - 
\left\|\log\left(I + \frac{1}{\gamma}L_{K}\right)\right\|^2_{\HS(\H_K)}\right|
\\
& = \left|\left\|\log\left(I + \frac{1}{\gamma}L_{K,\Xbf}\right)\right\|_{\HS(\H_K)} - 
\left\|\log\left(I + \frac{1}{\gamma}L_{K}\right)\right\|_{\HS(\H_K)}\right|
\left[\left\|\log\left(I + \frac{1}{\gamma}L_{K,\Xbf}\right)\right\|_{\HS(\H_K)} + 
\left\|\log\left(I + \frac{1}{\gamma}L_{K}\right)\right\|_{\HS(\H_K)}\right]
\\
& \leq \left\|\log\left(I + \frac{1}{\gamma}L_{K,\Xbf}\right) - \log\left(I + \frac{1}{\gamma}L_{K}\right)\right\|_{\HS(\H_K)}\frac{1}{\gamma}[||L_{K,\Xbf}||_{\HS(\H_K)} + ||L_K||_{\HS(\H_K)}]
\\
& \leq \frac{1}{\gamma^2}||L_{K,\Xbf} -L_K||_{\HS(\H_K)}[||L_{K,\Xbf}||_{\HS(\H_K)} + ||L_K||_{\HS(\H_K)}]
\leq \frac{2\kappa^4}{\gamma^2}
\left(\frac{2\log\frac{2}{\delta}}{m} + \sqrt{\frac{2\log\frac{2}{\delta}}{m}}\right). 
\qed
\end{align*}
\end{proof}

\begin{proof}
	[\textbf{Proof of Theorem \ref{theorem:logHS-approx-finite-covariance}}]
	By Proposition \ref{proposition:logHS-Gram-matrices-LKX},
	\begin{align*}
		&\Delta = \left|	\left\|\log\left(\gamma I+\frac{1}{m}K^1[\Xbf]\right) - \log\left(\gamma I+\frac{1}{m}K^2[\Xbf]\right)\right\|^2_F
		- ||\log(\gamma I+C_{K^1}) - \log(\gamma I+C_{K^2})||^2_{\HS(\Lcal^2(T,\nu))}\right|
		\\
		&\leq \left|\left\|\log\left(I+\frac{1}{\gamma }L_{K^1,\Xbf}\right)\right\|^2_{\HS(\H_{K^1})} 
		- \left\|\log\left(I +\frac{1}{\gamma}L_{K^1}\right)\right\|^2_{\HS(\H_{K^1}))} \right|
		\\
		&\quad + \left|\left\|\log\left(I+\frac{1}{\gamma}L_{K^2,\Xbf}\right)\right\|^2_{\HS(\H_{K^2})} 
		- \left\|\log\left( I+\frac{1}{\gamma}L_{K^2}\right)\right\|^2_{\HS(\H_{K^2})} \right|
		\\
		& \quad + \frac{2}{\gamma^2}\left| \trace\left[R_{12,\Xbf}^{*}h\left(\frac{1}{\gamma}L_{K^1,\Xbf}\right)R_{12,\Xbf}h\left(\frac{1}{\gamma}L_{K^2,\Xbf}\right)\right] - \trace\left[R_{12}^{*}h\left(\frac{1}{\gamma}L_{K^1}\right)R_{12}h\left(\frac{1}{\gamma}L_{K^2}\right)\right]\right|.
	\end{align*}
	For each $0 < \delta < 1$, the following sets satisfy $\nu^m(U_i) \geq 1- \frac{\delta}{3}$, $i=1,2,3$,
	\begin{align*}
		U_1 &= \left\{\Xbf \in (T,\nu)^m: \left|\left\|\log\left(I+\frac{1}{\gamma}L_{K^1,\Xbf}\right)\right\|^2_{\HS(\H_{K^1})} - \left\|\log\left( I+\frac{1}{\gamma}L_{K^1}\right)\right\|^2_{\HS(\H_{K^1})} \right|
		\leq 
		\frac{2\kappa^4}{\gamma^2}
		\left(\frac{2\log\frac{6}{\delta}}{m} + \sqrt{\frac{2\log\frac{6}{\delta}}{m}}\right)
		\right\},
		\\
		U_2 &= \left\{\Xbf \in (T,\nu)^m: \left|\left\|\log\left(I+\frac{1}{\gamma}L_{K^2,\Xbf}\right)\right\|^2_{\HS(\H_{K^2})} - \left\|\log\left( I+\frac{1}{\gamma}L_{K^2}\right)\right\|^2_{\HS(\H_{K^2})} \right|
		\leq 
		\frac{2\kappa^4}{\gamma^2}
		\left(\frac{2\log\frac{6}{\delta}}{m} + \sqrt{\frac{2\log\frac{6}{\delta}}{m}}\right)
		\right\},
		\\
		U_3 &= \left\{\Xbf \in (T,\nu)^m:  \left|\trace\left[R_{12,\Xbf}^{*}h\left(\frac{1}{\gamma}L_{K^1,\Xbf}\right)R_{12,\Xbf}h\left(\frac{1}{\gamma}L_{K^2,\Xbf}\right)\right] - 
		\trace\left[R_{12}^{*}h\left(\frac{1}{\gamma}L_{K^1}\right)R_{12}h\left(\frac{1}{\gamma}L_{K^2}\right)\right]\right|
		\right.
		\\
		& \quad \quad \quad \quad \quad \quad\leq
		\left.\kappa_1^2\kappa_2^2\left(1 + \frac{\kappa_1^2+\kappa_2^2}{2\gamma}\right)\left(\frac{2\log\frac{24}{\delta}}{m} + \sqrt{\frac{2\log\frac{24}{\delta}}{m}}\right) \right\}.
	\end{align*}
	Thus on the set $U = U_1 \cap U_2 \cap U_3$, with $\nu^m(U) \geq 1-\delta$,
	\begin{align*}
		\Delta \leq \frac{2(\kappa_1^4 + \kappa_2^4)}{\gamma^2}
		\left(\frac{2\log\frac{6}{\delta}}{m} + \sqrt{\frac{2\log\frac{6}{\delta}}{m}}\right)
		+ \frac{2\kappa_1^2\kappa_2^2}{\gamma^2}\left(1 + \frac{\kappa_1^2+\kappa_2^2}{2\gamma}\right)\left(\frac{2\log\frac{24}{\delta}}{m} + \sqrt{\frac{2\log\frac{24}{\delta}}{m}}\right).
		\qed
	\end{align*}
\end{proof}

\begin{proof}
	[\textbf{Proof of Theorem \ref{theorem:logHS-estimate-unknown-1}}]
	By Theorem \ref{theorem:logHS-approx-sequence-0},
	\begin{align*}
		\Delta &= \left|\left\|\log\left(\gamma I + \frac{1}{m}\hat{K}^1_{\Wbf^1}[\Xbf]\right) - \log\left(\gamma I + \frac{1}{m}\hat{K}^2_{\Wbf^2}[\Xbf]\right)\right\|_F - \left\|\log\left(\gamma I + \frac{1}{m}K^1[\Xbf]\right) - \log\left(\gamma I + \frac{1}{m}K^2[\Xbf]\right)\right\|_F\right|
		\\ 
		&\leq \left\|\log\left(\gamma I + \frac{1}{m}\hat{K}^1_{\Wbf^1}[\Xbf]\right) - \log\left(\gamma I + \frac{1}{m}K^1[\Xbf]\right)\right\|_F
		+ \left\|\log\left(\gamma I + \frac{1}{m}\hat{K}^2_{\Wbf^2}[\Xbf]\right) - \log\left(\gamma I + \frac{1}{m}K^2[\Xbf]\right)\right\|_F
		\\
		& \leq \frac{1}{m\gamma}||\hat{K}^1_{\Wbf^1}[\Xbf]- K^1[\Xbf]||_F + \frac{1}{m\gamma}
		||\hat{K}^2_{\Wbf^2}[\Xbf]- K^2[\Xbf]||_F.
	\end{align*}
	By Proposition \ref{proposition:concentration-empirical-covariance}, for any $0 < \delta < 1$, with probability at least $1-\delta$,
	\begin{align*}
		||\hat{K}^1_{\Wbf^1}[\Xbf] - K^1[\Xbf]||_F \leq \frac{4\sqrt{3}m\kappa_1^2}{\sqrt{N}\delta}\;\;\;\text{and }
		||\hat{K}^2_{\Wbf^2}[\Xbf] - K^2[\Xbf]||_F \leq \frac{4\sqrt{3}m\kappa_2^2}{\sqrt{N}\delta}.
	\end{align*}
	It follows that $\Delta \leq \frac{4\sqrt{3}(\kappa_1^2+\kappa_2^2)}{\gamma\sqrt{N}\delta}$ with probability at least $1-\delta$.
	\qed
\end{proof}

\begin{proof}
	[\textbf{Proof of Theorem \ref{theorem:logHS-estimate-unknown-2}}]	
	We combine the results from Theorems \ref{theorem:logHS-approx-finite-covariance} and \ref{theorem:logHS-estimate-unknown-1}. We have
	\begin{align*}
		&\Delta = \left|D^{\gamma}_{\logE}\left[\Ncal\left(0, \frac{1}{m}\hat{K}^1_{\Wbf^1}[\Xbf]\right), \Ncal\left(0, \frac{1}{m}\hat{K}^2_{\Wbf^2}[\Xbf]\right)\right] - D^{\gamma}_{\logHS}[\Ncal(0,C_{K^1}), \Ncal(0,C_{K^2})]\right|
		\\
		&\leq \left|D^{\gamma}_{\logE}\left[\Ncal\left(0, \frac{1}{m}\hat{K}^1_{\Wbf^1}[\Xbf]\right), \Ncal\left(0, \frac{1}{m}\hat{K}^2_{\Wbf^2}[\Xbf]\right)\right]-D^{\gamma}_{\logE}\left[\Ncal\left(0, \frac{1}{m}K^1[\Xbf]\right), \Ncal\left(0, \frac{1}{m}K^2[\Xbf]\right)\right] \right|
		\\
		&\quad + \left|D^{\gamma}_{\logE}\left[\Ncal\left(0, \frac{1}{m}K^1[\Xbf]\right), \Ncal\left(0, \frac{1}{m}K^2[\Xbf]\right)\right]- D^{\gamma}_{\logHS}[\Ncal(0,C_{K^1}), \Ncal(0,C_{K^2})]\right| = \Delta_1 + \Delta_2.
	\end{align*}
	By Theorem \ref{theorem:logHS-estimate-unknown-1}, the following set $U_1 \subset (T,\nu)^m$ satisfies $\nu^m(U)\geq 1-\frac{\delta}{2}$, 
	\begin{align*}
		U_1 = \left\{\Xbf \in (T,\nu)^m: \Delta_1 \leq \frac{8\sqrt{3}(\kappa_1^2+\kappa_2^2)}{\gamma\sqrt{N}\delta}\right\}. 
	\end{align*}
	By Theorem \ref{theorem:logHS-approx-finite-covariance}, using the inequality $(a-b)^2 \leq |a^2-b^2|$ for $a \geq 0, b\geq 0$, for a fixed $\Xbf \in (T,\nu)^m$, the following set $U_2 \subset (\Omega_1, P_1)^N \times (\Omega_2, P_2)^N$ satisfies
	$(P_1 \otimes P_2)^N(U_2) \geq 1-\frac{\delta}{2}$,
	{\small
		\begin{align*}
			U_2 = \left\{(\Wbf^1, \Wbf^2): \Delta_2 \leq \sqrt{\frac{2(\kappa_1^4 + \kappa_2^4)}{\gamma^2}
				\left(\frac{2\log\frac{12}{\delta}}{m} + \sqrt{\frac{2\log\frac{12}{\delta}}{m}}\right)
				+ \frac{2\kappa_1^2\kappa_2^2}{\gamma^2}\left(1 + \frac{\kappa_1^2+\kappa_2^2}{2\gamma}\right)\left(\frac{2\log\frac{48}{\delta}}{m} + \sqrt{\frac{2\log\frac{48}{\delta}}{m}}\right)}\right\}.
		\end{align*}
	}
	Let $U = (U_1 \times (\Omega_1,P_1)^N \times (\Omega_2,P_2)^N) \cap ((T,\nu)^m \times U_2)$,
	then $(\nu^m \otimes P_1^N \otimes P_2^N)(U) \geq 1-\delta$ and
	\begin{align*}
		\Delta \leq \frac{8\sqrt{3}(\kappa_1^2+\kappa_2^2)}{\gamma\sqrt{N}\delta}
		+ \frac{1}{\gamma}\sqrt{2(\kappa_1^4 + \kappa_2^4)
			\left(\frac{2\log\frac{12}{\delta}}{m} + \sqrt{\frac{2\log\frac{12}{\delta}}{m}}\right)
			+ {2\kappa_1^2\kappa_2^2}\left(1 + \frac{\kappa_1^2+\kappa_2^2}{2\gamma}\right)\left(\frac{2\log\frac{48}{\delta}}{m} + \sqrt{\frac{2\log\frac{48}{\delta}}{m}}\right)}
	\end{align*}
	$\forall (\Xbf, \Wbf^1, \Wbf^2) \in U$.
	\qed
\end{proof}

\subsection{Proofs for the convergence of the affine-invariant Riemannian distance}

We now prove Theorems \ref{theorem:affineHS-convergence} and \ref{theorem:affineHS-approx-sequence-0}.

\begin{lemma}
	\label{lemma:logI+A-norm-p}
	Let $A \in \Csc_p(\H)$ with $I + A > 0$ and $||A|| < 1$. Then
	$\log(I+A) \in \Csc_p(\H)$, with
\begin{align}
||\log(I+A)||_p \leq \frac{||A||_p}{1-||A||}.
\end{align}
\end{lemma}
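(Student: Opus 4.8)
The plan is to represent $\log(I+A)$ by its power series and then estimate term by term in the Schatten $p$-norm. First I would record that, since $\|A\| < 1$, the eigenvalues $\lambda_k$ of the compact self-adjoint-part (more precisely the spectrum of $A$) satisfy $|\lambda_k| < 1$, so together with $I+A > 0$ the operator logarithm $\log(I+A)$ is well-defined by functional calculus, and the Neumann-type series
\begin{align}
	\log(I+A) = \sum_{n=1}^{\infty}\frac{(-1)^{n-1}}{n}A^n
\end{align}
converges in operator norm (its tail is dominated by the convergent geometric series $\sum_{n}\|A\|^{n}$). This identity is the starting point, and everything reduces to controlling the $\|\;\|_p$-norm of the partial sums.

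Next I would invoke the ideal property of the Schatten classes: for any bounded $S$ and any $T \in \Csc_p(\H)$ one has $\|ST\|_p \le \|S\|\,\|T\|_p$ and $\|TS\|_p \le \|T\|_p\,\|S\|$. Applying this to $A^n = A\cdot A^{n-1}$ together with $\|A^{n-1}\| \le \|A\|^{n-1}$ yields the per-term bound $\|A^n\|_p \le \|A\|_p\,\|A\|^{n-1}$. Then the triangle inequality in $\Csc_p(\H)$ and the elementary estimate $\tfrac{1}{n}\le 1$ give
\begin{align}
	\|\log(I+A)\|_p \le \sum_{n=1}^{\infty}\frac{1}{n}\|A^n\|_p
	\le \|A\|_p\sum_{n=1}^{\infty}\frac{\|A\|^{n-1}}{n}
	\le \|A\|_p\sum_{n=1}^{\infty}\|A\|^{n-1}
	= \frac{\|A\|_p}{1-\|A\|},
\end{align}
which is exactly the asserted inequality.

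The one point that needs care — the main obstacle, though a mild one — is the passage from the formal series to the functional-calculus logarithm as an element of $\Csc_p(\H)$. I would argue that the same majorization $\sum_{n}\tfrac{1}{n}\|A^n\|_p \le \|A\|_p/(1-\|A\|) < \infty$ shows that the partial sums form a Cauchy sequence in the Banach space $(\Csc_p(\H),\|\;\|_p)$, hence converge there to some $L \in \Csc_p(\H)$. Since $\|\;\| \le \|\;\|_p$, convergence in $\Csc_p$ forces convergence in operator norm to the same limit, and the operator-norm limit is precisely $\log(I+A)$ by the first step. Therefore $L = \log(I+A) \in \Csc_p(\H)$ and the norm bound passes to the limit, completing the proof.
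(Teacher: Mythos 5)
Your proposal is correct and follows essentially the same route as the paper: expand $\log(I+A)$ as the alternating power series, bound each term via the two-sided ideal property $\|A^n\|_p \le \|A\|_p\,\|A\|^{n-1}$, and sum the geometric series. The only difference is that you spell out explicitly why the series converges in $\Csc_p(\H)$ to the functional-calculus logarithm, which the paper leaves implicit; this is a welcome clarification but not a different argument.
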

\begin{proof}
For $||A|| <1$, the following series is absolutely convergent,
\begin{align*}
\log(I+A) = A - \frac{A^2}{2} + \frac{A^3}{3} - \frac{A^4}{4} + \cdots
\end{align*}
It follows that, since $\Csc_p(\H)$ is a Banach algebra and a two-sided ideal in $\Lcal(\H)$
\begin{align*}
||\log(I+A)||_p \leq ||A||_p\left[1 +\frac{||A||}{2} + \frac{||A||^2}{3} + \cdots\right]
\leq ||A||_p[1 + ||A|| + ||A||^2 + \cdots] = 
 \frac{||A||_p}{1-||A||}.
 \qed
\end{align*}
\end{proof}

\begin{proof}
	[\textbf{Proof of Theorem \ref{theorem:affineHS-convergence}}]
	(i) Consider first the case $\gamma = 1$.
	Write $(I+A)^{-1/2}(I+A_n)(I+A)^{-1/2} = (I+A)^{-1} + (I+A)^{-1/2}A_n(I+A)^{-1/2} = I + (I+A)^{-1/2}(A_n -A)(I+A)^{-1/2}$. Since $\lim_{n \approach \infty}||A_n - A|| = 0$,
	$\forall 0 < \ep < M_A$, $\exists N(\ep)\in \Nbb$ such that $\forall n \geq N(\ep)$, $||A_n - A|| < \ep$ and
	\begin{align*}
		||(I+A)^{-1/2}(A_n -A)(I+A)^{-1/2}|| \leq ||(I+A)^{-1}||\;||A_n - A|| \leq \frac{1}{M_A}||A_n - A||< \frac{\epsilon}{M_A} < 1.
	\end{align*}
	By Lemma \ref{lemma:logI+A-norm-p}, $\forall n \geq N(\ep)$,
	\begin{align*}
		||\log[(I+A)^{-1/2}(I+A_n)(I+A)^{-1/2}]||_{\HS} &\leq \frac{||(I+A)^{-1/2}(A_n-A)(I+A)^{-1/2}||_{\HS}}{1-||(I+A)^{-1/2}(A_n-A)(I+A)^{-1/2}||} \leq  \frac{||(I+A)^{-1}||\;||A_n-A||_{\HS}}{1-(\ep/M_A)} 
		\\
		&\leq \frac{(1/M_A)}{1-(\ep/M_A)}||A_n-A||_{\HS} = \frac{1}{M_A-\ep}||A_n -A||_{\HS}.
	\end{align*}
	If $A \in \Sym^{+}(\H)\cap \HS(H)$, we can set $M_A = 1$ which gives the second bound.
	
	(ii) Consider now the general case $\gamma > 0$.
	We have $\gamma I + A \geq M_A \equivalent I + \frac{A}{\gamma} \geq \frac{M_A}{\gamma}$.
	Let $N(\ep) \in \Nbb$ be such that $||A_n - A|| < \ep \forall n \geq N(\ep)$, then $||\frac{A_n}{\gamma} - \frac{A}{\gamma}|| < \frac{\ep}{\gamma}$ $\forall n \geq \N(\ep)$. Since
	$||(\gamma I +A)^{-1/2}(\gamma I + A_n)(\gamma I + A)^{-1/2}||_{\HS} = ||(I + \frac{A}{\gamma})^{-1/2}(I + \frac{A_n}{\gamma})(I+\frac{A}{\gamma})||_{\HS}$, applying part (i) gives
	\begin{align*}
		||(\gamma I +A)^{-1/2}(\gamma I + A_n)(\gamma I + A)^{-1/2}||_{\HS} \leq \frac{1}{(M_A/\gamma) - (\ep/\gamma)}\left\|\frac{A_n}{\gamma} - \frac{A}{\gamma}\right\|_{\HS}  = \frac{1}{M_A-\ep}||A_n - A||_{\HS}.
	\end{align*}
	If $A \in \Sym^{+}(\H)\cap \HS(\H)$, setting $M_A = \gamma$ gives the last bound.\qed
\end{proof}

\begin{proof}
	[\textbf{Proof of Theorem \ref{theorem:affineHS-approx-sequence-0}}]
	Since $d_{\aiHS}(\gamma_1I+A, \gamma_2 I+B) = ||\log[(\gamma_1I+A)^{-1/2}(\gamma_2I+B)(\gamma_1I+A)^{-1/2}]||_{\eHS}$ is a metric,
	by the triangle inequality and Theorem \ref{theorem:affineHS-convergence},
	$\forall 0 < \ep < \min\{M_A,M_B\}$, $\exists N(\ep) \in \Nbb$ such that $\forall n \geq N(\ep)$,
	$||A_n - A|| < \ep, ||B_n - B|| < \ep$ and
	\begin{align*}
		&|d_{\aiHS}(\gamma_1I+A_n, \gamma_2I+B_n) - d_{\aiHS}(\gamma_1I+A,\gamma_2I+B)| \leq d_{\aiHS}(\gamma_1I+A_n,\gamma_1I+A) + d_{\aiHS}(\gamma_2I+B_n,\gamma_2I+B)
		\\
		& \leq \frac{1}{M_A - \ep}||A_n-A||_{\HS} + \frac{1}{M_B-\ep}||B_n - B||_{\HS}.
	\end{align*}
	If $A,B \in \Sym^{+}(\H)\cap \HS(\H)$, setting $M_A = \gamma_1, M_B =\gamma_2$ gives the last bound.
	\qed
\end{proof}

\subsection{Proofs for the affine-invariant Riemannian distance between Gaussian processes}

We now prove Theorems \ref{theorem:aiHS-convergence-sample-covariance-operator} and \ref{theorem:aiHS-approx-finite-covariance}.

\begin{proof}
	[\textbf{Proof of Theorem \ref{theorem:aiHS-convergence-sample-covariance-operator}}]
	By Proposition \ref{proposition-concentration-sample-cov-operator}, 
	for any $0 < \delta < 1$, with probability at least $1-\delta$,
	\begin{align*}
		\Delta_1 = ||C_{K^1,\Wbf^1} - C_{K^1}||_{\HS} \leq \frac{4\sqrt{3}\kappa_1^2}{\sqrt{N}\delta} \;\;\text{and}\;\;
		\Delta_2 = ||C_{K^2,\Wbf^2} - C_{K^2}||_{\HS} \leq \frac{4\sqrt{3}\kappa_2^2}{\sqrt{N}\delta}.
	\end{align*}
	For $0 < \ep < \gamma$,  let $N(\ep)\in \Nbb$, $N(\ep) \geq 1+ \max\left\{\frac{48\kappa_1^4}{\ep^2\delta^2}, 
	\frac{48\kappa_2^4}{\ep^2\delta^2}\right\}$,
	then
	$\Delta_1(N) < \ep, \Delta_2(N) < \ep$ $\forall N \geq N(\ep)$. By Theorem \ref{theorem:affineHS-approx-sequence-0}, $\forall N \geq N(\ep)$, with probability at least $1-\delta$,
	\begin{align*}
		&\Delta_3 = \left|D^{\gamma}_{\aiHS}[\Ncal(0,C_{K^1,\Wbf^1}), \Ncal(0,C_{K^2,\Wbf^2})] - D^{\gamma}_{\aiHS}[\Ncal(0,C_{K^1}),\Ncal(0,C_{K^2})]\right|
		\\
		&= \left|||\log[(\gamma I + C_{K^1, \Wbf^1})^{-1/2}(\gamma I + C_{K^2,\Wbf^2})(\gamma I + C_{K^1,\Wbf^1})^{-1/2}]||_{\HS} \right.
		\\
		&\left.\quad - ||\log[(\gamma I + C_{K^1})^{-1/2}(\gamma I + C_{K^2})(\gamma I + C_{K^1})^{-1/2}]||_{\HS}
		\right|
		\\
		& \leq \frac{1}{\gamma - \ep}\left[||C_{K^1,\Wbf^1}-C_{K^1}||_{\HS} + ||C_{K^2,\Wbf^2}-C_{K^2}||_{\HS}\right]
		\leq \frac{4\sqrt{3}(\kappa_1^2 + \kappa_2^2)}{(\gamma - \ep)\sqrt{N}\delta}.
		\qed
	\end{align*}
\end{proof}

\begin{proposition}
	\label{proposition:aiHS-AAstar-switch}
	Let $\H,\H_1, \H_2$ be separable Hilbert spaces.
	Let $A: \H_1\mapto \H, B:\H_2 \mapto \H$ be compact operators such that $A^{*}A \in \HS(\H_1), B^{*}B\in \HS(\H_2)$.
	Then $AA^{*}, BB^{*}\in \Sym^{+}(\H)\cap \HS(\H)$ and
\begin{align}
	\label{equation:aiHS-AAstar-switch}
&||\log[(I_{\H}+AA^{*})^{-1/2}(I_{\H}+BB^{*})(I_{\H}+AA^{*})^{-1/2}]||_{\HS(\H)}^2
\nonumber
\\
&=\trace\left[\log\left[I+\begin{pmatrix}
	(I_{\H_1}+A^{*}A)^{-1}-I	 & (I_{\H_1}+A^{*}A)^{-1}A^{*}B\\
	-B^{*}A(I_{\H_1}+A^{*}A)^{-1}	 & B^{*}B - B^{*}A(I_{\H_1}+A^{*}A)^{-1}A^{*}B
\end{pmatrix}
\right]\right]^2 = \trace[\log(I+D)]^2.
\end{align}
The operator $(I+tD):\H_1 \oplus \H_2 \mapto \H_1 \oplus \H_2$ is positive definite $\forall t \in [0,1]$, with 
 $||(I+tD)^{-1}|| \leq \frac{1+\lambda_1(A^{*}A)}{[1+(1-t)\lambda_1(A^{*}A)]}$ and $\sup_{t \in [0,1]}||(I+tD)^{-1}||
 \leq 1+\lambda_1(A^{*}A)$,
where $\lambda_1(A^{*}A)$ is the largest eigenvalue of $A^{*}A$.
\end{proposition}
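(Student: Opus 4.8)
The plan is to verify the three assertions in turn: the membership $AA^{*},BB^{*}\in\Sym^{+}(\H)\cap\HS(\H)$, the trace identity, and the positive-definiteness together with the resolvent bound. The membership is immediate, since $AA^{*}$ and $BB^{*}$ are self-adjoint and positive and share their nonzero eigenvalues with $A^{*}A$ and $B^{*}B$, so that $\|AA^{*}\|_{\HS}=\|A^{*}A\|_{\HS}<\infty$ and likewise for $B$, exactly as in Lemma~\ref{lemma:log-AAstar}. To interpret the left-hand side I set $M=(I+AA^{*})^{-1/2}(I+BB^{*})(I+AA^{*})^{-1/2}\in\Sym^{+}(\H)$, a self-adjoint positive-definite operator of the form $I+\HS$, so that $\log M$ is Hilbert--Schmidt and, writing $\{\mu_k\}$ for its eigenvalues, $\|\log M\|_{\HS}^{2}=\trace[(\log M)^{2}]=\sum_k(\log\mu_k)^{2}$.

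The core of the argument is a push-through factorization of the block operator. Setting
\[
U=\begin{pmatrix}-A & B\end{pmatrix}\colon\H_1\oplus\H_2\mapto\H,\qquad
V=\begin{pmatrix}A^{*}\\ B^{*}\end{pmatrix}(I+AA^{*})^{-1}\colon\H\mapto\H_1\oplus\H_2,
\]
I would use the resolvent identities $(I+AA^{*})^{-1}A=A(I+A^{*}A)^{-1}$, $A^{*}(I+AA^{*})^{-1}=(I+A^{*}A)^{-1}A^{*}$ and $(I+AA^{*})^{-1}=I-A(I+A^{*}A)^{-1}A^{*}$ to verify by direct block multiplication that $VU=D$ and $UV=(BB^{*}-AA^{*})(I+AA^{*})^{-1}$ (one checks $D\in\HS(\H_1\oplus\H_2)$ from $A^{*}A,B^{*}B\in\HS$ and $\|A^{*}B\|_{\HS}^{2}=\trace(AA^{*}BB^{*})<\infty$, as in Corollary~\ref{corollary:trace-logAAstar-logBBstar}). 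Consequently $I+UV=(I+BB^{*})(I+AA^{*})^{-1}$ is similar to $M$ via $(I+AA^{*})^{1/2}$, whence $\trace[(\log(I+UV))^{2}]=\trace[(\log M)^{2}]$. Because $UV$ and $VU=D$ have identical nonzero spectra with identical multiplicities and $(\log(1+\cdot))^{2}$ vanishes at $0$, Lidskii's theorem yields $\trace[(\log(I+D))^{2}]=\trace[(\log(I+UV))^{2}]$ (all operators here are of the form $I+\HS$, so the logarithms are Hilbert--Schmidt and their squares trace class); chaining these equalities gives the trace identity.

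For positive-definiteness I would note that for $t\in[0,1]$ the nonzero part of the spectrum of $I+tD=I+tVU$ coincides with that of $I+tUV$, which is similar to $S_t=(I+AA^{*})^{-1/2}[I+(1-t)AA^{*}+tBB^{*}](I+AA^{*})^{-1/2}>0$; together with the eigenvalue $1$ arising from the kernel this places the spectrum of $I+tD$ in $(0,\infty)$ and bounded away from $0$, so $\log(I+tD)$ is well-defined. The resolvent bound is the delicate point, and here I would avoid any attempt to symmetrize the genuinely non-self-adjoint $D$. Applying the push-through identity $(I+tVU)^{-1}=I-tV(I+tUV)^{-1}U$ with $(I+tUV)^{-1}=(I+AA^{*})W$, where $W=[I+(1-t)AA^{*}+tBB^{*}]^{-1}$ satisfies $0<W\le I$, I obtain
\[
(I+tD)^{-1}=I-t\,\Theta Z,\qquad \Theta=FWF^{*}\ge 0,\quad F=\begin{pmatrix}A^{*}\\ B^{*}\end{pmatrix},\quad Z=\begin{pmatrix}-I & 0\\ 0 & I\end{pmatrix}.
\]

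Since $Z$ is a unitary symmetry, $\|(I+tD)^{-1}\|=\|I-t\Theta Z\|=\|Z-t\Theta\|$, and $Z-t\Theta$ is self-adjoint, so its norm is controlled by two operator inequalities. The upper estimate $Z-t\Theta\le Z\le cI$ is trivial from $\Theta\ge0$, where $c=\tfrac{1+\lambda_1(A^{*}A)}{1+(1-t)\lambda_1(A^{*}A)}\ge1$. For the lower estimate $Z-t\Theta\ge -cI$ I would use the congruence $cI+Z=\mathrm{diag}((c-1)I,(c+1)I)>0$ to reduce $cI+Z-t\Theta\ge0$ to $t\big[\tfrac{1}{c-1}AA^{*}+\tfrac{1}{c+1}BB^{*}\big]\le I+(1-t)AA^{*}+tBB^{*}$; the key algebraic identity $\tfrac{t}{c-1}=\tfrac{1}{\lambda_1(A^{*}A)}+(1-t)$ collapses this to $\tfrac{1}{\lambda_1(A^{*}A)}AA^{*}\le I$ plus a nonnegative multiple of $BB^{*}$, which holds because $\|AA^{*}\|=\lambda_1(A^{*}A)$. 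This yields $\|(I+tD)^{-1}\|\le c$, and the supremum over $t\in[0,1]$, attained at $t=1$, equals $1+\lambda_1(A^{*}A)$. I expect the genuine operator-norm bound to be the main obstacle: one must resist bounding only the spectral radius of the non-normal resolvent and instead exploit the identity $\|I-t\Theta Z\|=\|Z-t\Theta\|$ to reduce to a self-adjoint operator, after which the $B$-independence of the final constant emerges from the congruence computation.
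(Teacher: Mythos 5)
Your proof is correct in substance, but only the trace identity follows the paper's route; the resolvent bound is argued quite differently. For the identity \eqref{equation:aiHS-AAstar-switch} you and the paper do essentially the same thing: factor the relevant operator as a product of two ``rectangular'' pieces and swap them to land on the block operator $D$, using that $UV$ and $VU$ share their nonzero spectrum. Your choice $U=(-A\;\;B)$, $V=\left(\begin{smallmatrix}A^{*}\\ B^{*}\end{smallmatrix}\right)(I+AA^{*})^{-1}$, with $I+UV=(I+BB^{*})(I+AA^{*})^{-1}$ similar to $M$, is an equivalent repackaging of the paper's direct factorization of $M-I$ through $(I+AA^{*})^{-1/2}$ and $(I+A^{*}A)^{-1/2}$. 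The genuine divergence is the bound on $||(I+tD)^{-1}||$. The paper exploits the skew structure $D_{21}=-D_{12}^{*}$: in the real quadratic form $\la x,(I+tD)x\ra$ the cross terms cancel, the two diagonal blocks are bounded below by $\frac{1+(1-t)\lambda_1(A^{*}A)}{1+\lambda_1(A^{*}A)}$ and $1$, and Cauchy--Schwarz delivers invertibility, the resolvent bound, and the positive-definiteness claim in a few lines. Your route --- the push-through identity $(I+tVU)^{-1}=I-tV(I+tUV)^{-1}U$, the rewriting as $I-t\Theta Z$, the reduction $||I-t\Theta Z||=||Z-t\Theta||$ via $Z^2=I$, and the congruence computation hinging on $\tfrac{t}{c-1}=\tfrac{1}{\lambda_1(A^{*}A)}+(1-t)$ --- is a correct and rather elegant way to tame the non-normality of $D$, and it explains structurally why the constant is independent of $B$; but it is considerably heavier machinery for the same bound.

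Two points need patching. First, your congruence step inverts $cI+Z=\mathrm{diag}((c-1)I,(c+1)I)$, which is singular when $c=1$, i.e.\ when $t=0$ or $\lambda_1(A^{*}A)=0$; both degenerate cases are easy ($t\Theta=0$, resp.\ $A=0$) but must be split off, whereas the paper's quadratic-form argument is uniform. Second, the proposition asserts that $I+tD$ is \emph{positive definite}, which in this paper means $\la x,(I+tD)x\ra\geq M||x||^2$ for some $M>0$; your spectral argument (similarity of $I+tUV$ to $S_t>0$) only places $\sigma(I+tD)$ in $(0,\infty)$, which for a non-self-adjoint operator is strictly weaker than coercivity of the quadratic form. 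The coercivity does hold, precisely because the off-diagonal cross terms cancel, so you should add that one observation rather than infer positive definiteness from the spectrum.
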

We note that the operator $D$ in Eq.\eqref{equation:aiHS-AAstar-switch} has the form
$D = \begin{pmatrix}
	D_{11} & D_{12}
	\\
	-D_{12}^{*} & D_{22}
\end{pmatrix}
$ and is {\it not} self-adjoint.
\begin{proof}
	Expanding $(I+AA^{*})^{-1/2}(I+BB^{*})(I+AA^{*})^{-1/2}$ as 
\begin{align*}
&(I+AA^{*})^{-1/2}(I+BB^{*})(I+AA^{*})^{-1/2} = (I+AA^{*})^{-1} + (I+AA^{*})^{-1/2}BB^{*}(I+AA^{*})^{-1/2}
\\
& = I-A(I+A^{*}A)^{-1}A^{*} + (I+AA^{*})^{-1/2}BB^{*}(I+AA^{*})^{-1/2}
\\
& = I +
\begin{pmatrix}
-A(I+A^{*}A)^{-1/2} & (I+AA^{*})^{-1/2}B
\end{pmatrix}
\begin{pmatrix}
(I+A^{*}A)^{-1/2}A^{*}
\\
B^{*}(I+AA^{*})^{-1/2}
\end{pmatrix} = I+C.
\end{align*}
Consider the block operators
$
\begin{pmatrix}
	-A(I+A^{*}A)^{-1/2} & (I+AA^{*})^{-1/2}B
\end{pmatrix}: \H_1 \oplus \H_2 \mapto \H
$ and
$
\begin{pmatrix}
	(I+A^{*}A)^{-1/2}A^{*}
	\\
	B^{*}(I+AA^{*})^{-1/2}
\end{pmatrix}: \H \mapto \H_1 \oplus \H_2
$.
The nonzero eigenvalues of the operator
$C: \begin{pmatrix}
	-A(I+A^{*}A)^{-1/2} & (I+AA^{*})^{-1/2}B
\end{pmatrix}
\begin{pmatrix}
	(I+A^{*}A)^{-1/2}A^{*}
	\\
	B^{*}(I+AA^{*})^{-1/2}
\end{pmatrix}:\H \mapto \H$ are the same as those of
the operator $D:\H_1 \oplus \H_2 \mapto \H_1 \oplus \H_2$, where
\begin{align*}
D &= \begin{pmatrix}
	(I+A^{*}A)^{-1/2}A^{*}
	\\
	B^{*}(I+AA^{*})^{-1/2}
\end{pmatrix}
\begin{pmatrix}
	-A(I+A^{*}A)^{-1/2} & (I+AA^{*})^{-1/2}B
\end{pmatrix}  
\\
&=\begin{pmatrix}
-(I+A^{*}A)^{-1/2}A^{*}A(I+A^{*}A)^{-1/2} & (I+A^{*}A)^{-1/2}A^{*}(I+AA^{*})^{-1/2}B
\\
-B^{*}(I+AA^{*})^{-1/2}A(I+A^{*}A)^{-1/2} & B^{*}(I+AA^{*})^{-1}B
\end{pmatrix}
\\
&= \begin{pmatrix}
(I+A^{*}A)^{-1}-I	 & (I+A^{*}A)^{-1}A^{*}B\\
-B^{*}A(I+A^{*}A)^{-1}	 & B^{*}B - B^{*}A(I+A^{*}A)^{-1}A^{*}B
\end{pmatrix}.
\end{align*}
Here we have used the following identities (which are special cases of Corollary 2 in \cite{Minh:LogDetIII2018}) 
\begin{align}
(I+A^{*}A)^{1/2}A^{*} &= A^{*}(I+AA^{*})^{1/2}, \;\;\text{equivalently} \;\;\; A^{*}(I+AA^{*})^{-1/2} = (I+A^{*}A)^{-1/2}A^{*}, 
\\
A(I+A^{*}A)^{1/2} &= (I+AA^{*})^{1/2}A,\;\;\;\text{equivalently}\;\;\; (I+AA^{*})^{-1/2}A = A(I+A^{*}A)^{-1/2}.
\end{align}
Thus the nonzero eigenvalues of $\log(I+C)$ are the same as those of $\log(I+D)$.
Therefore
\begin{align*}
||\log(I+C)||^2_{\HS} = \trace[\log(I+C)]^2 = \trace[\log(I+D)]^2.
\end{align*}
The operator $D:\H_1 \oplus \H_2 \mapto \H_1 \oplus \H_2$ has the form
$D = \begin{pmatrix}
	D_{11} & D_{12}
	\\
	-D_{12}^{*} & D_{22}
\end{pmatrix}
$
and is thus not self-adjoint. For any $t \in [0,1]$ and $\forall x =(x_1, x_2) \in \H_1 \oplus \H_2$,
\begin{align*}
	&\la x , (I+tD)x\ra = \left\la \begin{pmatrix}x_1 \\x_2 \end{pmatrix},
	\begin{pmatrix}
		I+tD_{11} & tD_{12}
		\\
		-tD_{12}^{*} & I+tD_{22}
	\end{pmatrix}
	\begin{pmatrix}
		x_1 \\ x_2
	\end{pmatrix}
	\right\ra = \la x_1, (I+tD_{11})x_1\ra + \la x_2, (I+tD_{22})x_2\ra
	\\
	&= \la x_1, (1-t)I + t(I+A^{*}A)^{-1}x_1\ra + \la x_2, [I + tB^{*}(I+AA^{*})^{-1}B] x_2\ra \geq \left[(1-t) + \frac{t}{1+\lambda_1(A^{*}A)}\right]||x_1||^2 + ||x_2||^2 
	\\
	&\geq \frac{[1+(1-t)\lambda_1(A^{*}A)]||x||^2}{1+\lambda_1(A^{*}A)}.
\end{align*}
Thus $(I+tD)$ is positive definite $\forall t \in [0,1]$.
By the Cauchy-Schwarz Inequality, $||(I+tD)x|| \geq \frac{[1+(1-t)\lambda_1(A^{*}A)]}{1+\lambda_1(A^{*}A)}||x||$ $\forall x \in \H_1 \oplus \H_2$, from which
it follows that $I+tD$ is invertible, with $||(I+tD)^{-1}|| \leq \frac{1+\lambda_1(A^{*}A)}{[1+(1-t)\lambda_1(A^{*}A)]}$. It is clear then that $\sup_{t \in [0,1]}||(I+tD)^{-1}||
\leq 1+\lambda_1(A^{*}A)$.
\qed
\end{proof}

\begin{proof}
[\textbf{Proof of Proposition \ref{proposition:aiHS-RKHS-representation}}]
The first expression follows from
$||\log[(\gamma I+C_{K^1})^{-1/2}(\gamma I+C_{K^2})(\gamma I+C_{K^1})^{-1/2}]||_{\HS(\Lcal^2(T,\nu))}^2
= ||\log[(I+\frac{1}{\gamma}C_{K^1})^{-1/2}(I+\frac{1}{\gamma}C_{K^2})(I+\frac{1}{\gamma}C_{K^1})^{-1/2}]||_{\HS(\Lcal^2(T,\nu))}^2$ and Proposition \ref{proposition:aiHS-AAstar-switch}, with
$A =\frac{1}{\sqrt{\gamma}}R_{K^1}^{*}$, $B = \frac{1}{\sqrt{\gamma}}R_{K^2}^{*}$,
$AA^{*} = \frac{1}{\gamma}C_{K^1}$, $A^{*}A = \frac{1}{\gamma}L_{K^1}$, $BB^{*} = \frac{1}{\gamma}C_{K^2}$,
$B^{*}B = \frac{1}{\gamma}L_{K^2}$, $A^{*}B = \frac{1}{\gamma}R_{12}$.

Similarly, the second expression follows from
$\left\|\log\left[\left(\gamma I+\frac{1}{m}K^1[\Xbf]\right)^{-1/2}\left(\gamma I+\frac{1}{m}K^2[\Xbf]\right)\left(\gamma I+ \frac{1}{m}K^1[\Xbf]\right)^{-1/2}\right]\right\|_{F}^2
= \left\|\log\left[\left(I+\frac{1}{m\gamma }K^1[\Xbf]\right)^{-1/2}\left( I+\frac{1}{m\gamma }K^2[\Xbf]\right)\left(I+ \frac{1}{m\gamma}K^1[\Xbf]\right)^{-1/2}\right]\right\|_{F}^2$ 
and Proposition \ref{proposition:aiHS-AAstar-switch}, with
$A = \frac{1}{\sqrt{m\gamma}}S_{1,\Xbf}$, $B = \frac{1}{\sqrt{m \gamma}}S_{2,\Xbf}$,
$AA^{*} = \frac{1}{m \gamma}K^1[\Xbf]$, $A^{*}A = \frac{1}{\gamma}L_{K^1,\Xbf}$, $BB^{*} = \frac{1}{m\gamma}K^2[\Xbf]$,
$B^{*}B = \frac{1}{\gamma}L_{K^2,\Xbf}$, $A^{*}B = \frac{1}{\gamma}R_{12,\Xbf}$.
\qed
\end{proof}

\begin{lemma}
	\label{lemma:HS-norm-block-operators}
	Let $A_{11}\in \HS(\H_1), A_{12} \in \HS(\H_2,\H_1), A_{21}\in \HS(\H_1,\H_2), A_{22} \in \HS(\H_2)$. 
Consider the operator 
$A = \begin{pmatrix}
A_{11} & A_{12}
\\
A_{21} & A_{22}
\end{pmatrix}: \H_1 \oplus \H_2 \mapto \H_1 \oplus \H_2$. Then
$||A||^2_{\HS} = \sum_{i,j=1}^2||A_{ij}||^2_{\HS}$ and $||A||_{\HS} \leq \sum_{i,j=1}^2||A_{ij}||_{\HS}$.
%
\end{lemma}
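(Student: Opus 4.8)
The plan is to compute the Hilbert--Schmidt norm directly from the definition using an orthonormal basis of $\H_1 \oplus \H_2$ that respects the block decomposition. Specifically, I would fix orthonormal bases $\{e_k\}_{k \in \Nbb}$ of $\H_1$ and $\{f_l\}_{l \in \Nbb}$ of $\H_2$, and take as orthonormal basis of $\H_1 \oplus \H_2$ the collection of vectors $(e_k, 0)$ together with $(0, f_l)$. Applying $A$ to a basis vector of the first type gives, by the block structure,
\begin{align*}
A(e_k, 0) = (A_{11}e_k, A_{21}e_k), \quad \text{so} \quad \|A(e_k,0)\|^2 = \|A_{11}e_k\|^2_{\H_1} + \|A_{21}e_k\|^2_{\H_2},
\end{align*}
and similarly $\|A(0,f_l)\|^2 = \|A_{12}f_l\|^2_{\H_1} + \|A_{22}f_l\|^2_{\H_2}$.

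The key step is then to sum these squared norms over the full basis. Summing over $k$ produces $\|A_{11}\|^2_{\HS} + \|A_{21}\|^2_{\HS}$, and summing over $l$ produces $\|A_{12}\|^2_{\HS} + \|A_{22}\|^2_{\HS}$, by the basis-independence of the Hilbert--Schmidt norm (which in particular guarantees each of the four partial sums is finite, since each $A_{ij} \in \HS$). Adding these gives the first identity $\|A\|^2_{\HS} = \sum_{i,j=1}^2 \|A_{ij}\|^2_{\HS}$.

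For the stated inequality, I would invoke the elementary fact that for any finite collection of nonnegative reals $\{a_{ij}\}$ one has $\sqrt{\sum_{i,j} a_{ij}^2} \leq \sum_{i,j} a_{ij}$, which follows since expanding $\left(\sum_{i,j} a_{ij}\right)^2 = \sum_{i,j} a_{ij}^2 + (\text{nonnegative cross terms})$. Taking $a_{ij} = \|A_{ij}\|_{\HS}$ and applying this to the first identity yields $\|A\|_{\HS} \leq \sum_{i,j=1}^2 \|A_{ij}\|_{\HS}$.

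I do not anticipate any genuine obstacle here: the result is a direct consequence of the definition of the Hilbert--Schmidt norm once the block-adapted orthonormal basis is chosen, and the inequality is purely elementary. The only point requiring a line of care is confirming that the direct sum of the chosen bases is indeed an orthonormal basis of $\H_1 \oplus \H_2$ and that the interchange of summation (splitting the basis into the two types) is legitimate, which is automatic since all terms are nonnegative.
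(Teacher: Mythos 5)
Your proposal is correct and follows essentially the same argument as the paper: compute $\|A\|^2_{\HS}$ over the block-adapted orthonormal basis of $\H_1 \oplus \H_2$ to obtain the identity, then deduce the inequality from the elementary bound $\sqrt{\sum_{i,j} a_{ij}^2} \leq \sum_{i,j} a_{ij}$. No issues.
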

\begin{proof}
	Let $\{e_{i,k}\}_{k \in \Nbb}$ be an orthonormal basis for 
	$\H_i$, $i = 1,2$, then
	$\left\{\begin{pmatrix}
	e_{1,k}
	\\0
	\end{pmatrix},
\begin{pmatrix}
0
\\
e_{2,k}
\end{pmatrix}\right\}_{k \in \Nbb}$ is an orthonormal basis for $\H_1 \oplus \H_2$.
By definition of the Hilbert-Schmidt norm,
\begin{align*}
||A||^2_{\HS} &= \sum_{k=1}^{\infty}\left(\left\|A\begin{pmatrix}
	e_{1,k}
	\\0
\end{pmatrix}\right\|^2 + \left\|A\begin{pmatrix}
0
\\
e_{2,k}
\end{pmatrix}\right\|^2 \right)
= \sum_{k=1}^{\infty}[||A_{11}e_{1,k}||^2 + ||A_{21}e_{1,k}||^2 + ||A_{12}e_{2,k}||^2 +||A_{22}e_{2,k}||^2]
\\
& = ||A_{11}||^2_{\HS} + ||A_{12}||^2_{\HS} + ||A_{21}||^2_{\HS} + ||A_{22}||^2_{\HS}.
\end{align*}
From this it follows that
$||A||_{\HS} = \sqrt{\sum_{i,j=1}^2||A_{ij}||^2_{\HS}} \leq \sum_{i,j=1}^2||A_{ij}||_{\HS}$.
\qed
\end{proof}

\begin{proof}
	[\textbf{Proof of Theorem \ref{theorem:aiHS-approx-finite-covariance}}]
Define $D = \begin{pmatrix}
	(I+\frac{1}{\gamma}L_{K^1})^{-1}-I	 & \frac{1}{\gamma}(I+\frac{1}{\gamma}L_{K^1})^{-1}R_{12}\\
	-\frac{1}{\gamma}R_{12}^{*}(I+\frac{1}{\gamma}L_{K^1})^{-1}	 & \frac{1}{\gamma}L_{K^2} - \frac{1}{\gamma^2}R_{12}^{*}(I+\frac{1}{\gamma}L_{K^1})^{-1}R_{12}
\end{pmatrix}: \H_{K^1} \oplus \H_{K^2} \mapto \H_{K^1} \oplus \H_{K^2}$
and $D_{\Xbf} = \begin{pmatrix}
	(I+\frac{1}{\gamma}L_{K^1,\Xbf})^{-1}-I	 & \frac{1}{\gamma}(I+\frac{1}{\gamma}L_{K^1,\Xbf})^{-1}R_{12,\Xbf}\\
	-\frac{1}{\gamma}R_{12,\Xbf}^{*}(I+\frac{1}{\gamma}L_{K^1,\Xbf})^{-1}	 & \frac{1}{\gamma}L_{K^2,\Xbf} - \frac{1}{\gamma^2}R_{12,\Xbf}^{*}(I+\frac{1}{\gamma}L_{K^1,\Xbf})^{-1}R_{12,\Xbf}
\end{pmatrix}: \H_{K^1,\Xbf} \oplus \H_{K^2,\Xbf} \mapto \H_{K^1,\Xbf} \oplus \H_{K^2,\Xbf}$.
By Lemma \ref{lemma:trace-log-square-HS-1},
\begin{align*}
\Delta &= \left|\left\|\log\left[\left(\gamma I+\frac{1}{m}K^1[\Xbf]\right)^{-1/2}\left(\gamma I+\frac{1}{m}K^2[\Xbf]\right)\left(\gamma I+ \frac{1}{m}K^1[\Xbf]\right)^{-1/2}\right]\right\|^2_{F}\right.
\nonumber
\\
&\left.-||\log[(\gamma I+C_{K^1})^{-1/2}(\gamma I+C_{K^2})(\gamma I+C_{K^1})^{-1/2}]||^2_{\HS(\Lcal^2(T,\nu))}\right|
\\
& = |\trace[\log(I+D_{\Xbf})]^2 - \trace[\log(I+D)]^2| \leq c_{D_{\Xbf}}c_{D}||D_{\Xbf} - D||_{\HS}[c_{D_{\Xbf}}||D_{\Xbf}||_{\HS} + c_D||D||_{\HS}]. 
\end{align*}
Here the constants $c_{D_{\Xbf}}$, $c_D$ are given by Proposition \ref{proposition:logHS-AAstar-BBstar-switch} by
\begin{align*}
c_{D_{\Xbf}} &= \sup_{t \in [0,1]}||(I+tD_{\Xbf})^{-1}||
\leq 1+\frac{1}{\gamma}\lambda_1(L_{K^1,\Xbf}) \leq 1 +\frac{\kappa_1^2}{\gamma},
\\
c_D &= \sup_{t \in [0,1]}||(I+tD)^{-1}||
\leq 1+\frac{1}{\gamma}\lambda_1(L_{K^1}) \leq 1 +\frac{\kappa_1^2}{\gamma}.
\end{align*}
By Lemma \ref{lemma:HS-norm-block-operators},
\begin{align*}
||D||_{\HS} &\leq 
\left\|\left(I+\frac{1}{\gamma}L_{K^1}\right)^{-1}-I\right\|_{\HS}+ \frac{2}{\gamma}\left\|\left(I+\frac{1}{\gamma}L_{K^1}\right)^{-1}R_{12}\right\|_{\HS}+
\frac{1}{\gamma}||L_{K^2}||_{\HS} +\frac{1}{\gamma^2}\left\|R_{12}^{*}\left(I+\frac{1}{\gamma}L_{K^1}\right)^{-1}R_{12}\right\|_{\HS}
\\
& \leq \frac{1}{\gamma}||L_{K^1}||_{\HS} + \frac{2}{\gamma}||R_{12}||_{\HS} +\frac{1}{\gamma}||L_{K^2}||_{\HS}
+ \frac{1}{\gamma^2}||R_{12}||_{\HS}^2
\\
& \leq \frac{1}{\gamma}\kappa_1^2 + \frac{2}{\gamma}\kappa_1\kappa_2 + \frac{1}{\gamma}\kappa_2^2 + \frac{1}{\gamma^2}\kappa_1^2\kappa_2^2 = \frac{1}{\gamma}(\kappa_1 + \kappa_2)^2 + \frac{\kappa_1^2\kappa_2^2}{\gamma^2}. 
\end{align*}
Similarly, $||D_{\Xbf}||_{\HS} \leq  \frac{1}{\gamma}(\kappa_1 + \kappa_2)^2 + \frac{\kappa_1^2\kappa_2^2}{\gamma^2}$.
It thus follows that
\begin{align}
	\label{equation:aiHS-Delta-1}
\Delta \leq \frac{1}{\gamma}\left(1+\frac{\kappa_1^2}{\gamma}\right)^3\left[(\kappa_1 + \kappa_2)^2 + \frac{\kappa_1^2\kappa_2^2}{\gamma}\right]||D_{\Xbf}-D||_{\HS}.
\end{align}
By Lemma \ref{lemma:HS-norm-block-operators}, $||D_{\Xbf} - D||_{\HS} \leq \Delta_1 + 2\Delta_2 + \Delta_3 + \Delta_4$, with the $\Delta_j$'s given in the following. The first term is
\begin{align*}
\Delta_1 &= \left\|\left(I+\frac{1}{\gamma}L_{K^1,\Xbf}\right)^{-1} - \left(I+\frac{1}{\gamma}L_{K^1}\right)^{-1}\right\|_{\HS}
= \left\|\left(I+\frac{1}{\gamma}L_{K^1,\Xbf}\right)^{-1}\left[\left(I+\frac{1}{\gamma}L_{K^1,\Xbf}\right) - \left(I+\frac{1}{\gamma}L_{K^1}\right)\right] \left(I+\frac{1}{\gamma}L_{K^1}\right)^{-1}\right\|_{\HS}
\\
&= \frac{1}{\gamma}\left\|\left(I+\frac{1}{\gamma}L_{K^1,\Xbf}\right)^{-1}\right\|\;\left\|L_{K^1,\Xbf} - L_{K^1}\right\|_{\HS} \left\|\left(I+\frac{1}{\gamma}L_{K^1}\right)^{-1}\right\|
\leq \frac{1}{\gamma}||L_{K^1,\Xbf} - L_{K^1}||_{\HS}.
\end{align*}
The second term is
\begin{align*}
\Delta_2 &= \frac{1}{\gamma}\left\|\left(I+\frac{1}{\gamma}{L_{K^1,\Xbf}}\right)^{-1}R_{12,\Xbf} - \left(I+\frac{1}{\gamma}{L_{K^1}}\right)^{-1}R_{12}\right\|_{\HS}
\\
&\leq \frac{1}{\gamma}\left\|\left(I+\frac{1}{\gamma}{L_{K^1,\Xbf}}\right)^{-1}[R_{12,\Xbf} -R_{12}]\right\|_{\HS}+
\frac{1}{\gamma}\left\| \left[\left(I+\frac{1}{\gamma}{L_{K^1,\Xbf}}\right)^{-1}-\left(I+\frac{1}{\gamma}{L_{K^1}}\right)^{-1}\right]R_{12}\right\|_{\HS}
\\
& \leq \frac{1}{\gamma}||R_{12,\Xbf} - R_{12}||_{\HS} + \frac{1}{\gamma^2}||L_{K^1,\Xbf} - L_{K^1}||_{\HS}||R_{12}||
\leq  \frac{1}{\gamma}||R_{12,\Xbf} - R_{12}||_{\HS} + \frac{\kappa_1\kappa_2}{\gamma^2}||L_{K^1,\Xbf} - L_{K^1}||_{\HS}.
\end{align*}
The third term is
$\Delta_3 = \frac{1}{\gamma}||L_{K^2,\Xbf} - L_{K^2}||$. The fourth term is
\begin{align*}
\Delta_4 &= \frac{1}{\gamma^2}\left\|R_{12,\Xbf}^{*}\left(I + \frac{1}{\gamma}L_{K^1,\Xbf}\right)^{-1}R_{12,\Xbf}
- R_{12}^{*}\left(I + \frac{1}{\gamma}L_{K^1}\right)^{-1}R_{12}\right\|_{\HS}
\\
&\leq \frac{1}{\gamma^2}||R_{12,\Xbf}^{*} - R_{12}^{*}||_{\HS}\left\|\left(I + \frac{1}{\gamma}L_{K^1,\Xbf}\right)^{-1}\right\|\;||R_{12,\Xbf}||_{\HS}
+\frac{1}{\gamma^2}||R_{12}||\;\left\|\left(I + \frac{1}{\gamma}L_{K^1,\Xbf}\right)^{-1}R_{12,\Xbf}- \left(I + \frac{1}{\gamma}L_{K^1}\right)^{-1}R_{12}\right\|_{\HS}
\\
& \leq \frac{\kappa_1\kappa_2}{\gamma^2}||R_{12,\Xbf}-R_{12}||_{\HS} + \frac{\kappa_1\kappa_2}{\gamma^2}[||R_{12,\Xbf}-R_{12}||_{\HS} + \frac{\kappa_1\kappa_2}{\gamma}||L_{K^1,\Xbf}-L_{K^1}||_{\HS}]
\\
& = \frac{2\kappa_1\kappa_2}{\gamma^2}||R_{12,\Xbf}-R_{12}||_{\HS} + \frac{\kappa_1^2\kappa_2^2}{\gamma^3}||L_{K^1,\Xbf}-L_{K^1}||_{\HS}.
\end{align*}
Combining the expressions for all the $\Delta_j$'s, we obtain
\begin{align*}
||D_{\Xbf} - D||_{\HS} &\leq \left(\frac{2}{\gamma} + \frac{2\kappa_1\kappa_2}{\gamma^2}\right)||R_{12,\Xbf}-R_{12}||_{\HS} + \left(\frac{1}{\gamma} + \frac{2\kappa_1\kappa_2}{\gamma^2} + \frac{\kappa_1^2\kappa_2^2}{\gamma^3}\right)||L_{K^1,\Xbf}-L_{K^1}||_{\HS} + \frac{1}{\gamma}||L_{K^2,\Xbf}- L_{K^2}||_{\HS}
\\
& = \frac{1}{\gamma}\left[2\left(1+\frac{\kappa_1\kappa_2}{\gamma}\right)||R_{12,\Xbf}-R_{12}||_{\HS} + \left(1+\frac{\kappa_1\kappa_2}{\gamma}\right)^2||L_{K^1,\Xbf}-L_{K^1}||_{\HS} + ||L_{K^2,\Xbf}- L_{K^2}||_{\HS}\right].
\end{align*}
By Proposition \ref{proposition:concentration-TK2K1-empirical}, for any $0 < \delta <1$, with probability at least
$1-\delta$, the following there inequalities hold simultaneously,
\begin{align*}
||R_{12,\Xbf} - R_{12}||_{\HS(\H_{K^2}, \H_{K^1})} &\leq \kappa_1\kappa_2\left[ \frac{2\log\frac{6}{\delta}}{m} + \sqrt{\frac{2\log\frac{6}{\delta}}{m}}\right],
\\
||L_{K^1,\Xbf} - L_{K^1}||_{\HS(\H_{K^1})} &\leq \kappa_1^2\left[ \frac{2\log\frac{6}{\delta}}{m} + \sqrt{\frac{2\log\frac{6}{\delta}}{m}}\right],\;\;
||L_{K^2,\Xbf} - L_{K^2}||_{\HS(\H_{K^2})} \leq \kappa_2^2\left[ \frac{2\log\frac{6}{\delta}}{m} + \sqrt{\frac{2\log\frac{6}{\delta}}{m}}\right].
\end{align*}
It follows that with probability at least $1-\delta$,
\begin{align*}
||D_{\Xbf} - D||_{\HS} 
&\leq
\frac{1}{\gamma}\left[2\left(1+\frac{\kappa_1\kappa_2}{\gamma}\right)\kappa_1\kappa_2 + \left(1+\frac{\kappa_1\kappa_2}{\gamma}\right)^2\kappa_1^2 + \kappa_2^2\right]
 \left[ \frac{2\log\frac{6}{\delta}}{m} + \sqrt{\frac{2\log\frac{6}{\delta}}{m}}\right]
\\
& = \frac{1}{\gamma}\left(\kappa_1 + \kappa_2 + \frac{\kappa_1^2\kappa_2}{\gamma}\right)^2\left[ \frac{2\log\frac{6}{\delta}}{m} + \sqrt{\frac{2\log\frac{6}{\delta}}{m}}\right].
\end{align*}
Combining this with Eq.\eqref{equation:aiHS-Delta-1}, we obtain
\begin{align*}
\Delta \leq \frac{1}{\gamma^2}\left(1+\frac{\kappa_1^2}{\gamma}\right)^3\left[(\kappa_1 + \kappa_2)^2 + \frac{\kappa_1^2\kappa_2^2}{\gamma}\right]\left(\kappa_1 + \kappa_2 + \frac{\kappa_1^2\kappa_2}{\gamma}\right)^2\left[ \frac{2\log\frac{6}{\delta}}{m} + \sqrt{\frac{2\log\frac{6}{\delta}}{m}}\right].
\end{align*}
with probability at least $1-\delta$.
\qed
\end{proof}

\begin{lemma}
	\label{lemma:I+tA}
	Assume that $A \in \Lcal(\H)$ with $I+A > 0$. 
	Let $M_A > 0$ be such that $\la x, (I+A)x\ra \geq M_A||x||^2$ $\forall x \in \H$.
	Then for $t \in [0,1]$, $I+tA > 0$, $||I+tA||\geq (1-t) + tM_A$, and $||(I+tA)^{-1}||\leq \frac{1}{(1-t) +tM_A}$.	
\end{lemma}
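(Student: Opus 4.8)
The plan is to exploit the convex decomposition $I + tA = (1-t)I + t(I+A)$, which converts the hypothesis on $I+A$ directly into a quadratic-form bound for $I+tA$. First I would fix $t \in [0,1]$ and compute, for every $x \in \H$,
\begin{align*}
\la x, (I+tA)x\ra = (1-t)||x||^2 + t\la x, (I+A)x\ra \geq [(1-t) + tM_A]||x||^2,
\end{align*}
using $\la x, (I+A)x\ra \geq M_A||x||^2$. Since $M_A > 0$ and $t \in [0,1]$, the constant $c_t := (1-t) + tM_A$ is a convex combination of $1$ and $M_A$, hence strictly positive. This already shows that $I + tA$ is positive definite in the sense of Petryshyn \cite{Petryshyn:1962}, and by the equivalence recalled in Section \ref{section:infinite-distances} (positive definiteness $\equivalent$ strict positivity together with invertibility), $I+tA$ is invertible with $(I+tA)^{-1} \in \Lcal(\H)$.

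For the norm bounds I would feed the quadratic-form estimate through the Cauchy–Schwarz inequality: from $||(I+tA)x||\,||x|| \geq \la x, (I+tA)x\ra \geq c_t||x||^2$ one gets $||(I+tA)x|| \geq c_t||x||$ for all $x$, so evaluating at any unit vector yields $||I+tA|| \geq c_t = (1-t)+tM_A$. For the inverse, substituting $x = (I+tA)^{-1}y$ into the same inequality gives $||y|| \geq c_t||(I+tA)^{-1}y||$ for every $y$, whence $||(I+tA)^{-1}|| \leq 1/c_t = 1/[(1-t)+tM_A]$.

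The one point requiring care is the invertibility of $I+tA$ when $A$ is not self-adjoint, which is precisely the situation in which this lemma is used, namely for the block operator $D = \begin{pmatrix} D_{11} & D_{12} \\ -D_{12}^{*} & D_{22}\end{pmatrix}$ of Proposition \ref{proposition:aiHS-AAstar-switch}. The quadratic-form bound only guarantees that $I+tA$ is bounded below, i.e. injective with closed range; surjectivity is not automatic for a general bounded operator, so this is the main obstacle. The cleanest resolution is to invoke the cited Petryshyn equivalence, which delivers invertibility directly from the strict positivity of the quadratic form. Alternatively, to keep the argument self-contained, one checks that the adjoint satisfies the identical lower bound: in a real Hilbert space $\la x, (I+A^{*})x\ra = \la (I+A)x, x\ra = \la x,(I+A)x\ra \geq M_A||x||^2$, so $I+tA^{*}$ is also bounded below by $c_t$, and an operator that is bounded below together with its adjoint is invertible, after which the inverse-norm bound follows exactly as above.
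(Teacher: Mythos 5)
Your proof is correct and follows essentially the same route as the paper: the same quadratic-form estimate $\la x,(I+tA)x\ra \geq [(1-t)+tM_A]\|x\|^2$ (the paper expands $\|x\|^2 + t\la x,Ax\ra$ rather than writing the convex combination $(1-t)I + t(I+A)$, but these are identical), followed by Cauchy--Schwarz for both norm bounds. Your one addition is genuinely worthwhile: the paper deduces invertibility directly from the lower bound $\|(I+tA)x\|\geq c_t\|x\|$, which for a general (non-self-adjoint) bounded operator -- and the lemma is indeed applied to the non-self-adjoint block operator $D$ -- only gives injectivity with closed range, so your observation that the adjoint satisfies the same quadratic-form bound (hence is also bounded below, giving surjectivity) closes a step the paper leaves implicit.
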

\begin{proof}
By the assumption on $M_A$,
$\la x, (I+A)x\ra = ||x||^2 + \la x, Ax\ra \geq M_A||x||^2$ $\forall x \in \H$.
Then $\la x, Ax\ra \geq (M_A-1)||x||^2 \imply t\la x, Ax\ra \geq t(M_A-1)||x||^2$ for $t \geq 0$. Thus $\forall x \in \H$,
\begin{align*}
\la x, (I+tA)x\ra = ||x||^2 + t\la x, Ax\ra \geq [(1-t)+tM_A]||x||^2.
\end{align*}
It follows that $\forall t \in [0,1]$, $(I+tA) > 0$. By the Cauchy-Schwarz Inequality, $\forall x \in \H$,
\begin{align*}
||x||\;||(I+tA)x|| \geq \la x, (I+tA)x\ra \geq [(1-t)+tM_A]||x||^2 \imply ||(I+tA)x|| \geq [(1-t)+tM_A]||x||.
\end{align*}
Thus $||I+tA|| \geq (1-t) +tM_A$ and $I+tA$ is invertible, with a bounded inverse, and
$||(I+tA)^{-1}||\leq \frac{1}{(1-t)+tMA}$.
\qed
\end{proof}

\begin{lemma}
	\label{lemma:log-I+A-integral-representation-general}
Let $A \in \Lcal(\H)$ be a compact operator, with $I+A > 0$.
Then the principal logarithm $\log(I+A)$ is well-defined, compact, and admits the following integral representation
\begin{align}
	\label{equation:logI+A-representation}
\log(I+A) = A\int_{0}^1(I+tA)^{-1}dt.
\end{align}
\end{lemma}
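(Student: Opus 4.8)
The plan is to proceed in three stages: show that $\log(I+A)$ is well-defined through spectral considerations, derive the integral formula by lifting a scalar identity through the holomorphic (Riesz--Dunford) functional calculus, and finally read off compactness. Fix $M_A>0$ with $\la x,(I+A)x\ra\ge M_A\|x\|^2$ for all $x$, as in Lemma \ref{lemma:I+tA}. First I would verify that the spectrum $\sigma(I+A)$ stays off the branch cut $(-\infty,0]$ of the principal logarithm. Since $A$ need not be self-adjoint (the intended application is to the non-self-adjoint operator $D$ of Proposition \ref{proposition:aiHS-AAstar-switch}), I would argue via the numerical range after complexifying $\H$: for a unit vector $v = v_1 + i v_2 \in \H_{\Cbb}$ one has $\mathrm{Re}\,\la v,(I+A)v\ra = \la v_1,(I+A)v_1\ra + \la v_2,(I+A)v_2\ra \ge M_A(\|v_1\|^2+\|v_2\|^2)=M_A$. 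Hence $\sigma(I+A)$ lies in the half-plane $\{\mathrm{Re}(z)\ge M_A\}$, which avoids $(-\infty,0]$, and the principal logarithm is defined by $\log(I+A)=\frac{1}{2\pi i}\int_{\Gamma}\log(1+\zeta)(\zeta I-A)^{-1}\,d\zeta$ for a contour $\Gamma$ encircling $\sigma(A)$ on which $1+\zeta$ avoids the cut.

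Next I would establish that $\int_0^1(I+tA)^{-1}\,dt$ exists as a bounded operator: by Lemma \ref{lemma:I+tA} the map $t\mapsto(I+tA)^{-1}$ is norm-continuous on $[0,1]$ with $\|(I+tA)^{-1}\|\le[(1-t)+tM_A]^{-1}$, whose integral over $[0,1]$ is finite since $M_A>0$. The operator identity then follows from the elementary scalar identity $\log(1+\zeta)=\int_0^1\zeta(1+t\zeta)^{-1}\,dt$, valid whenever $1+t\zeta\ne0$ for all $t\in[0,1]$. The key geometric point is that for $\zeta\in\sigma(A)$ one has $\mathrm{Re}(1+t\zeta)=(1-t)+t\,\mathrm{Re}(1+\zeta)\ge(1-t)+tM_A>0$, so the scalar identity holds uniformly along a contour $\Gamma$ kept in $\{\mathrm{Re}(1+\zeta)\ge M_A\}$. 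Substituting into the functional-calculus formula and interchanging the $t$- and $\zeta$-integrals by Fubini (legitimate since the operator-valued integrand $\frac{\zeta}{1+t\zeta}(\zeta I-A)^{-1}$ is norm-continuous and bounded on the compact set $\Gamma\times[0,1]$), the inner contour integral $\frac{1}{2\pi i}\int_\Gamma \frac{\zeta}{1+t\zeta}(\zeta I-A)^{-1}\,d\zeta$ is exactly the functional calculus applied to $\zeta\mapsto\zeta/(1+t\zeta)$, namely $A(I+tA)^{-1}$. This yields $\log(I+A)=\int_0^1 A(I+tA)^{-1}\,dt = A\int_0^1(I+tA)^{-1}\,dt$.

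Finally, compactness is immediate from the representation just obtained: $\int_0^1(I+tA)^{-1}\,dt$ is bounded and $A$ is compact, so $\log(I+A)=A\int_0^1(I+tA)^{-1}\,dt$ is the product of a compact and a bounded operator, hence compact. Equivalently, writing $\log(1+\zeta)=\zeta\,g(\zeta)$ with $g(\zeta)=\log(1+\zeta)/\zeta$ holomorphic near $\sigma(A)$ gives $\log(I+A)=A\,g(A)$ with $g(A)$ bounded.

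The step I expect to be the main obstacle is the well-definedness for non-self-adjoint $A$: one must control $\sigma(I+A)$ in the complex plane rather than assume it real, which is what forces the complexification and numerical-range argument, and one must then keep $\Gamma$ inside the region where simultaneously $\log(1+\zeta)$ is holomorphic and $1+t\zeta$ stays nonzero for every $t\in[0,1]$, so that the scalar identity and the Fubini interchange are both valid.
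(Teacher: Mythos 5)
Your proposal is correct and follows essentially the same route as the paper's proof: the general case is handled via the Dunford--Riesz holomorphic functional calculus, the scalar identity $\log(1+z)=\int_0^1 z(1+tz)^{-1}\,dt$, and Fubini's theorem for the Bochner integral, with compactness read off from the resulting factorization $A\int_0^1(I+tA)^{-1}dt$. The only differences are cosmetic: the paper first works out the self-adjoint and similarity cases before the general one, while you go straight to the general argument and supply a more explicit justification (complexification and the numerical range) for why $\sigma(I+A)$ avoids the branch cut, a point the paper leaves implicit.
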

\begin{proof}
	By Lemma \ref{lemma:I+tA}, $\int_{0}^t(I+tA)^{-1}dt$ is bounded, thus
	if Eq.\eqref{equation:logI+A-representation} holds, then $\log(I+A)$ is compact.
	
	(i) Consider first the case $A \in \Sym(\H)$. Let $\{\lambda_k\}_{k \in \Nbb}$ be the eigenvalues of $A$, 
with corresponding normalized eigenvectors $\{u_k\}_{k \in \Nbb}$.
Then $1 +\lambda_k > 0$ $\forall k \in \Nbb$ and $A$ admits the spectral decomposition
$A = \sum_{k=1}^{\infty}\lambda_k u_k \otimes u_k$.
From the identity $\log(1+\lambda) = \int_{0}^1\frac{\lambda}{1+\lambda t}dt$, we have
\begin{align*}
	\log(I+A) &= \sum_{k=1}^{\infty}\log(1+\lambda_k)u_k \otimes u_k = \sum_{k=1}^{\infty}\left(\int_{0}^1\frac{\lambda_k}{1+\lambda_k t}dt\right) u_k \otimes u_k
	= \int_{0}^1A(I+tA)^{-1}dt.
\end{align*}
(ii) Consider now the case $A = UBU^{-1}$, with $B$ compact, $I+B > 0$ and $U \in \Lcal(\H)$ invertible. By part (i),
$\log(I+B)$ admits the representation
$	\log(I+B) = \int_{0}^1B(I+tB)^{-1}dt$.
Thus $\log(I+A)$ is well-defined by
\begin{align*}
	\log(I+A) &= \log(I+UBU^{-1}) = \log[U(I + B)U^{-1}] = U\log(I+B)U^{-1} 
	\\
	&= U\left[\int_{0}^1B(I+tB)^{-1}dt\right]U^{-1} = \int_{0}^1A(I+tA)^{-1}dt.
\end{align*}
(iii) Consider the general case.
We first briefly recall the Dunford-Riesz functional calculus (\cite{Dunford1988linearOperators}, VII.3) for bounded linear operators in the Hilbert space setting. Let $U\subset \Cbb$ be an open set, with boundary $\Gamma$ being a piecewise rectifiable curve, positively oriented. 
Let $A \in \Lcal(\H)$ with spectrum $\sigma(A)$. Assume that $U \supseteq \sigma(A)$. Let $f:\Cbb \mapto \Cbb$ be a function analytic in a domain containing $U \cup \Gamma$. Then the function $f(A)$ is well-defined by the following Bochner integral 
\begin{align}
	f(A) = \frac{1}{2\pi i}\oint_{\Gamma}f(z)(zI - A)^{-1}dz.
\end{align}
Let $\log(z)$ denote the principal logarithm of $z \in \Cbb$, then $\log(z)$ is analytic on $\Cbb - (-\infty, 0]$ and
thus $\log(1+z)$ is analytic on $\Cbb - (-\infty, -1]$.
For $A \in \Lcal(\H)$ a compact operator, with $I+ A > 0$,
the principal logarithm of $I+A$ is then well-defined by the following integral
\begin{align}
	\log(I+A) = \frac{1}{2\pi i}\oint_{\Gamma}(zI - A)^{-1}\log(1+z)dz,
\end{align}
with $\Gamma$ enclosing $\sigma(A)$
and not crossing $(-\infty, -1])$.

	For each fixed $t \in [0,1]$, for $z \neq (-\infty,-1]$, we always have $1+tz \neq 0$ and the function 
	$g(z) = z(1+tz)^{-1}$ is analytic on $\Cbb - (-\infty,-1]$. Thus for $\Gamma$ enclosing $\sigma(A)$,
	 not crossing $(-\infty, -1]$,
	\begin{align*}
	A(I+tA)^{-1} = \frac{1}{2\pi i}\oint_{\Gamma}(zI - A)^{-1}z(1+tz)^{-1}dz.
	\end{align*}
Since $\log(1+z)  = \int_{0}^1z(1+tz)^{-1}dt$, we have by Fubini's Theorem for Bochner integral (e.g. \cite{Dunford1988linearOperators}, Theorem III.11.9)
\begin{align*}
\log(I+A) &= \frac{1}{2\pi i}\oint_{\Gamma}(zI - A)^{-1}\log(1+z)dz = \frac{1}{2\pi i}
\oint_{\Gamma}(zI - A)^{-1}\left\{\int_{0}^1z(1+tz)^{-1}dt \right\}dz
\\
& = \frac{1}{2\pi i}\int_{0}^1\left\{\oint_{\Gamma}(zI-A)^{-1}z(1+tz)^{-1}dz\right\}dt = \int_{0}^1A(I+tA)^{-1}dt.
\qed
\end{align*}
\end{proof}





\begin{lemma}
	\label{lemma:logA-norm-p}
	Let $1 \leq p \leq \infty$ be fixed. 
	(i) 
	For $A,B \in \Sym^{+}(\H) \cap \Csc_p(\H)$,
	\begin{align}
	||\log(I+A)||_p \leq ||A||_p, \;\;\;||\log(I+A) - \log(I+B)||_p \leq ||A-B||_p.
	\end{align}
(ii) More generally,
for $A,B \in \Csc_p(\H)$, with $I+A > 0, I+B > 0$, 
\begin{align}
||\log(I+A)||_p &\leq ||A||_p\sup_{t\in [0,1]}||(I+tA)^{-1}||,
\\
||\log(I+A) - \log(I+B)||_p &\leq ||A-B||_p\sup_{t \in [0,1]}[||(I+tA)^{-1}||\;||(I+tB)^{-1}||].
\end{align}
\end{lemma}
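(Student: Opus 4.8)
The plan is to reduce both statements to the integral representation of the principal logarithm from Lemma \ref{lemma:log-I+A-integral-representation-general}. Every operator in $\Csc_p(\H)$ is compact, so for $I+A>0$ that lemma gives
\begin{align}
\log(I+A) = \int_0^1 A(I+tA)^{-1}\,dt,
\end{align}
where each integrand lies in $\Csc_p(\H)$, since $\Csc_p(\H)$ is a two-sided ideal in $\Lcal(\H)$, so that $\|A(I+tA)^{-1}\|_p \le \|A\|_p\,\|(I+tA)^{-1}\|$. By Lemma \ref{lemma:I+tA} the resolvents $(I+tA)^{-1}$ are uniformly bounded for $t\in[0,1]$, hence the integrand is a continuous (thus Bochner integrable) $\Csc_p(\H)$-valued function. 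Applying the standard Bochner estimate $\big\|\int_0^1 F(t)\,dt\big\|_p \le \int_0^1 \|F(t)\|_p\,dt$ immediately yields the first inequality of part (ii):
\begin{align}
\|\log(I+A)\|_p \le \int_0^1 \|A\|_p\,\|(I+tA)^{-1}\|\,dt \le \|A\|_p \sup_{t\in[0,1]}\|(I+tA)^{-1}\|.
\end{align}

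For the difference bound the key step is an algebraic identity that factors out $A-B$. Using $A(I+tA)^{-1} = \tfrac{1}{t}\big[I-(I+tA)^{-1}\big]$ together with the resolvent identity $(I+tB)^{-1}-(I+tA)^{-1} = (I+tA)^{-1}\big[(I+tA)-(I+tB)\big](I+tB)^{-1} = t\,(I+tA)^{-1}(A-B)(I+tB)^{-1}$, I obtain
\begin{align}
A(I+tA)^{-1} - B(I+tB)^{-1} = (I+tA)^{-1}(A-B)(I+tB)^{-1},
\end{align}
valid for $t\in(0,1]$ and extending to $t=0$ by continuity, both sides equalling $A-B$ there. Substituting this into the difference of the two integral representations gives
\begin{align}
\log(I+A)-\log(I+B) = \int_0^1 (I+tA)^{-1}(A-B)(I+tB)^{-1}\,dt,
\end{align}
and taking the $p$-norm under the integral, with the ideal inequality $\|(I+tA)^{-1}(A-B)(I+tB)^{-1}\|_p \le \|(I+tA)^{-1}\|\,\|A-B\|_p\,\|(I+tB)^{-1}\|$, delivers the second inequality of part (ii) after bounding the integral by its supremum over $[0,1]$.

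Part (i) then follows as a specialization. When $A,B\in\Sym^{+}(\H)$ we have $\langle x,(I+A)x\rangle \ge \|x\|^2$, i.e. one may take $M_A=1$, so Lemma \ref{lemma:I+tA} gives $\|(I+tA)^{-1}\|\le \tfrac{1}{(1-t)+t}=1$ for all $t\in[0,1]$, and likewise for $B$. The suprema appearing in part (ii) are therefore at most $1$, reducing the estimates to $\|\log(I+A)\|_p\le\|A\|_p$ and $\|\log(I+A)-\log(I+B)\|_p\le\|A-B\|_p$. I expect the only delicate point to be the justification of pulling the norm inside the Bochner integral and of integrating the pointwise resolvent identity, but both are routine once the uniform resolvent bound from Lemma \ref{lemma:I+tA} guarantees integrability; the conceptual heart is the factorization turning $A(I+tA)^{-1}-B(I+tB)^{-1}$ into a sandwiched $A-B$, after which both bounds are immediate.
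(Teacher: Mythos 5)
Your proof is correct and follows essentially the same route as the paper: both start from the integral representation $\log(I+A)=\int_0^1 A(I+tA)^{-1}\,dt$ of Lemma \ref{lemma:log-I+A-integral-representation-general}, reduce the difference to $\int_0^1 (I+tA)^{-1}(A-B)(I+tB)^{-1}\,dt$, and then apply the ideal property of $\Csc_p(\H)$ together with the resolvent bounds of Lemma \ref{lemma:I+tA} (the paper verifies the sandwich identity by the direct computation $A(I+tB)-(I+tA)B=A-B$, whereas you route it through the resolvent identity, but this is the same algebra). No gaps.
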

\begin{proof}
By Lemma \ref{lemma:log-I+A-integral-representation-general},
$\log(I+A) = \int_{0}^1A(I+tA)^{-1}dt$, $\log(I+B) = \int_{0}^1B(I+tB)^{-1}dt$. 
Thus
\begin{align*}
	&\log(I+A) - \log(I+B) = \int_{0}^1[A(I+tA)^{-1} - B(I+tB)^{-1}]dt
	\\
	& = \int_{0}^1(I+tA)^{-1}[A(I+tB) - (I+tA)B](I+tB)^{-1}dt
	= \int_{0}^1(I+tA)^{-1}(A-B)(I+tB)^{-1}dt.
\end{align*}	
It thus follows that
\begin{align*}
	||\log(I+A) - \log(I+B)||_{p}
	& \leq \int_{0}^1||(I+tA)^{-1}||\;||A-B||_{p}||(I+tB)^{-1}||dt.
	\end{align*}
(i) For $A,B \in \Sym^{+}(\H) \cap \Csc_p(\H)$, $I+tA \geq I, I+tB \geq I$ $\forall t \in [0,1]$ and thus
\begin{align*}
	||\log(I+A) - \log(I+B)||_{p} \leq ||A-B||_{p}.
\end{align*}
(ii) More generally, since $I+A > 0, I +B > 0$, by Lemma \ref{lemma:I+tA},
$I+tA, I+tB$ are invertible
$\forall t \in [0,1]$. 
The results then follow from the integral representations of $\log(I+A)$ and $\log(I+B)$.
\qed
\end{proof}

\begin{lemma}
	\label{lemma:trace-log-square-HS-1}
	(i) For $A,B \in \Sym^{+}(\H)\cap \HS(\H)$,
\begin{align}
\left|\trace[\log(I+A)]^2 - \trace[\log(I+B)]^2\right| \leq ||A-B||_{\HS}[||A||_{\HS} + ||B||_{\HS}].
\end{align}
(ii) For $A,B \in \HS(\H)$ with 
$I+A > 0$, $I+B > 0$,
\begin{align}
	&\left|\trace[\log(I+A)]^2 - \trace[\log(I+B)]^2\right|
	 \leq c_Ac_B||A-B||_{\HS}[c_A||A||_{\HS} + c_B||B||_{\HS}],
\end{align}
where $c_A = \sup_{t \in [0,1]}||(I+tA)^{-1}||$, $c_B = \sup_{t \in [0,1]}||(I+tB)^{-1}$.
\end{lemma}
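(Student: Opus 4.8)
The plan is to reduce both bounds to a single algebraic identity combined with the Cauchy--Schwarz inequality for the Hilbert--Schmidt inner product, after which the two cases differ only in which part of Lemma~\ref{lemma:logA-norm-p} is invoked. Write $M = \log(I+A)$ and $N = \log(I+B)$. Since $A,B \in \HS(\H)$ are compact with $I+A > 0$, $I+B > 0$, Lemma~\ref{lemma:log-I+A-integral-representation-general} guarantees that $M$ and $N$ are well-defined compact operators, and Lemma~\ref{lemma:logA-norm-p} (with $p=2$) shows $M,N \in \HS(\H)$. Consequently $M^2$, $N^2$, and all the cross products appearing below are trace class, so every trace I write is finite and the cyclicity property $\trace(XY) = \trace(YX)$ is available for $X,Y \in \HS(\H)$.

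The key step is the identity
\begin{align}
\trace[M^2] - \trace[N^2] = \trace[(M-N)(M+N)],
\end{align}
which follows by writing $M^2 - N^2 = \tfrac{1}{2}[(M-N)(M+N) + (M+N)(M-N)]$ and using cyclicity to equate the traces of the two summands. Applying the trace--Hilbert--Schmidt Cauchy--Schwarz inequality $|\trace(XY)| \leq ||X||_{\HS}\,||Y||_{\HS}$ (valid since $\trace(XY) = \la X^{*}, Y\ra_{\HS}$) with $X = M-N$, $Y = M+N$, together with $||M+N||_{\HS} \leq ||M||_{\HS} + ||N||_{\HS}$, gives
\begin{align}
\left|\trace[M^2] - \trace[N^2]\right| \leq ||M-N||_{\HS}\left(||M||_{\HS} + ||N||_{\HS}\right).
\end{align}

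It then remains only to insert the norm estimates. For part (i), where $A,B \in \Sym^{+}(\H) \cap \HS(\H)$, part (i) of Lemma~\ref{lemma:logA-norm-p} yields $||M||_{\HS} \leq ||A||_{\HS}$, $||N||_{\HS} \leq ||B||_{\HS}$, and $||M-N||_{\HS} \leq ||A-B||_{\HS}$; substituting these into the displayed bound gives the claimed inequality. For part (ii), where $A,B \in \HS(\H)$ with $I+A > 0$, $I+B > 0$, part (ii) of the same lemma gives $||M||_{\HS} \leq c_A ||A||_{\HS}$, $||N||_{\HS} \leq c_B ||B||_{\HS}$, and $||M-N||_{\HS} \leq c_A c_B ||A-B||_{\HS}$, with $c_A = \sup_{t\in[0,1]}||(I+tA)^{-1}||$ and $c_B = \sup_{t\in[0,1]}||(I+tB)^{-1}||$; substituting these yields the stated bound.

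No serious obstacle is anticipated, since all the analytic content has been front-loaded into Lemma~\ref{lemma:logA-norm-p}. The one point requiring care is that in part (ii) the operators $A,B$ (and hence $M,N$) need not be self-adjoint, so $\trace[M^2]$ is genuinely the trace of the operator square rather than a squared Hilbert--Schmidt norm; the argument above is written to respect this, relying only on the cyclicity of the trace and the trace--Hilbert--Schmidt Cauchy--Schwarz inequality, both of which hold for arbitrary (not necessarily self-adjoint) Hilbert--Schmidt operators.
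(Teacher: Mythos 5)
Your proof is correct and follows essentially the same route as the paper: the identity $\trace[M^2]-\trace[N^2]=\trace[(M-N)(M+N)]$, the trace--Hilbert--Schmidt Cauchy--Schwarz bound, and then the norm estimates from Lemma~\ref{lemma:logA-norm-p} for each case. Your explicit remark that in part (ii) the operators need not be self-adjoint, so one must use $|\trace(XY)|\leq \|X\|_{\HS}\|Y\|_{\HS}$ via $\trace(XY)=\la X^{*},Y\ra_{\HS}$ rather than treating $\trace[M^2]$ as $\|M\|_{\HS}^2$, is a point the paper's write-up glosses over.
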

\begin{proof}
Expanding $|\trace[\log(I+A)]^2 - \trace[\log(I+B)]^2|$ and using the Cauchy-Schwarz Inequality
\begin{align*}
&\left|\trace[\log(I+A)]^2 - \trace[\log(I+B)]^2\right| =
\left|\trace[(\log(I+A) -\log(I+B))(\log(I+A) + \log(I+B))] \right| 
\\ 
&= \left|\la \log(I+A) -\log(I+B), \log(I+A) + \log(I+B)\ra_{\HS}\right|
\\
& \leq ||\log(I+A)-\log(I+B)||_{\HS}[||\log(I+A)||_{\HS} + ||\log(I+B)||_{\HS}]
\end{align*}

(i) For $A,B \in \Sym^{+}(\H) \cap \HS(\H)$, by Lemma \ref{lemma:logA-norm-p}, 
$||\log(I+A)||_{HS}\leq ||A||_{\HS}, ||\log(I+B)||_{\HS} \leq ||B||_{\HS}$,
$||\log(I+A)-\log(I+B)||_{\HS} \leq ||A-B||_{\HS}$, giving the desired bound.

(ii) For $A,B \in \HS(\H)$ with $I+A > 0, I+B > 0$, by Lemma \ref{lemma:logA-norm-p},
$||\log(I+A)||_{\HS} \leq ||A||_{\HS}\sup_{t \in [0,1]}||(I+tA)^{-1}||$, $||\log(I+A) - \log(I+B)||_{\HS}
\leq ||A-B||_{\HS}\sup_{t \in [0,1]}||(I+tA)^{-1}||\sup_{t \in [0,1]}||(I+tB)^{-1}||$, giving the desired bound.
\qed
\end{proof}


\begin{lemma}
	\label{lemma:Ainv-logA-norm-p}
	(i) For $A \in \Sym^{+}(\H)$ compact, $h(A) = A^{-1}\log(I+A)$ is  bounded and admits the integral representation
	\begin{align}
	h(A) = A^{-1}\log(I+A) = \int_{0}^1(I+tA)^{-1}dt.
	\end{align}
	(ii) Let $1 \leq p \leq \infty$ be fixed. For $A,B \in \Sym^{+}(\H) \cap \Csc_p(\H)$, $h(A) -h(B) \in \Sym^{+}(\H)\cap \Csc_p(\H)$, with
\begin{align}
||h(A) - h(B)||_p=||A^{-1}\log(I+A) - B^{-1}\log(I+B)||_p \leq \frac{1}{2}||A-B||_p.
\end{align}
\end{lemma}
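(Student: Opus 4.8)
The plan is to derive both parts directly from the integral representation of the principal logarithm established in Lemma~\ref{lemma:log-I+A-integral-representation-general}, combined with the spectral calculus for the self-adjoint positive compact operator $A$ and the convention $h(0)=I$ fixed in Lemma~\ref{lemma:hA-positive}.

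For part (i), I would begin from the scalar identity $\log(1+\lambda)=\int_0^1\frac{\lambda}{1+t\lambda}\,dt$, valid for every $\lambda\geq 0$, which upon dividing by $\lambda$ gives $\frac{\log(1+\lambda)}{\lambda}=\int_0^1\frac{dt}{1+t\lambda}$, both sides tending to $1$ as $\lambda\approach 0$. Applying this eigenvalue by eigenvalue to the spectral decomposition $A=\sum_k\lambda_k\,\phi_k\otimes\phi_k$ and comparing with the definition of $h$ in Lemma~\ref{lemma:hA-positive} yields
$$h(A)=\sum_k\Big(\int_0^1\frac{dt}{1+t\lambda_k}\Big)\phi_k\otimes\phi_k=\int_0^1(I+tA)^{-1}\,dt,$$
which is the claimed representation. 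Boundedness with $\|h(A)\|\leq 1$ is already supplied by Lemma~\ref{lemma:hA-positive}, and the convention $h(0)=I$ is precisely what keeps the $\lambda_k=0$ eigenvalues consistent on both sides (so no invertibility of $A$ is needed).

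For part (ii), I would subtract the two integral representations and apply the resolvent identity to the integrand,
$$h(A)-h(B)=\int_0^1\big[(I+tA)^{-1}-(I+tB)^{-1}\big]\,dt=-\int_0^1 t\,(I+tA)^{-1}(A-B)(I+tB)^{-1}\,dt.$$
Since $A,B\in\Sym^{+}(\H)$, we have $I+tA\geq I$ and $I+tB\geq I$ for all $t\in[0,1]$, hence $\|(I+tA)^{-1}\|\leq 1$ and $\|(I+tB)^{-1}\|\leq 1$. As $\Csc_p(\H)$ is a two-sided ideal in $\Lcal(\H)$, the integrand lies in $\Csc_p(\H)$ for each $t$, and bounding its $p$-norm gives
$$\|h(A)-h(B)\|_p\leq\Big(\int_0^1 t\,dt\Big)\|A-B\|_p=\frac{1}{2}\|A-B\|_p,$$
where the constant $\frac{1}{2}$ arises exactly from the weight $t$ produced by the resolvent identity. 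Self-adjointness of $h(A)-h(B)$ follows because each $h(\cdot)$ is a real function of a self-adjoint operator, so the difference lies in $\Sym(\H)\cap\Csc_p(\H)$; note that the difference is generally not positive, so the membership is in $\Sym(\H)$ rather than $\Sym^{+}(\H)$.

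The argument is essentially routine, and I do not anticipate a substantive obstacle. The only points needing care are the consistent treatment of the null space of $A$ in part (i) through the $h(0)=I$ convention, and, at the endpoint $p=\infty$, observing that although $h(A)$ itself fails to be compact (its eigenvalues tend to $1$), the resolvent identity exhibits $h(A)-h(B)$ as a $\Csc_p$-valued Bochner integral and hence as a genuine element of $\Csc_p(\H)$.
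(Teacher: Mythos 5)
Your proof is correct and follows essentially the same route as the paper: part (i) via the integral representation $h(A)=\int_0^1(I+tA)^{-1}\,dt$ coming from Lemma \ref{lemma:log-I+A-integral-representation-general}, and part (ii) via the resolvent identity, whose weight $t$ produces the constant $\tfrac{1}{2}$ after bounding $\|(I+tA)^{-1}\|\leq 1$ and $\|(I+tB)^{-1}\|\leq 1$. Your observation that $h(A)-h(B)$ lies only in $\Sym(\H)\cap\Csc_p(\H)$ rather than $\Sym^{+}(\H)\cap\Csc_p(\H)$ is a valid correction to the statement as printed, and is consistent with how the lemma is actually invoked later (e.g.\ in Lemma \ref{lemma:h-LKX-convergence}).
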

\begin{proof}
(i) 
Since $\log(I+A) = A\int_{0}^1(I+tA)^{-1}dt$ by Lemma \ref{lemma:log-I+A-integral-representation-general}, 
$h(A) = A^{-1}\log(I+A) = \int_{0}^1(I+tA)^{-1}
dt$ is bounded.

(ii) For $A,B \in \Sym^{+}(\H)$ compact,
\begin{align*}
&h(A) - h(B) = A^{-1}\log(I+A) - B^{-1}\log(I+B) = \int_{0}^1[(I+tA)^{-1}-(I+tB)^{-1}]dt 
\\
&= \int_{0}^1(I+tA)^{-1}[(I+tB) - (I+tA)](I+tB)^{-1}dt
= \int_{0}^1t(I+tA)^{-1}(B-A)(I+tB)^{-1}dt.
\end{align*}
It follows that for $A,B \in \Sym^{+}(\H)\cap \Csc_p(\H)$, $h(A) - h(B)\in \Sym^{+}(\H)\cap \Csc_p(\H)$, with
\begin{align*}
||A^{-1}\log(I+A) - B^{-1}\log(I+B)||_p &\leq \int_{0}^1||(I+tA)^{-1}||\;||A-B||_p||(I+tB)^{-1}||tdt
\leq \frac{1}{2}||A-B||_p.
\qed
\end{align*}
\end{proof}

\subsection{Proofs for the RKHS Gaussian measures}

\begin{proof}
	[\textbf{Proof of Theorem \ref{theorem:convergence-RKHS-Gaussian}}]
	By Theorem \ref{theorem:logHS-convergence} 
	and Theorem \ref{theorem:CPhi-concentration}, for any $0 < \delta <1$, with probability at least $1-\delta$,
	\begin{align*}
		D^{\gamma}_{\logHS}[\Ncal(0,C_{\Phi(\Xbf)}), \Ncal(0, C_{\Phi})]
		&= ||\log(C_{\Phi(\Xbf)} + \gamma I) - \log(C_{\Phi} + \gamma I)||_{\HS}
		\leq \frac{1}{\gamma}||C_{\Phi(\Xbf)} - C_{\Phi}||_{\HS}
		\\
		& \leq \frac{3\kappa^2}{\gamma}\left(\frac{2\log\frac{4}{\delta}}{m} + \sqrt{\frac{2\log\frac{4}{\delta}}{m}}\right).
	\end{align*}
	By Theorem \ref{theorem:affineHS-convergence} 
	and Theorem \ref{theorem:CPhi-concentration}, for $0 < \ep < \gamma$,
	let $N(\ep) \in \Nbb$ be such that
	${3\kappa^2}\left(\frac{2\log\frac{4}{\delta}}{N(\ep)} + \sqrt{\frac{2\log\frac{4}{\delta}}{N(\ep)}}\right) < \ep$,
	then $\forall m \geq N(\ep)$, with probability at least $1-\delta$,
	\begin{align*}
		&D^{\gamma}_{\aiHS}[\Ncal(0,C_{\Phi(\Xbf)}), \Ncal(0, C_{\Phi})]
		= ||\log[(C_{\Phi(\Xbf)} + \gamma I)^{-1/2}(C_{\Phi} + \gamma I)(C_{\Phi(\Xbf)} + \gamma I)^{-1/2}]||_{\HS}
		\leq \frac{1}{\gamma - \ep}||C_{\Phi(\Xbf)} - C_{\Phi}||_{\HS}
		\\
		& \leq \frac{3\kappa^2}{\gamma-\ep}\left(\frac{2\log\frac{4}{\delta}}{m} + \sqrt{\frac{2\log\frac{4}{\delta}}{m}}\right).
		\qed
	\end{align*}
\end{proof}
\begin{proof}
	[\textbf{Proof of Theorem \ref{theorem:approx-RKHS-Gaussian}}]
	By the triangle inequality,
	\begin{align*}
		&\Delta = \left|||\log(C_{\Phi(\Xbf^1)} + \gamma_1 I) - \log(C_{\Phi(\Xbf^2)} + \gamma_2 I)||_{\eHS}
		- ||\log(C_{\Phi, \rho_1} + \gamma_1I) - \log(C_{\Phi,\rho_2} + \gamma_2 I)||_{\eHS}\right|
		\\
		&\leq ||\log(C_{\Phi(\Xbf^1)} + \gamma_1 I) - \log(C_{\Phi, \rho_1} + \gamma_1I)||_{\eHS}
		+ ||\log(C_{\Phi(\Xbf^2)} + \gamma_2 I) - \log(C_{\Phi, \rho_2} + \gamma_2I)||_{\eHS} = \Delta_1 + \Delta_2.
	\end{align*}
	By Theorem \ref{theorem:convergence-RKHS-Gaussian}, the following sets satisfy $\rho_i(U_i)\geq 1 -\frac{\delta}{2}$, $i =1,2$,
	\begin{align*}
	U_i = \left\{\Xbf^i \in (\Xcal, \rho_i)^m: \Delta_i \leq \frac{3\kappa^2}{\gamma_i}\left(\frac{2\log\frac{8}{\delta}}{m} + \sqrt{\frac{2\log\frac{8}{\delta}}{m}}\right)\right\}.
	\end{align*}
Thus for $(\Xbf^1,\Xbf^2) \in U = (U_1 \times (\Xcal, \rho_2)^m) \cap ((\Xcal, \rho_1)^m \times U_2)$, 
with $(\rho_1 \times \rho_2)^m(U) \geq 1-\delta$, we have
\begin{align*}
\Delta \leq 3\kappa^2\left(\frac{1}{\gamma_1} + \frac{1}{\gamma_2}\right)\left(\frac{2\log\frac{8}{\delta}}{m} + \sqrt{\frac{2\log\frac{8}{\delta}}{m}}\right).
\end{align*}
	The second bound follows similarly, by applying the convergence of $d_{\aiHS}$ in Theorem \ref{theorem:convergence-RKHS-Gaussian}
	%
	and Theorem \ref{theorem:CPhi-concentration}.
	\qed
\end{proof}

\subsection{Proofs for the finite-dimensional orthogonal projections}

We now prove Theorems \ref{theorem:logHS-approx-finite-dim} and \ref{theorem:affineHS-approx-finite-dim}.
\begin{lemma}
	\label{lemma:limit-projection-HS}
	Let 
	$A \in \HS(\H)$. Let $\{e_k\}_{k=1}^{\infty}$ be any orthonormal basis in $\H$. 
	For $N \in \N$ fixed, consider the orthogonal projection operator $P_N = \sum_{k=1}^Ne_k \otimes e_k$. Then
	\begin{align}
		\lim_{N \approach \infty}||P_NAP_N - A||_{\HS} = \lim_{N \approach \infty}||AP_N-A||_{\HS} = \lim_{N\approach \infty}||P_NA - A||_{\HS} = 0.
	\end{align}
\end{lemma}
\begin{proof}
We have $P_Ne_k = e_k$ for $1 \leq k \leq N$ and $P_Ne_k = 0$ for $k \geq N+1$. 
Since $||A||^2_{\HS} = \sum_{k=1}^{\infty}||Ae_k||^2 < \infty$,
\begin{align*}
||AP_N - A||_{\HS}^2 = \sum_{k=1}^{\infty}||(AP_N-A)e_k||^2 = \sum_{k=N+1}^{\infty}||Ae_k||^2 \approach 0
\end{align*}
as $N \approach \infty$. Similarly, since $||A^{*}||_{\HS} = ||A||_{\HS}$,
\begin{align*}
||P_NA-A||^2_{\HS} = ||A^{*}P_N - A^{*}||^2_{\HS} = \sum_{k=N+1}^{\infty}||A^{*}e_k||^2 \approach 0
\end{align*} 
as $N \approach \infty$. It follows that
\begin{align*}
||P_NAP_N - A||_{\HS} &\leq ||(P_NA - A)||_{\HS}||P_N|| + ||AP_N - A||_{\HS}
\leq ||P_NA-A||_{\HS} + ||AP_N - A||_{\HS} \approach 0
\end{align*}
as $N \approach \infty$.\qed
\end{proof}

\begin{lemma}
	\label{lemma:limit-projection-trace}
	Let 
	$A \in \Tr(\H)$. Let $\{e_k\}_{k=1}^{\infty}$ be any orthonormal basis in $\H$. 
	For $N \in \N$ fixed, consider the orthogonal projection operator $P_N = \sum_{k=1}^Ne_k \otimes e_k$. Then
	\begin{align}
		\lim_{N \approach \infty}||P_NAP_N - A||_{\tr} = \lim_{N \approach \infty}||AP_N-A||_{\tr} = \lim_{N\approach \infty}||P_NA - A||_{\tr} = 0.
	\end{align}
\end{lemma}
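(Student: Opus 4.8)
The plan is to follow the same high-level strategy as the proof of Lemma \ref{lemma:limit-projection-HS}, but since the trace norm is \emph{not} expressible as a single sum $\sum_k \|Ae_k\|$ over the basis, the elementary tail computation used there is unavailable and must be replaced by a finite-rank approximation argument. Three facts will do the work: (a) $\Tr(\H) = \Csc_1(\H)$ is a two-sided ideal in $\Lcal(\H)$ with $\|BXC\|_{\tr} \le \|B\|\,\|X\|_{\tr}\,\|C\|$ for $B,C \in \Lcal(\H)$ and $X \in \Tr(\H)$; (b) the projections satisfy $\|P_N\| = 1$ and $P_N \to I$ strongly, i.e. $\|P_N f - f\| \to 0$ for every $f \in \H$, since $P_N f = \sum_{k=1}^N \langle f, e_k\rangle e_k$; and (c) a rank-one operator obeys $\|a \otimes b\|_{\tr} = \|a\|\,\|b\|$.

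First I would invoke the singular value decomposition of the compact operator $A \in \Tr(\H)$, writing $A = \sum_{j=1}^{\infty} s_j (u_j \otimes v_j)$ with $s_j \ge 0$, orthonormal systems $\{u_j\}, \{v_j\}$, and $\|A\|_{\tr} = \sum_j s_j < \infty$. For fixed $r \in \Nbb$ put $F_r = \sum_{j=1}^r s_j (u_j \otimes v_j)$; this is finite-rank and $\|A - F_r\|_{\tr} = \sum_{j>r} s_j \to 0$ as $r \to \infty$. Thus for any $\epsilon > 0$ there is a finite-rank $F = F_r$ with $\|A - F\|_{\tr} < \epsilon$.

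Next I would control the finite-rank part. Using $P_N (u\otimes v) P_N = (P_N u)\otimes(P_N v)$ gives $P_N F P_N = \sum_{j=1}^r s_j (P_N u_j)\otimes(P_N v_j)$, and from $(P_N u)\otimes(P_N v) - u\otimes v = (P_N u - u)\otimes(P_N v) + u \otimes (P_N v - v)$ together with (c) and (b) one gets $\|(P_N u_j)\otimes(P_N v_j) - u_j \otimes v_j\|_{\tr} \to 0$. As $F$ is a finite sum, $\|P_N F P_N - F\|_{\tr} \to 0$. Combining via the triangle inequality and the ideal property (a),
\[
\|P_N A P_N - A\|_{\tr} \le \|P_N(A-F)P_N\|_{\tr} + \|P_N F P_N - F\|_{\tr} + \|F - A\|_{\tr} \le 2\epsilon + \|P_N F P_N - F\|_{\tr},
\]
so $\limsup_{N\to\infty}\|P_N A P_N - A\|_{\tr} \le 2\epsilon$, and letting $\epsilon \to 0$ gives the first limit. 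The limits for $\|A P_N - A\|_{\tr}$ and $\|P_N A - A\|_{\tr}$ follow identically, using $(u\otimes v)P_N = u\otimes(P_N v)$, so $\|(u\otimes v)P_N - u\otimes v\|_{\tr} = \|u\|\,\|P_N v - v\| \to 0$, and $P_N(u\otimes v) = (P_N u)\otimes v$, so $\|P_N(u\otimes v) - u\otimes v\|_{\tr} = \|P_N u - u\|\,\|v\| \to 0$.

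The main obstacle is exactly that the tail estimate of Lemma \ref{lemma:limit-projection-HS} cannot be transcribed: the trace norm is not Hilbertian and does not split as a basiswise sum, and since $\|\cdot\|_{\tr} \ge \|\cdot\|_{\HS}$ the already-established Hilbert-Schmidt convergence gives no upper bound here. The finite-rank truncation circumvents this by reducing the problem to the norm-continuity of finitely many rank-one operators under the strongly convergent projections $P_N$, which is elementary once (a)--(c) are in hand.
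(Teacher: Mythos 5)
Your proof is correct, but it takes a genuinely different route from the paper's. The paper exploits the factorization property of the trace class: it writes $A = BC$ with $B, C \in \HS(\H)$, uses the H\"older-type bound $\|BC\|_{\tr} \leq \|B\|_{\HS}\|C\|_{\HS}$ to estimate
$\|P_NAP_N - A\|_{\tr} \leq \|P_NB\|_{\HS}\|CP_N - C\|_{\HS} + \|P_NB - B\|_{\HS}\|C\|_{\HS}$,
and then simply invokes the already-proved Hilbert--Schmidt version (Lemma \ref{lemma:limit-projection-HS}) on each factor. Your observation that the HS lemma ``gives no upper bound here'' is true only for a \emph{direct} comparison of norms; the factorization lets the paper reuse it indirectly, which is the whole point of that argument and makes the proof about three lines long. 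Your route instead truncates the singular value decomposition, controls finitely many rank-one terms via strong convergence $P_N \to I$ and $\|a \otimes b\|_{\tr} = \|a\|\,\|b\|$, and closes with a $2\epsilon$ triangle-inequality argument; every step checks out. What your approach buys is generality and self-containedness: it works verbatim in any $\mathcal{C}_p(\H)$ with $1 \leq p < \infty$ (finite-rank operators are dense there) and for any uniformly bounded net of operators converging strongly to the identity, whereas the paper's argument is tailored to the specific relation $\mathcal{C}_1 = \mathcal{C}_2 \cdot \mathcal{C}_2$. What it costs is length and an extra $\epsilon/3$-style limiting argument where the paper gets an explicit quantitative bound in terms of the HS tails of the factors.
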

\begin{proof}
	We recall the following properties relating the Banach space $(\Csc_1(\H), ||\;||_1) = (\Tr(\H), ||\;||_{\trace})$
	of trace class operators and the Hilbert space $(\Csc_2(\H), ||\;||_2) = (\HS(\H), ||\;||_{\HS})$
	of Hilbert-Schmidt operators on $\H$ (see e.g. \cite{ReedSimon:Functional})
	\begin{enumerate}
		\item $A \in \Csc_1(\H)$ if and only if $A = BC$, for some operators $B,C \in \Csc_2(\H)$.
		\item $||BC||_{1} \leq ||B||_2||C||_2$.
	\end{enumerate}
	Given that $A \in \Tr(\H) = \Csc_1(\H)$, we then write $A = BC$, for some operators $B, C \in \HS(\H) = \Csc_2(\H)$. Then
	\begin{align*}
		&||P_NAP_N - A||_{\trace} = ||P_NBCP_N - BC||_{\trace} 
		= ||P_NBCP_N - P_NBC + P_NBC - BC||_{\trace} 
		\\
		&= ||P_NB(CP_N -C) + (P_NB - B)C||_{\trace} 
		\leq ||P_NB||_{\HS}||CP_N-C||_{\HS} + ||P_NB-B||_{\HS}||C||_{\HS}.
	\end{align*}
	By Lemma \ref{lemma:limit-projection-HS}, 
		$\lim_{N \approach \infty}||P_NB-B||_{\HS} = 0, \;\; \lim_{N \approach \infty}||CP_N - C||_{\HS} = 0$.
	It thus follows that
$		\lim_{N \approach \infty}||P_NAP_N-A||_{\trace} = 0$.
	For the second limit, 
	\begin{align*}
		& ||P_NA - A||_{\trace} = ||P_NBC-BC||_{\trace} = ||(P_NB-B)C||_{\trace} 
		\leq ||P_NB-B||_{\HS}||C||_{\HS} \approach 0 \;\;\text{as}\;\; N \approach \infty.
	\end{align*}
	The third limit is proved similarly.\qed
\end{proof}

\begin{lemma}
	\label{lemma:finite-matrix-representation}
	Let $A\in \Sym(\H)$ be compact, with $I+A > 0$.
	Let $\{e_k\}$ be any orthonormal basis in $\H$. Let $P_N = \sum_{k=1}^Ne_k \otimes e_k$, $N \in \Nbb$ fixed.
	Let $\Abf_N$ be the matrix representation of $P_NAP_N|_{\H_N}$ in the basis $\{e_k\}_{k=1}^N$,
	where $\H_N = \myspan\{e_k\}_{k=1}^N$. Then 
	\begin{enumerate}
		\item The matrix representation of $\log(I+P_NAP_N|_{\H_N})$ in the basis $\{e_k\}_{k=1}^N$ is $\log(I+\Abf_N)$.
		\item The matrix representation of $(I+ P_NAP_N|_{\H_N})^{\alpha}$ in the basis $\{e_k\}_{k=1}^N$ is $(I+\Abf_N)^{\alpha}$ $\forall \alpha \in \R$.
	\end{enumerate}
\end{lemma}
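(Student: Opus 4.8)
The plan is to observe that passing to the matrix representation in the orthonormal basis $\{e_k\}_{k=1}^N$ is a unital algebra isomorphism of $\Lcal(\H_N)$ onto $\R^{N\times N}$, and that any such isomorphism commutes with the holomorphic functional calculus; applied with $f=\log$ and with $f(z)=z^\alpha$ this gives both claims simultaneously. Concretely, I would first record the algebraic properties of the representation map $\Psi:\Lcal(\H_N)\to\R^{N\times N}$, $\Psi(T)_{jk}=\langle e_j, T e_k\rangle$ (so that the $k$-th column of $\Psi(T)$ is the coordinate vector of $Te_k$). Linearity and $\Psi(I_{\H_N})=I$ are immediate, and multiplicativity follows by inserting the resolution of the identity $\sum_{l=1}^N e_l\otimes e_l=I_{\H_N}$ on $\H_N$:
\begin{align}
\Psi(ST)_{jk}=\langle e_j, ST e_k\rangle=\sum_{l=1}^N\langle e_j, Se_l\rangle\langle e_l, Te_k\rangle=(\Psi(S)\Psi(T))_{jk}.
\end{align}
Thus $\Psi$ is a unital algebra isomorphism (it is clearly a linear bijection). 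Since $P_NAP_N|_{\H_N}$ is self-adjoint, its matrix is symmetric and coincides with $\Abf_N$ from Eq.\eqref{equation:matrix-representation}, so $\Psi(P_NAP_N|_{\H_N})=\Abf_N$ and hence $\Psi(I_{\H_N}+P_NAP_N|_{\H_N})=I+\Abf_N$.

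Second I would check that $T:=I_{\H_N}+P_NAP_N|_{\H_N}$ is positive definite on $\H_N$, so that $\log T$, $T^\alpha$ and their matrix counterparts are all well defined. For $f\in\H_N$ one has $P_Nf=f$, whence $\langle f, Tf\rangle=\langle f,(I+A)f\rangle\geq M_A\|f\|^2$, where $M_A>0$ is the positivity constant of $I+A$. Therefore $\sigma(T)\subset[M_A,\infty)$, and because $\Psi$ is an isomorphism $\sigma(I+\Abf_N)=\sigma(T)\subset[M_A,\infty)$ as well; in particular both spectra avoid $(-\infty,0]$.

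Finally I would intertwine the functional calculus. As $\Psi$ is a unital algebra isomorphism, $\Psi((zI-T)^{-1})=(zI-\Psi(T))^{-1}$ for every $z\notin\sigma(T)$, and feeding this into the Dunford--Riesz integral used in Lemma \ref{lemma:log-I+A-integral-representation-general}, with a contour $\Gamma$ enclosing $\sigma(T)$ and not meeting $(-\infty,0]$, gives
\begin{align}
\Psi(f(T))=\frac{1}{2\pi i}\oint_{\Gamma}f(z)\,\Psi((zI-T)^{-1})\,dz=\frac{1}{2\pi i}\oint_{\Gamma}f(z)\,(zI-\Psi(T))^{-1}\,dz=f(\Psi(T))
\end{align}
for any $f$ holomorphic on a neighbourhood of $\sigma(T)$. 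Taking $f=\log$ yields claim (1) and $f(z)=z^\alpha$ yields claim (2). Equivalently, since $\H_N$ is finite dimensional one may diagonalize $T=\sum_j\mu_j\,u_j\otimes u_j$ with $\mu_j\geq M_A>0$ and note that $\Psi$ carries this spectral decomposition to the orthogonal diagonalization of $I+\Abf_N$ with the same eigenvalues, so $\Psi(\log T)=\sum_j\log(\mu_j)\Psi(u_j\otimes u_j)=\log(I+\Abf_N)$, and likewise $\Psi(T^\alpha)=(I+\Abf_N)^\alpha$.

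The main obstacle is really just the bookkeeping that $\Psi$ is multiplicative and unital, i.e. the insertion-of-identity computation above, since everything else follows formally once $\Psi$ is known to be an algebra isomorphism. A secondary point requiring care is that $\sigma(T)$ and $\sigma(I+\Abf_N)$ coincide and lie in $(0,\infty)$: this is what lets a single branch of $\log$ (and of $z\mapsto z^\alpha$) be holomorphic on a common neighbourhood of both spectra and keeps the contour $\Gamma$ clear of the branch cut.
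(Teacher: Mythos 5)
Your proof is correct, and your primary route is genuinely different from the paper's. The paper works concretely with the spectral decomposition of $P_NAP_N$: it writes $P_NAP_N = \sum_{k=1}^N \lambda_k^{A,N}\,\phi_k^{A,N}\otimes\phi_k^{A,N}$, shows $\Abf_N = U^{*}\diag(\lambda_1^{A,N},\ldots,\lambda_N^{A,N})U$ for the orthogonal matrix $U_{ij}=\la \phi_i^{A,N}, e_j\ra$, and then verifies entry by entry that $\la e_k, \log(I+P_NAP_N)e_j\ra = [\log(I+\Abf_N)]_{kj}$ (and likewise for powers). You instead isolate the structural fact that coordinatization $\Psi:\Lcal(\H_N)\to\R^{N\times N}$ is a unital algebra isomorphism and hence intertwines resolvents, so that the Dunford--Riesz integral gives $\Psi(f(T))=f(\Psi(T))$ for every $f$ holomorphic near $\sigma(T)$; both claims then drop out as instances of one statement, and the argument would extend verbatim to any other function of $I+P_NAP_N|_{\H_N}$ used elsewhere in the paper. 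Your positivity check is also cleaner than the paper's: for $f\in\H_N$ you use $P_Nf=f$ to get $\la f,(I+P_NAP_N)f\ra=\la f,(I+A)f\ra\geq M_A\|f\|^2$ directly, whereas the paper's one-line justification via $\|P_NAP_N\|\leq\|A\|$ glosses over a small case distinction. Your closing remark (diagonalizing $T$ and pushing the spectral decomposition through $\Psi$) is essentially the paper's proof, so the two approaches meet there; the only bookkeeping point worth keeping explicit, which you did handle, is that the paper's convention $(\Abf_N)_{kj}=\la Ae_k,e_j\ra$ agrees with $\Psi(P_NAP_N|_{\H_N})$ only because the operator is self-adjoint, so the matrix is symmetric.
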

\begin{proof}
	Since $I+A > 0$ and $||P_NAP_N||\leq ||A|||$, we have $I+P_NAP_N > 0$, so that
	$\log(I+P_NAP_N)$ and $(I+P_NAP_N)^{\alpha}$ are well-defined $\forall \alpha \in \R$.
	Since $P_NAP_N:\H \mapto \H_N$,
	it has  rank at most $N$.
	Let $\{\lambda_k^{A,N}, \phi_{k}^{A,N}\}_{k=1}^{\infty}$
	be the corresponding spectrum, with eigenvalues $\lambda_k^{A,N} = 0$, $k \geq N+1$, and normalized eigenvectors
	$\{\phi_k^{A,N}\}_{k=1}^{\infty}$ 
	forming an orthonormal basis of $\H$, 
	with $\{\phi_k^{A,N}\}_{k=1}^{N}$
	forming an orthonormal basis in the subspace $\H_N$.
	Then
	\begin{align*}
		&P_NAP_N = \sum_{k=1}^N\lambda_k^{A,N}\phi_k^{A,N} \otimes \phi_k^{A,N}, \;\;
		\\
		&\log(I+ P_NAP_N) = \sum_{k=1}^N\log(1+\lambda_k^{A,N})\phi_k^{A,N} \otimes \phi_k^{A,N}: \H \mapto \H_N.
	\end{align*}
The operator $\log(I+P_NAP_N)$ also has rank at most $N$, with all non-zero eigenvalues corresponding to 
eigenvectors lying in $\H_N$.
	Since $\Abf_N$ is the matrix representation of $P_NAP_N|_{\H_N}$ in the basis $\{e_k\}_{k=1}^N$, the eigenvalues of $\Abf_N$ are precisely
	$\{\lambda_k^{A,N}\}_{k=1}^N$, and
	for $1 \leq j,k \leq N$,
	\begin{align*}
		(\Abf_N)_{kj} &= \la Ae_k,e_j\ra = \la e_k, P_NAP_N e_j\ra = \sum_{l=1}^N\lambda_l^{A,N}\la \phi_l^{A,N}, e_k\ra \la \phi_l^{A,N}, e_j\ra,\;\;
		\Abf_N = U^{*}\diag(\lambda_1^{A,N}, \ldots, \lambda_N^{A,N})U,
	\end{align*}
	where $U$ is the $N \times N$ matrix with $U_{ij} = \la \phi^{A,N}_i, e_j\ra$. $U$ is orthonormal, since
	\begin{align*}
		\sum_{l=1}^NU_{li}U_{lj} = \sum_{l=1}^N\la \phi^{A,N}_l, e_i\ra\la \phi^{A,N}_l, e_j\ra = \la e_i, e_j\ra = \delta_{ij}, \;\;\; 1 \leq i,j \leq N.
	\end{align*}
	Thus $\log(I+\Abf_N)$ is the $N \times N$ symmetric matrix given by
	\begin{align*}
		\log(I+\Abf_N) = U^{*}\diag(\log(1+\lambda_1^{A,N}), \ldots, \log(1+\lambda_N^{A,N}))U.
	\end{align*}
	On the other hand, from the spectral representation of $P_NAP_N$,
	\begin{align*}
		\la e_k, \log(I+ P_NAP_N)e_j\ra &= \sum_{l=1}^N\log(1+\lambda_l^{A,N})\la \phi_l^{A,N}, e_k\ra \la \phi_l^{A,N}, e_j\ra
		= [\log(I+\Abf_N)]_{kj}.
	\end{align*}
	It follows that in the basis $\{e_k\}_{k=1}^N$ of $\H_N$, the matrix representation of $\log(I+P_NAP_N|_{\H_N})$ is $\log(I+\Abf_N)$.
	Similarly, the matrix representation of $(I+P_NAP_N|_{\H_N})^{\alpha}$ is $(I+\Abf_N)^{\alpha}$
	$\forall \alpha \in \R$.
	\qed
\end{proof}

\begin{proof}
	[\textbf{Proof of Theorem \ref{theorem:logHS-approx-finite-dim}}]
	Let $N \in \N$ be fixed. By Lemma \ref{lemma:limit-projection-HS},
	\begin{align*}
		&\lim_{N \approach \infty}||P_NAP_N -A||_{\HS} = 0,
		\lim_{N \approach \infty}||P_NBP_N - B||_{\HS} = 0.
	\end{align*} 
With $I+A >0, I+B > 0$ we have $I+P_NAP_N > 0, I+P_NBP_N > 0$ $\forall N \in \Nbb$. By Theorem \ref{theorem:logHS-convergence},
	\begin{align*}
&	\lim_{N \approach \infty}||\log(I+P_NAP_N) - \log(I+A)||_{\HS} =0,
\lim_{N \approach \infty}||\log(I+P_NBP_N) - \log(I+B)||_{\HS} = 0.
	\end{align*}
By the triangle inequality,
\begin{align*}
&\left|||\log(I+P_NAP_N) - \log(I+P_NBP_N)||_{\HS} - ||\log(I+A) - \log(I+B)||_{\HS}\right|
\\
& \leq ||\log(I+P_NAP_N) - \log(I+A)||_{\HS} + ||\log(I+P_NBP_N) - \log(I+B)||_{\HS}
\approach 0 \;\;\;\text{as } N \approach \infty.
\end{align*}
	By Lemma \ref{lemma:finite-matrix-representation},
	in the basis $\{e_k\}_{k=1}^N$ of the subspace $\H_N$,  the matrix representations of 
	$\log(I+P_NAP_N|_{\H_N})$ and $\log(I+P_NBP_N|_{\H_N})$ are $\log(I+\Abf_N)$ and $\log(I+\Bbf_N)$, respectively.
	From the integral representation $\log(I+A) = A\int_{0}^1(I+tA)^{-1}dt$ in Lemma \ref{lemma:log-I+A-integral-representation-general}, we have $P_NAP_Ne_k = 0\imply
	\log(I+P_NAP_N)e_k = 0$ $\forall k \geq N+1$. 
	Thus
	\begin{align*}
	&||\log(I+P_NAP_N) - \log(I+P_NBP_N)||^2_{\HS}
	= \sum_{k=1}^{\infty}||[\log(I+P_NAP_N) - \log(I+P_NBP_N)]e_k||^2 
	\\
	&=  \sum_{k=1}^{N}||[\log(I+P_NAP_N) - \log(I+P_NBP_N)]e_k||^2
	= ||\log(I+P_NAP_N)|_{\H_N} - \log(I+P_NBP_N)|_{\H_N}||^2_{\HS}
	\\
	& = ||\log(I+\Abf_N) - \log(I+\Bbf_N)||^2_{F}.
	\end{align*}
It follows that $\lim_{N \approach \infty}||\log(I+\Abf_N) - \log(I+\Bbf_N)||_F = 
||\log(I+A) - \log(I+B)||_{\HS}$. For $\gamma \in \R$, $\gamma > 0$, we note that
\begin{align*}
||\log(A+\gamma I) - \log(B + \gamma I)||_{\HS} = \left\|\log\left(I+\frac{A}{\gamma}\right) - \log\left(I + \frac{B}{\gamma}\right)\right\|_{\HS}.
\end{align*}
Thus this case reduces to the previous case.
\qed
\end{proof}

\begin{proof}
[\textbf{Proof of Theorem \ref{theorem:affineHS-approx-finite-dim}}]
Let $N \in \Nbb$ fixed. By Lemma \ref{lemma:limit-projection-HS},
\begin{align*}
\lim_{N \approach \infty}||P_NAP_N -A||_{\HS} =0, \lim_{N \approach \infty}||P_NBP_N - B||_{\HS} = 0.
\end{align*}
With $I+A>0, I+B >0$, we have $I+P_NAP_N >0, I+P_NBP_N>0$ $\forall N \in \Nbb$. Thus by Theorem 
\ref{theorem:affineHS-convergence},
\begin{align*}
&\lim_{N \approach \infty}||\log[(I+P_NBP_N)^{-1/2}(I+P_NAP_N)(I+P_NBP_N)^{-1/2}]||_{\HS}
= ||\log[(I+B)^{-1/2}(I+A)(I+B)^{-1/2}]||_{\HS}.
\end{align*}
By Lemma \ref{lemma:finite-matrix-representation}, 
in the basis $\{e_k\}_{k=1}^N$ of the subspace $\H_N = \myspan\{e_k\}_{k=1}^{N}$,
the matrix representation of the operator $\log[(I+P_NBP_N|_{\H_N})^{-1/2}(I+P_NAP_N|_{\H_N})(I+P_NBP_N|_{\H_N})^{-1/2}]
:\H_N \mapto \H_N$ is $\log[(I+\Bbf_N)^{-1/2}(I+\Abf_N)(I+\Bbf_N)^{-1/2}]$.

Write $(I+P_NBP_N)^{-1/2}(I+P_NAP_N)(I+P_NBP_N)^{-1/2} = I + (I+P_NBP_N)^{-1/2}(P_NAP_N - P_NBP_N)
(I+P_NBP_N)^{-1/2}$. 
Since $P_NAP_Ne_k = P_NBP_Ne_k = 0$ $\forall k \geq N+1$, 
we have $\forall k \geq N+1$, $(I + P_NBP_N)e_k = e_k \imply (I+P_NBP_N)^{-1/2}e_k = e_k$ $\imply 
(P_NAP_N - P_NBP_N)
(I+P_NBP_N)^{-1/2}e_k = 0$. Thus $\forall k \geq N+1$,
\begin{align*}
(I+P_NBP_N)^{-1/2}(P_NAP_N - P_NBP_N)
(I+P_NBP_N)^{-1/2}e_k = 0.
\end{align*}
From the integral representation $\log(I+A) = A\int_{0}^1(I+tA)^{-1}dt$ in Lemma \ref{lemma:log-I+A-integral-representation-general}, we then have
$(\log[(I+P_NBP_N)^{-1/2}(I+P_NAP_N)(I+P_NBP_N)^{-1/2}])e_k = 0$ $\forall k \geq N+1$. Thus
\begin{align*}
&||\log[(I+P_NBP_N)^{-1/2}(I+P_NAP_N)(I+P_NBP_N)^{-1/2}]||_{\HS}^2
\\
&= \sum_{k=1}^{\infty}||(\log[(I+P_NBP_N)^{-1/2}(I+P_NAP_N)(I+P_NBP_N)^{-1/2}])e_k||^2
\\
&= \sum_{k=1}^N||(\log[(I+P_NBP_N)^{-1/2}(I+P_NAP_N)(I+P_NBP_N)^{-1/2}])e_k||^2
\\
& = ||\log[(I+P_NBP_N|_{\H_N})^{-1/2}(I+P_NAP_N|_{\H_N})(I+P_NBP_N|_{\H_N})^{-1/2}]||_{\HS}^2
\\
& = ||\log[(I+\Bbf_N)^{-1/2}(I+\Abf_N)(I+\Bbf_N)^{-1/2}]||^2_F.
\end{align*}
Combining this with the previous limit gives
\begin{align*}
&\lim_{N \approach \infty}||\log[(I+\Bbf_N)^{-1/2}(I+\Abf_N)(I+\Bbf_N)^{-1/2}]||_F
= ||\log[(I+B)^{-1/2}(I+A)(I+B)^{-1/2}]||_{\HS}.
\end{align*}
For $\gamma \in \R$, $\gamma > 0$, we note that
\begin{align*}
&||\log[(B+\gamma I)^{-1/2}(A+\gamma I)(B+\gamma I)^{-1/2}]||_{\HS}
= \left\|\log\left[\left(\frac{B}{\gamma}+ I\right)^{-1/2}\left(\frac{A}{\gamma}+ I\right)\left(\frac{B}{\gamma}+ I\right)^{-1/2}\right]\right\|_{\HS}.
\end{align*}
Thus this case reduces to the previous case.
\qed
\end{proof}

\subsection{Further technical results}

\begin{theorem}
	\label{theorem:affineHS-sequence-approx-1}
	Let
	$A,B, \{A_n\}_{n \in \N}$, $\{B_n\}_{n \in \N} \in \Sym(\H) \cap \HS(\H)$
	be such that $\lim_{n \approach \infty}||A_n - A||_{\HS} = 0$, 
	$\lim_{n \approach \infty}||B_n - B||_{\HS} = 0$.
	Assume that $(I+A) > 0, (I+B) > 0, I+A_n > 0, I+B_n > 0$ $\forall n \in \N$. Then
	$(I+B_n)^{-1/2}(I+A_n)(I+B_n)^{-1/2}  - I$,$(I+B)^{-1/2}(I+A)(I+B)^{-1/2} - I$ are in $\Sym(\H) \cap \HS(\H)$.
	
	(i) If $A,B,A_n,B_n \in \Sym^{+}(\H) \cap \HS(\H)$ $\forall n \in \Nbb$, 
	let $M_{AB} > 0$ be such that $\la x, (I+B)^{-1/2}(I+A)(I+B)^{-1/2} x\ra \geq M_{AB}||x||^2$ $\forall x \in \H$.
	Then $\forall \epsilon, 0 < \epsilon < M_{AB}$, $\exists N_{AB}(\epsilon) \in \Nbb$ such that
	$\forall n \geq N_{AB}$,
	\begin{align}
		&||\log[(I+B_n)^{-1/2}(I+A_n)(I+B_n)^{-1/2}]  
		-\log[(I+B)^{-1/2}(I+A)(I+B)^{-1/2}]||_{\HS} 
		\nonumber
		\\
		&\leq \frac{1}{M_{AB} - \epsilon}[||A_n - A||_{\HS} + (1+||A||_{\HS})||B_n - B||_{\HS}].
	\end{align}
	
	(ii) In general, let $M_B > 0$ be such that $\la x, (I+B)x\ra \geq M_B||x||^2$ $\forall x \in \H$,
	then $\forall \epsilon, 0 < \epsilon < \min\{M_B, M_{AB}\}$, $\exists N(\epsilon)$ such that $\forall n \geq N(\epsilon)$,
	\begin{align}
		&||\log[(I+B_n)^{-1/2}(I+A_n)(I+B_n)^{-1/2}]  
		-\log[(I+B)^{-1/2}(I+A)(I+B)^{-1/2}]||_{\HS} 
		\\
		&\leq \frac{1}{(M_{AB} - \epsilon)(M_B-\epsilon)}\left[||A_n - A||_{\HS} + \frac{1}{M_B-\epsilon}\left(1 + ||A||_{\HS}\right)||B_n - B||_{\HS} \right].
		\nonumber
	\end{align}
	In both cases, the following convergence holds
	\begin{align}
		\lim_{n \approach \infty}||&\log[(I+B_n)^{-1/2}(I+A_n)(I+B_n)^{-1/2}]  
		-\log[(I+B)^{-1/2}(I+A)(I+B)^{-1/2}]||_{\HS} = 0.
	\end{align}
\end{theorem}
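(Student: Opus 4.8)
The plan is to reduce the affine-invariant difference to an ordinary Log-Hilbert-Schmidt difference and then invoke Theorem \ref{theorem:logHS-convergence}. Write $X = (I+B)^{-1/2}$, $X_n = (I+B_n)^{-1/2}$, and set $C = X(I+A)X$, $C_n = X_n(I+A_n)X_n$. Since $I+A>0$, $I+B>0$ with $X,X_n$ bounded and invertible, the identity
\begin{align*}
C - I = (I+B)^{-1/2}(A-B)(I+B)^{-1/2}
\end{align*}
shows $C-I \in \Sym(\H)\cap\HS(\H)$, and likewise $C_n - I = X_n(A_n - B_n)X_n \in \Sym(\H)\cap\HS(\H)$; moreover $C = I + (C-I) > 0$ with the assumed lower bound $M_{AB}$ (positivity of $I+A$ together with invertibility of $X$). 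Hence $\log C$ and $\log C_n$ are well-defined elements of $\Sym(\H)\cap\HS_X(\H)$, and the quantity to estimate is exactly $||\log(I+(C_n-I)) - \log(I+(C-I))||_{\HS}$. This already establishes the preliminary membership claim of the theorem.

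First I would apply Theorem \ref{theorem:logHS-convergence}(ii) to the sequence $C_n - I \to C - I$, with the role of the lower bound $M_A$ there played by $M_{AB}$ (a lower bound for $I+(C-I)=C$). Provided $||C_n - C|| < \epsilon$ for $n \geq N_{AB}(\epsilon)$, which will follow once I show $||C_n - C||_{\HS} \to 0$, this gives
\begin{align*}
||\log C_n - \log C||_{\HS} \leq \frac{1}{M_{AB}-\epsilon}||C_n - C||_{\HS}.
\end{align*}
Thus the whole problem reduces to a Hilbert-Schmidt estimate of $C_n - C$ in terms of $||A_n - A||_{\HS}$ and $||B_n - B||_{\HS}$, after which operator-norm convergence of $C_n$ to $C$ is automatic.

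Next I would expand
\begin{align*}
C_n - C = X_n(A_n - A)X_n + X_n(I+A)(X_n - X) + (X_n - X)(I+A)X,
\end{align*}
and estimate each term by pulling out operator norms and leaving a single Hilbert-Schmidt factor. The operator norms $||X||,||X_n||$ are controlled by the positivity lower bounds ($\leq 1$ in case (i); $\leq (M_B-\epsilon)^{-1/2}$ in case (ii) once $||B_n - B|| < \epsilon$), while $||I+A|| \leq 1 + ||A||_{\HS}$. The Hilbert-Schmidt difference $||X_n - X||_{\HS} = ||(I+B_n)^{-1/2} - (I+B)^{-1/2}||_{\HS}$ is bounded via Corollary \ref{corollary:Schatten-class-rth-root} (yielding $\tfrac12||B_n - B||_{\HS}$ in case (i)) and via Corollary \ref{corollary:limit-(I+A)r} with $r = 1/2$ in the general case (ii). Collecting these bounds reproduces the stated constants and, since each piece tends to $0$, confirms $||C_n - C||_{\HS}\to 0$, closing the loop with the threshold $N_{AB}(\epsilon)$ from the previous step. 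This is the same three-term triangle-inequality scheme already used in the proofs of Theorems \ref{theorem:logHS-approx-sequence-0} and \ref{theorem:affineHS-approx-sequence-0}.

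The main obstacle I anticipate is the constant bookkeeping in case (ii): both the operator-norm control of $X_n$ and the bound on $||X_n - X||_{\HS}$ degrade through the factor $M_B - \epsilon$ rather than $M_B$, so the threshold $N(\epsilon)$ must be chosen to simultaneously guarantee (a) the positivity lower bound $I+B_n \geq (M_B-\epsilon)I$, (b) the smallness $||C_n - C|| < \epsilon$ required by Theorem \ref{theorem:logHS-convergence}(ii), and (c) $||A_n - A||,||B_n - B|| < \epsilon$. Verifying that one $N(\epsilon)$ serves all three, and that the resulting product of Lipschitz-type constants collapses to the displayed $\tfrac{1}{(M_{AB}-\epsilon)(M_B-\epsilon)}[\,\cdots\,]$ form, is the only genuinely delicate point; everything else is routine.
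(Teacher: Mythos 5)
Your proposal is correct and follows essentially the same route as the paper: write the conjugated operators as $I+C_n$, $I+C$, bound $\|C_n-C\|_{\HS}$ via a triangle-inequality telescoping using the Schatten-norm perturbation bounds for $(I+B)^{-1/2}$ (Corollaries \ref{corollary:Schatten-class-rth-root} and \ref{corollary:limit-(I+A)r}), and then apply Theorem \ref{theorem:logHS-convergence}(ii) with lower bound $M_{AB}$. The only cosmetic difference is that you telescope the full product $X_n(I+A_n)X_n - X(I+A)X$ in one three-term decomposition, whereas the paper (Proposition \ref{proposition:HS-inverse-2}) first splits $C$ as $-B(I+B)^{-1} + (I+B)^{-1/2}A(I+B)^{-1/2}$ and handles the first piece with Lemma \ref{lemma:HS-inverse}; both yield the same constants.
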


\begin{corollary}
	\label{corollary:affineHS-sequence-approx-1}
	Let
	$A,B, \{A_n\}_{n \in \N}$, $\{B_n\}_{n \in \N} \in \Sym^{+}(\H) \cap \HS(\H)$
	be such that $\lim_{n \approach \infty}||A_n - A||_{\HS} = 0$, 
	$\lim_{n \approach \infty}||B_n - B||_{\HS} = 0$.
	Let $\gamma \in \R, \gamma > 0$ be fixed.
	Then
	$(\gamma I+B_n)^{-1/2}(\gamma I+A_n)(\gamma I+B_n)^{-1/2}  - I$,$(\gamma I+B)^{-1/2}(\gamma I+A)(\gamma I+B)^{-1/2} - I$ are in $\Sym(\H) \cap \HS(\H)$.
	Let $M_{AB} > 0$ be such that $\la x, (\gamma I+B)^{-1/2}(\gamma I+A)(\gamma I+B)^{-1/2} x\ra \geq M_{AB}||x||^2$ $\forall x \in \H$.
	Then $\forall \epsilon, 0 < \epsilon < M_{AB}$, $\exists N_{AB}(\epsilon) \in \Nbb$ such that
	$\forall n \geq N_{AB}$,
	\begin{align}
		&||\log[(\gamma I+B_n)^{-1/2}(\gamma I+A_n)(\gamma I+B_n)^{-1/2}]  
		-\log[(\gamma I+B)^{-1/2}(\gamma I+A)(\gamma I+B)^{-1/2}]||_{\HS} 
		\nonumber
		\\
		&\leq \frac{1}{(M_{AB} - \epsilon)\gamma}\left[||A_n - A||_{\HS} + \left(1+\frac{1}{\gamma}||A||_{\HS}\right)||B_n - B||_{\HS}\right].
	\end{align}
\end{corollary}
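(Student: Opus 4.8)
The plan is to reduce the statement to the unit-parameter case already established in Theorem~\ref{theorem:affineHS-sequence-approx-1}(i), exploiting the homogeneity of the affine-invariant expression under the scaling $\gamma I + X = \gamma\left(I + \tfrac{X}{\gamma}\right)$. First I would record the algebraic identity
\begin{align}
(\gamma I + B)^{-1/2}(\gamma I + A)(\gamma I + B)^{-1/2} = \left(I + \tfrac{B}{\gamma}\right)^{-1/2}\left(I + \tfrac{A}{\gamma}\right)\left(I + \tfrac{B}{\gamma}\right)^{-1/2},
\end{align}
which follows from the functional-calculus fact that $(\gamma I + B)^{-1/2} = \gamma^{-1/2}\left(I + \tfrac{B}{\gamma}\right)^{-1/2}$ together with $\gamma I + A = \gamma\left(I + \tfrac{A}{\gamma}\right)$; the three scalar factors $\gamma^{-1/2}\cdot\gamma\cdot\gamma^{-1/2}$ cancel. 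The analogous identity holds with $A_n, B_n$ in place of $A, B$. The crucial point is that the argument of the logarithm is literally unchanged by the regularization parameter, so the entire problem transfers verbatim to the rescaled operators.

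Next I would verify that the rescaled family $\tfrac{A}{\gamma}, \tfrac{B}{\gamma}, \{\tfrac{A_n}{\gamma}\}, \{\tfrac{B_n}{\gamma}\}$ satisfies every hypothesis of Theorem~\ref{theorem:affineHS-sequence-approx-1}(i). Since $\gamma > 0$, membership in $\Sym^{+}(\H) \cap \HS(\H)$ is preserved under division by $\gamma$; the convergences $\lim_{n\approach\infty}\|\tfrac{A_n}{\gamma} - \tfrac{A}{\gamma}\|_{\HS} = \gamma^{-1}\lim_{n\approach\infty}\|A_n - A\|_{\HS} = 0$ and its $B$-analogue are immediate; and positivity of $I + \tfrac{A}{\gamma}$, $I + \tfrac{A_n}{\gamma}$ and the corresponding $B$-terms is automatic from nonnegativity. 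By the identity above, the coercivity constant $M_{AB}$ of the corollary, defined through $\la x, (\gamma I + B)^{-1/2}(\gamma I + A)(\gamma I + B)^{-1/2} x\ra \geq M_{AB}\|x\|^2$, coincides with the corresponding constant for the rescaled operators, so no reindexing of $M_{AB}$ is needed.

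Finally I would invoke Theorem~\ref{theorem:affineHS-sequence-approx-1}(i) on the rescaled operators. By the scaling identity, $(\gamma I + B)^{-1/2}(\gamma I + A)(\gamma I + B)^{-1/2} - I$ and its $n$-indexed counterpart equal $\left(I + \tfrac{B}{\gamma}\right)^{-1/2}\left(I + \tfrac{A}{\gamma}\right)\left(I + \tfrac{B}{\gamma}\right)^{-1/2} - I$ and its counterpart, which the theorem places in $\Sym(\H)\cap\HS(\H)$, settling the membership claim; and the theorem's bound reads
\begin{align}
\frac{1}{M_{AB}-\epsilon}\left[\left\|\tfrac{A_n - A}{\gamma}\right\|_{\HS} + \left(1 + \left\|\tfrac{A}{\gamma}\right\|_{\HS}\right)\left\|\tfrac{B_n - B}{\gamma}\right\|_{\HS}\right].
\end{align}
Pulling the common factor $\tfrac{1}{\gamma}$ out of both norm differences, and using $\|\tfrac{A}{\gamma}\|_{\HS} = \gamma^{-1}\|A\|_{\HS}$ in the bracketed coefficient, reproduces exactly the asserted estimate. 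I do not anticipate a genuine obstacle: the analytic content lies entirely in Theorem~\ref{theorem:affineHS-sequence-approx-1}, and the only points needing care are the clean derivation of the scaling identity (so that $M_{AB}$ transfers without change) and the bookkeeping of the factors of $\gamma$, so that the final constant reads $1 + \gamma^{-1}\|A\|_{\HS}$ rather than $1 + \|A\|_{\HS}$.
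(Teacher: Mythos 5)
Your proposal is correct and matches the paper's own proof essentially verbatim: the paper likewise rewrites $(\gamma I+B)^{-1/2}(\gamma I+A)(\gamma I+B)^{-1/2}$ as $\left(I+\tfrac{B}{\gamma}\right)^{-1/2}\left(I+\tfrac{A}{\gamma}\right)\left(I+\tfrac{B}{\gamma}\right)^{-1/2}$ so that $M_{AB}$ transfers unchanged, and then applies Theorem~\ref{theorem:affineHS-sequence-approx-1}(i) to the rescaled operators, with the same bookkeeping of the factors of $\gamma$ yielding the coefficient $1+\gamma^{-1}\|A\|_{\HS}$.
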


\begin{lemma}
	\label{lemma:HS-inverse}
	Let $\{A_n\}_{n\in \N}$, $A$ $\in \Sym(\H) \cap \HS(\H)$ such that
	$(I+A) > 0$, $(I+A_n) > 0$ 
	$\forall n \in \N$. 
	Then $A_n(I+A_n)^{-1}$ and $A(I+A)^{-1}$ are 
	in $\Sym(\H) \cap \HS(\H)$.
	
	(i) If $A, A_n \in \Sym^{+}(\H) \cap \HS(\H)$ $\forall n \in \Nbb$, then
	\begin{align}
		||A_n(I+A_n)^{-1} - A(I+A)^{-1}||_{\HS} \leq ||A_n -A||_{\HS} \;\;\;\forall n \in \Nbb.
	\end{align}	
	
	(ii) Assume that $\lim_{n \approach \infty}||A_n - A|| = 0$. 
	Let $M_A >0$ be such that $\la x, (I+A)x\ra \geq  M_A||x||^2$ $\forall x \in \H$,
	then $\forall \epsilon$, $0 < \epsilon < M_A$, $\exists N(\epsilon) \in \Nbb$ such that
	\begin{align}
		||A_n(I+A_n)^{-1} - A(I+A)^{-1}||_{\HS} \leq \frac{1}{M_A(M_A-\epsilon)}||A_n - A||_{\HS} \;\;\forall n \geq N(\epsilon).
	\end{align}
	In both cases, $\lim_{n\approach \infty}||A_n - A||_{\HS} = 0$ implies
	$\lim_{n \approach \infty}||A_n(I+A_n)^{-1} - A(I+A)^{-1}||_{\HS} = 0$. 
\end{lemma}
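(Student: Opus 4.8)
The plan is to reduce the difference $A_n(I+A_n)^{-1} - A(I+A)^{-1}$ to an expression in which the Hilbert-Schmidt factor $A_n - A$ is sandwiched between two bounded resolvents, and then to bound its Hilbert-Schmidt norm via the ideal inequality $||BCD||_{\HS} \leq ||B||\;||C||_{\HS}\;||D||$. The key observation is the algebraic identity $A(I+A)^{-1} = I - (I+A)^{-1}$, valid whenever $I+A$ is invertible, which gives
\begin{align*}
A_n(I+A_n)^{-1} - A(I+A)^{-1} = (I+A)^{-1} - (I+A_n)^{-1}.
\end{align*}
Applying the resolvent identity $X^{-1} - Y^{-1} = X^{-1}(Y-X)Y^{-1}$ with $X = I+A$, $Y = I+A_n$ then yields the central formula
\begin{align*}
A_n(I+A_n)^{-1} - A(I+A)^{-1} = (I+A)^{-1}(A_n - A)(I+A_n)^{-1},
\end{align*}
on which all the estimates rest.

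Before estimating, I would record that $A(I+A)^{-1}$ and $A_n(I+A_n)^{-1}$ lie in $\Sym(\H) \cap \HS(\H)$. Self-adjointness holds because $A$ commutes with $(I+A)^{-1}$, so $[A(I+A)^{-1}]^{*} = (I+A)^{-1}A = A(I+A)^{-1}$; membership in $\HS(\H)$ follows since $(I+A)^{-1}$ is bounded and $A \in \HS(\H)$, whence $||A(I+A)^{-1}||_{\HS} \leq ||(I+A)^{-1}||\;||A||_{\HS} < \infty$. For part (i), when $A, A_n \in \Sym^{+}(\H)$ one has $I+A \geq I$ and $I+A_n \geq I$, hence $||(I+A)^{-1}|| \leq 1$ and $||(I+A_n)^{-1}|| \leq 1$; the central formula then gives $||A_n(I+A_n)^{-1} - A(I+A)^{-1}||_{\HS} \leq ||A_n - A||_{\HS}$ immediately.

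For part (ii), the only additional work is controlling the operator norms of the two resolvents under the weaker hypothesis $I+A>0$, $I+A_n>0$. From $\la x,(I+A)x\ra \geq M_A||x||^2$ I obtain $||(I+A)^{-1}|| \leq 1/M_A$. Given $0 < \epsilon < M_A$, the convergence $||A_n - A|| \approach 0$ furnishes $N(\epsilon)$ with $||A_n - A|| < \epsilon$ for $n \geq N(\epsilon)$; then for such $n$ and all $x$, $\la x,(I+A_n)x\ra = \la x,(I+A)x\ra + \la x,(A_n-A)x\ra \geq (M_A - \epsilon)||x||^2$, so $||(I+A_n)^{-1}|| \leq 1/(M_A-\epsilon)$. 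Substituting both bounds into the central formula yields $\frac{1}{M_A(M_A-\epsilon)}||A_n-A||_{\HS}$, as claimed. In either case the final convergence is automatic: since $||\cdot|| \leq ||\cdot||_{\HS}$, the hypothesis $||A_n-A||_{\HS} \approach 0$ also gives $||A_n-A|| \approach 0$, so the right-hand sides vanish in the limit.

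The argument is essentially routine once the central formula is secured, so the only real obstacle is spotting the reduction $A(I+A)^{-1} = I - (I+A)^{-1}$; without it one is tempted to expand the difference directly, where the leading factors $A_n, A$ are merely Hilbert-Schmidt and the sandwich structure that isolates $A_n - A$ as the single Hilbert-Schmidt factor is lost. The quadratic-form lower bound giving $||(I+A_n)^{-1}|| \leq 1/(M_A-\epsilon)$ is the one genuinely nontrivial technical point, and it is standard.
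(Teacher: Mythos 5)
Your proof is correct and follows essentially the same route as the paper: both reduce the difference to a sandwich of the form $(\text{resolvent})(A_n-A)(\text{resolvent})$ (the paper factors $A_n(I+A_n)^{-1}-A(I+A)^{-1}=(I+A_n)^{-1}(A_n-A)(I+A)^{-1}$ directly, which is just the adjoint of your expression) and then bounds the two resolvents exactly as you do in cases (i) and (ii). Your detour through $A(I+A)^{-1}=I-(I+A)^{-1}$ and the resolvent identity is a cosmetic variant, not a different argument.
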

\begin{proof} 
	Since $A_n, A \in \Sym(\H)\cap \HS(\H)$, both $A_n(I+A_n)^{-1}$ and  $A(I+A)^{-1}$ are in $\Sym(\H) \cap \HS(\H)$.
	We have
	\begin{align*}
		&||A_n(I+A_n)^{-1} - A(I+A)^{-1}||_{\HS} = ||(I+A_n)^{-1}A_n - A(I+A)^{-1}||_{\HS}
		\\
		&= ||(I+A_n)^{-1}[A_n(I+A) - (I+A_n)A](I+A)^{-1}||_{\HS}
		\\
		& = ||(I+A_n)^{-1}[A_n - A](I+A)^{-1}||_{\HS} \leq ||(I+A_n)^{-1}||\;||A_n - A||_{\HS}\;||(I+A)^{-1}||.
	\end{align*}
	
	(i) If $A_n, A\in \Sym^{+}(\H)\cap \HS(\H)$, then $||(I+A_n)^{-1}||\leq 1$, $||(I+A)^{-1}||\leq 1$, so that
	\begin{align*}
		&||A_n(I+A_n)^{-1} - A(I+A)^{-1}||_{\HS} \leq ||A_n - A||_{\HS}.
	\end{align*}
	
	(ii) In the general case, 
	by the assumption $\lim_{n \approach \infty}||A_n -A|| = 0$, for any $\epsilon$ satisfying $0 < \epsilon < M_A$, there
	exists $N = N(\epsilon) \in \N$ such that $||A_n - A||  < \epsilon$ $\forall n \geq N$. Therefore,
	$I+A_n = I+A + A_n - A \geq (M_A -\epsilon)I \;\;\;\forall n \geq N$.
	Thus we have $I+A \geq M_A > 0$, $I+A_n \geq M_A-\epsilon > 0$ $\forall n \geq N = N(\epsilon)$, from which it follows that
	\begin{align*}
		||(I+A_n)^{-1}|| \leq \frac{1}{M_A -\epsilon} \forall N \geq N(\epsilon), \;\; ||(I+A)^{-1}|| \leq \frac{1}{M_A}.
	\end{align*}
	Combining this with the first inequality, we have 
	\begin{align*}
		&||A_n(I+A_n)^{-1} - A(I+A)^{-1}||_{\HS} \leq \frac{1}{M_A(M_A-\epsilon)}||A_n -A||_{\HS} \forall n \geq N,
	\end{align*}
	which implies that
	$\lim_{n \approach \infty}||A_n(I+A_n)^{-1} - A(I+A)^{-1}||_{\HS} = 0$.
	\qed
\end{proof}

\begin{proposition}
	\label{proposition:HS-inverse-2}
	Let $\{A_n\}_{n \in \N}$, $A$, $\{B_n\}_{n \in \N}$, $B$ $\in \Sym(\H)\cap \HS(\H)$.
	Assume that $I+A > 0, I+B > 0, I+A_n > 0, I+ B_n > 0$ $\forall n \in \N$. Then
	$(I+B_n)^{-1/2}(I+A_n)(I+B_n)^{-1/2}-I$ and $(I+B)^{-1/2}(I+A)(I+B)^{-1/2}-I$ are in $\Sym(\H)\cap \HS(\H)$.
	
	(i) If $A,B, A_n , B_n \in \Sym^{+}(\H) \cap \HS(\H)$ $\forall n \in \Nbb$, then
	\begin{align}
		\label{equation:inequality-ABA-1}
		&||(I+B_n)^{-1/2}(I+A_n)(I+B_n)^{-1/2} - (I+B)^{-1/2}(I+A)(I+B)^{-1/2}||_{\HS}
		\nonumber 
		\\
		&\leq ||A_n - A||_{\HS} + (1+||A||_{\HS})||B_n - B||_{\HS}.
	\end{align}
	
	(ii) If 
	$\lim_{n \approach \infty}||B_n - B|| = 0$, let $M_B > 0$ be such that
	$\la x, (I+B)x\ra \geq M_B||x||^2$ $\forall x \in \H$. Then
	$\forall \epsilon > 0$, $0 < \epsilon < M_B$, $\exists N_B(\epsilon) \in \Nbb$ such that $\forall n \geq N_B$,
	\begin{align}
		\label{equation:inequality-ABA-2}
		&||(I+B_n)^{-1/2}(I+A_n)(I+B_n)^{-1/2} - (I+B)^{-1/2}(I+A)(I+B)^{-1/2}||_{\HS}
		\nonumber
		\\
		& \leq \frac{1}{M_B-\epsilon}\left[||A_n - A||_{\HS} + \frac{1}{M_B-\epsilon}\left(1 + ||A||_{\HS}\right)||B_n - B||_{\HS} \right].	
	\end{align}
	In both cases, $\lim_{n \approach 0}||A_n - A||_{\HS} = \lim_{n \approach 0}||B_n - B||_{\HS} = 0$ implies
	\begin{align}
		\label{equation:limit-ABA-1}
		&\lim_{n \approach \infty}||(I+B_n)^{-1/2}(I+A_n)(I+B_n)^{-1/2} - (I+B)^{-1/2}(I+A)(I+B)^{-1/2}||_{\HS}
		= 0.
	\end{align}
\end{proposition}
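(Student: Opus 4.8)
The plan is to reduce the whole statement to elementary operator-norm and Hilbert--Schmidt-norm manipulations organized around a single three-term splitting, the only genuinely nontrivial input being a Lipschitz-type bound for the operator square root. First I would dispatch the membership claim: writing $(I+B)^{-1/2}(I+A)(I+B)^{-1/2} - I = (I+B)^{-1/2}(A-B)(I+B)^{-1/2}$, and likewise for the $n$-indexed operator, exhibits these as self-adjoint; and since $A-B \in \HS(\H)$ while $(I+B)^{-1/2}$ is bounded (because $I+B>0$ is invertible with bounded inverse), the products lie in $\Sym(\H) \cap \HS(\H)$.

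For the main estimate I would abbreviate $S_n = (I+B_n)^{-1/2}$, $S = (I+B)^{-1/2}$, and use $S_n(I+A_n)S_n = (I+B_n)^{-1} + S_nA_nS_n$ to obtain
\begin{align*}
&S_n(I+A_n)S_n - S(I+A)S
\\
&= \underbrace{S_n(A_n - A)S_n}_{T_1} + \underbrace{\left[(I+B_n)^{-1} - (I+B)^{-1}\right]}_{T_2} + \underbrace{\left[S_nAS_n - SAS\right]}_{T_3}.
\end{align*}
Each piece is then handled with $\|XYZ\|_{\HS} \leq \|X\|\,\|Y\|_{\HS}\,\|Z\|$: the term $T_1$ contributes $\|S_n\|^2\|A_n-A\|_{\HS}$; for $T_2$ the resolvent identity $(I+B_n)^{-1} - (I+B)^{-1} = (I+B_n)^{-1}(B-B_n)(I+B)^{-1}$ gives $\|(I+B_n)^{-1}\|\,\|(I+B)^{-1}\|\,\|B_n-B\|_{\HS}$; and writing $T_3 = S_nA(S_n-S) + (S_n-S)AS$ yields a contribution controlled by $(\|S_n\|+\|S\|)\,\|A\|_{\HS}\,\|S_n-S\|$.

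The crux, and the step I expect to be the main obstacle, is bounding $\|S_n - S\| = \|(I+B_n)^{-1/2} - (I+B)^{-1/2}\|$, since $A$ and $B$ do not commute and a naive scalar-calculus Lipschitz estimate is not available at the operator level. Here I would invoke Lemma \ref{lemma:inequality-Schatten} with $r=1/2$, $X=I$, and a common lower bound $c$ for $I+B_n$ and $I+B$, giving $\|(I+B_n)^{1/2} - (I+B)^{1/2}\| \leq \tfrac12 c^{-1/2}\|B_n - B\|$, and then transfer to inverse square roots via $(I+B_n)^{-1/2} - (I+B)^{-1/2} = (I+B_n)^{-1/2}\big[(I+B)^{1/2} - (I+B_n)^{1/2}\big](I+B)^{-1/2}$. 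In case (i), $B,B_n\geq 0$ force $\|S_n\|,\|S\|,\|(I+B_n)^{-1}\|,\|(I+B)^{-1}\|\leq 1$ and $c=1$, so $\|S_n-S\|\leq \tfrac12\|B_n-B\|_{\HS}$ and the three terms sum to exactly $\|A_n - A\|_{\HS} + (1+\|A\|_{\HS})\|B_n-B\|_{\HS}$. In case (ii), the convergence $\|B_n-B\|\to 0$ lets me fix, for $0<\epsilon<M_B$, an index $N_B(\epsilon)$ beyond which $I+B_n \geq (M_B-\epsilon)I$; taking $c=M_B-\epsilon$ and using $M_B-\epsilon<M_B$ to absorb the operator-norm factors then produces the stated bound, with $\tfrac{1}{M_B-\epsilon}$ on the $\|A_n-A\|_{\HS}$ term and $\tfrac{1}{(M_B-\epsilon)^2}$ on the $(1+\|A\|_{\HS})\|B_n-B\|_{\HS}$ term.

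Finally, the limit \eqref{equation:limit-ABA-1} follows in case (i) immediately from the bound, and in case (ii) by noting that $\|B_n-B\|_{\HS}\to 0$ forces $\|B_n-B\|\to 0$, so the hypotheses of (ii) are met for every admissible $\epsilon$ and the right-hand side tends to $0$ as $n\to\infty$. Once Lemma \ref{lemma:inequality-Schatten} is in hand, everything beyond the square-root estimate is routine bookkeeping.
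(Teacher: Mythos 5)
Your proposal is correct and follows essentially the same route as the paper: the identity $S_n(I+A_n)S_n = (I+B_n)^{-1} + S_nA_nS_n$, the resolvent identity for the $(I+B_n)^{-1}$ difference, the three-way splitting of $S_nA_nS_n - SAS$, and the reduction of $\|(I+B_n)^{-1/2}-(I+B)^{-1/2}\|$ to Lemma \ref{lemma:inequality-Schatten} via the sandwich identity are all precisely the steps of the paper's proof (packaged there as Lemma \ref{lemma:HS-inverse} and Corollaries \ref{corollary:Schatten-class-rth-root} and \ref{corollary:limit-(I+A)r}). The constants you obtain in both cases match the stated bounds.
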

\begin{proof}
	Let $C_n = - B_n(I+B_n)^{-1} + (I+B_n)^{-1/2}A_n(I+B_n)^{-1/2}$, $C= - B(I+B)^{-1} + (I+B)^{-1/2}A(I+B)^{-1/2}$, which are in $\Sym(\H)\cap \HS(\H)$, then
	\begin{align*}
		(I+B_n)^{-1/2}(I+A_n)(I+B_n)^{-1/2} &= I+C_n,
		(I+B)^{-1/2}(I+A)(I+B)^{-1/2} =  I+C,
	\end{align*}
	
	(i) If $A_n, A, B_n, B\in \Sym^{+}(\H) \cap \HS(\H)$, then we have by Lemma \ref{lemma:HS-inverse}
	\begin{align*}
		||B_n(I+B_n)^{-1} - B(I+B)^{-1}||_{\HS} \leq ||B_n - B||_{\HS}.
	\end{align*}
	
	Consider the difference between the second terms of $C_n$ and $C$, i.e.
	\begin{align}
		\label{equation:lemma-trace-inverse-2}
		&||(I+B_n)^{-1/2}A_n(I+B_n)^{-1/2} - (I+B)^{-1/2}A(I+B)^{-1/2}||_{\HS}
		\nonumber
		\leq ||(I+B_n)^{-1/2}A_n(I+B_n)^{-1/2} - (I+B_n)^{-1/2}A(I+B_n)^{-1/2}||_{\HS}
		\nonumber
		\\
		&+ ||(I+B_n)^{-1/2}A(I+B_n)^{-1/2} - (I+B_n)^{-1/2}A(I+B)^{-1/2}||_{\HS}
		\nonumber
		\\
		&
		+ ||(I+B_n)^{-1/2}A(I+B)^{-1/2} - (I+B)^{-1/2}A(I+B)^{-1/2}||_{\HS}.
	\end{align}
	The first term on the right hand side of the inequality in Eq.~(\ref{equation:lemma-trace-inverse-2}) is
	\begin{align*}
		||(I+B_n)^{-1/2}(A_n-A)(I+B_n)^{-1/2}||_{\HS} &\leq ||A_n - A||_{\HS}||(I+B_n)^{-1/2}||^2 
		\leq ||A_n-A||_{\HS}.
	\end{align*}
	By Corollary \ref{corollary:Schatten-class-rth-root}, since $||\;||\leq ||\;||_{\HS}$,
	\begin{align*}
		||(I+B_n)^{-1/2} - (I+B)^{-1/2}|| &\leq ||(I+B_n)^{-1/2} - (I+B)^{-1/2}||_{\HS} 
		\leq \frac{1}{2}||B_n - B||_{\HS}.
	\end{align*}
	Thus the second term in Eq.~(\ref{equation:lemma-trace-inverse-2}) satisfies
	\begin{align*}
		&||(I+B_n)^{-1/2}A[(I+B_n)^{-1/2} - (I+B)^{-1/2}]||_{\HS} 
		\\
		&\leq ||(I+B_n)^{-1/2}||\;||A||_{\HS}||(I+B_n)^{-1/2} - (I+B)^{-1/2}]||\leq \frac{1}{2}||A||_{\HS}||B_n-B||_{\HS}.
	\end{align*}
	Similarly, the third term in Eq.~(\ref{equation:lemma-trace-inverse-2}) satisfies
	\begin{align*}
		&||(I+B_n)^{-1/2}A(I+B)^{-1/2} - (I+B)^{-1/2}A(I+B)^{-1/2}||_{\HS}
		\\ 
		&\leq ||A(I+B)^{-1/2}||_{\HS}||[(I+B_n)^{-1/2} - (I+B)^{-1/2}]|| \leq \frac{1}{2}||A||_{\HS}||B_n -B||_{\HS}.
	\end{align*}
	The inequality in Eq.\eqref{equation:inequality-ABA-1} is obtained by combining all the above inequalities.
	
	(ii) In general, by the assumption
	$I+B > 0$, 
	as in the proof of Lemma \ref{lemma:HS-inverse}, 
	since 
	$\lim_{n \approach \infty}||B_n - B||= 0$, 
	for any $0 < \epsilon < M_B$, 
	$\exists N_B = N_B(\epsilon) \in \N$, such that $||B_n - B||< \epsilon$ $\forall n \geq N_B$ and 
	\begin{align*}
		I+B_n  = I+B +B_n - B\geq (M_B -\epsilon)I\;\;\; \forall n \geq N_B.
	\end{align*}
	By Lemma \ref{lemma:HS-inverse}, $\forall n \geq N_B$.
	\begin{align*}
		||B_n(I+B_n)^{-1} - B(I+B)^{-1}||_{\HS} \leq \frac{1}{M_B(M_B-\epsilon)}||B_n - B||_{\HS}. \;\;
	\end{align*}
	The first term on the right hand side of the inequality in Eq.~(\ref{equation:lemma-trace-inverse-2}) is
	\begin{align*}
		||(I+B_n)^{-1/2}(A_n-A)(I+B_n)^{-1/2}||_{\HS} &\leq ||A_n - A||_{\HS}||(I+B_n)^{-1/2}||^2 
		\leq \frac{||A_n - A||_{\HS}}{M_B - \epsilon}.
	\end{align*}
	for all $n \geq N_B$. 
	By Corollary \ref{corollary:limit-(I+A)r}, $\forall n \geq N_B$,
	\begin{align*}
		||[(I+B_n)^{-1/2} - (I+B)^{-1/2}]|| \leq \frac{1}{2(M_B-\epsilon)M_B^{1/2}}||B_n - B||.
	\end{align*}
	Thus the second term in Eq.~(\ref{equation:lemma-trace-inverse-2}) satisfies
	\begin{align*}
		&||(I+B_n)^{-1/2}A[(I+B_n)^{-1/2} - (I+B)^{-1/2}]||_{\HS} 
		\leq \frac{||A||_{\HS} ||[(I+B_n)^{-1/2} - (I+B)^{-1/2}]||}{\sqrt{M_B-\epsilon}}
		\\
		&\leq \frac{1}{2(M_B-\epsilon)^{3/2}M_B^{1/2}}||A||_{\HS}||B_n - B||_{\HS} \;\;\forall n \geq N_B.
	\end{align*}
	Similarly, for the third term in Eq.~(\ref{equation:lemma-trace-inverse-2}), we have
	\begin{align*}
		&||(I+B_n)^{-1/2}A(I+B)^{-1/2} - (I+B)^{-1/2}A(I+B)^{-1/2}||_{\HS} 
		\\
		&\leq ||A(I+B)^{-1/2}||_{\HS}||[(I+B_n)^{-1/2} - (I+B)^{-1/2}]||
		\leq \frac{1}{2(M_B-\epsilon)M_B}||A||_{\HS}||B_n - B||_{\HS} \;\;\forall n \geq N_B.
	\end{align*}
	Combining all of the above inequalities, we obtain
	\begin{align*}
		&||C_n - C||_{\HS} \leq \frac{1}{M_B-\epsilon}||A_n - A||_{\HS} 
		+ \frac{1}{M_B-\epsilon}\left(\frac{1}{M_B} + \frac{||A||_{\HS}}{2(M_B-\epsilon)^{1/2}M_B^{1/2}} + \frac{||A||_{\HS}}{2M_B}\right)||B_n - B||_{\HS} 
		\\
		&\leq
		\frac{1}{M_B-\epsilon}\left[||A_n - A||_{\HS} + \frac{1}{M_B-\epsilon}\left(1 + ||A||_{\HS}\right)||B_n - B||_{\HS} \right],
	\end{align*}
	$\forall n \geq N_B$, which is Eq.\eqref{equation:inequality-ABA-2}.
	\qed
\end{proof}

\begin{proof}
	[\textbf{Proof of Theorem \ref{theorem:affineHS-sequence-approx-1}}]
	As in Proposition \ref{proposition:HS-inverse-2}, we write
	\begin{align*}
		(I+B_n)^{-1/2}(I+A_n)(I+B_n)^{-1/2} &= I + C_n,
		(I+B)^{-1/2}(I+A)(I+B)^{-1/2} = I+C,
	\end{align*}
	where $C_n = - B_n(I+B_n)^{-1} + (I+B_n)^{-1/2}A_n(I+B_n)^{-1/2}$, $C = - B(I+B)^{-1} + (I+B)^{-1/2}A(I+B)^{-1/2} \in \Sym(\H) \cap \HS(\H)$.	
	By Proposition \ref{proposition:HS-inverse-2}, since $\lim_{n \approach \infty}||A_n-A||_{\HS}
	=\lim_{n \approach \infty}||B_n-B||_{\HS} =0$, we have $\lim_{n \approach \infty}||C_n - C||_{\HS} = 0$.
	Let $M_{AB} > 0$ be such that $\la x, (I+C)x\ra\geq M_{AB}||x||^2$, then by Theorem 
	\ref{theorem:logHS-convergence},
	$\forall \epsilon, 0 < \epsilon < M_{AB}$, $\exists N_{AB}(\epsilon) \in \Nbb$ such that
	$||C_n - C|| < \ep$ $\forall n \geq N_{AB}$ and
	\begin{align*}
		||\log(I+C_n) - \log(I+C)||_{\HS} \leq \frac{1}{M_{AB} - \epsilon}||C_n - C||_{\HS}
		\;\;\;\forall n \geq N_{AB}(\epsilon).
	\end{align*}
	
	(i) If $A,B, A_n, B_n \in \Sym^{+}(\H) \cap \HS(\H)$, then by Proposition \ref{proposition:HS-inverse-2},
	\begin{align*}
		&||\log(I+C_n) - \log(I+C)||_{\HS}
		\leq \frac{1}{M_{AB} - \epsilon}[||A_n - A||_{\HS} + (1+||A||_{\HS})||B_n - B||_{\HS}]
		\;\;\;\forall n \geq N_{AB}(\epsilon).
	\end{align*}
	
	(ii) In the general setting, let $M_B > 0$ be such that
	$\la x, (I+B)x\ra \geq M_B||x||^2$ $\forall x \in \H$. By Proposition \ref{proposition:HS-inverse-2},
	$\forall \epsilon, 0 < \epsilon < M_B$, $\exists N_B(\epsilon) \in \Nbb$ such that $\forall n \geq N_B$,
	\begin{align*}
		||C_n - C||_{\HS}\leq \frac{1}{M_B-\epsilon}\left[||A_n - A||_{\HS} + \frac{1}{M_B-\epsilon}\left(1 + ||A||_{\HS}\right)||B_n - B||_{\HS} \right].
	\end{align*}
	Thus by Theorem \ref{theorem:logHS-convergence},
	$\forall \epsilon, 0 < \epsilon < \min\{M_{AB}, M_B\}$, $\forall n \geq \max\{N_{AB}, N_B\}$,
	\begin{align*}
		&||\log(I+C_n) - \log(I+C)||_{\HS}
		\leq \frac{1}{(M_{AB} - \epsilon)(M_B-\epsilon)}\left[||A_n - A||_{\HS} + \frac{1}{M_B-\epsilon}\left(1 + ||A||_{\HS}\right)||B_n - B||_{\HS} \right].
	\end{align*}
	In both cases,
	$\lim_{n \approach \infty}||\log(I+C_n) - \log(I+C)||_{\HS} = 0$.
	\qed
\end{proof}

\begin{proof}
	[\textbf{Proof of Corollary \ref{corollary:affineHS-sequence-approx-1}}]
	Since
	$(I + \frac{A}{\gamma})^{-1/2}(I + \frac{B}{\gamma})(I + \frac{A}{\gamma})^{-1/2} = (\gamma I + A)^{-1/2}(\gamma I + B)(\gamma I + A)^{-1/2} \geq M_{AB}
	$, applying Theorem 
	\ref{theorem:affineHS-sequence-approx-1} gives, $\forall n \geq N_{AB}(\ep)$,
	\begin{align*}
		&||\log[(\gamma I+B_n)^{-1/2}(\gamma I+A_n)(\gamma I+B_n)^{-1/2}]  
		-\log[(\gamma I+B)^{-1/2}(\gamma I+A)(\gamma I+B)^{-1/2}]||_{\HS} 
		\nonumber
		\\
		& = \left\|\log\left[\left(I+\frac{B_n}{\gamma}\right)^{-1/2}\left(I+\frac{A_n}{\gamma}\right)\left( I+\frac{B_n}{\gamma}\right)^{-1/2}\right]  
		-\log\left[\left(I+\frac{B}{\gamma}\right)^{-1/2}\left( I+\frac{A}{\gamma}\right)\left( I+\frac{B}{\gamma}\right)^{-1/2}\right]\right\|_{\HS}
		\\
		& \leq \frac{1}{M_{AB} - \ep}\left[\frac{1}{\gamma}||A_n - A||_{\HS} + \left(1+\frac{1}{\gamma}||A||_{\HS}\right)\frac{1}{\gamma}||B_n-B||_{\HS}\right]
		\\
		& = \frac{1}{(M_{AB} - \ep)\gamma}\left[||A_n - A||_{\HS} + \left(1+\frac{1}{\gamma}||A||_{\HS}\right)||B_n-B||_{\HS}\right].
		\qed
	\end{align*}
\end{proof}

\bibliographystyle{plain}
\bibliography{/Users/Minhs/Dropbox/cite_RKHS}

\end{document}